\renewcommand{\xb}{{x}}
\newcommand{\dsum}{\displaystyle\sum}
\newcommand{\dprod}{\displaystyle\prod}
\newcommand{\etab}{{\boldsymbol\eta}}
\newenvironment{pacbound}[1]
  {\innercustomthm}
  {\endinnercustomthm}
\newcommand{\nib}{{|\ib|}}
\newcommand{\ex}{{(\xb,y)}}
\newcommand{\KL}{{\rm KL}}
\newcommand{\kl}{{\rm kl}}
\newcommand{\RGQ}{{R_{D'}(G_Q)}}
\newcommand{\RSGQ}{{R_{S}(G_Q)}}
\newcommand{\RDGQ}{{R_{D}(G_Q)}}
\newcommand{\RTGQ}{{R_{T}(G_Q)}}
\newcommand{\RGQS}{{R_{D'}(G_{Q,S})}}
\newcommand{\RSGQS}{{R_{S}(G_{Q,S})}}
\newcommand{\RDGQS}{{R_{D}(G_{Q,S})}}
\newcommand{\BQS}{{B_{Q,S}}}
\newcommand{\RBQS}{{R_{D'}(\BQS)}}
\newcommand{\RDBQS}{{R_{D}(B_{Q,S})}}
\newcommand{\RBQ}{{R_{D'}(B_Q)}}
\newcommand{\RSBQ}{{R_{S}(B_Q)}}
\newcommand{\RDBQ}{{R_{D}(B_Q)}}
\newcommand{\RTBQ}{{R_{T}(B_Q)}}
\newcommand{\esd}[3]{{#1^{_{#2}}_{#3}}}
\newcommand{\eQ}{\esd{e}{D'}{Q}}
\newcommand{\sQ}{\esd{s}{D'}{Q}}
\newcommand{\dQ}{\esd{d}{D'}{Q}}
\newcommand{\eDQ}{\esd{e}{D}{Q}}
\newcommand{\sDQ}{\esd{s}{D}{Q}}
\newcommand{\dDQ}{\esd{d}{D}{Q}}
\newcommand{\eSQ}{\esd{e}{S}{Q}}
\newcommand{\dSQ}{\esd{d}{S}{Q}}
\newcommand{\dTQ}{\esd{d}{\,T}{Q}}
\newcommand{\dQS}{\esd{d}{D'}{Q,S}}
\newcommand{\dDQS}{\esd{d}{D}{Q,S}}
\newcommand{\dSQS}{\esd{d}{S}{Q,S}}
\newcommand{\aQ}{\esd{\alpha}{D'}{Q}}
\newcommand{\aDQ}{\esd{\alpha}{D}{Q}}
\newcommand{\aSQ}{\esd{\alpha}{S}{Q}}
\newcommand{\aij}{\esd{\alpha}{D'}{ij}}
\newcommand{\aSij}{\esd{\alpha}{S}{ij}}
\newcommand{\aDij}{\esd{\alpha}{D}{ij}}
\newcommand{\bQ}{\esd{\beta}{D'}{Q}}
\newcommand{\bDQ}{\esd{\beta}{D}{Q}}
\newcommand{\bSQ}{\esd{\beta}{S}{Q}}
\newcommand{\bij}{\esd{\beta}{D'}{ij}}
\newcommand{\bSij}{\esd{\beta}{S}{ij}}
\newcommand{\bDij}{\esd{\beta}{D}{ij}}
\newcommand{\Cbound}{\mbox{$\Ccal$-bound}\xspace}
\newcommand{\C}{\Ccal}
\newcommand{\CQ}{\Ccal_Q^{D'}}
\newcommand{\CDQ}{\Ccal_Q^{D}}
\newcommand{\CSQ}{\Ccal_Q^{S}}
\newcommand{\CTQ}{\Ccal_Q^{T}}
\newcommand{\CDQprime}{\Ccal_{Q'}^D}
\newcommand{\CSQprime}{\Ccal_{Q'}^S}
\newcommand{\Rdel}{\mathcal{R}_{Q,S}^{\,\delta}}
\newcommand{\Ddel}{\mathcal{D}_{Q,S}^{\,\delta}}
\newcommand{\Adel}{\mathcal{A}_{Q,S}^{\,\delta}}
\newcommand{\AdelEd}{\mathcal{A}_{Q,S}^{\,\delta/2}}
\newcommand{\Adelrestreint}{\widetilde{\mathcal{A}}_{Q,S}^{\,\delta}}
\newcommand{\AdelrestreintEd}{\widehat{\mathcal{A}}_{Q,S}^{\,\delta/2}}
\newcommand{\Rdeld}{\mathcal{R}_{Q,S}^{\delta/2}}
\newcommand{\Edeld}{\mathcal{E}_{Q,S}^{\delta/2}}
\newcommand{\Ddeld}{\mathcal{D}_{Q,S}^{\delta/2}}
\newcommand{\Ddelpd}{\mathcal{D}_{Q,S_{\cal U}}^{\,\delta/2}}
\newcommand{\Mb}{{\mathbf M}}
\newcommand{\loss}{\mathcal{L}}
\newcommand{\linloss}{\mathcal{L}_\ell}
\newcommand{\zoloss}{\mathcal{L}_{_{01}}}
\newcommand{\Eaggloss}[2]{\mathbb{E}_{#2}^{\loss_{#1}}}
\newcommand{\Eloss}[1]{\mathbb{E}_{#1}^{\loss}}
\newcommand{\Elinloss}[1]{\mathbb{E}_{#1}^{\linloss}}
\newcommand{\Ezoloss}[1]{\mathbb{E}_{#1}^{\zoloss}}
\newcommand{\ElossS}{\Eloss{S}}
\newcommand{\ElossD}{\Eloss{D}}
\newcommand{\ElossDprim}{\Eloss{D'}}
\newcommand{\ElinlossDprim}{\Elinloss{D'}}
\newcommand{\ElinlossD}{\Elinloss{D}}
\newcommand{\ElinlossS}{\Elinloss{S}}
\newcommand{\ibc}{{\ib^c}}
\newcommand{\ElinlossSib}{\Elinloss{S_\ib}}
\newcommand{\ElinlossSibc}{\Elinloss{S_\ibc}}
\newcommand{\fpaired}{{f_{ij}}}
\newcommand{\tuple}[1]{\left\langle\,#1\,\right\rangle}
\newcommand{\cov}[1]{
    \underset{#1}{{\rm\bf Cov}}\
}
\newcommand{\MQ}[1]{{M_Q^{\mbox{\tiny$#1$}}}}
\newcommand{\MQS}[1]{{M_{Q,S}^{\mbox{\tiny$#1$}}}}
\newcommand{\MQprime}[1]{{M_{Q'}^{\mbox{\tiny$#1$}}}}
\newcommand{\momentone}{{\mu_{\hspace{0.2mm}1}}}
\newcommand{\momenttwo}{{\mu_{\hspace{0.2mm}2}}}
\newcommand{\variance}{{\text{\rm Var}}}
\newcommand{\covariance}[1]{{\text{\rm Cov}^{_{#1}}}}
\newcommand{\vMQ}{{\variance(\MQ{D'})}}
\newcommand{\vTMQ}{{\variance(\MQ{T})}}
\newcommand{\Yover}{{\overline{\Ycal}}}
\begin{document}

\title{Risk Bounds for the Majority Vote: From a PAC-Bayesian Analysis to a Learning Algorithm}

\jmlrheading{16}{2015}{787-860}{5/13; Revised 9/14}{4/15}{Pascal Germain, Fran\c{c}ois Laviolette, Alexandre Lacasse, Mario Marchand and Jean-Francis Roy}
\firstpageno{787}

\ShortHeadings{Risk Bounds for the Majority Vote}{Germain, Lacasse, Laviolette, Marchand and Roy} 

\author{\name Pascal Germain   \email Pascal.Germain@ift.ulaval.ca \\
       \name Alexandre Lacasse    \email Alexandre.Lacasse@ift.ulaval.ca \\
       \name Fran\c{c}ois Laviolette    \email Francois.Laviolette@ift.ulaval.ca \\
       \name Mario Marchand    \email Mario.Marchand@ift.ulaval.ca \\
       \name Jean-Francis Roy    \email Jean-Francis.Roy@ift.ulaval.ca \\
       \addr D{\'e}partement d'informatique et de g\'enie logiciel \\
                Universit\'e Laval  \\
                Qu\'ebec, Canada,
                G1V 0A6 \\[.6mm]
                \hfill{\rm $^\star$\,All authors contributed equally to this work.}
}

\editor{Koby Crammer}

\maketitle

\begin{abstract}We propose an extensive analysis of the behavior of majority votes in binary classification. In particular, we introduce a risk bound for majority votes, called the \Cbound, that takes into account the average quality of the voters and their average disagreement. We also propose an extensive PAC-Bayesian analysis that shows how the \Cbound can be estimated from various observations contained in the training data. The analysis intends to be self-contained and can be used as introductory material to PAC-Bayesian statistical learning theory.  It starts from a general PAC-Bayesian perspective and ends with uncommon PAC-Bayesian bounds. Some of these bounds contain no Kullback-Leibler divergence and others allow kernel functions to be used as voters (via the sample compression setting).
Finally, out of the analysis, we propose the MinCq learning algorithm that basically minimizes the \Cbound. MinCq reduces to a simple quadratic program. Aside from being theoretically grounded, MinCq achieves state-of-the-art performance, as shown in our extensive empirical comparison with both AdaBoost and the Support Vector Machine.
\end{abstract}
\begin{keywords}
majority vote,
ensemble methods,
learning theory,
PAC-Bayesian theory,
sample compression
\end{keywords}

\section{Previous Work and Implementation} 

This paper can be considered as an extended version of \cite{nips06-mv} and \cite{lmr-11}, and also contains ideas from~\cite{lm-05,lm-07} and \cite{gllm-09,gllms-11}. We unify this previous work, revise the mathematical approach, add new results and extend empirical experiments. 

The source code to compute the various PAC-Bayesian bounds presented in this paper and the implementation of the MinCq learning algorithm is available at:

\noindent \url{http://graal.ift.ulaval.ca/majorityvote/} 

\section{Introduction}

In binary classification, many state-of-the-art algorithms output prediction functions that can be seen as a majority vote of ``simple'' classifiers. Firstly, ensemble methods such as Bagging \citep{b-96}, Boosting \citep{schapire99} and Random Forests \citep{b-01} are well-known examples
of  learning algorithms that output majority votes. Secondly, majority votes are also central in the Bayesian approach (see \citealp{gelman2004bayesian}, for an introductory text);   in this setting, the majority vote is generally called the \emph{Bayes Classifier}.  Thirdly, it is interesting to point out that classifiers produced by kernel methods, such as the Support Vector Machine (SVM) \citep{DBLP:journals/ml/CortesV95}, can also be viewed as majority votes. Indeed, to classify an example~$x$, the SVM classifier computes
\begin{equation} \label{eq:svm_intro}
\sgn \bigg(\mbox{\footnotesize$\dsum_{i=1}^{|S|}$} \alpha_i\, y_i\, k(\xb_i,\xb)\bigg),
\end{equation}
where $k(\cdot,\cdot)$ is a kernel function, and the input-output pairs $(x_i,y_i)$ represent the examples from the training set $S$. Thus, one can interpret each \,$y_i\,k(\xb_i,\cdot)$\, as a voter that chooses (with confidence level $|k(\xb_i,\xb)|$) between two alternatives (``positive'' or ``negative''), and~$\alpha_i$ as the respective weight of this voter in the majority vote. 
Then, if the total confidence-multiplied weight of each voter that votes positive is greater than the total confidence-multiplied weight of each voter that votes negative, the classifier outputs a $+1$ label (and a $-1$ label in the opposite case). 
Similarly, each \emph{neuron} of the last layer of an artificial neural network can be interpreted as a majority vote, since it outputs a real value given by $K(\sum_i w_i g_i(\xb))$ for some \emph{activation function} $K$.\footnote{In this case, each voter $g_i$ has incoming weights which are also learned (often by back propagation of errors) together with the weights~$w_i$. The analysis presented in this paper considers fixed voters. Thus, the PAC-Bayesian theory for artificial neural networks remains to be done. Note however that the recent work by~\cite{mcallester-13} provides a first step in that direction.}

\smallskip
 In practice, it is well known that the classifier output by each of these learning algorithms performs much better than any of its voters individually. Indeed, voting can dramatically improve performance when the ``community'' of classifiers tends to
compensate for individual errors. In particular, this phenomenon explains the success of Boosting algorithms \citep[\eg,][]{sfbl-98}.
The first aim of this paper is to explore how bounds on the generalized risk of the majority vote are not only able to theoretically justify learning algorithms but also to detect when the voted combination provably outperforms the
average of its voters.
We  expect that this study of the behavior of a majority vote should improve the understanding of existing learning algorithms and even lead to new ones. We indeed present a learning algorithm based on these ideas at the end of the paper.

 \smallskip
The PAC-Bayesian theory is  a well-suited approach to analyze majority votes. Initiated by~\citet{m-99}, this theory aims to provide
 Probably Approximately Correct  guarantees (PAC guarantees) to ``Bayesian-like'' learning algorithms. Within this
 approach, one considers a {\em prior\footnote{Priors have been used
 for many years in statistics. The priors in this paper have only
 indirect links with the \emph{Bayesian priors}. We nevertheless
 use this language, since it comes from previous work.}
 distribution} $P$ over a space of classifiers that characterizes
 its prior belief about good classifiers (before the observation of
 the data) and a {\em posterior distribution}~$Q$ (over the same
 space of classifiers) that takes into account the additional
information provided by the training data. The classical PAC-Bayesian approach indirectly bounds the risk of a $Q$-weighted majority vote by bounding the risk of an associate (stochastic) classifier, called the {\em Gibbs classifier}. A remarkable result,
 known as the ``PAC-Bayesian Theorem'', provides a risk bound for the ``true'' risk of the Gibbs classifier, 
 by considering the empirical risk of this Gibbs classifier on the training data and the Kullback-Leibler divergence between a posterior distribution $Q$ and a prior distribution $P$. 
 It is well known \citep{ls-03,m-03b,gllm-09} that the risk of the (deterministic) majority vote classifier is upper-bounded by twice the risk of the associated (stochastic) Gibbs classifier. Unfortunately,  and especially if the involved voters are weak, this indirect bound on the majority vote classifier is far from being tight, even if the PAC-Bayesian bound itself generally gives  a tight bound on the risk of the Gibbs classifier. 
In practice, as stated before,  the ``community'' of classifiers can act in such a way as to compensate for individual errors. 
When such compensation occurs, the risk of the majority vote is then much lower than the Gibbs risk itself and, a fortiori, much lower than twice the Gibbs risk. By limiting the analysis to Gibbs risk only, the commonly used PAC-Bayesian framework is unable to evaluate whether or not
 this compensation occurs. Consequently, this framework cannot help in producing highly accurate voted combinations of classifiers when these classifiers are individually weak.
 
 \smallskip
In this paper, we tackle this problem by studying the margin of the majority vote as a random variable. The first and second moments of this random variable are respectively linked with the risk of the Gibbs classifier and the expected disagreement between the voters of the majority vote. As we will show, the well-known factor of two used to bound the risk of the majority vote is recovered by applying Markov's inequality to the first moment of the margin. Based on this observation, we show that a tighter bound, that we call the \Cbound, is obtained by considering the first two moments of the margin, together with Chebyshev's inequality.

 Section~\ref{section:Majority_Vote_Bounds} presents, in a more detailed way, the work on the \Cbound originally presented in~\cite{nips06-mv}. We then present both theoretical and empirical studies that show that the \Cbound is an accurate indicator of the risk of the majority vote. We also show that the \Cbound can be
smaller than the risk of the Gibbs classifier and can even be arbitrarily close to zero even if the risk of the Gibbs classifier is close to~1/2. This indicates that the \Cbound can effectively capture the compensation of the individual errors made by the voters.

 \smallskip
 We then develop PAC-Bayesian guarantees on the \Cbound in order to obtain an upper bound on the risk of the majority vote based on empirical observations. Section~\ref{section:PAC-Bayes} presents a general approach of the PAC-Bayesian theory by which we recover the most commonly used forms of the bounds of~\cite{m-99,m-03} and~\cite{ls-01-techreport,s-02, l-05}. Thereafter, we extend the theory to obtain upper bounds on the \Cbound in two different ways. 
 The first method is to separately bound the risk of the Gibbs classifier and the expected disagreement---which are the two fundamental ingredients that are present in the \Cbound. Since the expected disagreement does not rely on labels, this strategy is well-suited for the semi-supervised learning framework.
 The second method directly bounds the \Cbound and empirically improves the achievable bounds in the supervised learning framework. 
 
 \smallskip
 Sections~\ref{section:Further-PAC_Bayes} and~\ref{sec:sample-compression} bring together relatively new PAC-Bayesian ideas that allow us, for one part, to derive a PAC-Bayesian bound that does not rely on the Kullback-Leibler divergence between the prior and posterior distributions \citep[as in][]{c-07,gllms-11,lmr-11} and, for the other part, to extend the bound to the case where the voters are defined using elements of the training data, \eg, voters defined by kernel functions $y_ik(\xb_i,\cdot)$. This second approach is based on the sample compression theory \citep{fw-95,lm-07,gllms-11}. In  PAC-Bayesian theory, the sample compression approach is \emph{a priori\/} problematic, since a PAC-Bayesian bound makes use of a prior distribution on the set of all voters  that has to be defined before observing the data. If the voters themselves are defined using a part of the data, there is an apparent contradiction that has to be overcome.

 \smallskip
 Based on the foregoing, a learning algorithm, that we call MinCq, is presented in Section~\ref{sect:mincq}. The algorithm basically minimizes the \Cbound, but in a particular way that is, inter alia, justified by the PAC-Bayesian analysis of Sections~\ref{section:Further-PAC_Bayes} and~\ref{sec:sample-compression}. This algorithm was originally presented in~\cite{lmr-11}. Given a set of voters (either classifiers or kernel functions), MinCq builds a majority vote classifier by finding the posterior distribution~$Q$ on the set of voters that minimizes the \Cbound. Hence, MinCq takes into account not only the overall quality of the voters, but also their respective disagreements. In this way, MinCq builds a  ``community'' of voters that can compensate for their individual errors. Even though the \Cbound consists of a relatively complex quotient, the MinCq learning algorithm reduces to a simple quadratic program. 
 Moreover, extensive empirical experiments confirm that MinCq is very competitive when compared with AdaBoost \citep{schapire99} and the Support Vector Machine \citep{DBLP:journals/ml/CortesV95}. 

 \smallskip
In Section~\ref{section:Finis-tu, never ending paper?}, we conclude by pointing out recent work that uses the PAC-Bayesian theory to tackle more sophisticated machine learning problems.

\section{Basic Definitions}

We consider classification problems where the input space
$\Xcal$ is an arbitrary set and the output
space is a discrete set denoted $\Ycal$. An \emph{example} $(\xb,y)$ is an
input-output pair where $\xb\in\Xcal$ and $y\in\Ycal$.
A \emph{voter} is a  function $\Xcal \rightarrow \Yover$ for some output space $\Yover$ related to $\Ycal$.
Unless otherwise specified, we consider the binary classification problem where $\Ycal = \{-1,1\}$ and then we either consider $\Yover$ as $\Ycal$ itself, or its convex hull $[-1,+1]$.
In this paper, we also use the following convention: $f$ denotes a real-valued voter (\ie, $\Yover = [-1,1]$), and $h$ denotes a binary-valued voter (\ie, $\Yover = \{-1,1\}$).
Note that this notion of voters is quite general, since any uniformly bounded real-valued set of functions can be viewed as a set of voters when properly normalized.

\smallskip
We consider learning algorithms that construct majority votes based on a (finite) set $\Hcal$ of voters.
Given any $\xb\in\Xcal$, the output $B_Q(\xb)$ of a $Q$-weighted majority vote classifier $B_Q$ (sometimes called the \emph{Bayes classifier})
is given by
\begin{equation}\label{eq:bayes}
B_Q(\xb)\ \eqdef\ \sgn\LB \esp{f\sim Q} f(\xb) \RB,
\end{equation}
where $\sgn(a)=1$ if $a>0$, \, $\sgn(a)=-1$ if $a<0$, and $\sgn(0)=0$. 

Thus, in case of a tie in the majority vote -- \ie, $\Eb_{f\sim Q}f(\xb)\!=\!0$ --, we consider that the majority vote classifier abstains -- \ie, $B_Q(x)=0$. There are other possible ways to handle this particular case. In this paper, we choose to define $\sgn(0)\!=\!0$ because it simplifies the forthcoming analysis.

We adopt the PAC setting where each example
$(\xb, y)$ is drawn i.i.d.\ according to a fixed, but unknown, probability
distribution $D$ on $\Xcal\!\times\!\Ycal$. The \emph{training set} of $m$ examples is denoted by   $S=\tuple{(\xb_1,y_1), \dots,(\xb_{m},y_{m})}\sim D^m$. Throughout the paper, $D'$ generically represents either the true (and unknown) distribution $D$, or its empirical counterpart $\mathrm{U}_S$ (\ie, the uniform distribution over the training set $S$). Moreover, for notational simplicity, we often replace $\mathrm{U}_S$ by $S$.  

In order to quantify the accuracy of a voter, we use a \emph{loss function} \mbox{$\loss\,:\, \Yover\!\times\!\Ycal\rightarrow [0,1]\,.$}
The PAC-Bayesian theory traditionally considers majority votes of binary voters 
of the form $h:\Xcal\rightarrow\{-1,1\}$,
and the \emph{zero-one loss} 
$
\zoloss\big(  h(\xb), y \big) \eqdef  I \big( h(\xb) \neq y \big)\,,
$
where $I(a)=1$ if predicate $a$ is true and $0$ otherwise. 

The extension of the zero-one loss to  real-valued voters 
(of the form $f:\Xcal\rightarrow[-1,1]$) is given by the following definition.
\begin{definition}\rm
\label{conv:Risque_de_B_Q}\rm
In the (more general) case where voters are functions $f:\Xcal\rightarrow [-1,1]$, 
the zero-one loss $\zoloss$ is defined by
\begin{eqnarray*}
\quad\quad \zoloss \big(  f(\xb), y \big) \ \eqdef \ I \big( y \cdot f(\xb) \leq 0\big).
\end{eqnarray*}
\end{definition}
Hence, 
a voter abstention -- \ie, when $f(x)$ outputs exactly~$0$ -- results in a loss of $1$. 
Clearly, other choices are possible for this particular case.\footnote{As an example, when $f(x)$ outputs $0$, the loss may be $1/2$. However, we choose for this unlikely event the worst loss value -- \ie, $\zoloss (0, y)=1$ -- because it simplifies the majority vote analysis.}

\medskip

 \noindent  In this paper, we also consider the \emph{linear loss}~$\linloss$ defined as follows.
 
 \begin{definition}\rm
 \label{def:linearloss}\rm
Given a voter $f:\Xcal\rightarrow [-1,1]$, the linear loss $\linloss$ is defined by
\begin{eqnarray*} %
  \linloss\big( f(\xb), y \big) \ \eqdef \ \dfrac{1}{2} \Big( 1- y \cdot f(\xb) \Big).
\end{eqnarray*}
 \end{definition}
Note that the linear loss is equal to the zero-one loss  when the output space is binary.  That is, for any $(h(\xb), y) \in \{-1,1\}^2$, we always have   
\begin{eqnarray} \label{eq:losses_equivalents_sur_binaire}
\linloss\big( h(\xb), y \big) & = & \zoloss\big( h(\xb), y \big)\,,
\end{eqnarray}  
because 
$\linloss\big( h(\xb), y \big) = 1$ if $h(\xb)\ne y$, and 
$\linloss\big( h(\xb), y \big) = 0$ if $h(\xb)= y$.
Hence, we generalize all definitions implying classifiers to voters using the equality of Equation~\eqref{eq:losses_equivalents_sur_binaire} as an inspiration.  
Figure~\ref{fig:losses} illustrates the difference between the zero-one loss and the linear loss for real-valued voters. Remember that in the case $y\,f(x)=0$ , the loss is $1$ (see Definition~\ref{conv:Risque_de_B_Q}).

\begin{figure}[t]
\centering{
\begingroup
  \makeatletter
  \providecommand\color[2][]{%
    \GenericError{(gnuplot) \space\space\space\@spaces}{%
      Package color not loaded in conjunction with
      terminal option `colourtext'%
    }{See the gnuplot documentation for explanation.%
    }{Either use 'blacktext' in gnuplot or load the package
      color.sty in LaTeX.}%
    \renewcommand\color[2][]{}%
  }%
  \providecommand\includegraphics[2][]{%
    \GenericError{(gnuplot) \space\space\space\@spaces}{%
      Package graphicx or graphics not loaded%
    }{See the gnuplot documentation for explanation.%
    }{The gnuplot epslatex terminal needs graphicx.sty or graphics.sty.}%
    \renewcommand\includegraphics[2][]{}%
  }%
  \providecommand\rotatebox[2]{#2}%
  \@ifundefined{ifGPcolor}{%
    \newif\ifGPcolor
    \GPcolorfalse
  }{}%
  \@ifundefined{ifGPblacktext}{%
    \newif\ifGPblacktext
    \GPblacktexttrue
  }{}%
  \let\gplgaddtomacro\g@addto@macro
  \gdef\gplbacktext{}%
  \gdef\gplfronttext{}%
  \makeatother
  \ifGPblacktext
    \def\colorrgb#1{}%
    \def\colorgray#1{}%
  \else
    \ifGPcolor
      \def\colorrgb#1{\color[rgb]{#1}}%
      \def\colorgray#1{\color[gray]{#1}}%
      \expandafter\def\csname LTw\endcsname{\color{white}}%
      \expandafter\def\csname LTb\endcsname{\color{black}}%
      \expandafter\def\csname LTa\endcsname{\color{black}}%
      \expandafter\def\csname LT0\endcsname{\color[rgb]{1,0,0}}%
      \expandafter\def\csname LT1\endcsname{\color[rgb]{0,1,0}}%
      \expandafter\def\csname LT2\endcsname{\color[rgb]{0,0,1}}%
      \expandafter\def\csname LT3\endcsname{\color[rgb]{1,0,1}}%
      \expandafter\def\csname LT4\endcsname{\color[rgb]{0,1,1}}%
      \expandafter\def\csname LT5\endcsname{\color[rgb]{1,1,0}}%
      \expandafter\def\csname LT6\endcsname{\color[rgb]{0,0,0}}%
      \expandafter\def\csname LT7\endcsname{\color[rgb]{1,0.3,0}}%
      \expandafter\def\csname LT8\endcsname{\color[rgb]{0.5,0.5,0.5}}%
    \else
      \def\colorrgb#1{\color{black}}%
      \def\colorgray#1{\color[gray]{#1}}%
      \expandafter\def\csname LTw\endcsname{\color{white}}%
      \expandafter\def\csname LTb\endcsname{\color{black}}%
      \expandafter\def\csname LTa\endcsname{\color{black}}%
      \expandafter\def\csname LT0\endcsname{\color{black}}%
      \expandafter\def\csname LT1\endcsname{\color{black}}%
      \expandafter\def\csname LT2\endcsname{\color{black}}%
      \expandafter\def\csname LT3\endcsname{\color{black}}%
      \expandafter\def\csname LT4\endcsname{\color{black}}%
      \expandafter\def\csname LT5\endcsname{\color{black}}%
      \expandafter\def\csname LT6\endcsname{\color{black}}%
      \expandafter\def\csname LT7\endcsname{\color{black}}%
      \expandafter\def\csname LT8\endcsname{\color{black}}%
    \fi
  \fi
  \setlength{\unitlength}{0.0500bp}%
  \begin{picture}(7200.00,2520.00)%
    \gplgaddtomacro\gplbacktext{%
      \csname LTb\endcsname%
      \put(726,465){\makebox(0,0)[r]{\strut{} 0}}%
      \put(726,1279){\makebox(0,0)[r]{\strut{} 0.5}}%
      \put(726,2093){\makebox(0,0)[r]{\strut{} 1}}%
      \put(883,220){\makebox(0,0){\strut{}-1}}%
      \put(2139,220){\makebox(0,0){\strut{}-0.5}}%
      \put(3394,220){\makebox(0,0){\strut{} 0}}%
      \put(4650,220){\makebox(0,0){\strut{} 0.5}}%
      \put(5906,220){\makebox(0,0){\strut{} 1}}%
      \put(6408,220){\makebox(0,0){\strut{}$\ \ y f(\xb)$}}%
    }%
    \gplgaddtomacro\gplfronttext{%
      \csname LTb\endcsname%
      \put(4454,1928){\makebox(0,0)[l]{\strut{}$\zoloss\big(f(\xb),y\big)$}}%
      \csname LTb\endcsname%
      \put(4454,1598){\makebox(0,0)[l]{\strut{}$\linloss\big(f(\xb),y)$}}%
    }%
    \gplbacktext
    \put(0,0){\includegraphics{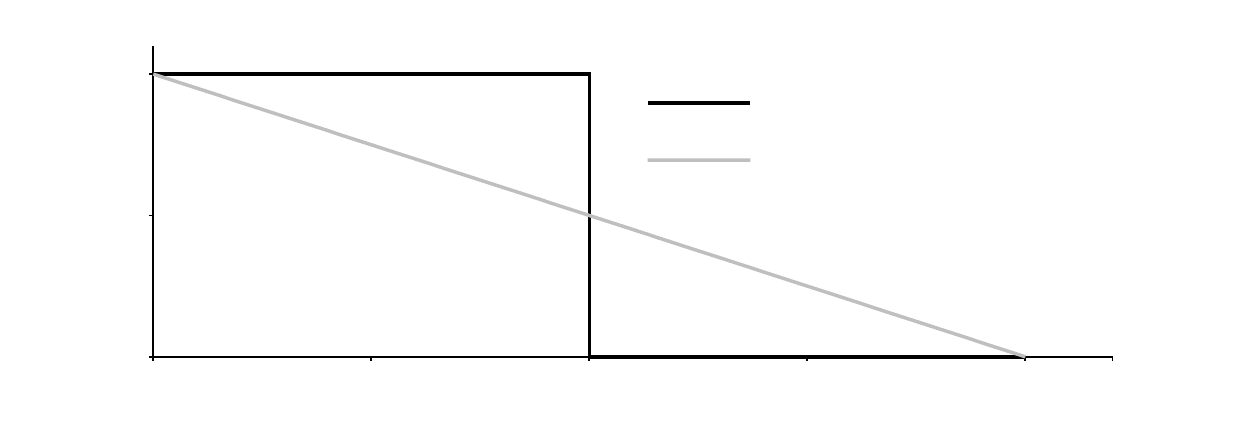}}%
    \gplfronttext
  \end{picture}%
\endgroup
 }
 \caption{The zero-one loss $\zoloss$ and the linear loss $\linloss$ as a function of $yf(\xb)$.}
 \label{fig:losses}
\end{figure}

\begin{definition}\rm \label{def:expected_loss}
Given a loss function~$\loss$ and a voter $f$, the \emph{expected loss} $\ElossDprim( f )$ of $f$ relative to distribution $D'$ is defined as 
\begin{equation*}
\ElossDprim( f ) \ \eqdef  \,\esp{\ex\sim D'} \loss \big( f(\xb), y \big)\,.
\end{equation*}
In particular, the \emph{empirical expected loss} on a training set $S$ is given by
\begin{equation*}
\ElossS( f )\ = \ \frac{1}{m} \sum_{i=1}^m \loss \big( f(\xb_i), y_i \big)\,.
\end{equation*}
\end{definition}
We therefore define the risk of the majority vote $R_{D'}(B_Q)$ as follows.
\begin{definition}\rm \label{def:bayesrisk}
 For any probability distribution $Q$ on a set of voters, the \emph{Bayes risk} $R_{D'}(B_Q)$, also called \emph{risk of the majority vote}, is defined as the expected zero-one loss of the majority vote classifier $B_Q$ relative to $D'$. Hence,
\begin{equation*}
 R_{D'}(B_Q)\ \eqdef\  
 \Ezoloss{D'}(B_Q) \ =\ 
 \esp{\ex\sim D'} I\Big( B_Q(\xb) \neq y \Big) \ =\ 
  \esp{\ex\sim D'} I\Big( \esp{f\sim Q} y \cdot f(\xb) \leq 0 \Big) \, .
\end{equation*}
\end{definition}
 Remember from the definition of $B_Q$ (Equation~\ref{eq:bayes}) that the majority vote classifier abstains in the case of a tie on an example~$\ex$. Therefore, the above Definition~\ref{def:bayesrisk} implies that the Bayes risk is 1 in this case, as $R_{\langle(\xb,y)\rangle}(B_Q) \!=\! \zoloss(0,y)\!=\!1$. In practice, a tie in the vote is a rare event, especially if there are many voters.

The output of the deterministic majority vote classifier
$B_Q$ is closely related to the output of a stochastic
classifier called the \emph{Gibbs classifier}. 
To classify an input
example~$\xb$, the Gibbs classifier $G_Q$ randomly chooses a
voter $f$ according to $Q$ and returns $f(\xb)$. Note the stochasticity of the Gibbs classifier: it can output different values when given the same input $x$ twice. We will see later how the link between $B_Q$ and $G_Q$ is used in the PAC-Bayesian theory.

In the case of binary voters, the Gibbs risk %
corresponds to the probability that $G_Q$ misclassifies an example of distribution $D'$. Hence,
\begin{equation*} %
R_{D'}(G_Q) 
\ = \, \prob{\substack{(\xb,y)\sim D'\\ h\sim Q}} \big(h(\xb)\ne y\big)
\ = \, \esp{h\sim Q} \Ezoloss{D'}(h)
\ = \, \esp{(\xb,y)\sim D'}\esp{h\sim Q}I\big(h(\xb)\ne y\big) 
\,.
\end{equation*}
In order to handle real-valued voters, we generalize the Gibbs risk as follows. 
\begin{definition}\rm \label{def:gibbsrisk}
   For any probability distribution $Q$ on a set of voters, the \emph{Gibbs risk} $R_{D'}(G_Q)$ is defined as the expected linear loss of the Gibbs classifier $G_Q$ relative to $D'$. Hence,
\begin{equation*} %
R_{D'}(G_Q) \ \eqdef \, 
 \esp{f\sim Q} \ElinlossDprim(f)
\ = \  \frac{1}{2} \left(1-\esp{(\xb,y)\sim D'}\esp{f\sim Q}y \cdot f(\xb) \right)\,.
\end{equation*}
\end{definition}

\begin{remark} \label{remark:2gibbs} \rm
It is well known in the PAC-Bayesian literature \citep[\eg,][]{ls-03,m-03b,gllm-09} that the Bayes risk $\RBQ$ is  bounded by twice the Gibbs risk $\RGQ$. This statement extends to our more general definition  of the Gibbs risk (Definition~\ref{def:gibbsrisk}).
\end{remark}
\begin{proof}
Let $(\xb,y)\in\Xcal\times\{-1,1\}$ be any example. We claim that
\begin{equation}\label{eq:risque_sur_un_exemple2}
R_{\langle(\xb,y)\rangle}(B_Q)\ \le\ 2\,R_{\langle(\xb,y)\rangle}(G_Q)\,.
\end{equation}
Notice that $R_{\langle(\xb,y)\rangle}(B_Q)$ is either $0$ or $1$ depending of the fact that $B_Q$ errs or not on $(\xb,y)$. In the case where $R_{\langle(\xb,y)\rangle}(B_Q) =0$, Equation~\eqref{eq:risque_sur_un_exemple2} is trivially true. If $R_{\langle(\xb,y)\rangle}(B_Q) =1$, we know by the last equality of Definition~\ref{def:bayesrisk} that $\esp{f\sim Q}y\cdot f(\xb) \leq 0$. 
Therefore, Definition~\ref{def:gibbsrisk} gives
\begin{equation*}
2\cdot R_{\langle(\xb,y)\rangle}(G_Q) \ = \  2\cdot \frac{1}{2} \left(1-\esp{f\sim Q}y \cdot f(\xb) \right) \ \geq \  1 = R_{\langle(\xb,y)\rangle}(B_Q) \,,
\end{equation*}
which proves the claim.
  
Now, by taking the expectation according to $(\xb,y)\sim D'$ on each side of Equation~\eqref{eq:risque_sur_un_exemple2}, we obtain
\begin{equation*}
R_{D'}(B_Q) \,= \esp{(\xb,y)\sim D'} R_{\langle(\xb,y)\rangle}(B_Q)\ \le\ \esp{(\xb,y)\sim D'}  2\,R_{\langle(\xb,y)\rangle}(G_Q)\,= 2\,R_{D'}(G_Q) \,,
\end{equation*}
as wanted.
\end{proof}

Thus, PAC-Bayesian bounds on the risk of the majority vote are usually bounds on the Gibbs risk, multiplied by a factor of two.  Even if this type of bound can be tight in some situations, the factor two can also be misleading. \citet{ls-03} have shown that under some circumstances, the factor of two can be reduced to $(1+\epsilon)$. Nevertheless, distributions $Q$ on voters giving $\RGQ \gg \RBQ$ are common.
The extreme case happens when the expected linear loss on each example is just below one half -- \ie, for all~$\ex$, $\Eb_{f\sim Q}\ y \, f(\xb) = \frac{1}{2}{-}\epsilon$ --, leading to a perfect majority vote classifier but an almost inaccurate Gibbs classifier.
Indeed, we have $\RGQ=\frac{1}{2}{-}\epsilon$ and $\RBQ=0$. Therefore, in this circumstance, the bound $\RBQ\leq 1{-}2\epsilon$, given by Remark~\ref{remark:2gibbs}, fails to represent the perfect accuracy of the majority vote. This problem is due to the fact that the Gibbs risk only considers the loss of the average output of the population of voters.
Hence, the bound of Remark~\ref{remark:2gibbs} states that the majority vote is weak whenever every individual voter is weak. 
The bound cannot capture the fact that it might happen that the ``community'' of voters 
compensates for individual errors. To overcome this lacuna, we need a bound that compares the output of voters between them, not only the average quality of each voter taken individually.

We can compare the output of binary voters by considering the probability of disagreement between them:
\begin{eqnarray*}
 \prob{\substack{\xb\sim D_\Xcal'\\ h_1, h_2\sim Q}} \Big( h_1(\xb) \neq h_2(\xb) \Big) 
 & = &
 \esp{\xb\sim D_\Xcal'} \esp{h_1\sim Q} \esp{h_2\sim Q}  I \Big( h_1(\xb) \neq h_2(\xb) \Big)\\[-2mm]
  & =&
 \esp{\xb\sim D_\Xcal'} \esp{h_1\sim Q} \esp{h_2\sim Q}  I \Big( h_1(\xb)\cdot h_2(\xb)\ne 1 \Big) \\[1mm]
  & =&
  \esp{\xb\sim D_\Xcal'} \esp{h_1\sim Q} \esp{h_2\sim Q}  \zoloss \big( \,h_1(\xb)\!\cdot\! h_2(\xb)\,, \, 1\, \big) \, ,
\end{eqnarray*}
where $D_\Xcal'$ denotes the marginal on $\Xcal$ of distribution $D'$. 
Definition~\ref{def:disagreement} extends this notion of disagreement to real-valued voters.
\begin{definition}\rm
\label{def:disagreement}
 For any probability distribution $Q$ on a set of voters, the \emph{expected disagreement}~$\dQ$ relative to $D'$ is defined as
\begin{eqnarray}
\nonumber
  \dQ &\eqdef& \esp{\xb\sim D_\Xcal'} \esp{f_1\sim Q} \esp{f_2\sim Q}  \linloss \,\big(\, f_1(\xb)\!\cdot\! f_2(\xb)\,,\, 1 \,\big)\\
  \nonumber
  &=& \frac{1}{2} \left( 1- \esp{\xb\sim D_\Xcal'} \esp{f_1\sim Q} \esp{f_2\sim Q}  1\cdot f_1(\xb) \cdot f_2(\xb) \right) \\
   \label{eq:desagrement_espere}
   \nonumber
   &=& \frac{1}{2} \bigg( 1- \esp{\xb\sim D_\Xcal'} \left[ \esp{f\sim Q}   f(\xb) \right]^2 \bigg)\,.
\end{eqnarray}
\end{definition}
Notice that the value of $\dQ$ does not depend on the labels $y$ of the examples $\ex\sim D'$.  Therefore, we can estimate the expected disagreement with unlabeled data.

\section{Bounds on the Risk of the Majority Vote}
\label{section:Majority_Vote_Bounds}

The aim of this section is to introduce the $\Cbound$, which upper-bounds the risk of the majority vote (Definition~\ref{def:bayesrisk}) based on the Gibbs risk (Definition~\ref{def:gibbsrisk}) and the expected disagreement (Definition~\ref{def:disagreement}). We start by studying the margin of a majority vote as a random variable (Section~\ref{section:margin_and_moments}). From the first moment of the margin, we easily recover the well-known bound of twice the Gibbs risk presented by Remark~\ref{remark:2gibbs} (Section~\ref{section:2RG_rediscover}). We therefore suggest extending this analysis to the second moment of the margin to obtain the \Cbound (Section~\ref{section:Cbound_thm}). Finally, we present some statistical properties of the \Cbound (Section~\ref{section:Cbound_statistical_analysis}) and an empirical study of its predictive power (Section~\ref{section:Cbound_empirical_study}).  

\subsection{The Margin of the Majority Vote and its Moments}
\label{section:margin_and_moments}

The bounds on the risk of a majority vote classifier proposed in this section result from the study of the weighted margin of the majority vote as a random variable.

\begin{definition}\rm \label{def:margin}
Let $\MQ{D'}$ be the random variable that, given any example $\ex$ drawn according to $D'$, outputs the \emph{margin} of the majority vote $B_Q$ on that example, which is
\begin{equation*}
 M_Q(\xb,y) \ \eqdef \ \esp{f\sim Q} y \cdot f(\xb)\,.
\end{equation*}
\end{definition}

\noindent
From Definitions~\ref{def:bayesrisk} and~\ref{def:margin}, 
we have the following nice property:\footnote{Note that for another choice of the zero-one loss definition (Definition~\ref{conv:Risque_de_B_Q}), the tie in the majority vote -- \ie, when $M_Q(x,y)=0$ -- would have been more complicated to handle, and the statement should  have been relaxed to
\begin{equation*}%
  \prob{(\xb,y)\sim D'}\Big(M_Q(\xb,y)< 0 \Big)
  \ \leq\  R_{D'}(B_Q) \ \leq \
\prob{(\xb,y)\sim D'} \Big(M_Q(\xb,y)\leq 0 \Big)\,.
 \end{equation*}}\,
\begin{equation}\label{eq:convention}
    R_{D'}(B_Q)\ =\, 
\prob{(\xb,y)\sim D'}\Big(M_Q(\xb,y)\leq 0\Big)\,.
\end{equation}

\bigskip

The margin is not only related to the risk of the
majority vote, but also to Gibbs risk.  For that purpose, let us consider the first moment $\momentone(\MQ{D'})$ of the random variable $\MQ{D'}$ which is defined as
\begin{equation} \label{eq:margin_moment_one}
 \momentone(\MQ{D'}) \ \eqdef \, \esp{(\xb,y)\sim D'} M_Q(\xb,y) \,.
\end{equation} 
We can now rewrite the Gibbs risk (Definition~\ref{def:gibbsrisk}) as a function of $\momentone(\MQ{D'})$, since
 \begin{eqnarray} \nonumber
 R_{D'}(G_Q) & = &   \esp{f\sim Q} \ElinlossDprim(f)
 \ = \ \frac{1}{2} \left(1-\esp{(\xb,y)\sim D'}\esp{f\sim Q}y \cdot f(\xb) \right) \\ 
&=& \frac{1}{2}\Big(1-\!\!\esp{(\xb,y)\sim D'}\!\!M_Q(\xb,y)\Big) \nonumber \\
&=& \frac{1}{2}\Big(1-\momentone(\MQ{D'})\Big)\,. \label{eq:RGibbsMq}
 \end{eqnarray}
Similarly, we can rewrite the expected disagreement as a function of the second moment of the margin. 
We use $\momenttwo(\MQ{D'})$ to denote the second moment. Since $y\in\{-1,1\}$ and, therefore, $y^2=1$, the second moment of the margin does not rely on labels. Indeed, we have
\begin{eqnarray} \label{eq:margin_moment_two}
  \momenttwo(\MQ{D'}) & \eqdef & \esp{(\xb,y)\sim D'} \Big[ M_Q\ex \Big]^2
  \\ \nonumber
   &=&\ \esp{\ex\sim D'} y^2 \cdot \Big[ \esp{f\sim Q}  f(\xb) \Big]^2 
   \\ \nonumber
    &=&\ \esp{\xb\sim D'_\Xcal}\Big[  \esp{f\sim Q}  f(\xb) \Big]^2 \,.
\end{eqnarray}
Hence, from the last equality and Definition~\ref{def:disagreement}, the expected disagreement can be expressed as 
\begin{eqnarray}
  \nonumber
  \dQ &=& \frac{1}{2} \bigg( 1- \esp{\xb\sim D_\Xcal'} \left[ \esp{f\sim Q}   f(\xb) \right]^2 \bigg)
\\ \label{eq:dQ_Mq}
  &=& \frac{1}{2} \Big( 1- \momenttwo(\MQ{D'}) \Big) \,.
 \end{eqnarray}

\bigskip
Equation~\eqref{eq:dQ_Mq} shows that $0 \, \leq\, \dQ\, \leq \, 1/2$, since $0 \, \leq\, \momenttwo(\MQ{D'})\, \leq \, 1$.  Furthermore, we can upper-bound the disagreement more tightly than simply saying it is at most 1/2 by making use of the value of the Gibbs risk. To do so, let us write the variance of the margin as 
\begin{eqnarray} \nonumber
 \vMQ &\eqdef& \var{\ex\sim D'}\big(M_Q\ex\big)\\
 &=& \momenttwo(\MQ{D'}) \,-\, \left( \momentone(\MQ{D'}) \right)^2\,.  \label{eq:VarMq:mu2-sqr(mu1)}
\end{eqnarray}
Therefore, as the variance cannot be negative, it follows that
\begin{eqnarray*}
  \momenttwo(\MQ{D'}) &\geq &\left( \momentone(\MQ{D'}) \right)^2\,,
 \end{eqnarray*}
which implies that
\begin{eqnarray} \label{eq:relation_dq_rdgq}
  1 - 2\cdot \dQ  & \geq & \left( 1-2\cdot R_{D'}(G_Q) \right)^2 \,.
\end{eqnarray}
Easy calculation then gives the desired bound of $\dQ$ (that is based on the Gibbs risk):
\begin{eqnarray}
 \label{eq:VarMq:dQbound}
    \dQ & \leq & 2 \cdot R_{D'}(G_Q) \cdot \big( 1-R_{D'}(G_Q) \big) \,.
\end{eqnarray}
We therefore have the following proposition.
\begin{proposition} \label{prop:dQ_bound}
  For any distribution $Q$ on a set of voters and any distribution $D'$ on \mbox{$\Xcal\!\times\!\{-1,1\}$}, we have
  \begin{equation*}
   \dQ \ \leq \ 2 \cdot R_{D'}(G_Q) \cdot \big( 1-R_{D'}(G_Q) \big) \ \leq \ \frac{1}{2}\,.
  \end{equation*}
  Moreover, if $\dQ=\frac{1}{2}$ \ then \ $R_{D'}(G_Q)=\frac{1}{2}$\,.
\end{proposition}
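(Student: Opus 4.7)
The plan is to essentially collect and package the calculations already performed in the paragraphs immediately preceding the proposition. The first inequality $d_Q^{D'} \le 2R_{D'}(G_Q)(1-R_{D'}(G_Q))$ is exactly the content of Equation~\eqref{eq:VarMq:dQbound}, which in turn was obtained from the non-negativity of $\mathrm{Var}(M_Q^{D'})$: writing $\mathrm{Var}(M_Q^{D'}) = \mu_2(M_Q^{D'}) - (\mu_1(M_Q^{D'}))^2 \ge 0$ yields $\mu_2(M_Q^{D'}) \ge (\mu_1(M_Q^{D'}))^2$, and substituting the identities $\mu_1(M_Q^{D'}) = 1 - 2R_{D'}(G_Q)$ from~\eqref{eq:RGibbsMq} and $\mu_2(M_Q^{D'}) = 1 - 2 d_Q^{D'}$ from~\eqref{eq:dQ_Mq} gives $1 - 2 d_Q^{D'} \ge (1 - 2R_{D'}(G_Q))^2$, whence the claimed first inequality after rearrangement. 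So this half of the proposition I would simply cite from the derivation above.

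The second inequality $2R_{D'}(G_Q)(1-R_{D'}(G_Q)) \le \tfrac{1}{2}$ I would obtain from the elementary observation that the real-valued function $x\mapsto x(1-x)$ attains its maximum $\tfrac{1}{4}$ at $x=\tfrac{1}{2}$ on $[0,1]$. Since the Gibbs risk $R_{D'}(G_Q) \in [0,1]$ (both the voters $f$ take values in $[-1,1]$ and hence the linear loss $\linloss$ is in $[0,1]$), multiplying by $2$ gives the bound $\tfrac{1}{2}$.

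For the final claim, I would argue by squeezing. Assume $d_Q^{D'} = \tfrac{1}{2}$. Then the chain
\[
\tfrac{1}{2} \ =\ d_Q^{D'} \ \le\ 2R_{D'}(G_Q)\bigl(1-R_{D'}(G_Q)\bigr) \ \le\ \tfrac{1}{2}
\]
forces equality throughout, so $R_{D'}(G_Q)(1-R_{D'}(G_Q)) = \tfrac{1}{4}$, which holds if and only if $R_{D'}(G_Q) = \tfrac{1}{2}$. There is no real obstacle here: the proposition is essentially a bookkeeping corollary of the variance identity already established, together with the pointwise maximum of $x(1-x)$.
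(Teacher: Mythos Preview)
Your proposal is correct and follows essentially the same approach as the paper: the first inequality is cited from Equation~\eqref{eq:VarMq:dQbound}, and the remaining claims are deduced from the fact that $x\mapsto 2x(1-x)$ is a parabola with unique maximum $\tfrac12$ at $x=\tfrac12$. Your squeezing argument for the final implication simply makes explicit what the paper leaves implicit in the word ``unique.''
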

\begin{proof}
Equation~\eqref{eq:VarMq:dQbound} gives the first inequality. The rest of the proposition directly follows from the fact that $f(x)=2 x(1-x)$ is a parabola whose (unique) maximum is at the point~$( \frac{1}{2},\frac{1}{2})$.
\end{proof}

\subsection{Rediscovering the bound \  $R_{D'} (B_Q) \leq 2\cdot R_{D'} (G_Q)$}
\label{section:2RG_rediscover}

The well-known factor of two with which one can transform a bound on the Gibbs risk $\RGQ$ into a bound on the risk $\RBQ$ of the majority vote is usually justified by an argument similar to the one given in Remark~\ref{remark:2gibbs}.  However, as shown by the proof of Proposition~\ref{thm:2RG_rediscover}, the result can also be obtained by considering that the risk of the majority vote is the probability that the margin $\MQ{D'}$ is lesser than or equal to zero (Equation~\ref{eq:convention}) and by simply applying Markov's inequality (Lemma~\ref{lem:markov}, provided in Appendix~\ref{section:appendix_auxmath}).

\begin{proposition}
\label{thm:2RG_rediscover}
 For any distribution $Q$ on a set of voters and any distribution $D'$ on \mbox{$\Xcal\!\times\!\{-1,1\}$}, we have 
\begin{equation*}
 R_{D'}(B_Q) \ \leq \ 2\cdot R_{D'} (G_Q)\,.
\end{equation*}
\end{proposition}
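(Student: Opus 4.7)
The plan is to follow the hint given right before the proposition: combine Equation~\eqref{eq:convention}, which rewrites $R_{D'}(B_Q)$ as a tail probability of the margin, with Markov's inequality (Lemma~\ref{lem:markov}). The only care needed is that Markov's inequality requires a non-negative random variable, whereas the margin $M_Q(x,y)$ takes values in $[-1,1]$.

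First, I would introduce the auxiliary random variable $Z \eqdef 1 - M_Q(\xb,y)$ under $(\xb,y)\sim D'$. Since $M_Q(\xb,y) = \esp{f\sim Q} y\cdot f(\xb) \in [-1,1]$, we have $Z \in [0,2]$, so $Z$ is non-negative and Markov applies. The key elementary observation is that the event of interest for the Bayes risk translates cleanly:
\begin{equation*}
\{M_Q(\xb,y) \leq 0\} \ = \ \{Z \geq 1\}.
\end{equation*}

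Second, I would chain this with Equation~\eqref{eq:convention} and Markov's inequality applied at threshold $1$:
\begin{equation*}
R_{D'}(B_Q) \ = \ \prob{(\xb,y)\sim D'}\bigl(M_Q(\xb,y)\leq 0\bigr) \ = \ \prob{(\xb,y)\sim D'}(Z \geq 1) \ \leq \ \frac{\espdevant{(\xb,y)\sim D'} Z}{1} \ = \ 1 - \momentone(\MQ{D'}).
\end{equation*}

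Third, I would close the argument by invoking Equation~\eqref{eq:RGibbsMq}, which already rewrote the Gibbs risk as $R_{D'}(G_Q) = \tfrac{1}{2}\bigl(1-\momentone(\MQ{D'})\bigr)$, so that $1-\momentone(\MQ{D'}) = 2\,R_{D'}(G_Q)$. Substituting yields the desired bound $R_{D'}(B_Q) \leq 2\,R_{D'}(G_Q)$.

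There is essentially no hard step: the whole argument is a two-line rewriting plus Markov. The only subtlety worth flagging is the need to shift the margin by $1$ so that Markov applies; choosing the correct non-negative transformation (rather than, say, $|M_Q|$ or $-M_Q$) is what makes the Gibbs-risk expression $\tfrac{1}{2}(1-\momentone)$ appear naturally. This is also why the same recipe, carried out at the second moment with Chebyshev instead of Markov, will later deliver the tighter \Cbound{}.
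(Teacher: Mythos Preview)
Your proposal is correct and follows essentially the same approach as the paper: shift the margin to the non-negative variable $1-M_Q(\xb,y)$, apply Markov's inequality at threshold $1$, and then identify $1-\momentone(\MQ{D'})$ with $2\,R_{D'}(G_Q)$ via Equation~\eqref{eq:RGibbsMq}. The only cosmetic difference is that you name the shifted variable $Z$, whereas the paper performs the same manipulation inline.
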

\begin{proof}
 Starting from Equation~\eqref{eq:convention} and using Markov's inequality (Lemma~\ref{lem:markov}), we have
 {\small
 \begin{eqnarray*} 
 \nonumber
 R_{D'}(B_Q)&=&  \prob{(\xb,y)\sim D'}\big(M_Q(\xb,y)\leq 0\big)\\
 &=&\prob{(\xb,y)\sim D'}\big(1-M_Q(\xb,y)\geq 1\big)\\
 \nonumber
&\leq&  \esp{(\xb,y)\sim D'} \big(1-M_Q(\xb,y)\big)
 \hspace{26mm} \mbox{(Markov's inequality)}\hspace{-26mm}\\
&=&  1-\esp{(\xb,y)\sim D'} M_Q(\xb,y) \\
&=&  1- \momentone(\MQ{D'})\\[2mm]
&=& 2\cdot R_{D'}(G_Q)\,.
 \end{eqnarray*}
 }The last equality is directly obtained from Equation~\eqref{eq:RGibbsMq}.
\end{proof}

This proof highlights that we can upper-bound $\RBQ$ by considering solely the first moment of the margin $\momentone(\MQ{D'})$. Once we realize this fact, it becomes natural to extend this result to higher moments. We do so in the following subsection where we make use of Chebyshev's inequality (instead of Markov's inequality), which uses not only the first, but also the second moment of the margin. %
This gives rise to the $\Cbound$ of Theorem~\ref{thm:C-bound}.

\subsection{The~\Cbound:~a~Bound~on~$\RBQ$~That~Can~Be~Much~Smaller~Than~$\RGQ$}
\label{section:Cbound_thm}

Here is the bound on which most of the results of this paper are based. We refer to it as the \Cbound. It was first introduced (but in a different form) in \cite{nips06-mv}.\footnote{We present the form used by~\cite{nips06-mv} in Remark~\ref{remark:original_CQ} at the end of the present subsection.}
We give here three different (but equivalent) forms of the \Cbound. Each one highlights a different property or behavior of the bound. Figure~\ref{fig:CQ_3forms} illustrates these behaviors.
 \begin{figure}[t]
\subfloat[First form.]{\includegraphics[height=4.6cm]{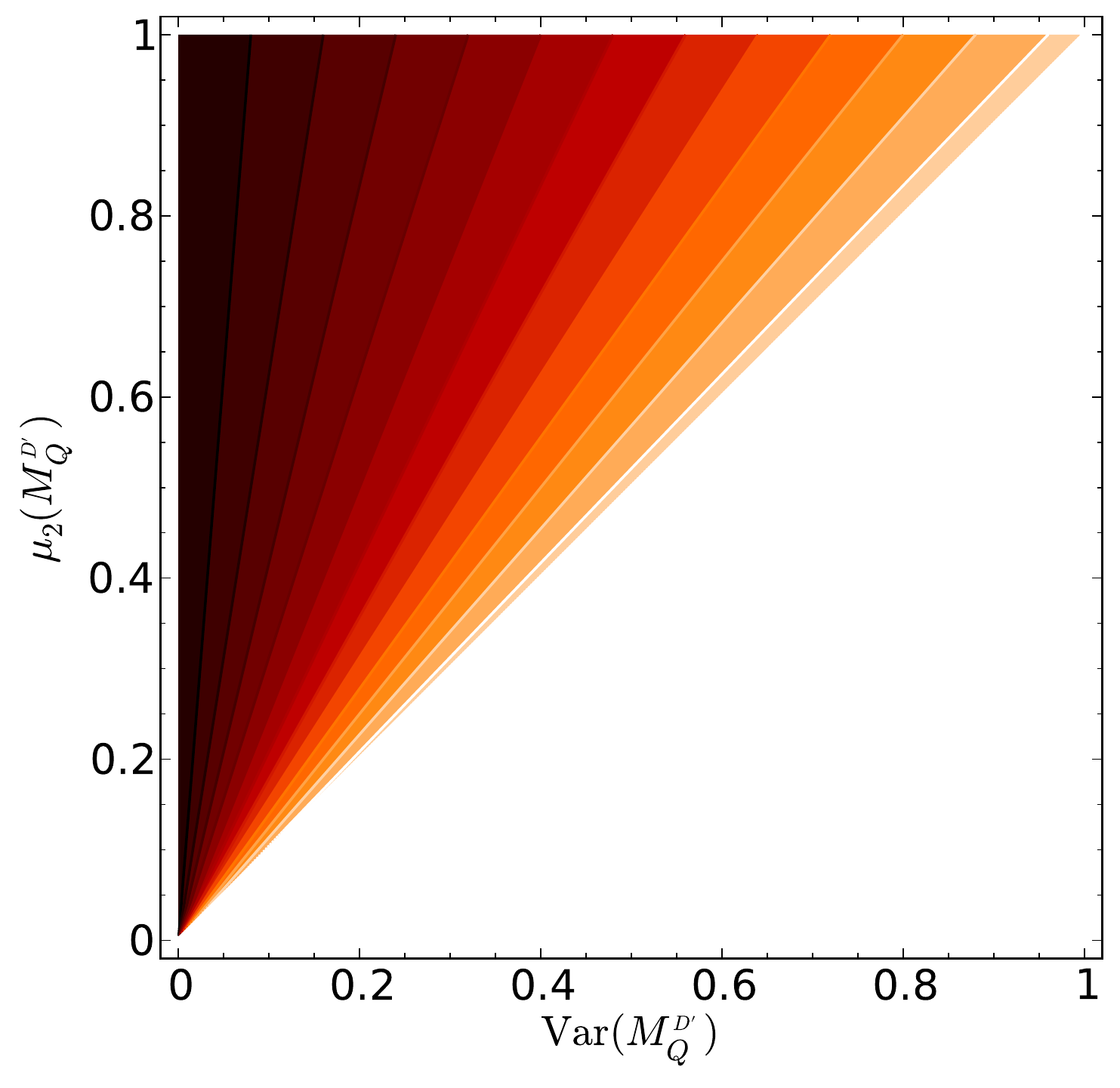} \label{fig:CQ_form1} } 
\subfloat[Second form.]{\includegraphics[height=4.6cm]{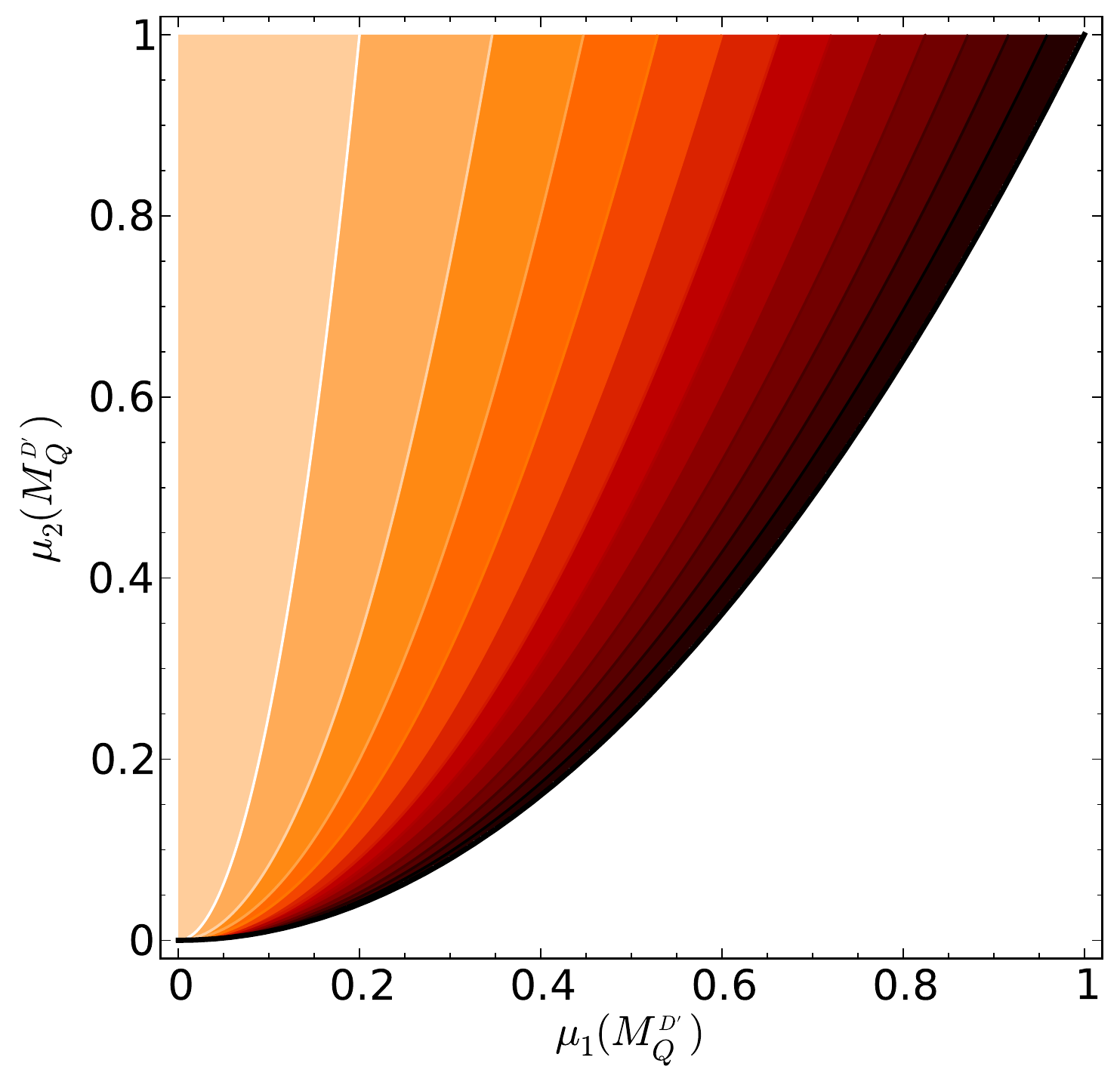} \label{fig:CQ_form2}} 
\subfloat[Third form.]{\includegraphics[height=4.6cm]{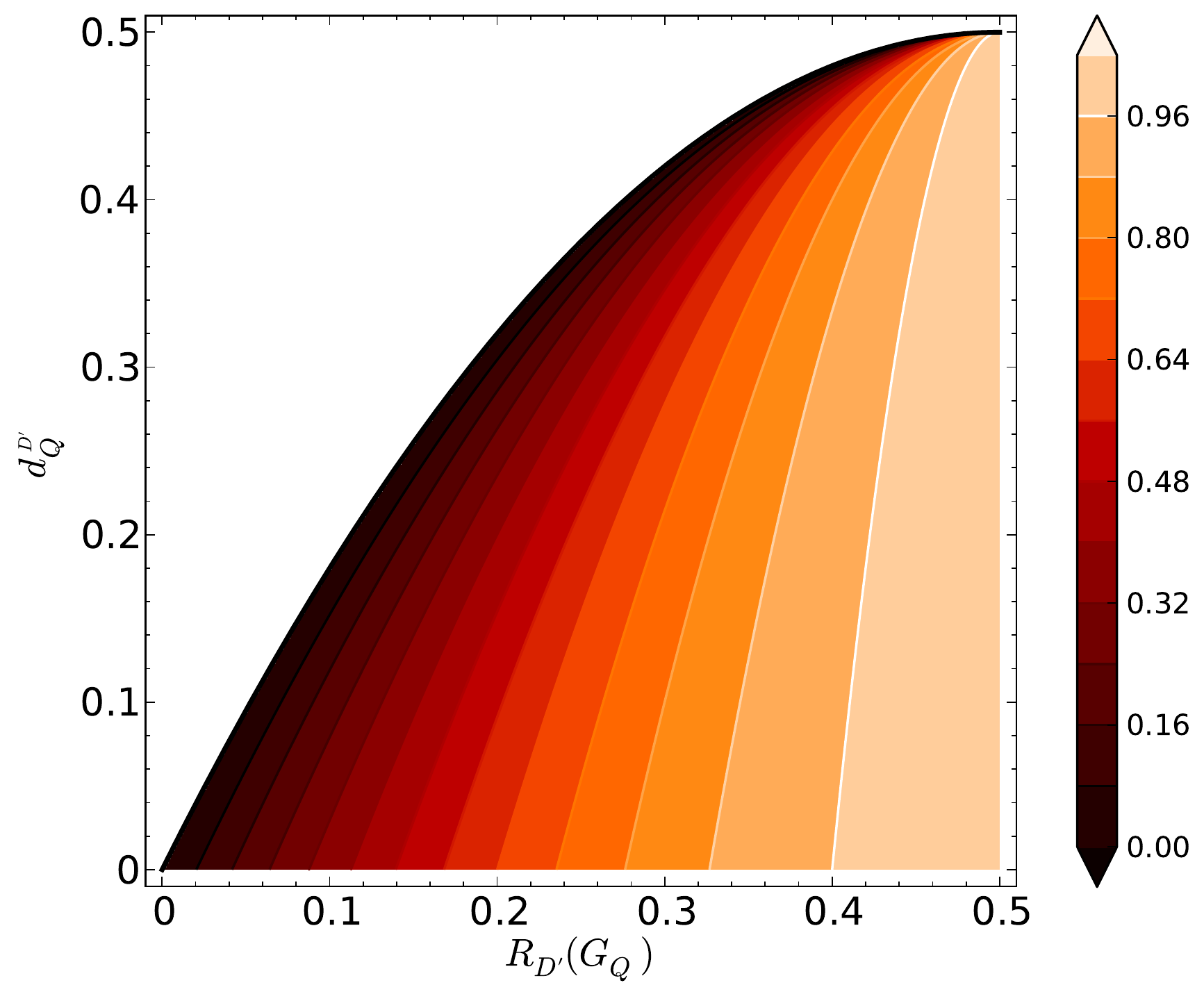} \label{fig:CQ_form3}} 
\caption{Contour plots of the \Cbound.}
\label{fig:CQ_3forms}
\end{figure}

It is interesting to note that the proof of Theorem~\ref{thm:C-bound} below has the same starting point as the proof of Proposition~\ref{thm:2RG_rediscover}, but uses Chebyshev's inequality instead of Markov's inequality (respectively Lemmas~\ref{th:chebychev} and~\ref{lem:markov}, both provided in Appendix~\ref{section:appendix_auxmath}). Therefore, Theorem~\ref{thm:C-bound} is based on the variance of the margin in addition of its mean.
\begin{theorem}[The \Cbound]
\label{thm:C-bound}
 For any distribution $Q$ on a set of voters and any distribution $D'$ on $\Xcal\!\times\!\{-1,1\}$,\ \  if\, \mbox{$\momentone(\MQ{D'}) > 0$} (i.e., $R_{D'}(G_Q) < 1/2$), we have 
\begin{equation*}
 R_{D'}(B_Q) \ \leq \ \Ccal_Q^{D'}\,,\\[-2mm]
\end{equation*}
where\\[-2mm]
\begin{equation*}
\Ccal_Q^{D'} 
\ \eqdef \ 
\underbrace{\begin{array}{c}
\dfrac{\phantom{\Big(}\vMQ\phantom{\Big.^2}}  { \phantom{\Big(} \momenttwo(\MQ{D'})\phantom{\Big)}}
\\[-3mm]\phantom{.}\end{array}}_{{\mbox{First form}}} 
\  = \ 
\underbrace{\begin{array}{c}
1-\dfrac{\Big(\momentone(\MQ{D'})\Big)^2}{\phantom{\Big(} \momenttwo(\MQ{D'})\phantom{\Big)} }
\\[-3mm]\phantom{.}\end{array}}_{{\mbox{Second form}}} 
\ = \ 
\underbrace{\begin{array}{c}
1-\dfrac{\Big( 1- 2\cdot R_{D'}(G_Q) \Big)^2}{ \phantom{\Big(}1 - 2\cdot \dQ \phantom{\Big)} }
\\[-3mm]\phantom{.}\end{array}}_{{\mbox{Third form}}} 
\,.
\end{equation*}
\end{theorem}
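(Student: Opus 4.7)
The plan is to mirror the proof of Proposition~\ref{thm:2RG_rediscover} but to replace Markov's inequality with a one-sided Chebyshev (Cantelli-type) inequality applied to the margin random variable $\MQ{D'}$. This is natural because Equation~\eqref{eq:convention} already expresses the Bayes risk as a lower-tail probability of $\MQ{D'}$, and the hypothesis $\momentone(\MQ{D'})>0$ means this tail sits strictly below the mean, which is exactly the regime where a one-sided variance inequality gives useful information.

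Concretely, I would start from
\begin{equation*}
R_{D'}(B_Q)\ =\ \prob{(\xb,y)\sim D'}\!\big(\MQ{D'}\le 0\big)
\ =\ \prob{(\xb,y)\sim D'}\!\big(\MQ{D'}-\momentone(\MQ{D'})\le -\momentone(\MQ{D'})\big).
\end{equation*}
Since $\momentone(\MQ{D'})>0$ by assumption, I can apply the one-sided Chebyshev inequality from Lemma~\ref{th:chebychev} with deviation $\lambda=\momentone(\MQ{D'})$ to obtain
\begin{equation*}
R_{D'}(B_Q)\ \le\ \frac{\vMQ}{\vMQ+\big(\momentone(\MQ{D'})\big)^2}.
\end{equation*}
Using the identity $\vMQ+\big(\momentone(\MQ{D'})\big)^2=\momenttwo(\MQ{D'})$, which is just Equation~\eqref{eq:VarMq:mu2-sqr(mu1)} rearranged, the right-hand side becomes $\vMQ/\momenttwo(\MQ{D'})$, which is exactly the first form of $\CQ$.

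The two remaining forms are then obtained by purely algebraic rewriting. The second form follows by substituting $\vMQ=\momenttwo(\MQ{D'})-\big(\momentone(\MQ{D'})\big)^2$ into the numerator and simplifying. The third form follows by plugging in the identities already established in this section, namely $\momentone(\MQ{D'})=1-2R_{D'}(G_Q)$ from Equation~\eqref{eq:RGibbsMq} and $\momenttwo(\MQ{D'})=1-2\dQ$ from Equation~\eqref{eq:dQ_Mq}, into the second form.

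There is no real obstacle here: the only subtle point is ensuring that the hypothesis $\momentone(\MQ{D'})>0$ (equivalently $R_{D'}(G_Q)<1/2$) is used precisely to guarantee that the deviation $\lambda$ in Cantelli's inequality is strictly positive, so that the one-sided tail bound applies. I would also briefly note that $\momenttwo(\MQ{D'})>0$ (so the first and second forms are well-defined) follows from $\momenttwo(\MQ{D'})\ge\big(\momentone(\MQ{D'})\big)^2>0$, and similarly $1-2\dQ>0$ in the third form, so no degenerate denominator arises.
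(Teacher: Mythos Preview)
Your proposal is correct and follows essentially the same approach as the paper: apply the one-sided Chebyshev inequality (Lemma~\ref{th:chebychev}) to the margin random variable at deviation $\momentone(\MQ{D'})>0$, then use $\vMQ+\big(\momentone(\MQ{D'})\big)^2=\momenttwo(\MQ{D'})$ to obtain the first form, and Equations~\eqref{eq:RGibbsMq} and~\eqref{eq:dQ_Mq} for the remaining two forms. Your added remark that $\momenttwo(\MQ{D'})\ge\big(\momentone(\MQ{D'})\big)^2>0$ guarantees a nonzero denominator is a welcome clarification that the paper leaves implicit.
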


\begin{proof}
 Starting from Equation~\eqref{eq:convention} and using the one-sided Chebyshev inequality (Lem\-ma~\ref{th:chebychev}), with  $X\!=\!-M_Q(\xb,y)$,\ \ $\mu=\esp{(\xb,y)\sim D'}\!\!\!\big(-\!M_Q(\xb,y)\big)$ and $a=\esp{(\xb,y)\sim D'}\!\!\!M_Q(\xb,y)$, we~obtain
 {\small
 \begin{eqnarray} 
 \nonumber
 R_{D'}(B_Q)&=& 
 \prob{(\xb,y)\sim D'}\Big(M_Q(\xb,y)\leq 0\Big)
 \\ \nonumber
&=&  
\prob{(\xb,y)\sim D'}\left(-M_Q(\xb,y) + \esp{(\xb,y)\sim D'}\!\!\!M_Q(\xb,y) \geq \esp{(\xb,y)\sim D'}\!\!\!M_Q(\xb,y)\right) 
\\ \nonumber%
&\leq& 
\frac{\var{(\xb,y)\sim D'}(M_Q(\xb,y))}{\var{(\xb,y)\sim D'}(M_Q(\xb,y))+ \left(\esp{(\xb,y)\sim D'}M_Q(\xb,y)\right)^2} 
 \hspace{11mm} \mbox{(Chebyshev's inequality)}\hspace{-11mm}\\
\label{eq:cq-cheby-form1}
&=& \frac{\vMQ}
{ \momenttwo(\MQ{D'})\phantom{\Big)}-\Big(\momentone(\MQ{D'})\Big)^2+ \Big(\momentone(\MQ{D'})\Big)^2}
\quad = \quad 
\frac{\vMQ}{ \momenttwo(\MQ{D'})\phantom{\Big)}}
\\ 
\nonumber
&=& \frac{ \momenttwo(\MQ{D'})\phantom{\Big)}-\Big(\momentone(\MQ{D'})\Big)^2}{\phantom{\Big(} \momenttwo(\MQ{D'})\phantom{\Big)} }\\
\label{eq:cq-cheby-form2}
& = &
1-\frac{\Big(\momentone(\MQ{D'})\Big)^2}{\phantom{\Big(} \momenttwo(\MQ{D'})\phantom{\Big)} }
\\ \label{eq:cq-cheby-form3}
&=& 
1-\frac{\Big( 1- 2\cdot R_{D'}(G_Q) \Big)^2}{ \phantom{\Big(}1 - 2\cdot \dQ \phantom{\Big)} }
  \,.
 \end{eqnarray}
 }Lines~\eqref{eq:cq-cheby-form1} and~\eqref{eq:cq-cheby-form2} respectively present the first and the second forms of $\Ccal_Q^{D'}$, and follow from the definitions of
  $\momentone(\MQ{D'})$,\,  $\momenttwo(\MQ{D'})$,\, and $\vMQ$ (see Equations~\ref{eq:margin_moment_one}, \ref{eq:margin_moment_two} and~\ref{eq:VarMq:mu2-sqr(mu1)}).
  The third form of $\Ccal_Q^{D'}$ is obtained at Line~\eqref{eq:cq-cheby-form3} using $\momentone(\MQ{D'}) =1- 2\cdot R_{D'}(G_Q)$ and $\momenttwo(\MQ{D'})=1- 2\cdot \dQ$, which can be derived directly from Equations~\eqref{eq:RGibbsMq} and~\eqref{eq:dQ_Mq}.
\end{proof}
The third form  of the \Cbound shows that the bound decreases when the Gibbs risk $R_{D'}(G_Q)$ decreases or when the disagreement $\dQ$ increases. This new bound therefore suggests that a majority vote should perform a trade-off between the Gibbs risk and the disagreement in order to achieve a low Bayes risk. This is more informative than the usual bound of Proposition~\ref{thm:2RG_rediscover}, which focuses solely on the minimization of the Gibbs risk.

The first form of the \Cbound highlights that its value is always positive (since the variance and the second moment of the margin are positive), whereas the second form of the \Cbound highlights that it cannot exceed one. Finally, the fact that $\dQ=\frac{1}{2} \Rightarrow R_{D'}(G_Q)=\frac{1}{2}$ \ (Proposition~\ref{prop:dQ_bound}) implies that the bound is always defined, since $R_{D'}(G_Q)$ is here assumed to be strictly less than $\frac{1}{2}$.

\begin{remark}\rm
\label{remark:original_CQ}
As explained before, the \Cbound was originally stated in \citet{nips06-mv}, but in a different form. It was presented as a function of $W_Q(\xb,y)$, the $Q$-weight  of voters making an error on example $(\xb, y)$. More precisely, the \Cbound was presented as follows:
\begin{equation*}
 \Ccal_Q^D 
  \ =\
 \frac{ \var{\ex\sim D'}\big(W_Q(\xb,y)\big)}{ \var{\ex\sim D'}\big(W_Q(\xb,y)\big)+ \left(1/2-\RGQ\right)^2}\,.
\end{equation*}
It is easy to show that this form is equivalent to the three forms stated in  Theorem~\ref{thm:C-bound}, and that
$ W_Q(\xb,y) $ and $M_Q(\xb,y)$ are related by
\begin{equation*}
W_Q(\xb,y)
  \ \eqdef \ 
 \esp{f\sim Q} \linloss\big( f(\xb), y \big) 
 \ = \
 \frac{1}{2} \left(1-y \cdot \esp{f\sim Q}  f(\xb) \right)
 \ = \ \frac{1}{2} \Big(1- M_Q(\xb,y) \Big)
 \,.
\end{equation*}
However, we do not discuss further this form of the \Cbound here, since we now consider that the margin~$M_Q(\xb,y)$ is a more natural notion than $W_Q(\xb,y)$.
\end{remark}

\subsection{Statistical Analysis of the \Cbound's Behavior}
\label{section:Cbound_statistical_analysis}

This section presents some properties of the $\Cbound$.  	
In the first place, we discuss the conditions under which the \Cbound is optimal, in the sense that if the only information that one has about a majority vote is the first two moments of its margin distribution, it is possible that the value given by the \Cbound \emph{is} the Bayes risk, \ie, $\CQ = \RBQ$.\footnote{In other words, the \emph{optimality of the \Cbound} means here that there exists a random variable with the same first moments as the margin distribution, such that Chebyshev's inequality of Lemma~\ref{th:chebychev} is reached.}
In the second place, we show that the \Cbound can be arbitrarily small, especially in the presence of ``non-correlated'' voters, even if the Gibbs risk is large, \ie, $\CQ \ll \RGQ$.

\subsubsection{Conditions of Optimality}
 For the sake of simplicity, let us focus on a random variable~$M$ that represents a margin distribution (here, we ignore underlying distributions $Q$ on $\Hcal$ and~$D'$ on $\Xcal{\times}\{-1,1\}$) of first moment $\momentone(M)$ and second moment $\momenttwo(M)$. By Equation~\eqref{eq:convention}, we have
 \begin{eqnarray} \label{eq:RBM}
  R(B_{M}) &\eqdef& \prob{}\, (M \leq 0)\,.
 \end{eqnarray}
Moreover, $R(B_{M})$ is upper-bounded by $\C_M$, the \Cbound given by the second form of Theorem~\ref{thm:C-bound},
 \begin{eqnarray} \label{eq:CM}
\C_{M} &\eqdef& 1-\dfrac{\big(\,\momentone(M)\,\big)^2}{\momenttwo(M) }\,.
 \end{eqnarray}
The next proposition shows when the \Cbound can be achieved.
 
\newcommand{\Mtilde}{{\widetilde{M}}}
\begin{proposition}[Optimality of the \Cbound] 
 \label{prop:CQ_optimal}
  Let $M$ be any random variable that represents the margin of a majority vote. Then
  there exists a random variable $\Mtilde$ such that 
  \begin{eqnarray} \label{eq:optimality:2}
 \momentone(\Mtilde) = \momentone(M)\,, \quad
 \momenttwo(\Mtilde) = \momenttwo(M)\,, \quad\mbox{and}\quad
 \C_{\Mtilde} = \C_{M} = R(B_{\Mtilde})\,
   \end{eqnarray}
 if and only if
  \begin{equation} \label{eq:optimality:1}
  0<\momenttwo(M) \leq \momentone(M)\,.
  \end{equation}
\end{proposition}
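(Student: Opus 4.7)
My plan is to handle both directions by realizing or forcing a two-atom distribution on $\{0,c\}$ for an appropriate $c\in(0,1]$.

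\textbf{Sufficiency.} Assume $0 < \momenttwo(M) \leq \momentone(M)$, and abbreviate $\momentone = \momentone(M)$, $\momenttwo = \momenttwo(M)$. I would define $\widetilde{M}$ to take the value $c := \momenttwo/\momentone$ with probability $q := \momentone^2/\momenttwo$, and the value $0$ with probability $1-q$. Jensen's inequality gives $\momenttwo \geq \momentone^2$, so $q\in(0,1]$; combined with the hypothesis $\momenttwo \leq \momentone \leq 1$, this forces $c \in (0,1]$, so that $\widetilde{M}$ takes values in $[-1,1]$ and is a valid margin-range random variable. A direct computation then verifies $\momentone(\widetilde{M}) = cq = \momentone$, $\momenttwo(\widetilde{M}) = c^2 q = \momenttwo$, and $R(B_{\widetilde{M}}) = \Pr(\widetilde{M}\leq 0) = 1-q = 1 - \momentone^2/\momenttwo = \C_{\widetilde{M}}$.

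\textbf{Necessity.} Conversely, suppose some $[-1,1]$-valued $\widetilde{M}$ satisfies~\eqref{eq:optimality:2}; in particular $\momenttwo > 0$ is required for $\C_{\widetilde{M}}$ to be defined. Rewrite the equality $R(B_{\widetilde{M}})=\C_{\widetilde{M}}$ as $\momentone^2 = \momenttwo \cdot \Pr(\widetilde{M}>0)$, and set $A=\{\widetilde{M}>0\}$. I would then chain three inequalities that must collapse to equalities:
\[
\momentone \,=\, \mathrm{E}\,\widetilde{M} \,\leq\, \mathrm{E}[\widetilde{M}\,\mathbf{1}_A] \,\leq\, \sqrt{\mathrm{E}[\widetilde{M}^2\mathbf{1}_A]\,\Pr(A)} \,\leq\, \sqrt{\momenttwo\,\Pr(A)} \,=\, \momentone,
\]
using in turn $\widetilde{M}\mathbf{1}_{A^c}\leq 0$, the Cauchy--Schwarz inequality, and $\widetilde{M}^2\mathbf{1}_{A^c}\geq 0$. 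Equality in the first and third inequalities together forces $\widetilde{M}=0$ a.s.\ on $A^c$, and Cauchy--Schwarz equality forces $\widetilde{M}$ to equal a single positive constant $c$ a.s.\ on $A$. Hence $\widetilde{M}$ is supported on $\{0,c\}$; then $\momentone = c\Pr(A) = c\,\momentone^2/\momenttwo$ yields $c = \momenttwo/\momentone$, and the range constraint $c\leq 1$ is precisely $\momenttwo \leq \momentone$.

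\textbf{Main obstacle.} The delicate step is the necessity direction: one has to see that forcing equality in the Chebyshev-style bound pins $\widetilde{M}$ onto a two-atom support $\{0,c\}$, whereupon the margin-range constraint $\widetilde{M}\in[-1,1]$ translates exactly into $\momenttwo\leq\momentone$. The Cauchy--Schwarz chain above is, I think, the cleanest route; a direct optimization argument over the cone of $[-1,1]$-supported measures with prescribed first two moments would reach the same conclusion but is much more cumbersome.
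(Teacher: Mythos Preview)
Your sufficiency argument is essentially identical to the paper's: both construct the two-point distribution on $\{0,\momenttwo/\momentone\}$ with weights $\C_M$ and $1-\C_M$ and verify the moments and the Bayes risk directly.

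Your necessity argument, however, takes a genuinely different route. The paper simply invokes the Markov-based bound $R(B_{\Mtilde}) \leq 1 - \momentone(\Mtilde)$ (Proposition~\ref{thm:2RG_rediscover} combined with Equation~\eqref{eq:RGibbsMq}): since $\C_M = R(B_{\Mtilde}) \leq 1 - \momentone$, one immediately gets $1 - \momentone^2/\momenttwo \leq 1 - \momentone$, hence $\momenttwo \leq \momentone$. This is a one-line comparison of the Chebyshev-based and Markov-based upper bounds on the Bayes risk. Your Cauchy--Schwarz chain instead shows that any $\Mtilde$ achieving equality in the \Cbound is \emph{forced} to be supported on $\{0,c\}$ with $c=\momenttwo/\momentone$, and then reads off $\momenttwo\leq\momentone$ from the range constraint $c\leq 1$. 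Your argument is correct and yields strictly more information---namely, that the extremal $\Mtilde$ is essentially unique---at the cost of a longer computation. The paper's approach is shorter but gives only the inequality, not the structure of the optimizer.
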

\begin{proof} 
   First, let us show that \eqref{eq:optimality:1} implies \eqref{eq:optimality:2}. Given $0<\momenttwo(M) \leq \momentone(M)$, we consider a distribution $\Mtilde$ concentrated in two points defined as
   \begin{equation*}
 \Mtilde \ =\ \begin{cases}
              0 &\mbox{with probability $\C_{M} = 1-\dfrac{\big( \momentone(M) \big)^2}{\momenttwo(M)}$}\,,\\[3mm]
              \dfrac{\momenttwo(M)}{\momentone(M)} & \mbox{with probability $1-\C_{M} = \dfrac{\big( \momentone(M) \big)^2}{\momenttwo(M)}$}\,.
             \end{cases}
   \end{equation*}
     This distribution has the required moments, as 
     {\small
     \begin{eqnarray*}
    \momentone(\Mtilde) =\dfrac{\big( \momentone(M) \big)^2}{\momenttwo(M)} \!\left[\frac{\momenttwo(M)}{\momentone(M)}\right] \! =  \momentone(M)\,,
    \ \mbox{and}\ \ 
    \momenttwo(\Mtilde) =\dfrac{\big( \momentone(M) \big)^2}{\momenttwo(M)} \! \left[  \frac{\momenttwo(M)}{\momentone(M)}\right]^2 \! =  \momenttwo(M)\,.
     \end{eqnarray*}
    }It follows directly from Equation~\eqref{eq:CM} that $\C_{\Mtilde}\! = \C_{M}$. 
    Moreover, by Equation~\eqref{eq:RBM} and because $\frac{\momenttwo(M)}{\momentone(M)} > 0$, we obtain as desired
   \begin{equation*}
    R(B_{\Mtilde}) \ =\  \prob{}\, (\Mtilde \leq 0) \ =\ \C_{M}\,.
   \end{equation*}

   Now, let us show that \eqref{eq:optimality:2} implies \eqref{eq:optimality:1}. Consider a distribution $\Mtilde$ such that the equalities of Line~\eqref{eq:optimality:2} are satisfied. By Proposition~\ref{thm:2RG_rediscover} and Equation~\eqref{eq:RGibbsMq}, we obtain the inequality
   \begin{equation*}
   \C_{M}\ =\ R(B_{\Mtilde}) \ \leq \ 1-\momentone(\Mtilde) \ =\ 1-\momentone(M)\,.
   \end{equation*}
   Hence, by the definition of $\C_{M}$, we have
   \begin{equation*}
   1-\dfrac{\big(\,\momentone(M)\,\big)^2}{\momenttwo(M) } \ \leq \ 1-\momentone(M)\,,
   \end{equation*}   
   which, by straightforward calculations, implies
   $0 \ < \ \momenttwo(M) \ \leq \ \momentone(M)\,,$
   and we are done.
\end{proof}

We discussed in Section~\ref{section:margin_and_moments} the multiple connections between the moments of the margin, the Gibbs risk and the expected disagreement of a majority vote. In the next proposition, we exploit these connections to derive expressions equivalent to Line~\eqref{eq:optimality:1} of Proposition~\ref{prop:CQ_optimal}. Thus, this shows three (equivalent) necessary conditions under which the \Cbound is optimal.
  
\begin{proposition}
\label{prop:CQ_optimal_suite}
For any distribution $Q$ on a set of voters and any distribution $D'$ on $\Xcal\!\times\!\{-1,1\}$, if \mbox{$\momentone(\MQ{D'}) > 0$} (i.e., $R_{D'}(G_Q) < 1/2$), then the three following statements are equivalent:
 \begin{enumerate}
  \item[(i)] $\momenttwo(\MQ{D'}) \ \leq \ \momentone(\MQ{D'})$ ;
  \item[(ii)] $\RGQ \ \leq \ \dQ$ ;
  \item[(iii)] $\CQ \ \leq \ 2\,\RGQ$ .
 \end{enumerate}
\end{proposition}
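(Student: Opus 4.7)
The plan is to show (i) $\Leftrightarrow$ (ii) and (i) $\Leftrightarrow$ (iii) by direct algebraic substitution, relying on the three identities already established earlier in the paper:
\begin{equation*}
\momentone(\MQ{D'}) \,=\, 1 - 2\,\RGQ\,, \qquad \momenttwo(\MQ{D'}) \,=\, 1 - 2\,\dQ\,, \qquad \CQ \,=\, 1 - \frac{\big(\momentone(\MQ{D'})\big)^2}{\momenttwo(\MQ{D'})}\,,
\end{equation*}
which come respectively from Equation~\eqref{eq:RGibbsMq}, Equation~\eqref{eq:dQ_Mq} and the second form of the \Cbound in Theorem~\ref{thm:C-bound}. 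The hypothesis $\momentone(\MQ{D'})>0$ ensures that every quantity in sight is well-defined (in particular $\momenttwo(\MQ{D'})\geq (\momentone(\MQ{D'}))^2 > 0$ via Equation~\eqref{eq:VarMq:mu2-sqr(mu1)}).

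First I would handle (i) $\Leftrightarrow$ (ii): substituting the two identities for $\momentone$ and $\momenttwo$ into $\momenttwo(\MQ{D'}) \leq \momentone(\MQ{D'})$ yields $1-2\dQ \leq 1-2\RGQ$, which is just $\RGQ \leq \dQ$. This direction is a one-line rearrangement.

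Next I would handle (i) $\Leftrightarrow$ (iii): using $2\RGQ = 1 - \momentone(\MQ{D'})$, the inequality $\CQ \leq 2\RGQ$ rewrites as
\begin{equation*}
1 - \frac{\big(\momentone(\MQ{D'})\big)^2}{\momenttwo(\MQ{D'})} \,\leq\, 1 - \momentone(\MQ{D'})\,.
\end{equation*}
Subtracting $1$, negating, and multiplying by the positive quantity $\momenttwo(\MQ{D'})/\momentone(\MQ{D'})$ (valid since $\momentone(\MQ{D'})>0$ and $\momenttwo(\MQ{D'})>0$), this becomes $\momentone(\MQ{D'}) \geq \momenttwo(\MQ{D'})$, which is exactly (i).

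There is no real obstacle here: the statement is essentially a bookkeeping exercise that repackages the same inequality between the first two moments of the margin into three equivalent forms. The only point to be slightly careful about is the sign of $\momentone(\MQ{D'})$ when multiplying through in the (i) $\Leftrightarrow$ (iii) step — this is precisely why the hypothesis $\momentone(\MQ{D'})>0$ (equivalently $\RGQ<1/2$) is stated, and it also guarantees $\momenttwo(\MQ{D'})>0$ so that $\CQ$ is defined in the first place.
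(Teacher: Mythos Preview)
Your proof is correct and follows essentially the same route as the paper. The only cosmetic difference is that the paper establishes $(ii)\Leftrightarrow(iii)$ via the third form of the \Cbound (in terms of $\RGQ$ and $\dQ$), whereas you establish $(i)\Leftrightarrow(iii)$ via the second form (in terms of $\momentone$ and $\momenttwo$); since $(i)\Leftrightarrow(ii)$ is already in hand and the two forms of the \Cbound are identical, this amounts to the same argument.
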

 \begin{proof} The truth of  $(i) \Leftrightarrow (ii)$ is a direct consequence of Equations~\eqref{eq:RGibbsMq} and~\eqref{eq:dQ_Mq}. 
 To prove $(ii) \Leftrightarrow (iii)$, we express $\CQ$ in its third form. Straightforward calculations give 
  \begin{equation*}
   \CQ \, =\, 1 - \frac{\left(1-2\,\RGQ\right)^2}{1-2\,\dQ} \ \leq \ 2\RGQ 
   \quad\Longleftrightarrow\quad
   \RGQ \ \leq \ \dQ\,.\\[-6mm]
  \end{equation*}
 \end{proof}
Propositions~\ref{prop:CQ_optimal} and~\ref{prop:CQ_optimal_suite} illustrate an interesting result: the \Cbound is optimal if and only if its value is lower than twice the Gibbs risk, the classical bound on the risk of the majority vote (see Proposition~\ref{thm:2RG_rediscover}).

\subsubsection{The \Cbound Can Be Arbitrarily Small, Even for Large Gibbs Risks}
The next result shows that, when the number of voters tends to
infinity (and the weight of each voter tends to zero), the variance
of $M_Q$ will tend to $0$ provided that the average of the
covariance of the outputs of all pairs of distinct voters is $\leq 0$.
In particular, the variance will always tend to $0$ if the risk of
the voters is pairwise independent. To quantify the independence between voters, we use the concept of covariance of a pair of voters $(f_1, f_2)$:
\begin{eqnarray*}
 \covariance{D'}(f_1, f_2) & \eqdef & \cov{\ex\sim D'}\Big(y\cdot f_1(\xb),\, y\cdot f_2(\xb)\Big) \\[-1mm]
 &=& 
 \ \esp{(\xb,y)\sim D'} f_1(\xb)f_2(\xb)-\left(\esp{(\xb,y)\sim D'} f_1(\xb) \right)\left(\esp{(\xb,y)\sim D'} f_2(\xb) \right)\,.
\end{eqnarray*}
Note that the covariance $\covariance{D'}(f_1, f_2)$ is zero when $f_1$ and $f_2$ are independent \mbox{(uncorrelated)}.
\vspace{-3mm}
\begin{proposition}\label{prop:convergence}
For any countable set of voters $\Hcal$, any distribution
$Q$ on $\Hcal$, and any distribution $D'$ on \mbox{$\Xcal\!\times\!\{-1,1\}$}, we have
\begin{eqnarray*}
   \vMQ
    &\leq&
   \sum_{f\in\Hcal }Q^2(f) \,+\, \sum_{f_1\in \Hcal}\sum_{\underset{f_2\ne f_1}{f_2\in
\Hcal:}}Q(f_1)Q(f_2) \cdot \covariance{D'}(f_1, f_2) \,.
\end{eqnarray*}
\end{proposition}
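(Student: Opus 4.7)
The plan is to expand the variance of $M_Q$ as a double sum over pairs of voters, separate the diagonal from the off-diagonal terms, and observe that the diagonal terms are each bounded by $1$ since the voters are $[-1,1]$-valued.

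First I would write the margin explicitly as a (countable) weighted sum of the random variables $y\cdot f(\xb)$:
\begin{equation*}
M_Q(\xb,y) \ =\ \esp{f\sim Q} y\cdot f(\xb) \ =\ \sum_{f\in \Hcal} Q(f)\, y\cdot f(\xb).
\end{equation*}
Since $|y\cdot f(\xb)|\leq 1$ for every $f\in\Hcal$ and since $\sum_{f}Q(f)=1$, the series converges absolutely almost surely and its $L^2$-norm is controlled, so the variance operator commutes with the double sum (this is where the countability assumption is used; a short Fubini/dominated-convergence sentence suffices). Then by the bilinearity of the covariance,
\begin{equation*}
\vMQ \ =\ \sum_{f_1\in\Hcal}\sum_{f_2\in\Hcal} Q(f_1)Q(f_2)\, \cov{(\xb,y)\sim D'}\!\big(y\cdot f_1(\xb),\,y\cdot f_2(\xb)\big).
\end{equation*}

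Next I would split this double sum according to whether $f_1=f_2$ or not. For the off-diagonal terms ($f_2\neq f_1$), the summand is exactly $Q(f_1)Q(f_2)\,\covariance{D'}(f_1,f_2)$, recovering the second term of the target bound. For the diagonal terms ($f_2=f_1=f$), the summand is $Q^2(f)\,\var{(\xb,y)\sim D'}\!\big(y\cdot f(\xb)\big)$.

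Finally, I would bound each diagonal variance by $1$. Since $y\in\{-1,1\}$ and $f(\xb)\in[-1,1]$, we have $(y\cdot f(\xb))^2\leq 1$, hence
\begin{equation*}
\var{(\xb,y)\sim D'}\!\big(y\cdot f(\xb)\big) \ \leq\ \esp{(\xb,y)\sim D'} (y\cdot f(\xb))^2 \ \leq\ 1.
\end{equation*}
Plugging this into the diagonal contribution gives $\sum_{f\in\Hcal} Q^2(f)$, and combining with the off-diagonal contribution yields the claimed inequality. There is no real obstacle here: the only delicate point is the interchange of variance and infinite sum, which is handled by the uniform bound $|y\cdot f(\xb)|\leq 1$ together with $\sum_f Q(f)=1$.
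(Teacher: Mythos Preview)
Your proof is correct and follows essentially the same approach as the paper: expand $\vMQ$ via bilinearity into diagonal variance terms and off-diagonal covariance terms, then bound each diagonal variance by $1$ using $|y\cdot f(\xb)|\leq 1$. The only addition is your remark about justifying the interchange of variance with the infinite sum, a technical point the paper leaves implicit.
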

\begin{proof}
By the definition of the margin (Definition~\ref{def:margin}), we rewrite $M_Q\ex$ as a sum of random variables:
{\small
\begin{eqnarray*}
  \var{(\xb,y)\sim D'}\Big(M_Q\ex\Big) \hspace{-2.7cm} & & \\[-1mm]
  &=& \var{(\xb,y)\sim D'}\Bigg( \sum_{f\in\Hcal} Q(f) \cdot y \cdot f(\xb) \Bigg) 
  \\
  &=& \sum_{f\in\Hcal} Q^2(f) \var{(\xb,y)\sim D'}\Big( y \cdot f(\xb) \Big) 
  \,+\, \sum_{f_1\in \Hcal}\sum_{\underset{f_2\ne f_1}{f_2\in\Hcal:}}Q(f_1)Q(f_2)\cov{\ex\sim D'}\Big(y\cdot f_1(\xb), y\cdot f_2(\xb)\Big)\,. \\[-4mm]
\end{eqnarray*}
}The inequality is a consequence of the  fact that
 $ \forall f\in\Hcal : \var{(\xb,y)\sim D'}\Big( y \cdot f(\xb) \Big)  \leq 1$.
\end{proof}
The key observation that comes out of this result is that
\,$\sum_{f\in\Hcal}Q^2(f)$\, is usually much smaller than one.
Consider, for example, the case where $Q$ is uniform on $\Hcal$ with
$|\Hcal| = n$. Then\, $\sum_{f\in\Hcal }Q^2(f) = 1/n$. Moreover,
if $\covariance{D'}(f_1, f_2)\leq 0$\, 
for each pair of distinct classifiers
in $\Hcal$, then \,$\vMQ  \le 1/n$. Hence, in these cases, we
have that \,$\CQ \in {\cal O}(1/n)$\,\, whenever\,\,
$1\!-\!2\,R_{D'}(G_Q)$ and $1\!-\!2\,\dQ$ are larger than some positive constants independent
of $n$. Thus, even when $R_{D'}(G_Q)$ is large, we see that the \Cbound can
be arbitrarily close to $0$ as we increase the number of classifiers
having non-positive pairwise covariance of their risk. More
precisely, we have

\begin{corollary}\label{prop:powerindependent}
 Given $n$ independent voters under a uniform distribution $Q$, we have
 \begin{equation*}
  R_{D'}(B_Q) \ \leq \ \CQ 
  \ \leq \ \dfrac{1}{n\!\cdot\!\Big(1\!-\!2\,\dQ\Big)}
  \ \leq \ \dfrac{1}{n\!\cdot\!\Big(1\!-\!2\,R_{D'}(G_Q)\Big)^2}\,.
 \end{equation*}
\end{corollary}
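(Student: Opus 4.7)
The plan is to obtain the three inequalities in turn, with each one following from a result already established in the excerpt.

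First, the inequality $R_{D'}(B_Q) \leq \CQ$ is exactly the \Cbound of Theorem~\ref{thm:C-bound}, which applies because independence of the voters together with $\RGQ < 1/2$ (implicit in the statement, otherwise the bound is vacuous) guarantees $\momentone(\MQ{D'}) > 0$.

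For the middle inequality, I would invoke Proposition~\ref{prop:convergence}. Under the uniform distribution over $n$ voters, $Q(f) = 1/n$ for every $f \in \Hcal$, so $\sum_{f \in \Hcal} Q^2(f) = n \cdot (1/n)^2 = 1/n$. Independence of the voters implies $\covariance{D'}(f_1, f_2) = 0$ for every distinct pair, so the double sum vanishes and we get $\vMQ \leq 1/n$. Now I would rewrite $\CQ$ using the first form of Theorem~\ref{thm:C-bound} together with Equation~\eqref{eq:dQ_Mq}:
\begin{equation*}
\CQ \ = \ \frac{\vMQ}{\momenttwo(\MQ{D'})} \ = \ \frac{\vMQ}{1 - 2\,\dQ} \ \leq \ \frac{1}{n \cdot (1 - 2\,\dQ)}\,,
\end{equation*}
which gives the middle inequality.

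Finally, for the rightmost inequality, I would use Equation~\eqref{eq:relation_dq_rdgq}, namely $1 - 2\,\dQ \geq \bigl(1 - 2\,\RGQ\bigr)^2$. Since the assumption $\RGQ < 1/2$ makes the right-hand side strictly positive, taking reciprocals preserves the inequality and yields
\begin{equation*}
\frac{1}{n \cdot (1 - 2\,\dQ)} \ \leq \ \frac{1}{n \cdot \bigl(1 - 2\,\RGQ\bigr)^2}\,,
\end{equation*}
completing the chain. I do not anticipate any real obstacle here: the corollary is essentially a direct specialization of Proposition~\ref{prop:convergence} to the uniform, pairwise-independent case, combined with the first form of the \Cbound and the variance-nonnegativity bound~\eqref{eq:relation_dq_rdgq}. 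The only thing to be careful about is making the notion of ``independent voters'' precise enough to zero out the covariance terms (the proposition only requires pairwise independence of $y \cdot f(\xb)$ under $D'$), and noting that the bound is meaningful only when $\RGQ < 1/2$ so all denominators are positive.
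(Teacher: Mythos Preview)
Your proposal is correct and follows essentially the same route as the paper: the first inequality is the \Cbound (Theorem~\ref{thm:C-bound}), the second comes from Proposition~\ref{prop:convergence} with the uniform/independent case giving $\vMQ\le 1/n$ plugged into the first form of $\CQ$ via $\momenttwo(\MQ{D'})=1-2\,\dQ$, and the third is Equation~\eqref{eq:relation_dq_rdgq}. Your added remarks about the implicit assumption $\RGQ<1/2$ and positivity of the denominators are fine clarifications but do not change the argument.
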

\begin{proof}
The first inequality directly comes from the \Cbound (Theorem~\ref{thm:C-bound}).  The second inequality is a consequence of Proposition \ref{prop:convergence}, considering that in the case of a uniform distribution of independent voters, we have 
$\covariance{D'}(f_1, f_2) = 0$, and then $\vMQ  \le 1/n$.
Applying this to the first form of the \Cbound, we obtain

\begin{equation*}
	\CQ \ =\
	\frac{\vMQ}  { \momenttwo(\MQ{D'})} 
	\ = \  \frac{\vMQ}  {1\!-\!2\,\dQ}
	\ \leq\ \frac{\frac{1}{n}}  {1\!-\!2\,\dQ}
	\ =\ \frac{1}{n\!\cdot\!\Big(1\!-\!2\,\dQ\Big)}\,.
\end{equation*}
To obtain the third inequality, we simply apply Equation~\eqref{eq:relation_dq_rdgq}, and we are done.
\end{proof}

\subsection{Empirical Study of The Predictive Power of the \Cbound}
\label{section:Cbound_empirical_study}

To further motivate the use of the \Cbound, we investigate how its empirical value relates to the risk of the majority vote by conducting two experiments. The first experiment shows that the \Cbound clearly outperforms the individual capacity of the other quantities of Theorem~\ref{thm:C-bound} in the task of predicting the risk of the majority vote. The second experiment shows that the $\Cbound$ is a great stopping criterion for Boosting algorithms.

\subsubsection{Comparison with Other Indicators}
We study how $\RGQ$, $\vMQ$, $\dQ$
and $\CQ$ are respectively related to $\RBQ$. Note that these four quantities appear in the first form or the third form of the \Cbound (Theorem~\ref{thm:C-bound}).  We omit here the moments $\momentone(\MQ{D'})$ and $\momenttwo(\MQ{D'})$ required by the second form of the \Cbound, as there is a linear relation between $\momentone(\MQ{D'})$ and $\RGQ$, as well as between $\momenttwo(\MQ{D'})$ and $\dQ$.

 \begin{figure}[t]
\centering
\subfloat[Gibbs risk.]{\includegraphics[height=6cm]{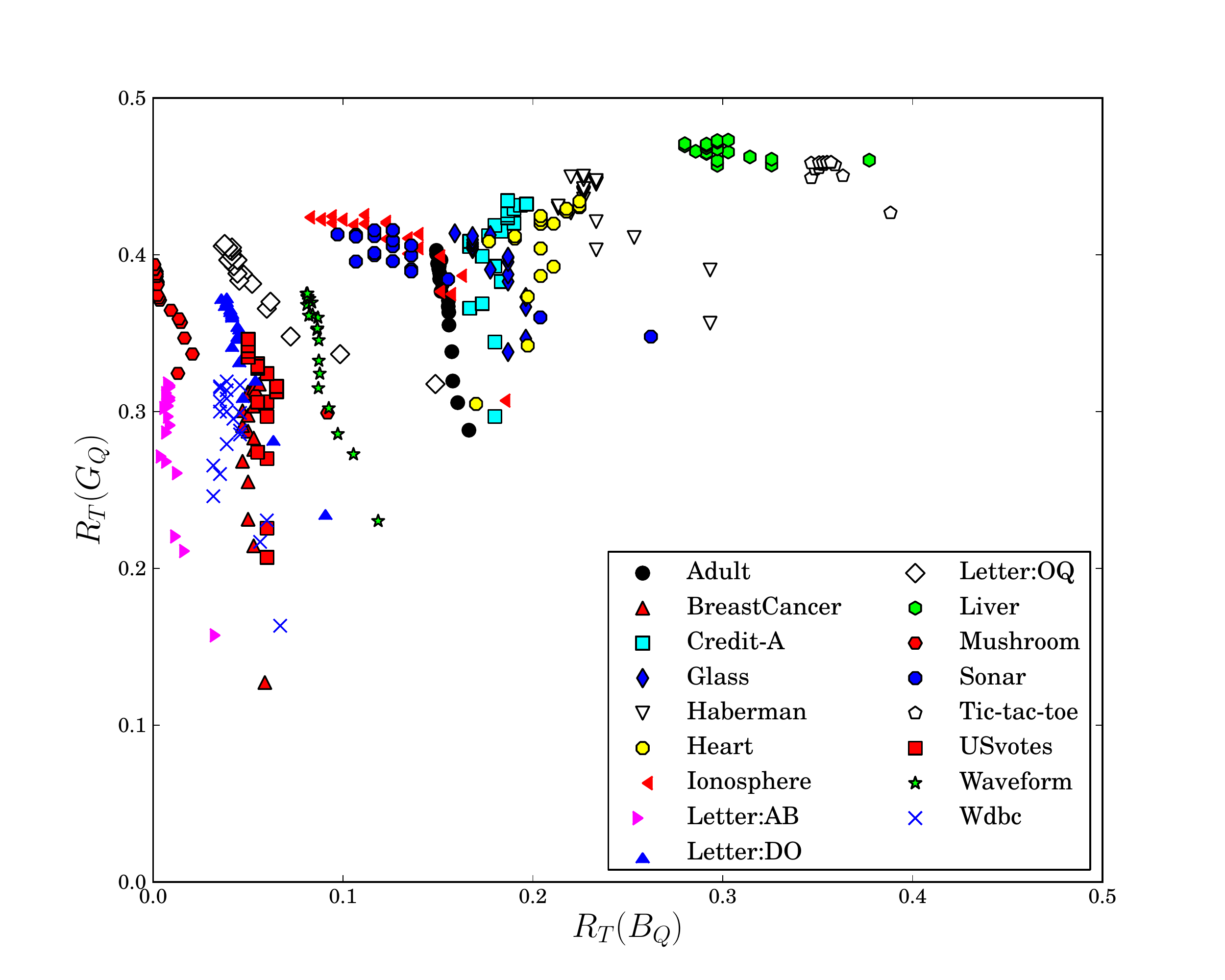} \label{fig:CQbon_RTGQ} } 
\subfloat[Variance of the margin.]{\includegraphics[height=6cm]{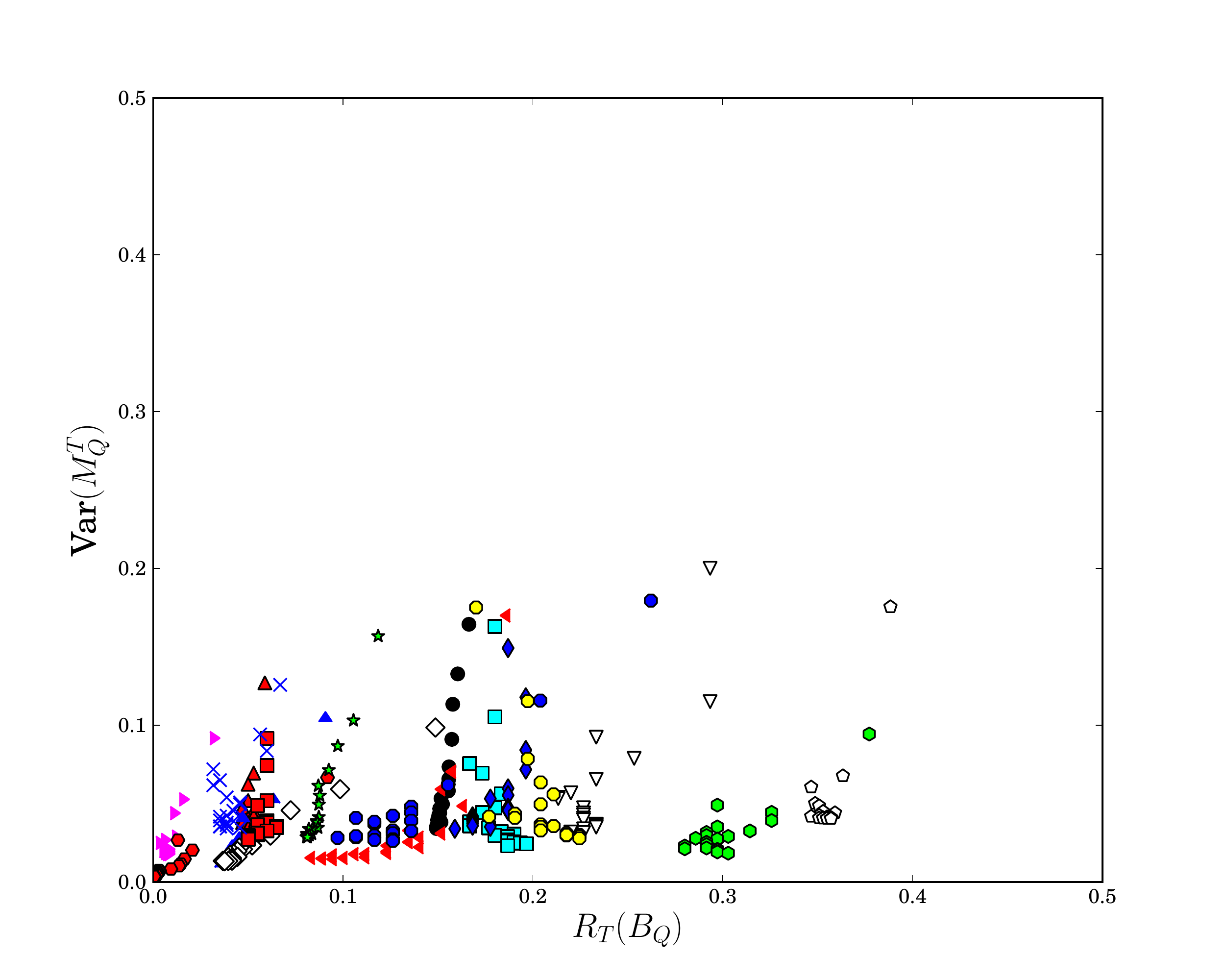} \label{fig:CQbon_vTMQ}} \\
\subfloat[Expected disagreement.]{\includegraphics[height=6cm]{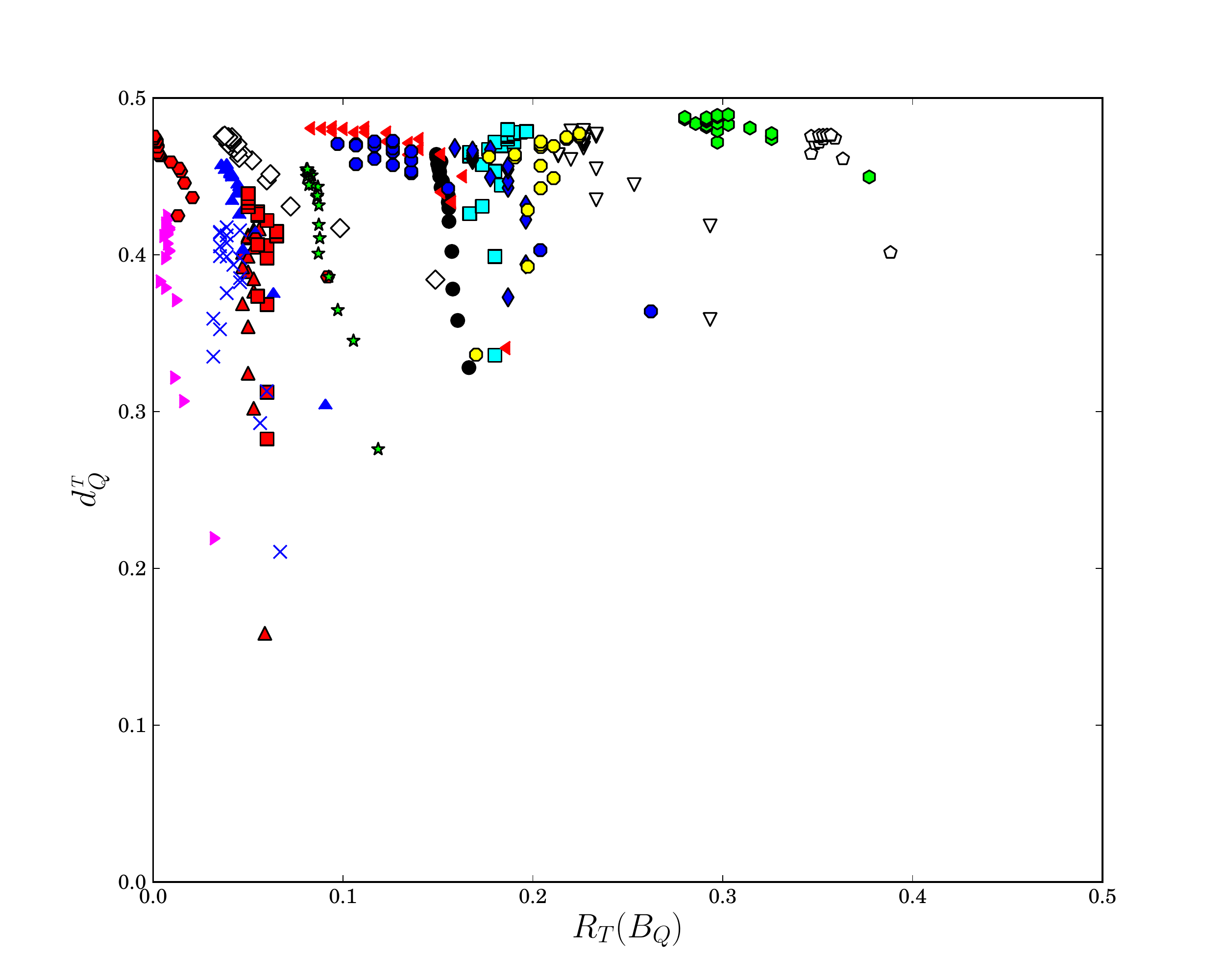} \label{fig:CQbon_dTQ} } 
\subfloat[\Cbound.]{\includegraphics[height=6cm]{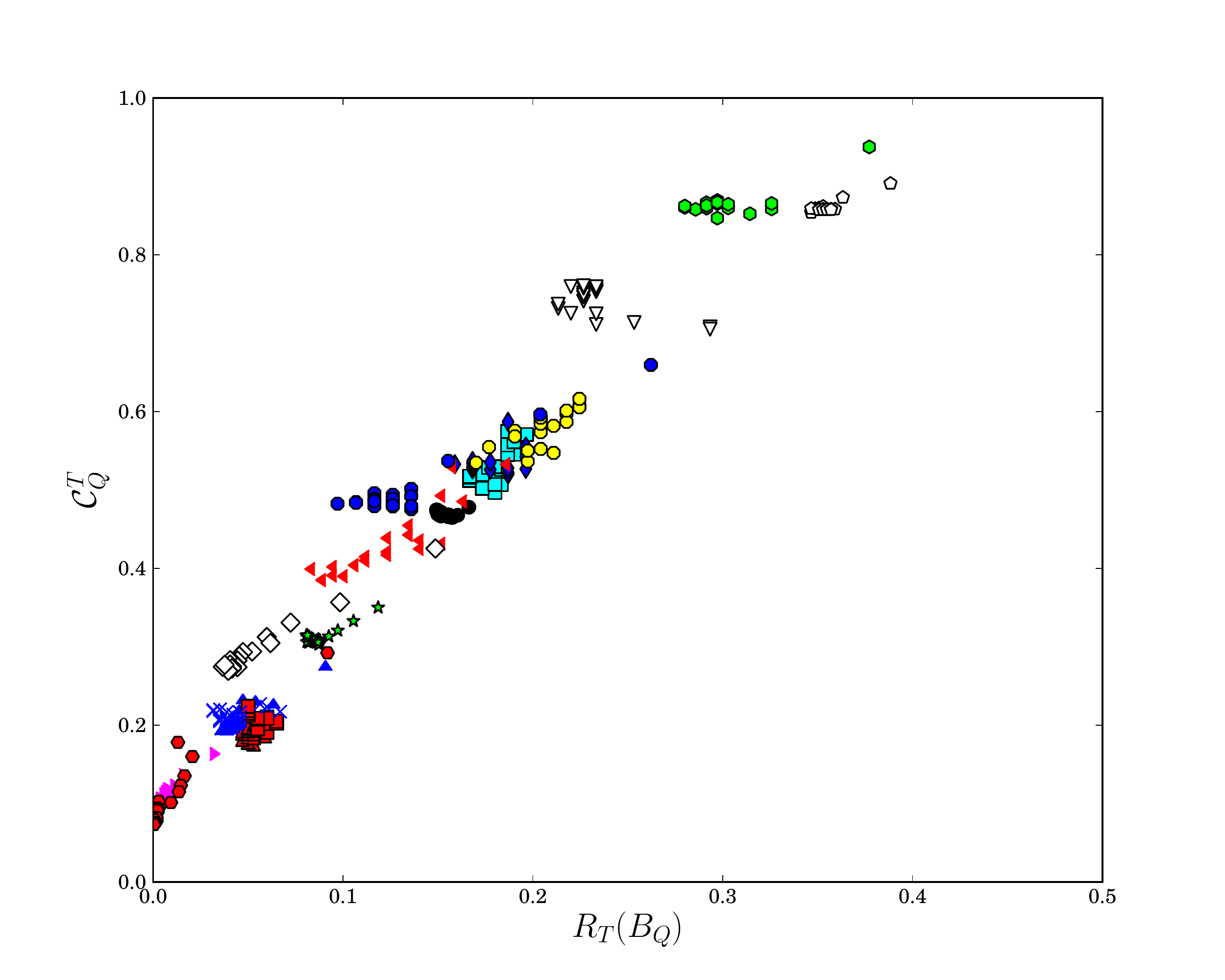} \label{fig:CQbon_CTQ} } 
\caption{$\RTBQ$ versus $\RTGQ$, $\vTMQ$, $\dTQ$ and $\CTQ$ respectively. }
\label{fig:C_Q est bon}
\end{figure}
The results of Figure~\ref{fig:C_Q est bon} are obtained with the AdaBoost
 algorithm of \cite{schapire99}, used with
``decision stumps'' as weak learners, on
several UCI binary classification data sets~\citep{uci-98}.
Each data set is split into two halves: a training set $S$ and a testing set $T$. We run AdaBoost on set $S$ for 100 rounds and compute the quantities $\RTGQ$, $\vTMQ$, $\dTQ$
and $\CTQ$ on set $T$ at every 5 rounds of boosting. That is, we study 20 different majority vote classifiers per data set. 

In Figure~\ref{fig:CQbon_RTGQ}, we see that we almost always have $\RTBQ  < \RTGQ$.
There is, however, no clear correlation between $\RTBQ$ and
$\RTGQ$. 
We also see no clear correlation between $\RTBQ$ and
$\vTMQ$ or between $\RTBQ$ and
$\dTQ$ in Figures~\ref{fig:CQbon_vTMQ} and~\ref{fig:CQbon_dTQ} respectively, except that generally $\RTBQ >
\vTMQ$ and $\RTBQ <
\dTQ$. In contrast, Figure~\ref{fig:CQbon_CTQ} shows a  strong
 correlation between $\CTQ$ and $\RTBQ$. Indeed, it is almost a linear relation! Therefore, the $\Cbound$ seems well-suited to characterize the behavior of the Bayes risk, whereas each of the individual quantities contained in the \Cbound is insufficient to do so.

\begin{table}[t]
\begin{center}
\begin{scriptsize}
\rowcolors{3}{black!10}{}
\begin{tabular}{lccccccccccc}
\toprule
\multicolumn{3}{c}{Data Set Information}  & \multicolumn{9}{c}{Risk $\RTBQ$ by Stopping Criterion {\it(and number of rounds performed)}} \\
\cmidrule(l r){1-3} \cmidrule(l r){4-12}
Name  & $|S|$ & $|T|$ &  \multicolumn{2}{c}{\Cbound $\CSQ$} & \multicolumn{2}{c}{Risk $\RSBQ$} & \multicolumn{2}{c}{Validation Set}& \multicolumn{2}{c}{Cross-Validation} & 1000 rounds \\
\cmidrule(l r){1-1} \cmidrule(l r){2-2} \cmidrule(l r){3-3} \cmidrule(l r){4-5} \cmidrule(l r){6-7} \cmidrule(l r){8-9} \cmidrule(l r){10-11} \cmidrule(l r){12-12}
\!Adult\!&\!400\!&\!11409\!&\!0.166\!&\!\textit{(149)}\!&\!0.169\!&\!\textit{(314)}\!&\!\textbf{0.165}\!&\!\textit{(13)}\!&\!0.166\!&\!\textit{(97)}\!&\!0.172\! \\
\!BreastCancer\!&\!341\!&\!342\!&\!0.050\!&\!\textit{(127)}\!&\!0.047\!&\!\textit{(48)}\!&\!\textbf{0.041}\!&\!\textit{(57)}\!&\!0.047\!&\!\textit{(108)}\!&\!0.058\! \\
\!Credit-A\!&\!326\!&\!327\!&\!0.187\!&\!\textit{(346)}\!&\!0.199\!&\!\textit{(854)}\!&\!\textbf{0.156}\!&\!\textit{(9)}\!&\!0.174\!&\!\textit{(47)}\!&\!0.199\! \\
\!Glass\!&\!107\!&\!107\!&\!0.252\!&\!\textit{(72)}\!&\!\textbf{0.196}\!&\!\textit{(299)}\!&\!0.346\!&\!\textit{(6)}\!&\!0.290\!&\!\textit{(35)}\!&\!\textbf{0.196}\! \\
\!Haberman\!&\!147\!&\!147\!&\!0.320\!&\!\textit{(27)}\!&\!0.320\!&\!\textit{(45)}\!&\!\textbf{0.279}\!&\!\textit{(1)}\!&\!0.320\!&\!\textit{(38)}\!&\!0.340\! \\
\!Heart\!&\!148\!&\!149\!&\!0.215\!&\!\textit{(124)}\!&\!0.289\!&\!\textit{(950)}\!&\!\textbf{0.181}\!&\!\textit{(31)}\!&\!0.195\!&\!\textit{(14)}\!&\!0.289\! \\
\!Ionosphere\!&\!175\!&\!176\!&\!\textbf{0.085}\!&\!\textit{(210)}\!&\!0.120\!&\!\textit{(56)}\!&\!0.142\!&\!\textit{(2)}\!&\!0.114\!&\!\textit{(67)}\!&\!\textbf{0.085}\! \\
\!Letter:AB\!&\!400\!&\!1155\!&\!\textbf{0.005}\!&\!\textit{(42)}\!&\!0.014\!&\!\textit{(17)}\!&\!0.061\!&\!\textit{(2)}\!&\!\textbf{0.005}\!&\!\textit{(60)}\!&\!0.010\! \\
\!Letter:DO\!&\!400\!&\!1158\!&\!\textbf{0.041}\!&\!\textit{(179)}\!&\!\textbf{0.041}\!&\!\textit{(44)}\!&\!0.143\!&\!\textit{(1)}\!&\!0.044\!&\!\textit{(83)}\!&\!0.043\! \\
\!Letter:OQ\!&\!400\!&\!1136\!&\!0.050\!&\!\textit{(65)}\!&\!0.050\!&\!\textit{(138)}\!&\!0.063\!&\!\textit{(26)}\!&\!\textbf{0.044}\!&\!\textit{(118)}\!&\!0.049\! \\
\!Liver\!&\!172\!&\!173\!&\!\textbf{0.289}\!&\!\textit{(541)}\!&\!\textbf{0.289}\!&\!\textit{(743)}\!&\!0.335\!&\!\textit{(5)}\!&\!\textbf{0.289}\!&\!\textit{(603)}\!&\!0.295\! \\
\!Mushroom\!&\!400\!&\!7724\!&\!\textbf{0.010}\!&\!\textit{(612)}\!&\!0.024\!&\!\textit{(38)}\!&\!0.079\!&\!\textit{(6)}\!&\!0.024\!&\!\textit{(51)}\!&\!\textbf{0.010}\! \\
\!Sonar\!&\!104\!&\!104\!&\!0.192\!&\!\textit{(688)}\!&\!0.250\!&\!\textit{(20)}\!&\!0.317\!&\!\textit{(2)}\!&\!\textbf{0.163}\!&\!\textit{(34)}\!&\!0.202\! \\
\!Tic-tac-toe\!&\!400\!&\!558\!&\!0.389\!&\!\textit{(59)}\!&\!0.364\!&\!\textit{(2)}\!&\!\textbf{0.358}\!&\!\textit{(5)}\!&\!0.403\!&\!\textit{(9)}\!&\!0.389\! \\
\!USvotes\!&\!217\!&\!218\!&\!0.032\!&\!\textit{(11)}\!&\!0.041\!&\!\textit{(598)}\!&\!0.032\!&\!\textit{(16)}\!&\!\textbf{0.028}\!&\!\textit{(1)}\!&\!0.046\! \\
\!Waveform\!&\!400\!&\!7600\!&\!\textbf{0.101}\!&\!\textit{(145)}\!&\!0.102\!&\!\textit{(178)}\!&\!0.106\!&\!\textit{(13)}\!&\!0.103\!&\!\textit{(22)}\!&\!0.115\! \\
\!Wdbc\!&\!284\!&\!285\!&\!0.049\!&\!\textit{(40)}\!&\!0.060\!&\!\textit{(19)}\!&\!0.091\!&\!\textit{(2)}\!&\!\textbf{0.046}\!&\!\textit{(10)}\!&\!0.060\! \\
\bottomrule
\end{tabular}

\vspace{5mm}

\rowcolors{3}{}{black!10}
\begin{tabular}{lcccc}
\toprule
\multicolumn{5}{c}{Statistical Comparison Tests} \\
\cmidrule(l r){1-5}
&  $\CSQ$ vs $R_S(B_Q)$ &  $\CSQ$ vs Validation Set &  $\CSQ$ vs Cross-Validation &  $\CSQ$ vs 1000 rounds \\
\cmidrule(l r){2-2} \cmidrule(l r){3-3} \cmidrule(l r){4-4} \cmidrule(l r){5-5}
Poisson binomial test  & 91\% & 86\% & 57\% & 90\%  \\
Sign test ($p$-value) & 0.05 & 0.23 & 0.60 & 0.02  \\
\bottomrule
\hline

\end{tabular}
\end{scriptsize}
\end{center}
\caption{Comparison of various stopping criteria over 1000 rounds of boosting. The Poisson binomial test gives the probability that $\CSQ$ is a better stopping criterion than every other approach. The sign test gives a $p$-value representing the probability that the null hypothesis is true (\ie, the $\CSQ$ stopping criterion has the same performance as every other approach).}
\label{tab:stopcriteria} 
\end{table}

\subsubsection{The $\Cbound$ as a Stopping Criterion for Boosting}

We now evaluate the accuracy of the empirical value of the $\Cbound$ as a model selection tool. More specifically, we compare its ability to act as a stopping criterion for the AdaBoost algorithm.

We use the same version of the algorithm and the same data sets as in the previous experiment. However, for this experiment, each data set is split into a training set $S$ of at most $400$ examples and a testing set $T$ containing the remaining examples. We run AdaBoost on set $S$ for 1000 rounds. At each round, we compute the empirical $\Cbound$ $\CSQ$ (on the training set). Afterwards, we select the majority vote classifier with the lowest value of $\CSQ$ and compute its Bayes risk $\RTBQ$ (on the test set). We compare this stopping criterion with three other methods. For the first method, we compute the empirical Bayes risk $\RSBQ$ at each round of boosting and, after that, we select the one having the lowest such risk.\footnote{When several iterations have the same value of $\RSBQ$, we select the earlier one.} The second method consists in performing 5-fold cross-validation and selecting the number of boosting rounds having the lowest cross-validation risk. Finally, the third method is to reserve 10\% of $S$ as a validation set, train AdaBoost on the remaining 90\%, and keep the majority vote with the lowest Bayes risk on the validation set. Note that this last method differs from the others because AdaBoost sees 10\% fewer examples during the learning process, but this is the price to pay for using a validation set.

Table~\ref{tab:stopcriteria} compares the Bayes risks on the test set $\RTBQ$ of the majority vote classifiers selected by the different stopping criteria.
We compute the probability of \Cbound being a better stopping criteria than every other methods with two statistical tests: the Poisson binomial test \citep{lacoste-2012} and the sign test~\citep{mendenhall1983nonparametric}.
Both statistical tests suggest that the empirical $\Cbound$ is a better model selection tool than the empirical Bayes risk
(as usual in machine learning tasks, this method is prone to overfitting)
and the validation set
(although this method performs very well sometimes, it suffers from the small quantity of training examples on several tasks). 
The empirical \Cbound and the cross-validation methods obtain a similar accuracy. However, the cross-validation procedure needs more running time.
We conclude that the empirical $\Cbound$ is a surprisingly good stopping criterion for Boosting.

\section{A PAC-Bayesian Story: From Zero to a PAC-Bayesian  \Cbound}
\label{section:PAC-Bayes}

In this section, we present a PAC-Bayesian theory that allows one to estimate the \Cbound value $\CDQ$ from its empirical estimate $\CSQ$. From there, we derive bounds on the risk of the majority vote $\RDBQ$ based on empirical observations.
  We first recall the classical PAC-Bayesian bound (here called the PAC-Bound~\ref{bound:classic}) that bounds the true Gibbs risk by its empirical counterpart. We then present two different PAC-Bayesian bounds on the majority vote classifier (respectively called PAC-Bounds~\ref{bound:variancebinouille} and~\ref{bound:trinouille}). A third bound, PAC-Bound~\ref{bound:variancebinouille-qu}, will be presented in Section~\ref{section:Further-PAC_Bayes}. 
  This analysis intends to be self-contained, and can act as an introduction to PAC-Bayesian theory.\footnote{We also recommend the ``practical prediction tutorial'' of~\citet{l-05}, that contains an insightful PAC-Bayesian introduction.}

\smallskip
The first PAC-Bayesian theorem was proposed by~\citet{m-99}.
Given a set of voters~$\Hcal$, a \emph{prior} distribution $P$ on $\Hcal$ chosen before observing the data, and a \emph{posterior} distribution~$Q$ on $\Hcal$ chosen after observing a training set $S\!\sim\! D^m$ ($Q$ is typically chosen by running a learning algorithm on~$S$), PAC-Bayesian theorems give tight risk bounds for the Gibbs classifier $G_Q$. These bounds on $\RDGQ$ usually rely on two quantities: 
\begin{enumerate}
\item[a)] The empirical Gibbs risk $\RSGQ$, that is computed on the $m$ examples of $S$,
\begin{equation*}
\RSGQ \ = \ \frac{1}{m}\sum_{i=1}^m \esp{f\sim Q} \linloss (f(x_i), y_i) \,.
\end{equation*}
\item[b)] The Kullback-Leibler divergence between distributions $Q$ and $P$, that measures ``how far'' the chosen posterior $Q$ is from the prior $P$,
\begin{equation} \label{eq:KLQP}
\KL(Q\|P) \ \eqdef \ \esp{f\sim Q}\ln\frac{Q(f)}{P(f)}\,.
\end{equation}
\end{enumerate}
 Note that the obtained PAC-Bayesian bounds are uniformly valid for all possible posteriors~$Q$.

\medskip
 In the following, we present a very general PAC-Bayesian theorem (Section~\ref{section:PB_real_losses}), and we specialize it to obtain a bound on the Gibbs risk $\RDGQ$ that is converted in a bound on the risk of the majority vote  $\RDBQ$ by the factor 2 of Proposition~\ref{thm:2RG_rediscover} (Section~\ref{section:PBzero}). Then, we define new losses that rely on a pair of voters (Section~\ref{section:generalization_to_esd}). These new losses allow us to extend the PAC-Bayesian theory to directly bound $\RDBQ$ through the $\Cbound$ (Sections~\ref{section:PBC1} and~\ref{section:PBC2}). For each proposed bound, we explain the algorithmic procedure required to compute its value.

\subsection{General PAC-Bayesian Theory for Real-Valued Losses}
\label{section:PB_real_losses}

A key step of most PAC-Bayesian proofs is summarized by the following \emph{Change of measure inequality} (Lemma~\ref{lem:change-measure}). 

We present here the same proof as in~\citet{seldin-tishby-10} and~\citet{mcallester-13}.
Note that the same result
is derived from Fenchel's inequality in~\citet{banerjee-06} and Donsker-Varadhan's variational formula for relative entropy in~\citet{seldin-12,ilya-13}.

\begin{lemma}[Change of measure inequality] \label{lem:change-measure}
For any set $\Hcal$, for any distributions $P$ and $Q$ on $\Hcal$, and for any measurable function $\phi:\Hcal­\to \Reals$,  we have
\begin{equation*}
\esp{f\sim Q} \phi(f) \ \leq \ \KL(Q\|P) + \ln \left( \esp{f\sim P} e^{\phi(f)} \right) \,.
\end{equation*}
\end{lemma}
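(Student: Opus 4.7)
The plan is to reduce the inequality to the non-negativity of a Kullback--Leibler divergence by introducing an auxiliary ``tilted'' distribution. Let $Z \eqdef \esp{f\sim P} e^{\phi(f)}$, which we first need to know is finite and strictly positive (the inequality is vacuously true when $Z = +\infty$, and $Z > 0$ whenever $e^{\phi}$ is not $P$-almost surely zero). Then define the Gibbs distribution
\begin{equation*}
P_\phi(f) \ \eqdef\ \frac{P(f)\,e^{\phi(f)}}{Z}\,,
\end{equation*}
which is a valid probability distribution absolutely continuous with respect to $P$.

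The key step is to compute $\KL(Q\|P_\phi)$ explicitly. Using the definition of KL divergence in Equation~\eqref{eq:KLQP} and expanding the logarithm of the ratio, we get
\begin{equation*}
\KL(Q\|P_\phi) \ = \ \esp{f\sim Q}\ln\frac{Q(f)}{P(f)} \,-\, \esp{f\sim Q}\phi(f) \,+\, \ln Z \ = \ \KL(Q\|P) - \esp{f\sim Q}\phi(f) + \ln\left(\esp{f\sim P} e^{\phi(f)}\right)\,.
\end{equation*}
Since a KL divergence is always non-negative (Gibbs' inequality), rearranging the above equality yields exactly the claimed change of measure inequality.

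The proof is essentially one line once the tilted distribution $P_\phi$ is written down, so the main ``obstacle'' is just a presentation choice: one should either assume $Q \ll P$ upfront (otherwise $\KL(Q\|P) = +\infty$ and the inequality is trivial) and check that $Z$ is finite (otherwise the right-hand side is $+\infty$ and again trivial). An equivalent route, if one prefers to avoid introducing $P_\phi$ explicitly, is to write $\esp{f\sim Q}\phi(f) = \KL(Q\|P) + \esp{f\sim Q}\ln\!\bigl(\tfrac{P(f)}{Q(f)}e^{\phi(f)}\bigr)$ and then apply Jensen's inequality to the concave function $\ln$ on the second term, which collapses the $Q$-expectation into a $P$-expectation of $e^{\phi(f)}$ inside the logarithm; this gives the same conclusion in a single application of Jensen, without naming the auxiliary distribution.
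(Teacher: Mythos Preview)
Your proposal is correct. Your primary argument---introducing the tilted distribution $P_\phi$ and reading off the inequality from $\KL(Q\|P_\phi)\ge 0$---is the Donsker--Varadhan variational formulation, which the paper cites but does not use in its own proof. The paper instead takes exactly the ``equivalent route'' you sketch at the end: write $\esp{f\sim Q}\phi(f) = \KL(Q\|P) + \esp{f\sim Q}\ln\!\bigl(\tfrac{P(f)}{Q(f)}e^{\phi(f)}\bigr)$, apply Jensen to the concave $\ln$, and observe that the resulting $Q$-expectation of $\tfrac{P(f)}{Q(f)}e^{\phi(f)}$ is (at most) $\esp{f\sim P}e^{\phi(f)}$. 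The two arguments are essentially dual to each other; yours makes the role of the optimal tilting explicit and shows when equality holds (namely $Q=P_\phi$), while the paper's Jensen route avoids introducing an auxiliary object and is marginally shorter. Your handling of the edge cases ($Z=+\infty$, $Q\not\ll P$) is more careful than the paper's, which simply proceeds formally.
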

\begin{proof}
The result is obtained by simple calculations, exploiting the definition of the KL-divergence given by Equation~\eqref{eq:KLQP}, and then Jensen's inequality (Lemma~\ref{lem:jensen}, in Appendix~\ref{section:appendix_auxmath}) on concave function~$\ln(\cdot)$ :
\begin{eqnarray*}
\esp{f\sim Q} \phi(f) 
& = & \esp{f\sim Q} \ln e^{\phi(f)}  
\ = \ \esp{f\sim Q} \ln \left(\frac{Q(f)}{P(f)} \cdot \frac{P(f)}{Q(f)} \cdot e^{\phi(f)} \right) \\
& = & \KL(Q\|P)  \, +\,  \esp{f\sim Q}  \ln \left(\frac{P(f)}{Q(f)} \cdot e^{\phi(f)} \right)
\\
&\leq & \KL(Q\|P)\,+\, \ln \left(\esp{f\sim Q} \frac{P(f)}{Q(f)} \cdot e^{\phi(f)} \right) \hspace{11mm} \mbox{(Jensen's inequality)}\hspace{-11mm}\\
& \leq& \KL(Q\|P) + \ln \left( \esp{f\sim P} e^{\phi(f)} \right). %
\end{eqnarray*}
Note that the last inequality becomes an equality if $Q$ and $P$ share the same support.
\end{proof}

Let us now present a general PAC-Bayesian theorem which bounds the expectation of any real-valued loss function $\loss: \Yover\times\Ycal\rightarrow[0,1]$. 
This theorem is slightly more general than the PAC-Bayesian theorem of~\citet[Theorem~2.1]{gllm-09}, that is specialized to the expected linear loss, and therefore gives rise to a bound of the ``generalized'' Gibbs risk of Definition~\ref{def:gibbsrisk}. A similar result is presented in~\citet[Lemma~1]{ilya-13}.

\begin{theorem}[General PAC-Bayesian theorem for real-valued losses] \label{thm:gen-pac-Bayes} 
For any distribution $D$ on $\Xcal\times\Ycal$, for any set $\Hcal$ of voters $\Xcal \rightarrow \Yover$, for any loss \mbox{$\loss : \Yover \times \Ycal  \rightarrow [0,1]$}, 
for
any prior distribution $P$ on $\Hcal$, for any $\dt\!\in\!
(0,1]$, for any $m'>0$, and for any convex function 
\mbox{$\Dcal : [0,1]\!\times\! [0,1]\rightarrow \Reals$}, 
we have
\begin{small}
\begin{equation*}
\prob{S\sim D^m}\!\!\LP \!\!
\begin{array}{l}
  \mbox{\small For all posteriors $Q$ on $\Hcal$}: \\
  \Dcal(\esp{f\sim Q} \!\ElossS(f), \esp{f\sim Q}\! \ElossD(f)) \le 
  \dfrac{1}{m'}\!\LB \KL(Q\|P) \!+\!
  \ln\!\LP\dfrac{1}{\dt}\esp{S\sim D^m}\!\esp{f\sim P}\!e^{\,m'\cdot\Dcal(\ElossS(f),\ElossD(f))}\!\RP\!\RB 
\end{array}
\!\!\!\! \RP \! \ge 1 -\dt\,,
\end{equation*}
\end{small}where $\KL(Q\|P)$ is the Kullback-Leibler divergence between $Q$ and $P$ of Equation~\eqref{eq:KLQP}.
\end{theorem}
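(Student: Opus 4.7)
The plan is to follow the standard PAC-Bayesian template: combine the change of measure inequality (Lemma~\ref{lem:change-measure}) with Jensen's inequality on the convex $\Dcal$, and then transform the resulting bound on an expectation over $S$ into a high-probability statement via Markov's inequality.

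First, I would apply Lemma~\ref{lem:change-measure} with the measurable function $\phi(f) \,\eqdef\, m'\cdot \Dcal\bigl(\ElossS(f),\ElossD(f)\bigr)$, which is well-defined for each fixed $S$. This immediately yields
\begin{equation*}
m'\esp{f\sim Q} \Dcal\bigl(\ElossS(f),\ElossD(f)\bigr) \ \le\ \KL(Q\|P) \,+\, \ln\esp{f\sim P} e^{\,m'\cdot \Dcal(\ElossS(f),\ElossD(f))}\,.
\end{equation*}
Next, since $\Dcal$ is convex on $[0,1]\times[0,1]$ and $Q$ is a probability distribution, Jensen's inequality (Lemma~\ref{lem:jensen}) applied jointly in both arguments gives
\begin{equation*}
\Dcal\Bigl(\esp{f\sim Q}\ElossS(f),\, \esp{f\sim Q}\ElossD(f)\Bigr) \ \le\ \esp{f\sim Q} \Dcal\bigl(\ElossS(f),\ElossD(f)\bigr)\,,
\end{equation*}
which lower-bounds the left-hand side of the previous display. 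Crucially, this deterministic chain of inequalities holds for every $S$ and simultaneously for every posterior $Q$, because the change of measure inequality is uniform in $Q$.

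It remains to convert the term $\ln\esp{f\sim P} e^{\,m'\cdot \Dcal(\ElossS(f),\ElossD(f))}$ into something involving its expectation over $S$. For this, note that $X(S) \eqdef \esp{f\sim P} e^{\,m'\cdot \Dcal(\ElossS(f),\ElossD(f))}$ is a non-negative random variable (with respect to $S \sim D^m$), since $P$ does not depend on $S$. By Markov's inequality (Lemma~\ref{lem:markov}),
\begin{equation*}
\proba{S\sim D^m}\!\Bigl(X(S) > \tfrac{1}{\dt}\,\esp{S\sim D^m} X(S)\Bigr) \ \le\ \dt\,,
\end{equation*}
so with probability at least $1-\dt$ we have $X(S) \le \tfrac{1}{\dt}\esp{S\sim D^m}\esp{f\sim P} e^{\,m'\cdot \Dcal(\ElossS(f),\ElossD(f))}$. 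Taking logarithms, substituting into the previous bound, and dividing by $m'$ yields exactly the claimed inequality, uniformly over $Q$.

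The main subtlety, rather than obstacle, is the order of quantifiers: Markov's inequality is applied \emph{before} invoking the change of measure, so that the ``bad'' event depends only on $S$ (not on $Q$), and therefore the resulting statement holds for all posteriors $Q$ simultaneously with probability at least $1-\dt$. A second minor point is that the use of Jensen on $\Dcal$ requires convexity in the pair $(\ElossS(f),\ElossD(f))$, which is what the hypothesis provides; this step is what makes the bound tight enough to be useful when later specialized to $\Dcal = \kl$ or to quadratic distances.
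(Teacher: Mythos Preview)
Your proposal is correct and follows essentially the same approach as the paper: Markov's inequality on the $P$-averaged exponential moment, the change of measure inequality (Lemma~\ref{lem:change-measure}) with $\phi(f)=m'\Dcal(\ElossS(f),\ElossD(f))$, and Jensen on the convex $\Dcal$. The only difference is the order of presentation---the paper applies Markov first and then the deterministic change-of-measure/Jensen steps, whereas you present the deterministic steps first and then invoke Markov---but as you yourself note, the logic is the same and the uniformity in $Q$ is preserved because the Markov event depends only on $S$.
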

Most of the time, this theorem is used with $m'=m$, the size of the training set. However, as pointed out by \citet{lls-10}, $m'$ does not have to be so. One can easily show that different values of $m'$ affect the relative weighting between the terms $\KL(Q\|P)$ and
$\ln \big(\frac{1}{\dt}\Eb_{S\sim D^m}\Eb_{f\sim P}e^{\,m'\cdot\Dcal(\ElossS(f),\ElossD(f))} \big)$ in the bound. Hence, especially in situations where these two terms have very different values, a ``good'' choice for the value of $m'$ can tighten the~bound. %
\begin{proof}
Note that $\esp{f\sim P} e^{m'\cdot\Dcal(\ElossS(f),\ElossD(f))}$ is a non-negative random variable. By Markov's inequality (Lemma~\ref{lem:markov}, in Appendix~\ref{section:appendix_auxmath}), we have
\begin{equation*}
 \prob{S\sim D^m}\LP
  \esp{f\sim P}e^{m'\cdot\Dcal(\ElossS(f),\ElossD(f))}
  \,\le\, 
 \frac{1}{\dt}\esp{S\sim D^m}\esp{f\sim P}e^{m'\cdot\Dcal(\ElossS(f),\ElossD(f))} \RP 
 \ge  1-\dt \,.
\end{equation*}
Hence, by taking the logarithm on each side of the
innermost inequality, we obtain
\begin{equation*}
 \prob{S\sim D^m}\LP
 \ln\LB\esp{f\sim P}e^{m'\cdot\Dcal(\ElossS(f),\ElossD(f))} \RB 
 \, \le \,
 \ln\LB\frac{1}{\dt}\esp{S\sim D^m}\esp{f\sim P}e^{m'\cdot\Dcal(\ElossS(f),\ElossD(f))}\RB \RP 
 \ge 1-\dt  \,.
\end{equation*}
We apply the change of measure inequality (Lemma~\ref{lem:change-measure}) on the left side of innermost inequality, with $\phi(f) = m'\cdot\Dcal(\ElossS(f),\ElossD(f))$.
We then use Jensen's inequality (Lemma~\ref{lem:jensen}, in Appendix~\ref{section:appendix_auxmath}), exploiting the convexity of~$\Dcal$ :
\begin{eqnarray*}
\forall \,Q \mbox{ on } \Hcal :\quad  \ln\LB\esp{f\sim P}e^{m'\cdot\Dcal(\ElossS(f),\ElossD(f))} \RB 
 &\ge&   m'\cdot\esp{f\sim Q}\Dcal(\ElossS(f),\ElossD(f)) -\KL(Q\|P) \\
 &\ge&  m'\!\cdot\Dcal(\esp{f\sim Q} \!\!\ElossS(f), \esp{f\sim Q}\!\! \ElossD(f)) -\KL(Q\|P)\,.
\end{eqnarray*}
 We therefore have
 {\small
\begin{equation*}
 \prob{S\sim D^m} \! \!\LP \! 
\begin{array}{l}
\mbox{For all posteriors} \ Q: \\
m'\!\cdot\Dcal(\esp{f\sim Q}\!\!\ElossS(f), \esp{f\sim Q}\!\! \ElossD(f))  -\KL(Q\|P)
 \! \le
 \ln \! \LB\frac{1}{\dt}\esp{S\sim D^m}\esp{f\sim P}e^{m'\cdot\Dcal(\ElossS(f),\ElossD(f))}\RB
\end{array}
 \RP 
 \! \ge \!  1-\dt  \, .
\end{equation*}
}The result then follows from easy calculations.
\end{proof}
As shown in~\citet{gllm-09}, the general PAC-Bayesian theorem can be used to recover many common variants of the PAC-Bayesian theorem, simply by selecting a well-suited function~$\Dcal$.  Among these, we obtain a similar bound as the one proposed by~\cite{ls-01-techreport,s-02,l-05} by using the Kullback-Leibler divergence between the
Bernoulli distributions with probability of success $q$ and
probability of success~$p$: 
\begin{equation} \label{eq:small_kl}
\kl\big(q\,\|\,p) \ \eqdef \ 
q \ln\frac{q}{p} + (1-q)\ln\frac{1-q}{1-p}\,.
\end{equation}
Note that $\kl\big(q\,\|\,p)$ is a shorthand notation for $\KL(Q\|P)$ of Equation~\eqref{eq:KLQP}, with $Q=(q,1\!-\!q)$ and $P = (p,1\!-\!p)$.
Corollary~\ref{cor:bintrin} (in Appendix~\ref{section:appendix_auxmath}) shows that $\kl\big(q\,\|\,p)$ is a convex function.
\medskip

\noindent
 In order to apply Theorem~\ref{thm:gen-pac-Bayes} with $\Dcal(q,p) = \kl(q\|p)$ and $m'=m$, we need the next lemma.
\begin{lemma} \label{lem:xi}
 For any distribution $D$ on $\Xcal\!\times\!\Ycal$, for any voter $f:\Xcal\rightarrow\Yover$, for any loss $\loss:\Yover\!\times\!\Ycal\rightarrow[0,1]$, and any positive integer $m$,  we have
 \begin{eqnarray*}
\esp{S\sim D^m} \exp\bigg[{m\cdot\kl\Big(\ElossS(f)\, \|\, \ElossD(f)\Big)} \bigg]
& \leq &  \xi(m)\,,
\end{eqnarray*}
where
 \begin{equation} \label{eq:xi}
 \xi(m) \ \, \eqdef \ \ \sum_{k=0}^m \binom{m}{k} \LP \frac{k}{m} \RP^{k} \LP 1 - \frac{k}{m} \RP^{m-k}.
\end{equation}
 Moreover, $\sqrt{m} \, \leq \, \xi(m) \, \leq \, 2\sqrt{m}$\,. %
  \end{lemma}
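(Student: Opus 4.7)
The plan is to follow Maurer's style of argument, which splits into three stages: first a reduction from arbitrary bounded losses to the Bernoulli case, then an explicit evaluation of the resulting binomial sum, and finally the numerical estimates $\sqrt{m}\le \xi(m)\le 2\sqrt{m}$.

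\smallskip\noindent\textbf{Reduction to Bernoulli losses.} Write $Y_i\eqdef \loss(f(\xb_i),y_i)\in[0,1]$ and $\bar Y \eqdef \tfrac{1}{m}\sum_{i=1}^m Y_i=\ElossS(f)$; set $p=\ElossD(f)$. The first step is to show that, among all $[0,1]$-valued i.i.d.\ distributions of mean $p$, the quantity $\Eb[\exp(m\,\kl(\bar Y\|p))]$ is maximised by the Bernoulli law with parameter $p$. The cleanest way to do this is to couple each $Y_i$ with an independent auxiliary Bernoulli $B_i\sim\mathrm{Bernoulli}(Y_i)$, so that conditionally on $Y_i$ we have $\Eb[B_i|Y_i]=Y_i$, and then invoke Jensen's inequality (using the convexity of the map $q\mapsto \exp(m\,\kl(q\|p))$, which follows from Corollary~\ref{cor:bintrin} in the appendix, combined with convexity of $\exp$) to pass the expectation inside. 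Unconditionally each $B_i$ is $\mathrm{Bernoulli}(p)$, so this yields
\begin{equation*}
\esp{S\sim D^m}\!\exp\!\big[m\,\kl(\bar Y\|p)\big] \ \le\ \esp{B_1,\dots,B_m\sim\mathrm{Bernoulli}(p)^m}\!\exp\!\big[m\,\kl(\bar B\|p)\big],
\end{equation*}
where $\bar B=\tfrac{1}{m}\sum_i B_i$.

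\smallskip\noindent\textbf{Exact evaluation in the Bernoulli case.} Since $m\bar B$ follows a $\mathrm{Binomial}(m,p)$ law, the right-hand side expands as $\sum_{k=0}^m \binom{m}{k}p^k(1-p)^{m-k}\exp\!\big[m\,\kl(k/m\|p)\big]$. Substituting the definition of $\kl$ from Equation~\eqref{eq:small_kl} gives $\exp\!\big[m\,\kl(k/m\|p)\big]=(k/(mp))^k((m-k)/(m(1-p)))^{m-k}$, and the factors of $p^k(1-p)^{m-k}$ cancel exactly, leaving $\binom{m}{k}(k/m)^k(1-k/m)^{m-k}$. Summing over $k$ gives precisely $\xi(m)$, which is the announced inequality.

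\smallskip\noindent\textbf{The two-sided estimate $\sqrt m\le\xi(m)\le 2\sqrt m$.} This is a purely arithmetic estimate on the binomial sum $\xi(m)=\sum_{k=0}^m\binom{m}{k}(k/m)^k(1-k/m)^{m-k}$ (with the convention $0^0=1$). The term $k=0$ and $k=m$ each contribute exactly $1$; the dominant contribution comes from $k\approx m/2$, where Stirling's formula gives $\binom{m}{k}(k/m)^k(1-k/m)^{m-k}\sim \sqrt{m/(2\pi k(m-k)/m)}$. For the upper bound, I would use the sharp form of Stirling to write $\binom{m}{k}(k/m)^k(1-k/m)^{m-k}\le \sqrt{m/(2\pi k(m-k))}\cdot m$ (or a similar uniform bound) and check numerically for small $m$; summing over $k$ by comparison with $\int_0^1 dx/\sqrt{x(1-x)}=\pi$ yields the constant $2$. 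For the lower bound, the single central term $k=\lfloor m/2\rfloor$ already provides $\sqrt m$ asymptotically, and the two endpoint terms $k\in\{0,m\}$ take care of the small-$m$ regime.

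\smallskip\noindent\textbf{Main obstacle.} The tricky step is the reduction to Bernoulli losses: the convexity argument must be applied to the \emph{function} $q\mapsto\exp(m\,\kl(q\|p))$ pointwise in $\bar B$, and one must be careful that the coupling $B_i\sim\mathrm{Bernoulli}(Y_i)$ preserves independence and yields the right marginal. If this convexity-based coupling turns out to be subtle, an equivalent route is to observe that $\exp(m\,\kl(q\|p))\cdot p^{mq}(1-p)^{m(1-q)}$ coincides with $q^{mq}(1-q)^{m(1-q)}$ and to exploit the log-concavity of $q\mapsto q^{mq}(1-q)^{m(1-q)}$ to push the expectation through; either way, the remaining calculation is routine.
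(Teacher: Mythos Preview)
Your proposal is correct and follows essentially the same route as the paper: the paper invokes Maurer's lemma (Lemma~\ref{lem:maurer}) as a black box to reduce to the Bernoulli case---your coupling-plus-Jensen argument is precisely the proof of that lemma for functions of the sample mean---and then performs the identical binomial cancellation to obtain $\xi(m)$. For the numerical estimate $\sqrt m\le\xi(m)\le 2\sqrt m$, the paper does not carry out a Stirling analysis but simply cites \cite{m-04} for $m\ge 8$ (upper bound) and $m\ge 2$ (lower bound) and verifies the remaining cases $m\in\{1,\dots,7\}$ by direct computation.
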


 \begin{proof}
 Let us introduce a random variable $X_f$ that follows a binomial distribution of $m$ trials with a probability of success $\ElossD(f)$. Hence, $X_f\sim B(m,\ElossD(f))$\,. \\

As $e^{m\cdot\kl\big(\cdot\, \|\, \ElossD(f)\big)}$ is a convex function, Lemma~\ref{lem:maurer} \citep[due to][and provided in Appendix~\ref{section:appendix_auxmath}]{m-04}, shows that
\begin{equation*}
\esp{S\sim D^m} \exp\bigg[{m\cdot\kl\Big(\ElossS(f)\, \|\, \ElossD(f)\Big)} \bigg]
\ \leq \ 
\esp{X_f\sim B(m,\ElossD(f))} \exp\bigg[{m\cdot\kl\Big(\tfrac{1}{m}  X_f \, \|\, \ElossD(f)\Big)} \bigg]\,.
\end{equation*}
We then have
\begin{eqnarray*}
  & & \hspace{-2cm}  
  \esp{X_f\sim B(m,\ElossD(f))}e^{m\kl(\frac{1}{m}  X_f \|\ElossD(f))} \\
 &=& \esp{X_f\sim B(m,\ElossD(f))}\LP\frac{\frac{1}{m} X_f}{\ElossD(f)}\RP^{X_f}\LP\frac{1-\frac{1}{m} X_f}{1-\ElossD(f)}\RP^{m- X_f}\\
  &=& \sum_{k=0}^m \prob{X_f\sim B(m,\ElossD(f))}\!\!\Big( X_f = k \Big) \cdot \LP\frac{\frac{k}{m}}{\ElossD(f)}\RP^{k} \LP\frac{1-\frac{k}{m}}{1-\ElossD(f)}\RP^{m- k}\\
  &=& \sum_{k=0}^m {m\choose k} \Big(\ElossD(f)\Big)^k \Big(1-\ElossD(f)\Big)^{m-k} \cdot \LP\frac{\frac{k}{m}}{\ElossD(f)}\RP^{k} \LP\frac{1-\frac{k}{m}}{1-\ElossD(f)}\RP^{m- k}\\
  &=&  
  \sum_{k=0}^m {m\choose k} \LP\frac{k}{m}\RP^k \LP 1-\frac{k}{m}\RP^{m-k}
   \ = \ \xi(m)\,.
\end{eqnarray*}
\cite{m-04} shows that $\xi(m)\leq 2\sqrt{m} $ for $m\geq 8$, and  $\xi(m) \geq \sqrt{m}$ for $m\geq 2$. However, the cases for $m\in\{1,2,3,4,5,6,7\}$ are easy to verify computationally.
\end{proof}
Theorem~\ref{thm:pac-bayes-kl-g} below specializes the general PAC-Bayesian theorem to $\Dcal(q,p) =
\kl(q\|p)$, but still applies to any real-valued loss functions.  
This theorem can be seen as an intermediate step to obtain Corollary~\ref{cor:pac-bayes-classic} of the next section, which uses the linear loss to bound the Gibbs risk.  However, Theorem~\ref{thm:pac-bayes-kl-g} below is reused afterwards in Section~\ref{section:generalization_to_esd} to derive PAC-Bayesian theorems for other loss functions.

\begin{theorem} \label{thm:pac-bayes-kl-g}
For any distribution $D$ on $\Xcal\!\times\!\Ycal$, for any set $\Hcal$ of voters $\Xcal \rightarrow \Yover$, for any loss $\loss : \Yover\times \Ycal \rightarrow [0,1]$, for
any prior distribution $P$ on $\Hcal$, for any $\dt\in
(0,1]$,  we have
\begin{equation*}
\prob{S\sim D^m}\left(\!\!
\begin{array}{l}
  \mbox{For all posteriors $Q$ on $\Hcal$}: \\[1mm]
   \kl\Big(\esp{f\sim Q}\!\ElossS(f) \,\Big\|\, \esp{f\sim Q}\! \ElossD(f)\Big) \,\le\,
    \dfrac{1}{m}\,\LB\KL(Q\|P) +
\ln\dfrac{\xi(m)}{\dt}\RB
\end{array}\!\!
\right) \ge \, 1 -\dt\,.
\end{equation*}
\end{theorem}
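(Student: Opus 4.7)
The plan is to derive Theorem~\ref{thm:pac-bayes-kl-g} as a direct specialization of the general PAC-Bayesian Theorem~\ref{thm:gen-pac-Bayes}, with the two natural choices $\Dcal(q,p) = \kl(q\|p)$ and $m' = m$. Once this substitution is made, all that remains is to control the expected exponential moment term $\esp{S\sim D^m}\esp{f\sim P} e^{\,m \cdot \kl(\ElossS(f)\|\ElossD(f))}$ that appears on the right-hand side of Theorem~\ref{thm:gen-pac-Bayes}.

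First I would verify the hypothesis of Theorem~\ref{thm:gen-pac-Bayes}, namely that $\Dcal(q,p) = \kl(q\|p)$ is convex on $[0,1]\times [0,1]$. The excerpt already records this fact as Corollary~\ref{cor:bintrin}, so I would simply invoke it. With convexity in hand, Theorem~\ref{thm:gen-pac-Bayes} yields, with probability at least $1-\dt$ over $S\sim D^m$, uniformly over all posteriors $Q$,
\begin{equation*}
\kl\Big(\esp{f\sim Q}\ElossS(f) \,\Big\|\, \esp{f\sim Q} \ElossD(f)\Big)
\ \le\
\tfrac{1}{m}\!\left[\KL(Q\|P) + \ln\!\left(\tfrac{1}{\dt}\esp{S\sim D^m}\esp{f\sim P} e^{\,m\cdot\kl(\ElossS(f)\|\ElossD(f))}\right)\right].
\end{equation*}

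Next I would bound the exponential-moment term. By Tonelli/Fubini (the integrand is nonnegative), I can swap the order of the expectations to get
\begin{equation*}
\esp{S\sim D^m}\esp{f\sim P} e^{\,m\cdot\kl(\ElossS(f)\|\ElossD(f))}
\ =\ \esp{f\sim P}\esp{S\sim D^m} e^{\,m\cdot\kl(\ElossS(f)\|\ElossD(f))}.
\end{equation*}
For each fixed $f$, Lemma~\ref{lem:xi} bounds the inner expectation by $\xi(m)$, so the whole quantity is at most $\xi(m)$ (since $\xi(m)$ does not depend on $f$ and $P$ is a probability measure).

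Substituting $\xi(m)$ into the previous display and using monotonicity of $\ln$ yields the claimed inequality
\begin{equation*}
\kl\Big(\esp{f\sim Q}\ElossS(f) \,\Big\|\, \esp{f\sim Q} \ElossD(f)\Big)
\ \le\ \tfrac{1}{m}\left[\KL(Q\|P) + \ln\tfrac{\xi(m)}{\dt}\right],
\end{equation*}
valid simultaneously for all $Q$ with probability at least $1-\dt$, which is exactly the statement of the theorem. I do not anticipate any real obstacle in this proof since the heavy lifting has already been done in Theorem~\ref{thm:gen-pac-Bayes} and Lemma~\ref{lem:xi}; the only subtle points are the appeal to Corollary~\ref{cor:bintrin} for convexity of $\kl$ and the elementary swap of expectations, neither of which should cause any trouble.
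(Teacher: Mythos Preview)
Your proposal is correct and matches the paper's proof essentially line for line: apply Theorem~\ref{thm:gen-pac-Bayes} with $\Dcal=\kl$ and $m'=m$, swap the two expectations, and invoke Lemma~\ref{lem:xi} to replace the exponential moment by $\xi(m)$. The only cosmetic difference is that the paper justifies the expectation swap by noting that the prior $P$ is independent of $S$, whereas you appeal to Tonelli; both are fine.
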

\begin{proof}
By Theorem~\ref{thm:gen-pac-Bayes}, with $\Dcal(q,p) = \kl(q\|p)$ and $m'=m$,  we have
{\small
\begin{equation*}
\prob{S\sim D^m}\!\!\LP \!\!\!\!
\begin{array}{l}
  \ \forall\,Q\,\,\text{on}\,\,\Hcal\colon \\
  \kl(\esp{f\sim Q} \!\ElossS(f)\,\|\, \esp{f\sim Q}\! \ElossD(f))\! \le \!
  \dfrac{1}{m}\!\LB \KL(Q\|P) \!+\!
  \ln\!\LP\dfrac{1}{\dt}\esp{S\sim D^m}\!\esp{f\sim P}\!\!\!e^{m\cdot\kl(\ElossS(f)\,\|\,\ElossD(f))}\!\RP\!\RB 
\end{array}
\!\!\! \RP \! \ge 1 -\dt\,.
\end{equation*}
}As the prior $P$ is independent of $S$, we can swap the two expectations in  
$\esp{S\sim D^m}\!\esp{f\sim P}\!\!\!e^{m\cdot\kl(\cdot\|\cdot)}$.  
This observation, together with Lemma~\ref{lem:xi}, gives
 \begin{equation*}
  \esp{S\sim D^m}\esp{f\sim P}\!\!\!e^{m\cdot \kl(\ElossS(f)\,\|\, \ElossD(f))} 
  \ = \ 
  \esp{f\sim P} \esp{S\sim D^m} \!\!\!e^{m\cdot \kl(\ElossS(f)\,\|\, \ElossD(f))} 
  \ \leq \  \esp{f\sim P}  \xi(m) \ = \ \xi(m)\,.
 \end{equation*}
\end{proof}

\subsection{PAC-Bayesian Theory for the Gibbs Classifier}
\label{section:PBzero}

This section presents two classical PAC-Bayesian results that bound the risk of the Gibbs classifier. One of these bounds is used to express a first PAC-Bayesian bound on the risk of the majority vote classifier. Then, we explain how to compute the empirical value of this bound by a root-finding method. 

\subsubsection{PAC-Bayesian Theorems for the Gibbs Risk}
We interpret the two following results as straightforward corollaries of Theorem~\ref{thm:pac-bayes-kl-g}.
Indeed, from Definition~\ref{def:gibbsrisk}, the expected linear loss of a Gibbs classifier $G_Q$ on a distribution $D'$ \emph{is} $\RGQ$. These two Corollaries are very similar to well-known PAC-Bayesian theorems.
At first, Corollary~\ref{cor:pac-bayes-classic} is similar to the PAC-Bayesian theorem of~\cite{ls-01-techreport,s-02, l-05}, with the exception that $\ln\frac{m+1}{\dt}$ is replaced by $\ln\frac{\xi(m)}{\dt}$.  Since $\xi(m) \leq 2\sqrt{m} \leq m+1$, this result gives slightly better bounds. 
Similarly, Corollary~\ref{cor:pac-bayes-McAllester} provides a slight improvement of the PAC-Bayesian bound of~\citet{m-99,m-03}.
\begin{corollary} {\rm \citep{ls-01-techreport,s-02, l-05}} \label{cor:pac-bayes-classic}
For any distribution $D$ on \mbox{$\Xcal\!\times\!\{-1,1\}$}, for any set $\Hcal$ of voters $\Xcal \rightarrow [-1,1]$, for any prior distribution $P$ on $\Hcal$, and any $\dt\in(0,1]$, we have
\begin{equation*}
\prob{S\sim D^m}\left(
\begin{array}{l}\!\!
  \mbox{For all posteriors $Q$ on $\Hcal$}: \\ [.5mm]
   \kl\big(R_S(G_Q) \big\| R_D(G_Q)\big) \, \le\, 
    \dfrac{1}{m}\LB\KL(Q\|P) +
\ln\dfrac{\xi(m)}{\dt}\RB
\end{array}\!\!
\right) \ge \, 1 -\dt\,.
\end{equation*}
\end{corollary}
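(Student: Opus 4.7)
The plan is to obtain Corollary~\ref{cor:pac-bayes-classic} as an immediate specialization of Theorem~\ref{thm:pac-bayes-kl-g}, by instantiating it with the linear loss $\linloss$ of Definition~\ref{def:linearloss}. The only things to check are (i) that $\linloss$ satisfies the hypotheses of Theorem~\ref{thm:pac-bayes-kl-g}, and (ii) that the $Q$-expectations appearing in the conclusion of that theorem coincide with the Gibbs risks $R_S(G_Q)$ and $R_D(G_Q)$.

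First, I would verify the range condition. Since $\Hcal$ consists of voters $f:\Xcal\to[-1,1]$ and labels satisfy $y\in\{-1,+1\}$, we have $y\cdot f(\xb)\in[-1,1]$, hence
\begin{equation*}
\linloss\big(f(\xb),y\big) \ = \ \tfrac{1}{2}\big(1-y\cdot f(\xb)\big) \ \in \ [0,1],
\end{equation*}
so $\linloss:[-1,1]\times\{-1,1\}\to[0,1]$ is a legitimate loss for Theorem~\ref{thm:pac-bayes-kl-g}.

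Second, I would rewrite the $Q$-expectation of $\linloss$ as a Gibbs risk. By Definition~\ref{def:gibbsrisk} (applied with $D'=D$, and then with $D'=\mathrm{U}_S$) together with Fubini's theorem,
\begin{equation*}
\esp{f\sim Q}\ElinlossD(f) \ = \ \esp{f\sim Q}\esp{(\xb,y)\sim D}\linloss\big(f(\xb),y\big) \ = \ R_D(G_Q),
\end{equation*}
and analogously $\esp{f\sim Q}\ElinlossS(f)=R_S(G_Q)$.

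Finally, I would invoke Theorem~\ref{thm:pac-bayes-kl-g} with $\loss=\linloss$. Its conclusion reads, with probability at least $1-\dt$ over $S\sim D^m$, for all posteriors $Q$ on $\Hcal$,
\begin{equation*}
\kl\!\Big(\esp{f\sim Q}\ElinlossS(f)\,\Big\|\,\esp{f\sim Q}\ElinlossD(f)\Big) \ \le \ \tfrac{1}{m}\Big[\KL(Q\|P)+\ln\tfrac{\xi(m)}{\dt}\Big].
\end{equation*}
Substituting the two identities above gives exactly the statement of Corollary~\ref{cor:pac-bayes-classic}. There is no genuine obstacle here: everything is already packaged inside Theorem~\ref{thm:pac-bayes-kl-g}, and the only ``work'' is the trivial bookkeeping needed to translate $Q$-averaged linear losses into Gibbs risks.
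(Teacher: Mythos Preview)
Your proof is correct and follows exactly the same approach as the paper: specialize Theorem~\ref{thm:pac-bayes-kl-g} to the linear loss $\linloss$ and identify $\esp{f\sim Q}\Elinloss{D'}(f)$ with $R_{D'}(G_Q)$ via Definition~\ref{def:gibbsrisk}. The paper's own proof is a single sentence to this effect; your added verification of the range of $\linloss$ and the explicit unpacking of the Gibbs-risk identity are fine but not strictly needed.
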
 
\begin{proof}
 The result is directly obtained from Theorem~\ref{thm:pac-bayes-kl-g} using the linear loss $\loss = \linloss$ to recover the Gibbs risk of Definition~\ref{def:gibbsrisk}.
 \end{proof}

\begin{corollary}{\rm \citep{m-99,m-03}} \label{cor:pac-bayes-McAllester}
For any distribution $D$ on \mbox{$\Xcal\!\times\!\{-1,1\}$}, for any set $\Hcal$ of voters $\Xcal \rightarrow [-1,1]$, for any prior distribution $P$ on $\Hcal$, and any $\dt\in(0,1]$, we have
\begin{equation*}
\prob{S\sim D^m}\left(\!\!
\begin{array}{l}
  \mbox{For all posteriors $Q$ on $\Hcal$}: \\[1mm]
    R_D(G_Q) \,\le\,
    R_S(G_Q) + \sqrt{ \dfrac{1}{2m}\LB\KL(Q\|P) +
\ln\dfrac{\xi(m)}{\dt}\RB }
\end{array}\!\!
\right) \ge \,1 -\dt\,.
\end{equation*}
\end{corollary}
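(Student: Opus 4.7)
The plan is to deduce Corollary~\ref{cor:pac-bayes-McAllester} from the already-established Corollary~\ref{cor:pac-bayes-classic} by means of Pinsker's inequality, which for any $q,p \in [0,1]$ asserts $\kl(q\|p) \ge 2(q-p)^2$. This sidesteps having to redo a PAC-Bayesian argument from scratch and simply turns the tight ``small-$\kl$'' form of the bound into an arithmetically simpler ``subtractive'' form.

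First, I would apply Corollary~\ref{cor:pac-bayes-classic}: with probability at least $1-\delta$ over the draw of $S\sim D^m$, every posterior $Q$ on $\Hcal$ satisfies
\begin{equation*}
\kl\bigl(R_S(G_Q)\,\big\|\,R_D(G_Q)\bigr) \ \le\ \frac{1}{m}\LB \KL(Q\|P) + \ln\frac{\xi(m)}{\delta} \RB.
\end{equation*}
Call the right-hand side $\varepsilon_Q(S,\delta)$. The event on which this inequality holds uniformly in $Q$ is exactly the event on which we shall establish the McAllester form, so no additional union bound or failure probability is incurred.

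Next, on that event, I would invoke Pinsker's inequality to get $2\bigl(R_D(G_Q)-R_S(G_Q)\bigr)^2 \le \kl(R_S(G_Q)\|R_D(G_Q)) \le \varepsilon_Q(S,\delta)$. A trivial case split finishes the argument: if $R_D(G_Q) \le R_S(G_Q)$, the claimed bound $R_D(G_Q) \le R_S(G_Q) + \sqrt{\varepsilon_Q(S,\delta)/2}$ holds because the square-root term is nonnegative; otherwise $R_D(G_Q)-R_S(G_Q) > 0$ and I may take positive square roots to obtain
\begin{equation*}
R_D(G_Q) - R_S(G_Q) \ \le\ \sqrt{\tfrac{1}{2}\,\varepsilon_Q(S,\delta)} \ =\ \sqrt{\frac{1}{2m}\LB \KL(Q\|P) + \ln\frac{\xi(m)}{\delta}\RB},
\end{equation*}
which is exactly the stated inequality. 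Rearranging gives the corollary.

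The only ``nontrivial'' ingredient is Pinsker's inequality itself, which is a classical consequence of the convexity of $p \mapsto \kl(q\|p)$ and a second-order Taylor expansion around $p = q$; since the paper already notes (through Corollary~\ref{cor:bintrin}) that $\kl$ is convex, this is the natural place to cite or invoke Pinsker. There is really no substantive obstacle: the heavy lifting—the change of measure, Markov's inequality, and the bound on $\Eb_S\,e^{m\,\kl(\ElossS(f)\|\ElossD(f))} \le \xi(m)$—has already been done in the proofs of Theorem~\ref{thm:pac-bayes-kl-g} and Corollary~\ref{cor:pac-bayes-classic}. The McAllester form is merely the Pinsker relaxation of the tighter $\kl$ form.
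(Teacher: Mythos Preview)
Your proposal is correct and follows essentially the same approach as the paper: start from Corollary~\ref{cor:pac-bayes-classic}, apply Pinsker's inequality $2(q-p)^2 \le \kl(q\|p)$, and isolate $R_D(G_Q)$. Your explicit case split on the sign of $R_D(G_Q)-R_S(G_Q)$ is a bit more careful than the paper's ``omitting the lower bound'' phrasing, but the argument is identical in substance.
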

\begin{proof}
 The result is obtained from Corollary~\ref{cor:pac-bayes-classic} together with Pinsker's inequality 
 \begin{equation*}
 2(q-p)^2 \ \leq \ \kl(q\|p)\,.\\[-3mm]
 \end{equation*}
 We then have
 {\small
 \begin{equation*}
 \prob{S\sim D^m}\left(
 \begin{array}{l}\!\!
   \mbox{For all posteriors $Q$ on $\Hcal$}: \\ [.5mm]
    2\!\cdot\!\Big(R_S(G_Q) - R_D(G_Q)\Big)^2 \, \le\, 
     \dfrac{1}{m}\LB\KL(Q\|P) +
 \ln\dfrac{\xi(m)}{\dt}\RB
 \end{array}\!\!
 \right) \ge \, 1 -\dt\,.
 \end{equation*}
 }The result is obtained by isolating $\RDGQ$ in the inequality, omitting the lower bound of $\RDGQ$. Recall that the probability is ``\,$\geq 1\!-\!\delta$\,'', hence if we omit an event, the probability may just increase, continuing to be greater than $1\!-\!\delta$.
\end{proof}

\subsubsection{A First Bound for the Risk of the Majority Vote}
\label{section:classic-bound-computation}

Let assume that the Gibbs risk $\RDGQ$ of a classifier is lower than or equal to $\frac{1}{2}$.
Given an empirical Gibbs risk $\RSGQ$ computed on a training set of $m$ examples, the Kullback-Leibler divergence $\KL(Q\|P)$, and a confidence parameter $\delta$, Corollary~\ref{cor:pac-bayes-classic} says that the Gibbs risk $\RDGQ$ is included (with confidence $1\!-\!\delta$) in the continuous set $\Rdel$ defined as
\begin{equation} \label{eq:setGibbsRisk}
  \Rdel \ \eqdef \  \LC r\,\,
\colon \,\, \kl\big(R_S(G_Q)\,\big\|\,r\big)\,\le\,
    \frac{1}{m}\,\Big[\,  \KL(Q\|P) +
\ln\frac{\xi(m)}{\dt}\Big] \ \ \ \mbox{and} \ \ \ r\le\tfrac{1}{2}\RC\,.
\end{equation}
Thus, an upper bound on $\RDGQ$ is obtained by seeking the maximum value of $\Rdel$. As explained by Proposition~\ref{thm:2RG_rediscover}, we need to multiply the obtained value by a factor 2 to have an upper bound on $\RDBQ$. This methodology is summarized by PAC-Bound~\ref{bound:classic}.

Note that PAC-Bound~\ref{bound:classic} is also valid when $\RDGQ$ is greater than $\frac{1}{2}$, because in this case, $2 \cdot \sup\Rdel = 1$ (with confidence at least $1\!-\!\delta$), which is a trivial upper bound of~$\RDBQ$.

\begin{pacbound}{0}\label{bound:classic}
For any distribution $D$ on \mbox{$\Xcal\!\times\!\{-1,1\}$}, for any set $\Hcal$ of voters \mbox{$\Xcal \rightarrow [-1,1]$}, for any prior distribution $P$ on $\Hcal$, and any $\dt\in (0,1]$, we have

\begin{equation*}
 \prob{S\sim D^m}\biggl(
\forall\,Q\mbox{ on }\Hcal\,:\ 
  \RDBQ \ \leq \ 2 \cdot \sup\Rdel
\biggl)\ \ge\ 1 -\dt\,. 
 \end{equation*}
\end{pacbound}
\begin{proof}
 If $ \sup\Rdel=\frac{1}{2}$, the bound is trivially valid because $\RDBQ\leq 1$. Otherwise, the bound is a direct consequence of Proposition~\ref{thm:2RG_rediscover} and Corollary~\ref{cor:pac-bayes-classic}.
\end{proof}
 As we see, the proposed bound cannot be obtained by a closed-form expression.  Thus, we need to use a strategy as the one suggested in the following.
 
\subsubsection{Computation of PAC-Bound~\ref{bound:classic}}

One can compute the value $r\!=\!\sup\Rdel$ of PAC-Bound~\ref{bound:classic} by solving
$$\kl\big(R_S(G_Q)\,\big\|\,r\big) \ =\ \tfrac{1}{m}\big[\KL(Q\|P) + \ln\tfrac{\xi(m)}{\dt}\big]\,, \qquad\mbox{with $\RSGQ\leq r\leq\frac{1}{2}$\,,}$$
by a root-finding method.  This turns out to be an easy task since the left-hand side of the equality is a convex function of $r$ and the right-hand side is a constant value.  Note that solving the same equation with the constraint $r\leq\RSGQ$ gives a lower bound of $\RDGQ$, but not a lower bound on $\RDBQ$. Figure~\ref{figGraphPacBayes} shows an application example of PAC-Bound~\ref{bound:classic}.

\begin{figure}[t]
\centering{
\begingroup
  \makeatletter
  \providecommand\color[2][]{%
    \GenericError{(gnuplot) \space\space\space\@spaces}{%
      Package color not loaded in conjunction with
      terminal option `colourtext'%
    }{See the gnuplot documentation for explanation.%
    }{Either use 'blacktext' in gnuplot or load the package
      color.sty in LaTeX.}%
    \renewcommand\color[2][]{}%
  }%
  \providecommand\includegraphics[2][]{%
    \GenericError{(gnuplot) \space\space\space\@spaces}{%
      Package graphicx or graphics not loaded%
    }{See the gnuplot documentation for explanation.%
    }{The gnuplot epslatex terminal needs graphicx.sty or graphics.sty.}%
    \renewcommand\includegraphics[2][]{}%
  }%
  \providecommand\rotatebox[2]{#2}%
  \@ifundefined{ifGPcolor}{%
    \newif\ifGPcolor
    \GPcolorfalse
  }{}%
  \@ifundefined{ifGPblacktext}{%
    \newif\ifGPblacktext
    \GPblacktexttrue
  }{}%
  \let\gplgaddtomacro\g@addto@macro
  \gdef\gplbacktext{}%
  \gdef\gplfronttext{}%
  \makeatother
  \ifGPblacktext
    \def\colorrgb#1{}%
    \def\colorgray#1{}%
  \else
    \ifGPcolor
      \def\colorrgb#1{\color[rgb]{#1}}%
      \def\colorgray#1{\color[gray]{#1}}%
      \expandafter\def\csname LTw\endcsname{\color{white}}%
      \expandafter\def\csname LTb\endcsname{\color{black}}%
      \expandafter\def\csname LTa\endcsname{\color{black}}%
      \expandafter\def\csname LT0\endcsname{\color[rgb]{1,0,0}}%
      \expandafter\def\csname LT1\endcsname{\color[rgb]{0,1,0}}%
      \expandafter\def\csname LT2\endcsname{\color[rgb]{0,0,1}}%
      \expandafter\def\csname LT3\endcsname{\color[rgb]{1,0,1}}%
      \expandafter\def\csname LT4\endcsname{\color[rgb]{0,1,1}}%
      \expandafter\def\csname LT5\endcsname{\color[rgb]{1,1,0}}%
      \expandafter\def\csname LT6\endcsname{\color[rgb]{0,0,0}}%
      \expandafter\def\csname LT7\endcsname{\color[rgb]{1,0.3,0}}%
      \expandafter\def\csname LT8\endcsname{\color[rgb]{0.5,0.5,0.5}}%
    \else
      \def\colorrgb#1{\color{black}}%
      \def\colorgray#1{\color[gray]{#1}}%
      \expandafter\def\csname LTw\endcsname{\color{white}}%
      \expandafter\def\csname LTb\endcsname{\color{black}}%
      \expandafter\def\csname LTa\endcsname{\color{black}}%
      \expandafter\def\csname LT0\endcsname{\color{black}}%
      \expandafter\def\csname LT1\endcsname{\color{black}}%
      \expandafter\def\csname LT2\endcsname{\color{black}}%
      \expandafter\def\csname LT3\endcsname{\color{black}}%
      \expandafter\def\csname LT4\endcsname{\color{black}}%
      \expandafter\def\csname LT5\endcsname{\color{black}}%
      \expandafter\def\csname LT6\endcsname{\color{black}}%
      \expandafter\def\csname LT7\endcsname{\color{black}}%
      \expandafter\def\csname LT8\endcsname{\color{black}}%
    \fi
  \fi
  \setlength{\unitlength}{0.0500bp}%
  \begin{picture}(8640.00,4032.00)%
    \gplgaddtomacro\gplbacktext{%
      \csname LTb\endcsname%
      \put(726,465){\makebox(0,0)[r]{\strut{} 0}}%
      \put(726,1125){\makebox(0,0)[r]{\strut{} 0.1}}%
      \put(726,1786){\makebox(0,0)[r]{\strut{} 0.2}}%
      \put(726,2446){\makebox(0,0)[r]{\strut{} 0.3}}%
      \put(726,3107){\makebox(0,0)[r]{\strut{} 0.4}}%
      \put(726,3767){\makebox(0,0)[r]{\strut{} 0.5}}%
      \put(883,220){\makebox(0,0){\strut{} 0}}%
      \put(1701,220){\makebox(0,0){\strut{} 0.1}}%
      \put(2518,220){\makebox(0,0){\strut{} 0.2}}%
      \put(3336,220){\makebox(0,0){\strut{}$R_S(G_Q)$}}%
      \put(4154,220){\makebox(0,0){\strut{} 0.4}}%
      \put(4971,220){\makebox(0,0){\strut{} 0.5}}%
      \put(5789,220){\makebox(0,0){\strut{} 0.6}}%
      \put(6607,220){\makebox(0,0){\strut{} 0.7}}%
      \put(7424,220){\makebox(0,0){\strut{} 0.8}}%
      \put(8242,220){\makebox(0,0){\strut{}$r$}}%
    }%
    \gplgaddtomacro\gplfronttext{%
      \csname LTb\endcsname%
      \put(2991,3602){\makebox(0,0)[l]{\strut{}$\kl(R_S(G_Q)\|r)$}}%
      \csname LTb\endcsname%
      \put(2991,3272){\makebox(0,0)[l]{\strut{}$\frac{1}{m}\,\big[\,\KL(Q\|P) + \ln\frac{\xi(m)}{\dt}\big] \approx 0.0117$}}%
    }%
    \gplbacktext
    \put(0,0){\includegraphics{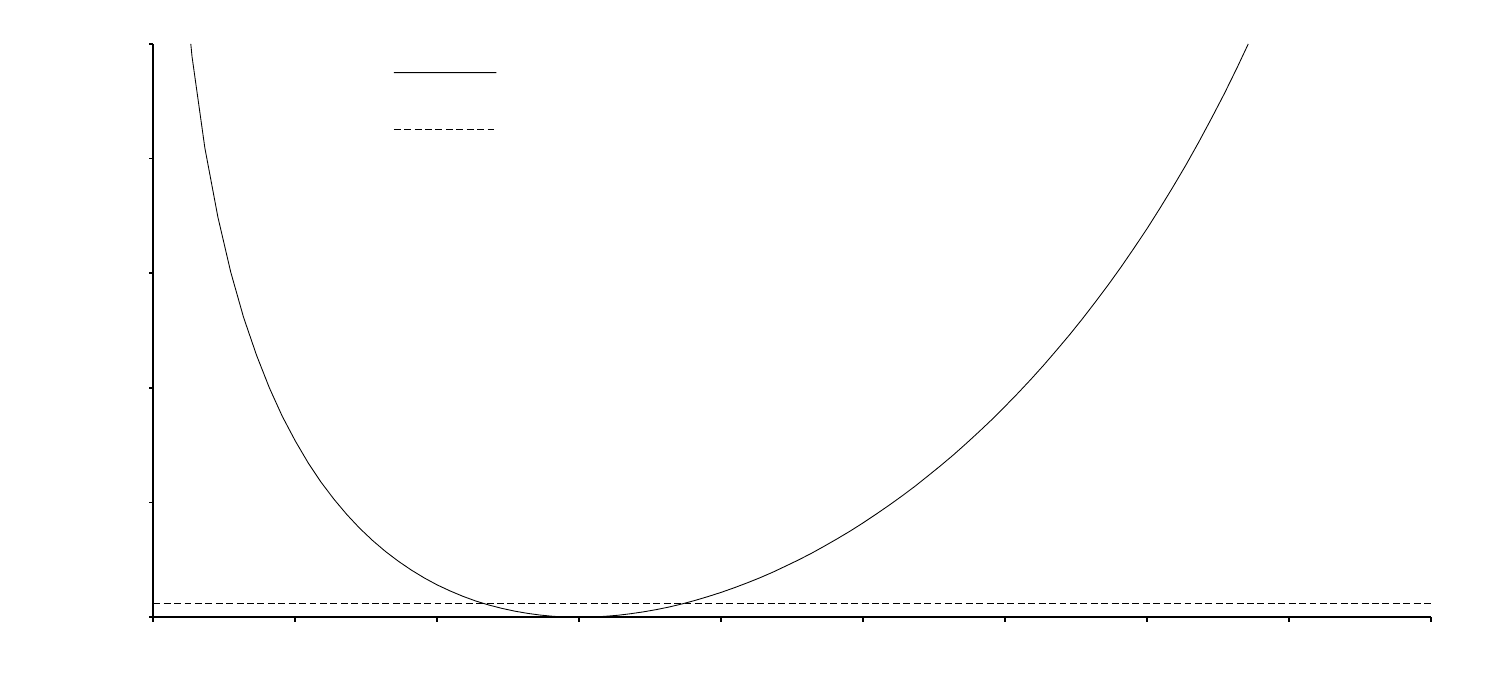}}%
    \gplfronttext
  \end{picture}%
\endgroup
 }
 \caption{Example of application of PAC-Bound~\ref{bound:classic}. We suppose that $\KL(Q\|P)=5$, $m=1000$ and $\delta=0.05$. 
 If we observe an empirical Gibbs risk $\RSGQ = 0.30$, then $\RDGQ\in\Rdel \approx [0.233, 0.373]$ with a confidence of $95\%$. On the figure, the intersections between the two curves correspond to the limits of the interval $\Rdel$. 
 Then, with these values, PAC-bound~\ref{bound:classic} gives $\RDBQ\lesssim 2\cdot 0.373 = 0.746$.}
 \label{figGraphPacBayes}
\end{figure}

\subsection{Joint Error, Joint Success, and Paired-voters}
\label{section:generalization_to_esd}

We now introduce a few notions that are necessary to obtain new PAC-Bayesian theorems for the \Cbound in Sections~\ref{section:PBC1} and~\ref{section:PBC2}.

\subsubsection{The Joint Error and the Joint Success}
We have already defined the expected disagreement $\dQ$ of a distribution $Q$ of voters (Definition~\ref{def:disagreement}). In the case of binary voters, the expected disagreement corresponds to
 \begin{eqnarray*}
   \dQ &=& \esp{h_1\sim Q}\esp{h_2\sim Q} \Big( \esp{(\xb,y)\sim{D'}}I(h_1(\xb)\ne h_2(\xb)) \Big)\,.
 \end{eqnarray*}
 Let us now define two closely related notions, the expected joint success $\sQ$ and the expected joint error $\eQ$.  In the case of binary voters, these two concepts are expressed naturally by
 \begin{eqnarray*}
   \eQ &=& \esp{h_1\sim Q}\esp{h_2\sim Q} \Big( \esp{(\xb,y)\sim{D'}}I(h_1(\xb)\ne y) I(h_2(\xb)\ne y) \Big)\,,\\
   \sQ &=& \esp{h_1\sim Q}\esp{h_2\sim Q} \Big( \esp{(\xb,y)\sim{D'}}I(h_1(\xb)= y) I(h_2(\xb)= y) \Big)\,.
 \end{eqnarray*}
Let us now extend in the usual way these equations to the case of real-valued voters.
 \begin{definition}\label{def:esd}\rm
 For any probability distribution $Q$ on a set of voters, we  define the \emph{expected joint error}~$\eQ$ relative to $D'$ and
 the \emph{expected joint success}~$\sQ$ relative to $D'$ as
 \begin{eqnarray*}
   \eQ &\eqdef& \esp{f_1\sim Q}\esp{f_2\sim Q} \Big( \esp{(\xb,y)\sim{D'}} \linloss(f_1(\xb),y) \cdot \linloss(f_2(\xb),y) \Big)\,,\\
   \sQ &\eqdef& \esp{f_1\sim Q}\esp{f_2\sim Q} \Big( \esp{(\xb,y)\sim{D'}} \Big[1-\linloss(f_1(\xb),y) \Big] \cdot \Big[1-\linloss(f_2(\xb),y) \Big]\Big)\,.
 \end{eqnarray*}
\end{definition}

From the definitions of the linear loss (Definition~\ref{def:linearloss}) and the margin (Definition~\ref{def:margin}), we can easily see that
\begin{eqnarray*}\label{eq:esdWQ}
    \eQ &=& \esp{(\xb,y)\sim {D'}} \left(\frac{1-M_Q(\xb,y)}{2}\right)^2
    \ = \ \, \frac{1}{4}\Big(1-2\cdot\momentone(\MQ{D'}) + \momenttwo(\MQ{D'}) \Big)\,,\\
    \sQ &=& \esp{(\xb,y)\sim {D'}} \left(\frac{1+M_Q(\xb,y)}{2}\right)^2
    \ = \ \, \frac{1}{4}\Big(1+2\cdot\momentone(\MQ{D'}) + \momenttwo(\MQ{D'}) \Big)\,.%
\end{eqnarray*}
Remembering from Equation~\eqref{eq:dQ_Mq} that $\dQ = \frac{1}{2}\left( 1- \momenttwo(\MQ{D'}) \right)$, we can conclude that $\eQ$, $\sQ$ and $\dQ$ always sum to one:\footnote{This is fairly intuitive in the case of binary voters.
Indeed, given any example $(\xb,y)$ and any two binary voters $h_1, h_2$, we have either: both voters misclassify the example -- \ie, $h_1(\xb)=h_2(\xb)\neq y$ --, both voters correctly classify the example -- \ie, $h_1(\xb)=h_2(\xb)=y$ --, or both voters disagree -- \ie,  $h_1(\xb)\neq h_2(\xb)$.
}
\begin{equation*}
 \eQ \,+\, \sQ \,+\, \dQ \ =\ 1\,.
\end{equation*}
We can now rewrite the first moment of the margin and the Gibbs risk as
\begin{eqnarray}
\nonumber
\label{eq:mu_vs_e_d}
\momentone(\MQ{D'}) &=& \sQ - \eQ \ = \ 1 - (2\eQ+\dQ)\,,\\[1mm]
\label{eq:Gibbs_vs_e_d}
 R_{D'}(G_Q) &=& \tfrac{1}{2} \,(1-\sQ+\eQ) \ = \ \tfrac{1}{2}\,(2\eQ + \dQ)\,. 
 \end{eqnarray}
Therefore, the third form of \Cbound of Theorem~\ref{thm:C-bound} can be rewritten as
\begin{eqnarray}
 \CQ &=& 1- \frac{\left(1 - (2\eQ+\dQ)\right)^2}{1-2\dQ}\,. \label{eq:Cq_ed}
\end{eqnarray}

\subsubsection{Paired-Voters and Their Losses}

This first generalization of the PAC-Bayesian theorem allows us to bound \emph{separately} either\, $\dDQ$, $\eDQ$ or $\sDQ$, and therefore to bound $\CDQ$. 
To prove this result, we need to define a new kind of voter that we call a paired-voter.

\begin{definition}\label{def:paired_voters}\rm
Given two voters $f_i : \Xcal\rightarrow[-1,1]$ and $f_j : \Xcal\rightarrow[-1,1]$, the \emph{paired-voter} $\fpaired : \Xcal \rightarrow [-1,1]^2$ outputs a tuple:
\begin{equation*}
\fpaired  (\xb) \ \eqdef \ \tuple{f_i(\xb),f_j(\xb)}  %
\,.
\end{equation*}
Given a set of voters $\Hcal$ weighted by a distribution $Q$ on $\Hcal$, we define a set of paired-voters~$\Hcal^2$ weighted by a distribution $Q^2$ as 
\begin{equation}\label{eq:H2Q2}
\Hcal^2 \ \eqdef\ \{\fpaired \, :\, f_i, f_j \in \Hcal\}\,,
\quad\mbox{and }\quad
Q^{2}(\fpaired)\ \eqdef\ Q(f_i)\cdot Q(f_j)\,.
\end{equation} 
\end{definition}

We now present three losses for paired-voters.  
Remember that a loss function has the form $\Yover\!\times\!\Ycal \rightarrow [0,1]$, where $\Yover$ is the voter's output space.
As a paired-voter output is a tuple, our new loss functions map $[-1,1]^2\times\{-1,1\}$ to $[0,1]$. Thus,
\begin{eqnarray}
  \nonumber \loss_e \big( \fpaired(\xb), \,y\big) &\eqdef&  \linloss(f_i(\xb),y) \cdot \linloss(f_j(\xb),y)\,, \\
  \nonumber \loss_s \big(\fpaired(\xb), \,y\big) &\eqdef& \Big[1-\linloss(f_i(\xb),y) \Big] \cdot \Big[1-\linloss(f_j(\xb),y) \Big]\,,\\
      \loss_d \big( \fpaired(\xb), \,y\big) &\eqdef& \linloss(f_i(\xb)\!\cdot\!f_j(\xb)\,,\,1)\,. \label{eqn:lossd}
\end{eqnarray}

The key observation to understand the next theorems is that the expected losses of paired-voters $\Hcal^2$ defined by Equation~\eqref{eq:H2Q2} allow one to recover the values of $\eQ$, $\sQ$ and $\dQ$. 
Indeed, it directly follows from Definitions~\ref{def:expected_loss}, \ref{def:disagreement} and~\ref{def:esd}, that
\begin{equation} \label{eq:esd_from_losses}
\eQ \, = \esp{\fpaired \sim Q^2}\!\! \Eaggloss{e}{D'} \Big( \fpaired  \Big)\,; \quad
\sQ \, = \esp{\fpaired \sim Q^2}\!\! \Eaggloss{s}{D'} \Big( \fpaired  \Big)\,; \quad 
\dQ \, = \esp{\fpaired \sim Q^2}\!\! \Eaggloss{d}{D'} \Big( \fpaired  \Big) \, .
\end{equation}

\subsection{PAC-Bayesian Theory For Losses of Paired-voters}
\label{section:PBC1}

As explained in Section~\ref{section:PBzero}, classical PAC-Bayesian theorems, like Corollaries~\ref{cor:pac-bayes-classic} and~\ref{cor:pac-bayes-McAllester},  provide an upper bound on $\RDGQ$ that holds uniformly for all posteriors~$Q$. A bound on $\RDBQ$ is typically obtained by multiplying the former bound by the usual factor of $2$, as in PAC-Bound~\ref{bound:classic}. 

In this subsection, we present a first bound of $\RDBQ$ relying on the \Cbound of Theorem~\ref{thm:C-bound}.
A uniform bound on $\CDQ$ is obtained using the third form of the \Cbound, through a bound on the Gibbs risk $\RDGQ$ and another bound on the disagreement~$\dDQ$. 
The desired bound on $\RDGQ$ is obtained by Corollary~\ref{cor:pac-bayes-classic} as in PAC-Bound~\ref{bound:classic}.
To obtain a bound on $\dDQ$, we capitalize on the notion of paired-voters presented in the previous section. This allows us to express two new PAC-Bayesian bounds on the risk of a majority vote, one for the supervised case and another for the semi-supervised case.

\subsubsection{A PAC-Bayesian Theorem for $\eDQ$, $\sDQ$, or $\dDQ$}

The following PAC-Bayesian theorem can either bound the expected disagreement~$\dDQ$, the expected joint success~$\sDQ$ or the expected joint error~$\eDQ$ of a majority vote (see Definitions~\ref{def:disagreement} and~\ref{def:esd}).

\begin{theorem}
\label{thm:binouille} 
For any distribution $D$ on \mbox{$\Xcal\!\times\!\{-1,1\}$}, for any set $\Hcal$ of voters \mbox{$\Xcal \rightarrow [-1,1]$}, for any prior distribution $P$ on $\Hcal$, and any $\dt\in
(0,1]$, we have
\begin{equation*}
\prob{S\sim D^m}\!\LP\!\!
\begin{array}{l}
  \mbox{For all posteriors $Q$ on $\Hcal$}: \\[1mm]
   \kl\big(\aSQ\,\big\|\,\aDQ\big) \, \le\, 
    \dfrac{1}{m}\LB 2\!\cdot\!\KL(Q\|P) +
\ln\dfrac{\xi(m)}{\dt}\RB
\end{array}
 \!\!\RP \ge \, 1 -\dt\,,
\end{equation*}
where $\aQ$  can be either $\eQ$, $\sQ$ or $\dQ$.
\end{theorem}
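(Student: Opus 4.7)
\p
The plan is to apply Theorem~\ref{thm:pac-bayes-kl-g} to the space of paired-voters $\Hcal^2$ equipped with the product prior $P^2$ and product posterior $Q^2$ defined in Equation~\eqref{eq:H2Q2}, together with one of the three loss functions $\loss_e$, $\loss_s$, $\loss_d$ on paired-voters depending on whether $\aQ$ stands for $\eQ$, $\sQ$, or $\dQ$. First I would verify that each of these losses maps $[-1,1]^2 \times \{-1,1\}$ into $[0,1]$: both $\loss_e$ and $\loss_s$ are products of two factors in $[0,1]$ (since the linear loss takes values in $[0,1]$), while $\loss_d(\fpaired(\xb), y) = \tfrac{1}{2}(1 - f_i(\xb) f_j(\xb)) \in [0,1]$ because $f_i(\xb)f_j(\xb) \in [-1,1]$. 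Hence Theorem~\ref{thm:pac-bayes-kl-g} applies.

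Next, the crucial bridge is Equation~\eqref{eq:esd_from_losses}, which says that $\aDQ = \esp{\fpaired \sim Q^2} \Eaggloss{\alpha}{D}(\fpaired)$ and, analogously, $\aSQ = \esp{\fpaired \sim Q^2} \Eaggloss{\alpha}{S}(\fpaired)$ for each of the three choices of $\alpha$. Therefore, applying Theorem~\ref{thm:pac-bayes-kl-g} to $\Hcal^2$, $P^2$, and the appropriate loss gives, with probability at least $1-\delta$ over $S \sim D^m$, the inequality
\begin{equation*}
\kl\big(\aSQ \,\big\|\, \aDQ\big) \ \le\ \frac{1}{m}\left[\KL(Q^2 \| P^2) + \ln\frac{\xi(m)}{\delta}\right]
\end{equation*}
uniformly over all posteriors $Q$ on $\Hcal$ (equivalently, over all product posteriors $Q^2$ on $\Hcal^2$, which is a subset of all posteriors on $\Hcal^2$, so the uniform bound remains valid).

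The final ingredient, and the only computation worth spelling out, is the identity $\KL(Q^2 \| P^2) = 2\,\KL(Q\|P)$. This is the standard additivity of KL divergence under product measures: expanding $\ln\tfrac{Q(f_i)Q(f_j)}{P(f_i)P(f_j)} = \ln\tfrac{Q(f_i)}{P(f_i)} + \ln\tfrac{Q(f_j)}{P(f_j)}$ inside the double expectation and marginalizing the unused variable in each term produces two copies of $\KL(Q\|P)$. Substituting this into the displayed inequality yields precisely the stated bound. I do not expect any genuine obstacle; the only subtle point is recognizing that each of $\eQ$, $\sQ$, $\dQ$ is literally an expected paired-voter loss under the product distribution $Q^2$, which turns the problem into a direct application of the general real-valued PAC-Bayesian theorem with the factor $2$ arising naturally from the product structure of $Q^2$ and $P^2$.
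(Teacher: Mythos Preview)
Your proposal is correct and follows essentially the same route as the paper: apply Theorem~\ref{thm:pac-bayes-kl-g} on the paired-voter space $\Hcal^2$ with product prior $P^2$ and posterior $Q^2$, invoke Equation~\eqref{eq:esd_from_losses} to identify the expected paired-voter loss with $\aQ$, and reduce $\KL(Q^2\|P^2)$ to $2\,\KL(Q\|P)$ by additivity. The paper's proof is slightly terser (it only spells out the $\eQ$ case and omits the check that the losses lie in $[0,1]$ and the remark about restricting to product posteriors), but the argument is identical.
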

\begin{proof}
Theorem~\ref{thm:binouille} is deduced from
Theorem~\ref{thm:pac-bayes-kl-g}. We present here the proof for $\aQ = \eQ$. The two other cases are very similar.\\[2mm]
Consider the set of paired-voters $\Hcal^2$ and the posterior distribution $Q^2$ of Equation~\eqref{eq:H2Q2}. Also consider the  prior distribution $P^2$ on $\Hcal^2$ such that\ \, $P^2(\fpaired )\eqdef P(f_i)\cdot P(f_j) \,.$ \ Then we have,
{\small
\begin{eqnarray*}
\KL(Q^2\| P^2) &=&\esp{\fpaired \sim
Q^2}\ln\frac{Q^2(\fpaired )}{P^2(\fpaired )} \ = \
\esp{\fpaired \sim Q^2}\ln\frac{Q(f_i) \cdot Q(f_j)}{P(f_i)\cdot P(f_j)}\\[2mm]
&=&\esp{\fpaired \sim Q^2}\left[ \ln\frac{Q(f_i)}{P(f_i)} + \ln
\frac{Q(f_j)}{P(f_j)} \right ]\\[3mm]
&=& 2\cdot \KL(Q\| P)\,.
\end{eqnarray*}
}Finally, from Equation~\eqref{eq:esd_from_losses}, we have 
$\esp{\fpaired \sim Q^2}\!\! \Eaggloss{e}{D} \Big( \fpaired  \Big) = \eDQ$\ \ and 
$\esp{\fpaired \sim Q^2}\!\! \Eaggloss{e}{S} \Big( \fpaired  \Big) = \eSQ$\,.
Hence, by applying Theorem~\ref{thm:pac-bayes-kl-g}, we are done.
\end{proof}

\subsubsection{A New Bound for the Risk of the Majority Vote}

Based on the fact that Theorem~\ref{thm:binouille} gives a lower bound on the expected disagreement $\dDQ$, we now
derive PAC-Bound~\ref{bound:variancebinouille}, which is a PAC-Bayesian bound for the \Cbound, and
therefore, for the risk of  the majority vote.

Given any prior distribution $P$ on~$\Hcal$, we need the interval $\Rdel$ of Equation~\eqref{eq:setGibbsRisk}, together with 
\begin{eqnarray}\label{eq:setdQ}
\Ddel &\eqdef&  \bigg\{ d\ \colon \ \kl(\dSQ\|\,d) \, \le \, 
\frac{1}{m}\,\bigg[\, 2\!\cdot\! \KL(Q\|P) + \ln\frac{\xi(m)}{\dt}\bigg]\bigg\}\,.
\end{eqnarray}

\noindent
We then express the following bound on the Bayes risk.
\begin{pacbound}{1}\label{bound:variancebinouille}
For any distribution $D$ on \mbox{$\Xcal\!\times\!\{-1,1\}$}, for any set $\Hcal$ of voters \mbox{$\Xcal \rightarrow [-1,1]$}, for any prior distribution $P$ on $\Hcal$, and any $\dt\in (0,1]$,  we have
\begin{equation*}
  \prob{S\sim D^m}\!\left(
\forall\,Q\mbox{ on }\Hcal\,:\ 
   R_D(B_Q) \ \leq \ 1-\frac{\Big( 1- 2\cdot\sup\Rdeld \Big)^2}{ 1 - 2\cdot \inf\Ddeld }
\right) \ge\ 1 -\dt\,, %
\end{equation*}
where $\Rdeld$ and  $\Ddeld $ are respectively defined by Equations~\eqref{eq:setGibbsRisk} and~\eqref{eq:setdQ}.
\end{pacbound}
\begin{proof}
By Proposition~\ref{prop:dQ_bound}, we have that $\dSQ\le \frac{1}{2}$. This, together with the facts that $m$ is finite and $\dSQ\in\Ddel$, implies that
 \mbox{$\inf\Ddeld<\frac{1}{2}$}, and therefore that the denominator of the fraction in the statement of PAC-Bound~\ref{bound:variancebinouille} is always strictly positive. 

\medskip
\noindent
Necessarily, $\sup\Rdeld\leq\frac{1}{2}$.   Let us consider the two following cases.\\[2mm]
{\it Case 1: }
$\sup\Rdeld=\frac{1}{2}$. Then, $1- 2\cdot\sup\Rdeld=0$, and the bound on $R_D(B_Q)$ is $1$, which is trivially valid.\\[2mm]
{\it Case 2: }
$\sup\Rdeld<\frac{1}{2}$. Then, we can apply the third form of
Theorem~\ref{thm:C-bound} to obtain the upper bound on $\RDBQ$. The desired bound is obtained by replacing  $\dDQ$ by its lower bound $\inf\Ddeld$, 
and  $R_D(G_Q)$, by its upper bound $\sup\Rdeld $.
The two bounds can therefore be deduced by suitably applying
Corollary~\ref{cor:pac-bayes-classic} (replacing $\delta$ by $\delta/2$) and
Theorem~\ref{thm:binouille} (replacing $\aSQ$ by $\dSQ$, $\aDQ$ by $\dDQ$  and $\delta$
by $\delta/2$).
\end{proof}

This bound has a major inconvenience: it degrades rapidly if the bounds on the numerator and the denominator are not tight.
Note however that in the semi-supervised framework, we can achieve tighter results because the labels of the examples do not affect the value of $\dQ$ (see Definition~\ref{def:disagreement}). Indeed, it is generally assumed in this framework that the learner has access to a huge amount $m'$ of unlabeled data (\ie, $m' \gg m$).  One can then obtain a tighter bound of the disagreement. In this context, PAC-Bound~\ref{bound:semisup} stated below is tighter than PAC-Bound~\ref{bound:variancebinouille}.

\begin{pacbound}{1'}[Semi-supervised bound]
\label{bound:semisup}
For any distribution $D$ on \mbox{$\Xcal\!\times\!\{-1,1\}$}, for any set $\Hcal$ of voters \mbox{$\Xcal \rightarrow [-1,1]$}, for any prior distribution $P$ on $\Hcal$, and any $\dt\in (0,1]$,  we have
\begin{equation*}
\prob{
      \begin{array}{r}
      \scriptstyle\phantom{aiiiiiiiiiii} S\sim D_{\phantom{unlabeled}}^m \\
      \scriptstyle\phantom{aiiiiiiiiiii} S_{\cal U}\sim D_{unlabeled}^{m'}
      \end{array}}\hspace{-10mm}
      \left(
\forall\,Q\mbox{ on }\Hcal\,:\ 
   R_D(B_Q) \ \leq \ 1-\frac{\Big( 1- 2\cdot\sup\Rdeld \Big)^2}{ 1 - 2\cdot \inf\Ddelpd}
\right) \ge\ 1 -\dt\,.
\end{equation*}
\end{pacbound}
\begin{proof}
In the presence of a large amount of unlabeled data (denoted by the set $S_{\cal U}$), one can use
Corollary~\ref{thm:binouille} to obtain an accurate lower
bound of $\dDQ$. An upper bound of $\RDGQ$ can also be obtained via
Corollary~\ref{cor:pac-bayes-classic} but, this time, on the labeled data $S$.
Thus, similarly as in the proof of PAC-Bound~\ref{bound:variancebinouille},  the result follows from Theorem~\ref{thm:C-bound}.
\end{proof}

\subsubsection{Computation of PAC-Bounds~\ref{bound:variancebinouille} and~\ref{bound:semisup}}

To compute PAC-Bound~\ref{bound:variancebinouille}, we obtain the values of $r=\sup\Rdeld$ and $d=\inf\Ddeld$ by solving 
\begin{eqnarray*}
\begin{array}{llcll}
&\kl\big(R_S(G_Q)\,\big\|\,r\big) &=& \frac{1}{m}\big[\KL(Q\|P) + \ln\frac{\xi(m)}{\dt/2}\big]\,, & \mbox{with $\RSGQ \leq r \leq \frac{1}{2}$}\,,\\[1mm]
&\mbox{and }\ \kl\big(\dSQ\,\big\|\,d\big) &=& \frac{1}{m}\big[2\cdot\KL(Q\|P) + \ln\frac{\xi(m)}{\dt/2}\big]\,, & \mbox{with $d \leq \dSQ$}\,.
\end{array}
\end{eqnarray*}
These equations are very similar to the one we solved to compute PAC-Bound~\ref{bound:classic}, as described in Section~\ref{section:classic-bound-computation}. Once $r$ and $d$ are computed, the bound on $\RDBQ$ is given by
$1\!-\!\frac{( 1- 2\cdot r)^2}{ 1 - 2\cdot d }$.

The same methodology can be used to compute PAC-Bound~\ref{bound:semisup}, except that in the semi-supervised setting, the disagreement is computed on the unlabeled data $S_\Ucal$.

\subsection{PAC-Bayesian Theory to Directly Bound the \Cbound}
\label{section:PBC2}

PAC-Bounds~\ref{bound:variancebinouille} and~\ref{bound:semisup} of the last section require two approximations to upper bound~$\CDQ$\,: one on $\RDGQ$ and another on $\dDQ$. We introduce below an extension to the PAC-Bayesian theory (Theorem~\ref{thm:pac-trinouille-kl}) that enables us to directly bound $\CDQ$. To do so, we directly bound any
pair of expectations among $\eDQ$, $\sDQ$ and $\dDQ$. For this reason, the new PAC-Bayesian theorem is based on a trivalent random variable instead of a Bernoulli
one (which is bivalent). 
Note that~\citet{seeger-thesis} and~\citet{seldin-tishby-10} have presented more general PAC-Bayesian theorems valid for $k$-valent random variables, for any positive integer $k$. However, our result leads to tighter bounds for the $k=3$ case.

Before we get to this new PAC-Bayesian theorem (Theorem~\ref{thm:pac-trinouille-kl}), we need some preliminary results.

\subsubsection{A General PAC-Bayesian Theorem for Two Losses of Paired-Voters}

Theorem~\ref{thm:pac-trinouille-general} below allows us to simultaneously bound two losses of paired-voters. This result is inspired by the general PAC-Bayesian theorem for real-valued losses (Theorem~\ref{thm:gen-pac-Bayes}).

\begin{theorem} \label{thm:pac-trinouille-general}
For any distribution $D$ on \mbox{$\Xcal\!\times\!\{-1,1\}$}, for any set $\Hcal$ of voters \mbox{$\Xcal \rightarrow [-1,1]$}, for any two losses $\loss_\alpha, \loss_\beta: [-1,1]\times\{-1,1\}\rightarrow [0,1]$ with $\alpha,\beta\in\{e,s,d\}$, for
any prior distribution $P$ on $\Hcal$, for any $\dt\in
(0,1]$, for any $m'>0$, and for any convex function  $\Dcal(q_1,q_2\,\|\,p_1,p_2)$,
we have %
\begin{equation*}
\prob{S\sim D^m}\!\LP\!\!\!
\begin{array}{l}
  \mbox{For all posteriors $Q$ on $\Hcal$}: \\[1mm]
\scriptstyle \Dcal\bigg(
 	 \esp{\fpaired\sim Q^2} \hspace{-1mm} \Eaggloss{\alpha}{S} \big(\fpaired\big)\,,
 	\esp{\fpaired\sim Q^2} \hspace{-1mm} \Eaggloss{\beta}{S} \big(\fpaired\big)
 	\ \bigg\|
 	\esp{\fpaired\sim Q^2} \hspace{-1mm} \Eaggloss{\alpha}{D} \big(\fpaired\big)\,,
 	\esp{\fpaired\sim Q^2} \hspace{-1mm} \Eaggloss{\beta}{D} \big(\fpaired\big)
 	\bigg)    
 \   \\
\hspace{65mm}  \le\ \   \mbox{ \small $\dfrac{1}{m'}\Bigg[ 2\cdot\KL(Q\|P) +
\ln\Bigg(\dfrac{\Omega}{\delta}\,\Bigg)\Bigg]$}
\end{array}
 \!\!\RP \ge \, 1 -\dt\,,\\
\end{equation*}
where \ $ \Omega\,\eqdef \esp{S\sim D^m}\esp{\fpaired\sim P^2}e^{m'\cdot
	\Dcal\Big( \Eaggloss{\alpha}{S}\big(\fpaired\big) , \,\,\Eaggloss{\beta}{S}\big(\fpaired\big) \,\Big\|\,\Eaggloss{\alpha}{D}\big(\fpaired\big) ,\,\,\Eaggloss{\beta}{D}\big(\fpaired\big)\Big)
	}$.
\end{theorem}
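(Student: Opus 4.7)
The approach is to lift the proof of Theorem~\ref{thm:gen-pac-Bayes} to the space of paired-voters $\Hcal^2$ with the product prior $P^2$ and product posterior $Q^2$ introduced in Equation~\eqref{eq:H2Q2}. Define the random function
$$\phi(\fpaired) \ \eqdef \ m'\cdot \Dcal\bigg(\Eaggloss{\alpha}{S}(\fpaired),\,\Eaggloss{\beta}{S}(\fpaired)\,\Big\|\,\Eaggloss{\alpha}{D}(\fpaired),\,\Eaggloss{\beta}{D}(\fpaired)\bigg),$$
so that the quantity we want to control is $\esp{\fpaired\sim Q^2}\phi(\fpaired)$. The target inequality should be read as ``bound this posterior expectation by a KL term plus a log-moment-generating-function term,'' exactly in the spirit of Theorem~\ref{thm:gen-pac-Bayes}.

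First, I would apply Markov's inequality (Lemma~\ref{lem:markov}) to the non-negative random variable $\esp{\fpaired\sim P^2} e^{\phi(\fpaired)}$ (random through $S\sim D^m$, since the prior $P^2$ does not depend on $S$), obtaining with probability at least $1-\delta$
$$\esp{\fpaired\sim P^2} e^{\phi(\fpaired)} \ \le \ \frac{1}{\delta}\,\esp{S\sim D^m}\esp{\fpaired\sim P^2} e^{\phi(\fpaired)} \ = \ \frac{\Omega}{\delta}.$$
Taking logarithms preserves the inequality. Next, I would apply the change of measure inequality (Lemma~\ref{lem:change-measure}) with the function $\phi$, the posterior $Q^2$, and the prior $P^2$ on $\Hcal^2$, which yields
$$\esp{\fpaired\sim Q^2}\phi(\fpaired) \ \le \ \KL(Q^2\|P^2) + \ln\!\bigg(\esp{\fpaired\sim P^2}e^{\phi(\fpaired)}\bigg).$$

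To make this bound useful, I would then use Jensen's inequality (Lemma~\ref{lem:jensen}) applied to the convex multivariate function $\Dcal$ to push the expectation over $\fpaired\sim Q^2$ inside each of the four arguments:
$$\esp{\fpaired\sim Q^2}\Dcal\bigg(\Eaggloss{\alpha}{S}(\fpaired),\Eaggloss{\beta}{S}(\fpaired)\,\Big\|\,\Eaggloss{\alpha}{D}(\fpaired),\Eaggloss{\beta}{D}(\fpaired)\bigg) \ \ge \ \Dcal\bigg(\esp{\fpaired\sim Q^2}\!\!\Eaggloss{\alpha}{S}(\fpaired),\ldots\,\Big\|\ldots,\esp{\fpaired\sim Q^2}\!\!\Eaggloss{\beta}{D}(\fpaired)\bigg),$$
so that the left-hand side of the inequality becomes $m'$ times the $\Dcal$-expression we want to bound. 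Finally, I would invoke the identity $\KL(Q^2\|P^2) = 2\cdot \KL(Q\|P)$ (already established inside the proof of Theorem~\ref{thm:binouille}), divide by $m'$, and combine the probability-$(1-\delta)$ event with the deterministic inequality to conclude.

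The main obstacle, I expect, is verifying that Jensen's inequality legitimately applies in the desired direction: the concavity/convexity argument requires treating $\Dcal$ as a convex function of the full four-tuple of arguments, and we need the posterior to act simultaneously on all four coordinates, which is consistent because the same $\fpaired$ appears in all four. Beyond that and the minor bookkeeping of the $2\cdot\KL(Q\|P)$ factor coming from the product measure, the rest is a straightforward transcription of the template proof of Theorem~\ref{thm:gen-pac-Bayes}.
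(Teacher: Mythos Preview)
Your proposal is correct and follows essentially the same route as the paper's proof: Markov's inequality on the $P^2$-expectation of $e^{\phi(\fpaired)}$, logarithm, the change of measure inequality (Lemma~\ref{lem:change-measure}) with $Q^2$ and $P^2$, Jensen's inequality on the convex $\Dcal$, and the identity $\KL(Q^2\|P^2)=2\,\KL(Q\|P)$ from the proof of Theorem~\ref{thm:binouille}. The paper presents these steps in the same order with no additional ingredients.
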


\begin{proof} To simplify the notation, first let 
 \mbox{ $\aij \eqdef \, \Eaggloss{\alpha}{D'} \big(\fpaired\big)$}  \, and  
 \mbox{ $\bij  \eqdef \, \Eaggloss{\beta}{D'} \big(\fpaired\big)$}.

Now, since $\Eb_{\fpaired\sim P^2}\,e^{m'\cdot\Dcal\big( \aSij, \bSij \,\big\|\,\aDij,\bDij\big)}$ is a positive 
random variable, Markov's inequality (Lemma~\ref{lem:markov}, in Appendix~\ref{section:appendix_auxmath}) can be applied to give  
{\small
\begin{equation*}
\prob{S\sim D^m}\biggl(\esp{\fpaired\sim P^2}e^{m'\cdot	\Dcal\big( \aSij, \bSij \,\big\|\,\aDij,\bDij\big)} \le
\frac{1}{\delta}\esp{S\sim D^m}\esp{\fpaired\sim P^2}e^{m'\cdot\Dcal\big( \aSij, \bSij \,\big\|\,\aDij,\bDij\big)}
\biggr) \ge 1-\delta\, . 
\end{equation*}
}By exploiting the fact that $\ln(\cdot)$ is an increasing function, and by the definition of $\Omega$,
we obtain
{\small
\begin{equation}\label{eq:gen-pac-bayes-1}
\prob{S\sim D^m}\biggl( 
\ln\LB\esp{\fpaired\sim P^2}e^{m'\cdot\Dcal\big( \aSij, \bSij \,\big\|\,\aDij,\bDij\big)}\RB 
\ \le \
\ln\LB\frac{\Omega}{\delta}\RB \biggr) \ge 1-\delta\, .
\end{equation}
}We apply the change of measure inequality (Lemma~\ref{lem:change-measure}) on the left side of innermost inequality, with $\phi(f) = m'\cdot\Dcal\big( \aSij, \bSij \,\big\|\,\aDij,\bDij\big)$, $P=P^2$ and $Q=Q^2$.
We then use Jensen's inequality (Lemma~\ref{lem:jensen}, in Appendix~\ref{section:appendix_auxmath}), exploiting the convexity of~$\Dcal$ :
{\small
\begin{eqnarray*}
\ln\LB\esp{\fpaired\sim P^2}e^{m'\cdot\Dcal\big( \aSij, \bSij \,\big\|\,\aDij,\bDij\big)}\RB 
 \hspace{-4.5cm}&&\\
&\geq& 
m' \esp{\fpaired\sim Q^2} \Dcal\big( \aSij, \bSij \,\big\|\,\aDij,\bDij\big) -\KL\big(Q^2\big\|P^2\big)
\\
&\geq& 
m'\cdot\Dcal\Bigg(
	\esp{\fpaired\sim Q^2} \hspace{-1mm} \aSij,
	\esp{\fpaired\sim Q^2} \hspace{-1mm} \bSij
	\ \Bigg\|\ 
	\esp{\fpaired\sim Q^2} \hspace{-1mm} \aDij,
	\esp{\fpaired\sim Q^2} \hspace{-1mm} \bDij	
	\Bigg) -\KL\big(Q^2\big\|P^2\big) \\
&=& 
m'\cdot \Dcal\Bigg(
	\esp{\fpaired\sim Q^2} \hspace{-1mm} \aSij,
	\esp{\fpaired\sim Q^2} \hspace{-1mm} \bSij
	\ \Bigg\|\ 
	\esp{\fpaired\sim Q^2} \hspace{-1mm} \aDij,
	\esp{\fpaired\sim Q^2} \hspace{-1mm} \bDij	
	\Bigg) -2\cdot\KL\big(Q\big\|P\big) \,.
\end{eqnarray*}
}The last equality $\KL\big(Q^2\big\|P^2\big)  = 2\cdot\KL\big(Q\big\|P\big)$ has been shown in the proof of Theorem~\ref{thm:binouille}.
The result can then be straightforwardly obtained by inserting the last inequality into Equation~\eqref{eq:gen-pac-bayes-1}.
\end{proof}

\subsubsection{A PAC-Bayesian Theorem for Any Pair Among $\eDQ$, $\sDQ$, and $\dDQ$}

In Section~\ref{section:PB_real_losses}, Theorem~\ref{thm:pac-bayes-kl-g} was obtained from Theorem~\ref{thm:gen-pac-Bayes}. Similarly, the main theorem of this subsection (Theorem~\ref{thm:pac-trinouille-kl}) is deduced from Theorem~\ref{thm:pac-trinouille-general}. However, a notable difference between Theorems~\ref{thm:pac-bayes-kl-g} and~\ref{thm:pac-trinouille-kl} is that the former uses of the KL-divergence $\kl(\cdot\|\cdot)$ between distributions of two Bernoulli (\ie, \emph{bivalent}) random variables, and the latter uses the $\KL$-divergence $\kl(\cdot,\cdot\|\cdot,\cdot)$ between distributions of two \emph{trivalent} random variables. %

\smallskip
Given two trivalent random variables $Y_q$ and $Y_p$ with $P(Y_q\!=\!a)=q_1$,
$P(Y_q\!=\!b)=q_2$, $P(Y_q\!=\!c)=1\!-\!q_1\!-\!q_2$, \, and \,
$P(Y_p\!=\!a)=p_1$, $P(Y_p\!=\!b)=p_2$, $P(Y_p\!=\!c)=1\!-\!p_1\!-\!p_2$, we denote by  
$\kl(q_1,q_2\,\|\,p_1,p_2)$ the Kullback-Leibler divergence between $Y_q$ and $Y_p$. Thus, we have
 \begin{equation}\label{eq:small_kl2}
  \kl(q_1,q_2\,\|\,p_1,p_2) \ \eqdef\ 
  q_1\ln\frac{q_1}{p_1} +q_2\ln\frac{q_2}{p_2}+(1-q_1-q_2)\ln\frac{1-q_1-q_2}{1-p_1-p_2}\,.
 \end{equation}
Note that $\kl\big(q_1,q_2\,\|\,p_1,p_2)$ is a shorthand notation for $\KL(Q\|P)$ of Equation~\eqref{eq:KLQP}, with $Q=(q_1,q_2, 1\!-\!q_1\!-\!q_2)$ and $P = (p_1,p_2, 1\!-\!p_1\!-\!p_2)$.
Corollary~\ref{cor:bintrin} (in Appendix~\ref{section:appendix_auxmath}) shows that $\kl\big(q_1,q_2\,\|\,p_1,p_2)$ is a convex function.

 \medskip
To be able to apply Theorem~\ref{thm:pac-trinouille-general} with 
$\Dcal(q_1,q_2\,\|\,p_1,p_2) = \kl(q_1,q_2\|p_1,p_2)$,
we need Lemma~\ref{lem:xi2} (below). This lemma is inspired by Lemma~\ref{lem:xi}. 
However, in contrast with the latter, which is based on Maurer's lemma, Lemma~\ref{lem:xi2} needs a generalization of it to trivalent random variables (instead of bivalent ones). The proof of this generalization is provided in Appendix~\ref{section:appendix_auxmath}, listed as Lemma~\ref{lem:general-maurer}.
\begin{lemma} \label{lem:xi2}
 For any distribution $D$  on \mbox{$\Xcal\!\times\!\{-1,1\}$}, for any paired-voters $\fpaired$, and any positive integer $m$,  we have
 \begin{equation*}
\esp{S\sim D^m} e^{{m\,\cdot\,\mbox{\small $\kl$}\Big( \Eaggloss{\alpha}{S} (\fpaired),\Eaggloss{\beta}{S} (\fpaired)\ \Big\|\ \Eaggloss{\alpha}{D} (\fpaired), \Eaggloss{\beta}{D} \big(\fpaired\big) \Big)} }
\ \leq \ \xi(m)+m\,,
\end{equation*}
where $\loss_\alpha$ and $\loss_\beta$ can be any two of the three losses $\loss_s$, $\loss_e$ or $\loss_d$,
and where  $\xi(m)$ is defined at Equation~\eqref{eq:xi}. Therefore,\  $m+\sqrt{m} \ \leq \  \xi(m) +m \ \leq\  m+2\sqrt{m}$\,.
\end{lemma}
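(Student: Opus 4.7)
The plan is to parallel the proof of Lemma~\ref{lem:xi}: apply a generalized (trivariate) Maurer-type inequality to replace the empirical averages by scaled multinomial random variables, then compute the resulting expectation in closed form and bound it. The key preliminary observation, which justifies the trivariate setup, is that the three paired-voter losses satisfy $\loss_e(\fpaired(\xb),y) + \loss_s(\fpaired(\xb),y) + \loss_d(\fpaired(\xb),y) = 1$ pointwise. This follows directly from the definitions together with the identity $\linloss(f(\xb),y) = \tfrac{1}{2}(1 - y f(\xb))$ (one plugs in and expands; I already sketched this equality in the paragraph following Definition~\ref{def:esd}). Hence for any choice $\loss_\alpha, \loss_\beta \in \{\loss_e, \loss_s, \loss_d\}$, each example yields a pair $(L^i_\alpha, L^i_\beta) \in [0,1]^2$ with $L^i_\alpha + L^i_\beta \leq 1$, so $(L^i_\alpha, L^i_\beta, 1 - L^i_\alpha - L^i_\beta)$ lives in the $2$-simplex, which is precisely the regime of Lemma~\ref{lem:general-maurer}.

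The map $(q_1, q_2) \mapsto \exp\!\big[m \cdot \kl(q_1, q_2 \,\|\, \aD, \bD)\big]$ is convex in $(q_1,q_2)$, being the composition of the increasing convex exponential with the convex function $\kl(\cdot,\cdot\|\aD,\bD)$ (Corollary~\ref{cor:bintrin}). Applying Lemma~\ref{lem:general-maurer} therefore bounds the $S \sim D^m$ expectation by the analogous expectation under $(X,Y,m{-}X{-}Y) \sim \mathrm{Mult}(m;\aD,\bD,1-\aD-\bD)$. Plugging in the explicit formula for $\kl$ from Equation~\eqref{eq:small_kl2} and the multinomial PMF produces a clean cancellation: the factors $\aD^{X} \bD^{Y} (1-\aD-\bD)^{m-X-Y}$ from the multinomial probability cancel with the corresponding negative-exponent factors inside $e^{m \cdot \kl}$. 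This leaves the parameter-free sum
\[
T(m) \ :=\ \sum_{\substack{j,k,n \geq 0 \\ j+k+n=m}} \binom{m}{j,k,n} \left(\frac{j}{m}\right)^{\!j}\! \left(\frac{k}{m}\right)^{\!k}\! \left(\frac{n}{m}\right)^{\!n},
\]
with the convention $0^0 = 1$.

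It remains to show $T(m) \leq \xi(m) + m$. My plan here is to exploit the convolutional structure obtained by factoring $\binom{m}{j,k,n} = \binom{m}{n}\binom{m-n}{j}$ and pulling out $((m-n)/m)^{m-n}$ from the inner binomial sum (a short algebraic rearrangement shows that the remaining inner sum equals $\xi(m-n)$). This yields the identity
\[
T(m) \ =\ \sum_{n=0}^{m} \binom{m}{n}\!\left(\frac{n}{m}\right)^{\!n}\!\!\left(1-\frac{n}{m}\right)^{\!m-n}\! \xi(m-n).
\]
The $n = 0$ term equals $\xi(m)$, so the target inequality reduces to proving $\sum_{n=1}^{m} p_n\, \xi(m-n) \leq m$, where $p_n = \binom{m}{n}(n/m)^n(1-n/m)^{m-n}$ is the mode probability of a $\mathrm{Bin}(m,n/m)$ law. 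The final chain $m + \sqrt{m} \leq \xi(m) + m \leq m + 2\sqrt{m}$ is immediate from the corresponding bounds on $\xi(m)$ established in Lemma~\ref{lem:xi}.

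The hard part is unambiguously this last combinatorial bound. Numerical checks at $m = 1,2,3,4$ suggest the identity $\sum_{n=1}^{m} p_n\, \xi(m-n) = m$ holds with equality, so any slack-based argument (e.g.\ $p_n \leq 1$ combined with $\xi(k) \leq 2\sqrt{k}$) is too loose, yielding at best $\mathcal{O}(m^{3/2})$. I would therefore attempt either (i) an induction on $m$ using the recursion $T(m) = \sum_n p_n \xi(m-n)$ (with a carefully chosen inductive hypothesis relating $T$ and $\xi$), or (ii) a probabilistic/bijective argument leveraging the fact that $\xi(m)$ admits a parking-function-type interpretation in the Abel--Hurwitz identity family, so that the convolutional sum reduces to counting labelled structures whose cardinality is $m \cdot m^{m-1}/m^{m-1} = m$. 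Either route delivers the tight bound and completes the proof.
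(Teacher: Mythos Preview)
Your approach is essentially the same as the paper's: apply the trivariate Maurer lemma (Lemma~\ref{lem:general-maurer}) to pass to a trinomial random variable, then expand the expectation so that the multinomial probabilities cancel against the $e^{m\cdot\kl}$ factors, leaving the parameter-free sum
\[
T(m)\ =\ \sum_{k_1=0}^m \sum_{k_2=0}^{m-k_1}  \binom{m}{k_1}\binom{m-k_1}{k_2}\Big(\tfrac{k_1}{m}\Big)^{k_1}\Big(\tfrac{k_2}{m}\Big)^{k_2}\Big(1-\tfrac{k_1}{m}-\tfrac{k_2}{m}\Big)^{m-k_1-k_2}.
\]
The paper does exactly this and then simply \emph{cites} \cite{Younsi-12} for the identity $T(m)=\xi(m)+m$; it does not prove it.

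Your convolutional rewriting $T(m)=\sum_{n=0}^m p_n\,\xi(m-n)$ with $p_n=\binom{m}{n}(n/m)^n(1-n/m)^{m-n}$ is correct, and your numerical observation that $\sum_{n=1}^m p_n\,\xi(m-n)=m$ holds with \emph{equality} is exactly the content of Younsi's result. But neither of your two suggested routes is actually carried out: you assert ``either route delivers the tight bound'' without executing the induction or the bijection. So your proof has precisely the same gap as the paper's, except that the paper fills it with a citation and you leave it as a sketch. If you want a self-contained argument, the Abel--Hurwitz direction is indeed the right one (the identity is of Abel multinomial type), but as written the final step is not established.
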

\begin{proof}
Let $Y_{ij}$ be a random variable that follows a multinomial distribution with three possible outcomes: $a\eqdef(1,0)$, $b\eqdef(0,1)$ and $c\eqdef(0,0)$. The ``\emph{Trinomial}'' distribution is chosen such that $\pr\left(Y_{ij}\!=\!a\right) = \Eaggloss{\alpha}{D}(\fpaired)$, $\pr\left(Y_{ij}\!=\!b\right) = \Eaggloss{\beta}{D}(\fpaired)$ and $\pr\left(Y_{ij}\!=\!c\right) = 1-\Eaggloss{\alpha}{D}(\fpaired)-\Eaggloss{\beta}{D}(\fpaired)$.  Given $m$ trials of $Y_{ij}$, we denote $Y_{ij}^{a}$, $Y_{ij}^{b}$ and $Y_{ij}^{c}$ the number of times each outcome is observed. 
Note that $Y_{ij}$ is totally defined by $(Y_{ij}^a, Y_{ij}^b)$, since $Y_{ij}^c = m - Y_{ij}^a-Y_{ij}^b$. We thus use the notation\\[-5mm]
\begin{eqnarray*}
 Y_{ij} \ = \  (Y_{ij}^a, Y_{ij}^b) \, \sim \, \Tcal_{ij} & \eqdef &\mbox{Trinomial} \Big(m, \, \Eaggloss{\alpha}{D}(\fpaired), \, \Eaggloss{\beta}{D}(\fpaired) \Big).
\end{eqnarray*}
Hence, we have\\[-5mm]
{\small
\begin{eqnarray*} \textstyle
\prob{(Y_{ij}^a, Y_{ij}^b)\sim\Tcal_{ij}} \!\!\Big(Y_{ij}^a \!=\! k_1 \!\wedge\! Y_{ij}^b \!=\! k_2 \Big) 
=
 {m \choose k_1} \!{m-k_1 \choose k_2}\! \big[ \Eaggloss{\alpha}{S} (\fpaired) \big]^{k_1}\! \big[ \Eaggloss{\beta}{S} (\fpaired) \big]^{k_2}\!\big[1\!-\!\Eaggloss{\alpha}{S} (\fpaired) \!-\!\Eaggloss{\beta}{S} (\fpaired)\big]^{m-k_1-k_2}\!,
\end{eqnarray*}
}for any $k_1\in\{0,..,m\}$ \, and \, any $k_2\in\{0,..,m\!-\!k_1\}$.

\noindent
Now, applying Lemma~\ref{lem:general-maurer} to the convex function
$e^{m\,\cdot\,\mbox{\small $\kl$}\big(\,\cdot\,, \,\cdot\ \|\, \Eaggloss{\alpha}{D} (\fpaired), \,\Eaggloss{\beta}{D} (\fpaired) \big)}$, and by the definition of $\kl(\cdot,\cdot\|\cdot,\cdot)$, we have
{\small
\begin{eqnarray*}
\qquad& & \hspace{-1.5cm}
\esp{S\sim D^m} 
 e^{m \,\cdot\,\mbox{\small $\kl$}\Big( \Eaggloss{\alpha}{S} (\fpaired),\Eaggloss{\beta}{S} (\fpaired)\ \Big\|\ \Eaggloss{\alpha}{D} (\fpaired), \Eaggloss{\beta}{D} \big(\fpaired\big) \Big)}\\
 & \leq &
\esp{(Y_{ij}^a, Y_{ij}^b)\sim\Tcal_{ij}} e^{m \,\cdot\,\mbox{\small $\kl$}\Big( \tfrac{1}{m} Y_{ij}^a, \tfrac{1}{m} Y_{ij}^b \ \Big\|\ \Eaggloss{\alpha}{D} (\fpaired), \Eaggloss{\beta}{D} \big(\fpaired\big) \Big)}\\
&=&
\esp{(Y_{ij}^a, Y_{ij}^b)\sim\Tcal_{ij}} 
\left(\frac{\tfrac{1}{m} Y_{ij}^a}{\Eaggloss{\alpha}{S} (\fpaired)}\right)^{Y_{ij}^a}
\left(\frac{\tfrac{1}{m} Y_{ij}^b}{\Eaggloss{\beta}{S} (\fpaired)}\right)^{Y_{ij}^b}
\left(\frac{1-\tfrac{1}{m} Y_{ij}^a-\tfrac{1}{m} Y_{ij}^b}{1-\Eaggloss{\alpha}{S} (\fpaired)-\Eaggloss{\beta}{S} (\fpaired)}\right)^{m-Y_{ij}^a-Y_{ij}^b}\,.
\end{eqnarray*}
}As $Y_{ij}$ follows a trinomial law, we then have
{\small
\begin{eqnarray*}
\qquad& & \hspace{-1.5cm}
 \esp{(Y_{ij}^a, Y_{ij}^b)\sim\Tcal_{ij}} 
\left(\frac{\tfrac{1}{m} Y_{ij}^a}{\Eaggloss{\alpha}{S} (\fpaired)}\right)^{Y_{ij}^a}
\left(\frac{\tfrac{1}{m} Y_{ij}^b}{\Eaggloss{\beta}{S} (\fpaired)}\right)^{Y_{ij}^b}
\left(\frac{1-\tfrac{1}{m} Y_{ij}^a-\tfrac{1}{m} Y_{ij}^b}{1-\Eaggloss{\alpha}{S} (\fpaired)-\Eaggloss{\beta}{S} (\fpaired)}\right)^{m-Y_{ij}^a-Y_{ij}^b}
\\[2mm]
&=& \sum_{k_1=0}^m \sum_{k_2=0}^{m-k_1}  \Bigg[ 
\prob{(Y_{ij}^a, Y_{ij}^b)\sim\Tcal_{ij}} \Big(Y_{ij}^a = k_1 \wedge Y_{ij}^b = k_2 \Big) 
\\[-3mm] & & \hspace{2.5cm} \times
\left(\frac{\tfrac{k_1}{m}}{\Eaggloss{\alpha}{S} (\fpaired)}\right)^{k_1}
\left(\frac{\tfrac{k_2}{m}}{\Eaggloss{\beta}{S} (\fpaired)}\right)^{k_2}
\left(\frac{1-\tfrac{k_1}{m}-\tfrac{k_2}{m}}{1-\Eaggloss{\alpha}{S} (\fpaired)-\Eaggloss{\beta}{S} (\fpaired)}\right)^{m-k_1-k_2}
\Bigg]
\\[2mm]
&=& \sum_{k_1=0}^m \sum_{k_2=0}^{m-k_1}  \Bigg[ 
{m \choose k_1} {m\!-\!k_1 \choose k_2} \!\left( \Eaggloss{\alpha}{S} (\fpaired) \right)^{k_1} \!\left( \Eaggloss{\beta}{S} (\fpaired) \right)^{k_2}\!\left(1\!-\!\Eaggloss{\alpha}{S} (\fpaired)\! -\!\Eaggloss{\beta}{S} (\fpaired)\right)^{m-k_1-k_2}
\\[-3mm]& & \hspace{2.5cm} \times
\left(\frac{\tfrac{k_1}{m}}{\Eaggloss{\alpha}{S} (\fpaired)}\right)^{k_1}
\left(\frac{\tfrac{k_2}{m}}{\Eaggloss{\beta}{S} (\fpaired)}\right)^{k_2}
\left(\frac{1-\tfrac{k_1}{m}-\tfrac{k_2}{m}}{1-\Eaggloss{\alpha}{S} (\fpaired)-\Eaggloss{\beta}{S} (\fpaired)}\right)^{m-k_1-k_2}
\Bigg]
\\[2mm]
&=&\sum_{k_1=0}^m \sum_{k_2=0}^{m-k_1}  {m \choose k_1} {m\!-\!k_1 \choose k_2} \LP \frac{k_1}{m} \RP^{k_1} \LP \frac{k_2}{m} \RP^{k_2}\LP 1-\frac{k_1}{m}-\frac{k_2}{m} \RP^{m-k_1-k_2}
\\[2.5mm]
&=& \xi(m) + m \,.
\end{eqnarray*}
}The last equality  has been proven by~\cite{Younsi-12}.  Recall that $\xi(m)$ is defined by Equation~\eqref{eq:xi}.
\end{proof}
We are now ready to present the main result of this section. By bounding any
pair of expectations among $\eDQ$, $\sDQ$ and $\dDQ$, Theorem~\ref{thm:pac-trinouille-kl} is the perfect tool to directly bound the \Cbound.

\begin{theorem}
\label{thm:pac-trinouille-kl}
For any distribution $D$ on \mbox{$\Xcal\!\times\!\{-1,1\}$}, for any set $\Hcal$ of voters \mbox{$\Xcal \rightarrow [-1,1]$}, for any prior distribution $P$ on $\Hcal$, and any $\dt\in (0,1]$, we have
\begin{equation*}
\prob{S\sim D^m}\LP\!
  \begin{array}{l}
  \mbox{For all posteriors $Q$ on $\Hcal$}: \\[1mm]
  \kl\big(\,\aSQ, \bSQ\,\big\|\,\aDQ, \bDQ\,\big) \,\le\,
   \dfrac{1}{m}\LB 2\!\cdot\!\KL(Q\|P) +
	 \ln\dfrac{\xi(m)+m}{\delta} \RB
	 \end{array}\!\!
	 \RP \ge \,  1 -\delta\,,
\end{equation*}
where $\aQ$ and $\bQ$ can be any two distinct choices among $\dQ$, $\eQ$ and $\sQ$.
\end{theorem}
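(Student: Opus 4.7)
The proof is a direct specialization of Theorem~\ref{thm:pac-trinouille-general} together with the exponential-moment bound of Lemma~\ref{lem:xi2}, and mirrors the way Theorem~\ref{thm:pac-bayes-kl-g} is obtained from Theorem~\ref{thm:gen-pac-Bayes} via Lemma~\ref{lem:xi}. The plan is to instantiate Theorem~\ref{thm:pac-trinouille-general} with the convex function $\Dcal(q_1,q_2\,\|\,p_1,p_2) \eqdef \kl(q_1,q_2\,\|\,p_1,p_2)$ defined in Equation~\eqref{eq:small_kl2}, and with $m'=m$. The convexity of $\kl(\cdot,\cdot\,\|\,\cdot,\cdot)$ required by Theorem~\ref{thm:pac-trinouille-general} is supplied by Corollary~\ref{cor:bintrin}.

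With these choices, the left-hand side of the conclusion of Theorem~\ref{thm:pac-trinouille-general} reads $\kl(\aSQ,\bSQ\,\|\,\aDQ,\bDQ)$, using the identities
$\aSQ = \esp{\fpaired\sim Q^2} \Eaggloss{\alpha}{S}(\fpaired)$ and $\aDQ = \esp{\fpaired\sim Q^2} \Eaggloss{\alpha}{D}(\fpaired)$ (and the analogous ones for $\bQ$) that are collected in Equation~\eqref{eq:esd_from_losses}. What remains is therefore to control the constant $\Omega$ that appears inside the logarithm in the statement of Theorem~\ref{thm:pac-trinouille-general}. Since the prior distribution $P^2$ does not depend on the sample $S$, Fubini's theorem lets us swap the two outer expectations:
\[
\Omega \ =\ \esp{\fpaired\sim P^2} \esp{S\sim D^m} \exp\!\Big[ m\cdot\kl\big(\Eaggloss{\alpha}{S}(\fpaired),\Eaggloss{\beta}{S}(\fpaired)\,\big\|\,\Eaggloss{\alpha}{D}(\fpaired),\Eaggloss{\beta}{D}(\fpaired)\big) \Big].
\]
For each fixed paired-voter $\fpaired$, Lemma~\ref{lem:xi2} (applied with losses $\loss_\alpha$ and $\loss_\beta$) bounds the inner expectation by $\xi(m)+m$, uniformly in $\fpaired$, so $\Omega \le \xi(m)+m$. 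Substituting this into the statement of Theorem~\ref{thm:pac-trinouille-general} yields the claimed inequality.

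In short, once the paired-voter PAC-Bayesian machinery and the trivalent analogue of Maurer's exponential-moment bound are both in place, the argument is essentially mechanical. The only step requiring any care is the verification that $\kl(\cdot,\cdot\,\|\,\cdot,\cdot)$ is jointly convex in its four arguments, but this has already been handled via Corollary~\ref{cor:bintrin}; no new estimates are needed in the proof itself.
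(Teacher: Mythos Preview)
Your proof is correct and follows essentially the same approach as the paper: instantiate Theorem~\ref{thm:pac-trinouille-general} with $\Dcal=\kl(\cdot,\cdot\,\|\,\cdot,\cdot)$ and $m'=m$, identify the expected paired-voter losses with $\aQ,\bQ$ via Equation~\eqref{eq:esd_from_losses}, swap the expectations over $S$ and $P^2$, and bound the inner exponential moment by $\xi(m)+m$ using Lemma~\ref{lem:xi2}. The only cosmetic difference is that you explicitly invoke Corollary~\ref{cor:bintrin} for convexity and name Fubini's theorem for the swap, whereas the paper leaves these implicit.
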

\begin{proof}
The result follows from Theorem~\ref{thm:pac-trinouille-general} with  
 $ \Dcal(q_1, q_2 \,\|\, p_1, p_2) = \kl(q_1, q_2 \,\|\, p_1, p_2)$ and $m'=m$.
Since Equation~\eqref{eq:esd_from_losses} shows that $\aQ = \esp{\fpaired\sim Q^2} \hspace{-1mm} \aij$ and  $\bQ = \esp{\fpaired\sim Q^2} \hspace{-1mm} \bij$, we have
\begin{multline*}
\prob{S\sim D^m}\Biggl(
\forall\,Q\,\,\text{on}\,\Hcal\colon\,\,
   \kl\left(\aSQ, \bSQ\,\|\,\aDQ, \bDQ\right)   
   \le\\
    \frac{1}{m}\Bigg[ 2\cdot\KL(Q\|P) +
\ln\Bigg(\frac{1}{\delta}
	\esp{S\sim D^m}\esp{\fpaired\sim P^2}e^{m\kl\big( \aSij, \bSij \,\big\|\,\aDij,\bDij\big)}
	\Bigg)\Bigg] \,\Biggr) \ge 1 -\delta\,.
\end{multline*}
As the prior distribution $P^2$ is independent of $S$, we can swap the two expectations in expression 
{\small
$\esp{S\sim D^m}\esp{\fpaired\sim P^2}e^{m\kl( \aSij, \bSij \,\big\|\,\aDij,\bDij)}$}.  
This observation, together with Lemma~\ref{lem:xi2}, gives
 \begin{eqnarray*}
\esp{S\sim D^m}\esp{\fpaired\sim P^2}e^{m\kl\big( \aSij, \bSij \,\big\|\,\aDij,\bDij\big)} 
& = &
\esp{\fpaired\sim P^2} \esp{S\sim D^m} e^{m\kl\big( \aSij, \bSij \,\big\|\,\aDij,\bDij\big)} \\
& \leq & \esp{\fpaired\sim P^2} \xi(m)+m \\
& = &  \xi(m)+m \,.\\[-1cm]
 \end{eqnarray*}
\end{proof}

A first version of Theorem~\ref{thm:pac-trinouille-kl} was proposed by~\cite{nips06-mv}, with the difference that $\ln\frac{(m+1)(m+2)}{2\delta}$ in the latter is now replaced by $\ln\frac{\xi(m)+m}{\delta}$ in the former. Since \mbox{$\xi(m)+m < \frac{(m+1)(m+2)}{2}$}, the new theorem is therefore tighter.

\subsubsection{Another Bound for the Risk of the Majority Vote}
 \label{sec:pac-bounds}

First, we need the following notation that
is related to Theorem~\ref{thm:pac-trinouille-kl}. Given any prior distribution $P$ on $\Hcal$,
\begin{eqnarray}
\label{eq:Adel}
\Adel &\eqdef&  \bigg\{ (d,e)\ \colon \ \kl(\dSQ,\eSQ\|\,d,e) \, \le \, 
\frac{1}{m}\,\bigg[\, 2\!\cdot\! \KL(Q\|P) + \ln\tfrac{\xi(m)+m}{\dt}\bigg]\bigg\}\,.
\end{eqnarray}

The bound is obtained by seeking the point of $\Adel$ maximizing the \Cbound. Since a point $(d,e)$ of~$\Adel$ expresses a disagreement $d$ and a joint error $e$, we directly compute the bound on $\CDQ$ using  Equation~\eqref{eq:Cq_ed}. %

Note however that $\Adel$ can contain points that are not possible in practice, \ie, points that are not achievable with any data-generating distribution $D$. Indeed, by Proposition~\ref{prop:dQ_bound}, we know that 
\begin{eqnarray}\nonumber
 &\dDQ & \leq  \ \, 2 \cdot \RDGQ \cdot \big( 1-\RDGQ \big)\,. %
  \end{eqnarray}
Based on this property, it is possible to significantly reduce the achievable region of $\Adel$. To do so, we must first rewrite this property based on $\dDQ$ and $\eDQ$ only. 
\begin{eqnarray}\nonumber
 &\dDQ & \leq  \ 2 \cdot \RDGQ \cdot \big( 1-\RDGQ \big) 
 \ = \ 2 \cdot \big(\eDQ+\tfrac{1}{2}\dDQ\big) \cdot \big( 1- (\eDQ+\tfrac{1}{2}\dDQ) \big) 
 \\ \nonumber
  \Leftrightarrow \quad & 0 & \leq \ - \tfrac{1}{2}(\dDQ)^2 - 2\eDQ\cdot\dDQ + 2\eDQ - 2(\eDQ)^2
   \\ \label{ineAdelrestreint}  
  \Leftrightarrow \quad & \dDQ & \leq \ 2\cdot\left(\sqrt{\eDQ}-\eDQ\right).
  \end{eqnarray}
  
  \bigskip
  Note also that if $R_D(G_Q)\geq\frac{1}{2}$, there is no bound on $R_D(B_Q)$ better than the trivial one $R_D(B_Q)\le 1$. We therefore consider only the pairs $(d,e)\in\Adel$ that do not correspond to that situation. Since $R_D(G_Q)=\frac{1}{2}(2\eDQ+\dDQ)$ (Equation~\ref{eq:Gibbs_vs_e_d}), this is therefore equivalent to considering only the pairs $(d,e)$ such that $2e+d <1$. We later show that this still gives a valid bound. 
\noindent
Thus, from all these ideas, we restrain $\Adel$ (Equation~\ref{eq:Adel}) as follows:
\begin{eqnarray}\label{eq:adelrestreint}
\Adelrestreint& \eqdef &  \bigg\{ (d,e)\in \Adel \ \colon \   d \,\leq\, 2(\sqrt{e} - e) \ \ \ \mbox{and} \  \ \ 2e+d< 1\bigg\}\,,
\end{eqnarray}
and obtain the following bound that, in contrast with PAC-Bound~\ref{bound:variancebinouille}, directly bounds $\CDQ$.

\begin{pacbound}{2}\label{bound:trinouille}
For any distribution $D$ on \mbox{$\Xcal\!\times\!\{-1,1\}$}, for any set $\Hcal$ of voters \mbox{$\Xcal \rightarrow [-1,1]$}, for any prior distribution $P$ on $\Hcal$, and any $\dt\in (0,1]$, 
 we have
\begin{equation*}
 \prob{S\sim D^m}\!\LP
\forall\,Q\mbox{ on }\Hcal\,:\ 
   R_D(B_Q) \, \leq 
	\sup\limits_{(d,e)\in\Adelrestreint} \left[1-\frac{\Big(1-(2e+d)\Big)^2}{1-2d}\ \right]       
\RP \ge \, 1 -\dt\,.
\end{equation*}
\end{pacbound}
\begin{proof} 
We need to show that the supremum value in the statement of PAC-Bound~\ref{bound:trinouille} is a valid upper bound of $\RDBQ$. Note that if $\Adelrestreint=\emptyset$, then the supremum is $+\infty$, and the bound is trivially valid. Therefore, we assume below that $\Adelrestreint$ is not empty.

Let us consider $(d,e)\in \Adelrestreint$. From the conditions $d \,\leq\, 2(\sqrt{e} - e)$ and $2e+d< 1$, it follows by straightforward calculations that $d<\frac{1}{2}$. This implies that
\begin{equation*} 
1-\frac{\big(1-(2e+d)\big)^2}{1-2d} \ < \ 1\,,
\end{equation*}
because both the numerator and the denominator of the fraction are strictly positive (remember that $2e+d< 1$). 
Thus, the supremum is at most $1$. %

\pagebreak
\noindent
Let us consider the three following cases.\\[2mm]
{\it Case 1:\ The supremum is not attained in $\Adelrestreint$.}\  Note that as $\Adelrestreint$ is a subset of $\mathbb{R}^2$, the supremum must be attained for a pair in the closure of $\Adelrestreint$. The latter is not a closed set only because of its $2e+d< 1$ constraint.
Therefore, the supremum is achieved for a pair $(d,e)$ in the closure for which $1-(2e+d)=0$, implying that the value of the supremum is in that case $1$, which trivially is a valid bound for $R_D(B_Q)$.\\[2mm]
{\it Case 2: \ The supremum is attained in $\Adelrestreint$  and has value $1$.}\  In that case, the bound is again trivially valid.\\[2mm]
{\it Case 3: The supremum is attained in $\Adelrestreint$ 
and has a value strictly lower than $1$.} \ In that case, there must be an $\epsilon>0$ such that
$2e+d<1-\epsilon$ for all $(d,e)\in \Adelrestreint$. Hence, because
 of Equation~\eqref{ineAdelrestreint} and Theorem~\ref{thm:pac-trinouille-kl},
 we have that $2\eDQ+\dDQ < 1-\epsilon$ with probability $1\!-\!\delta$.
 Since $R_D(G_Q) = \frac{1}{2}(2\eDQ+\dDQ)$ (Equation~\ref{eq:Gibbs_vs_e_d}), this implies that, with probability $1\!-\!\delta$,  $R_D(G_Q)<1/2-1/2\epsilon$. Hence, with probability $1\!-\!\delta$, Theorem~\ref{thm:C-bound} is valid -- \ie, $\CDQ$ bounds $\RDBQ$ -- and $(\dDQ,\eDQ)\in\Adelrestreint$.
Thus, 
\begin{eqnarray*}
R_D(B_Q) \ \leq \ \ \CDQ \ = \ 1-\frac{\Big(1-(2\eDQ+\dDQ)\Big)^2}{1-2\dDQ} \ \leq  \ \sup\limits_{(d,e)\in\Adelrestreint} \left[1-\frac{\Big(1-(2e+d)\Big)^2}{1-2d}\ \right],
\end{eqnarray*}
and we are done.
\end{proof}

In some situations, we can slightly improve PAC-Bound~\ref{bound:trinouille} by bounding the joint error~$\eDQ$ via Theorem~\ref{thm:binouille} with $\delta$ replaced by $\delta/2$.   This removes all pairs $(d,e)$ such that $e$ does not belong to the set $\Edeld$ defined as
\begin{eqnarray*}
\Edeld &\eqdef&  \bigg\{ e\ \colon \ \kl(\eSQ \|\, e) \, \le \, 
\frac{1}{m}\,\bigg[\, 2\!\cdot\! \KL(Q\|P) + \ln\tfrac{\xi(m)}{\dt/2}\bigg]\bigg\} \,.
\end{eqnarray*}

Then, by applying  PAC-Bound~\ref{bound:trinouille}, with $\delta$ replaced by $\delta/2$, one can obtain the following slightly improved bound.
\begin{pacbound}{2'} \label{bound:trinouille2}
For any distribution $D$ on \mbox{$\Xcal\!\times\!\{-1,1\}$}, for any set $\Hcal$ of voters \mbox{$\Xcal \rightarrow [-1,1]$}, for any prior distribution $P$ on $\Hcal$, and any $\dt\in (0,1]$, we have
\begin{equation*}
 \prob{S\sim D^m}\!\LP
\forall\,Q\mbox{ on }\Hcal\,:\ 
   R_D(B_Q) \ \leq 
	\sup\limits_{(d,e)\in\AdelrestreintEd} \left[1-\frac{\Big(1-(2e+d)\Big)^2}{1-2d}\ \right]       
\RP \ge \, 1 -\dt\,,
\end{equation*}
where
\begin{equation}\label{eq:AdelrestreintEd}
\AdelrestreintEd \ \eqdef \  \left\{(d,e)\in\AdelEd\ :\ d \,\leq\, 2(\sqrt{e} - e)\,, \ \ 2e+d<1 \  \ \mbox{\rm and } \ e \, \le\,  \sup \Edeld\right\}.
\end{equation}
\end{pacbound}
\begin{proof}
 Immediate consequence of Theorem~\ref{thm:binouille}, PAC-Bound~\ref{bound:trinouille}, and the union bound.
\end{proof}

\subsubsection{Computation of PAC-Bounds~\ref{bound:trinouille} and~\ref{bound:trinouille2}}

\newcommand{\fC}{F_{\!_\Ccal}}

Let us consider the \Cbound as a function $\fC$
of two variables $(d,e)\in  [0, \frac{1}{2}]\times[0,1]$, instead of a function of the distribution~$Q$.
\begin{equation} \label{eq:fC(d,e)}
 \fC(d,e) \ \eqdef\ 1- \frac{\big[\,1-(2e+d)\,\big]^2}{1-2d}\,.
\end{equation}
Proposition~\ref{prop:CQConvexe} (provided in Appendix~\ref{section:appendix_auxmath}) shows that $\fC$ is a concave function. Therefore, PAC-Bound~\ref{bound:trinouille} is obtained by maximizing $\fC(d,e)$ in the domain $\Adelrestreint$ (Equation~\ref{eq:adelrestreint}), which is both bounded and convex. Several optimization methods can achieve this.  In our experiments, we decompose $\fC(d,e)$ in two nested functions of a single argument: 
$$
\sup_{(d,e)\in\Adelrestreint} \!\!\!\!\Big[ \fC(d,e) \Big] \ = 
\sup_{d:(d,\cdot)\in\Adelrestreint} \!\!\!\!\Big[ \fC^*(d) \Big], \quad \mbox{ where } \,
\fC^*(d) \ \eqdef \!\!\!\sup_{e:(d,e)\in\Adelrestreint} \!\!\!\!\Big[ \fC(d,e) \Big]\,.
$$
Thus, we implement the maximization of $\fC$ using a one-dimensional optimization algorithm twice.  
Figure~\ref{fig:BoundTri} shows an application example of PAC-Bound~\ref{bound:trinouille}.

 \begin{figure}[t]
\centering
\subfloat[Contour plot of $\kl(0.4, 0.1 \| d,e)$.]{\includegraphics[height=5.5cm]{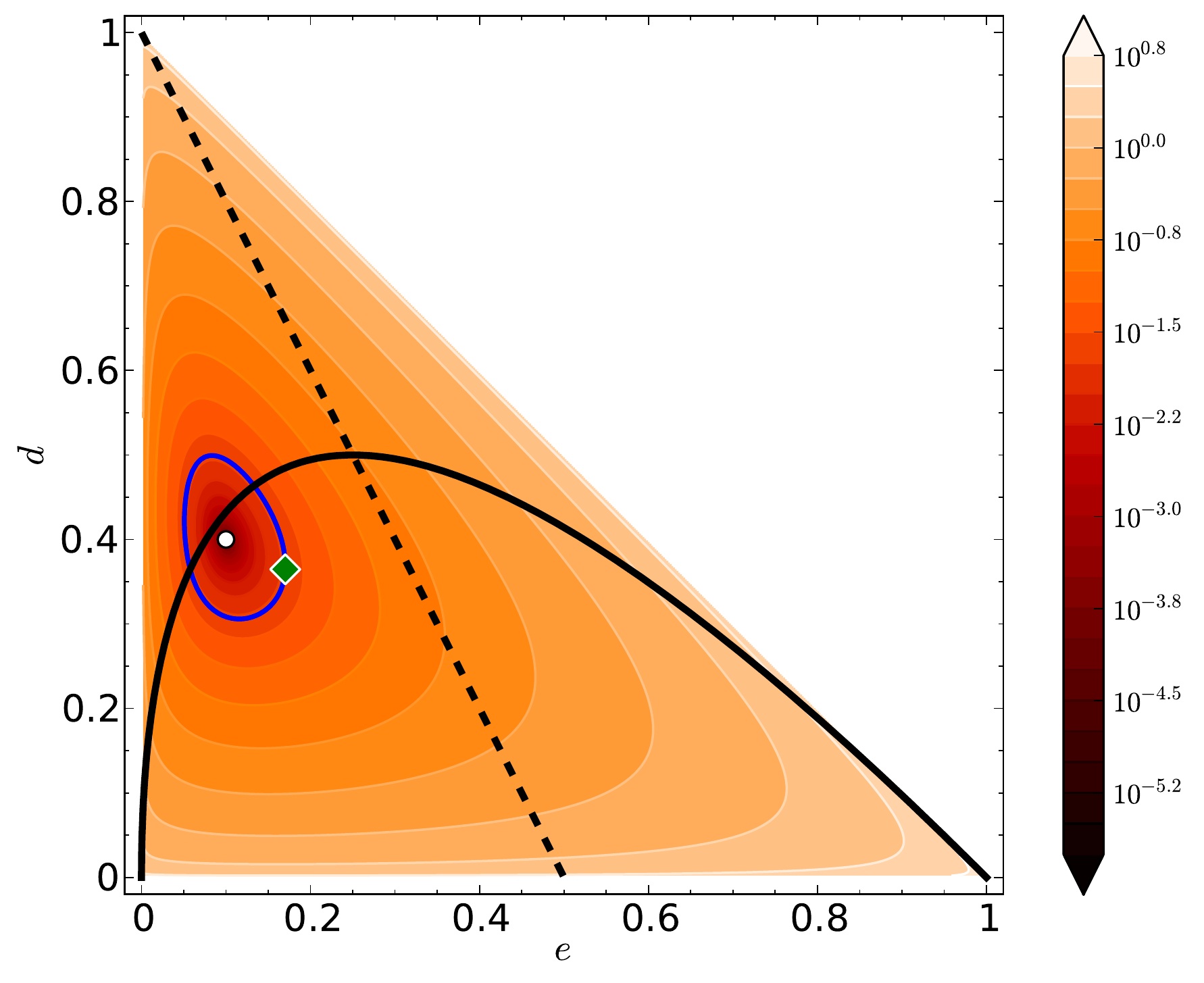} \label{fig:BoundTri_KL}} \quad
\subfloat[Contour plot of $\fC(d,e)$.]{\includegraphics[height=5.5cm]{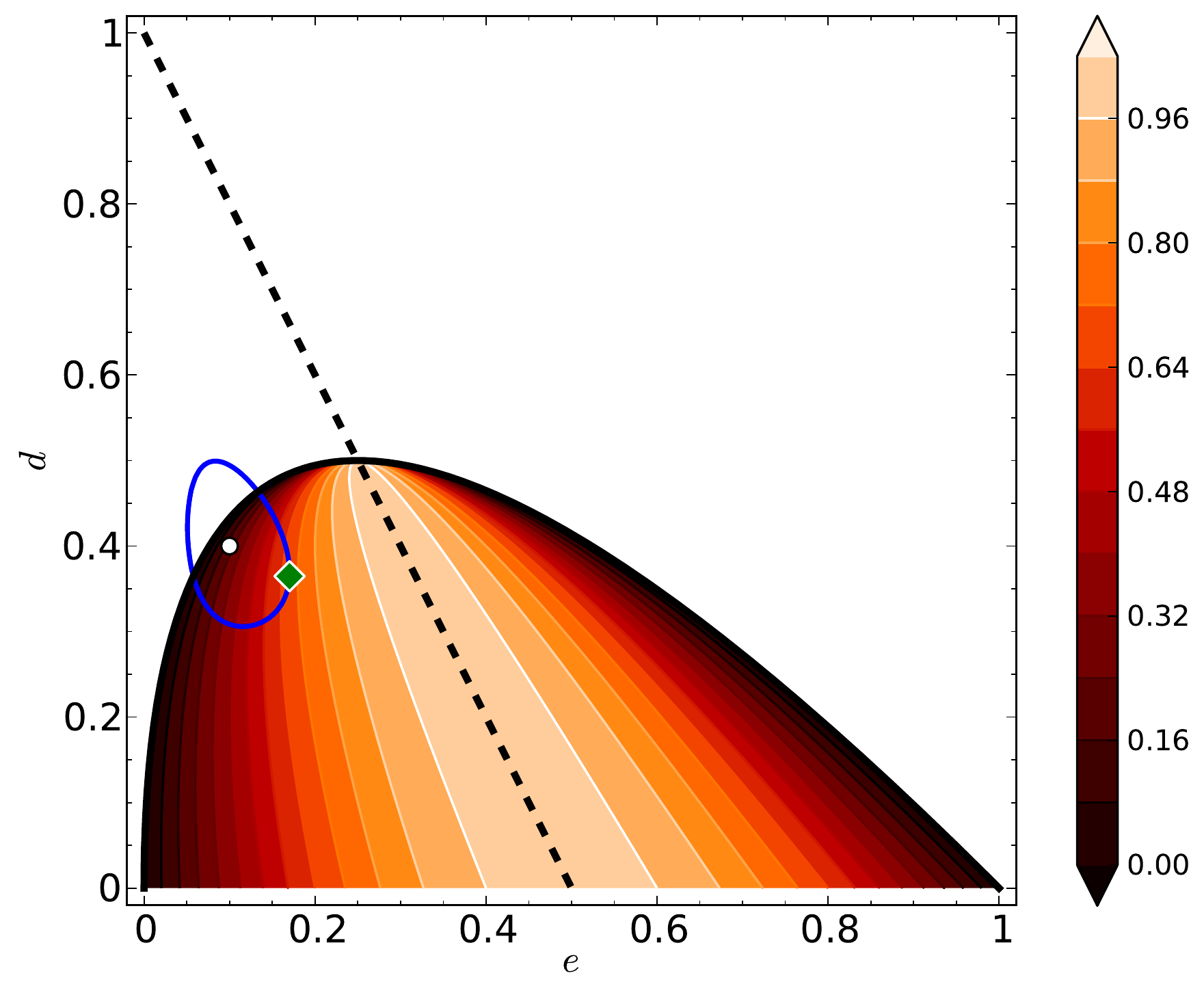} \label{fig:BoundTri_CQ} } 
\caption{Example of application of PAC-Bound~\ref{bound:trinouille}. We suppose that $\KL(Q\|P)=5$, $m=1000$ and $\delta=0.05$. 
 If we observe an empirical joint error $\eSQ=0.10$ and an empirical disagreement $\dSQ=0.40$ (thus, a Gibbs risk $\RSGQ = 0.1 + \frac{1}{2}\cdot 0.4 = 0.30$), then we need to maximize the function $\fC(d,e)$ over the domain $\Adelrestreint$ given by three constraints: $\kl(0.4,0.1\|\,d,e) \le \frac{1}{m}\big[2\!\cdot\! \KL(Q\|P) + \ln\frac{\xi(m)+m}{\dt}\big] \approx 0.0199$ (blue oval), $d \leq 2(\sqrt{e} - e)$ (black curve) and $2e+d<1$ (black dashed line).  Therefore, we obtain a bound  $\RDBQ \leq 0.679$ (corresponding to the green diamond marker).
}
\label{fig:BoundTri}
\vspace{-2mm}
\end{figure}
 \begin{figure}[h]
\centering
\includegraphics[height=5.5cm]{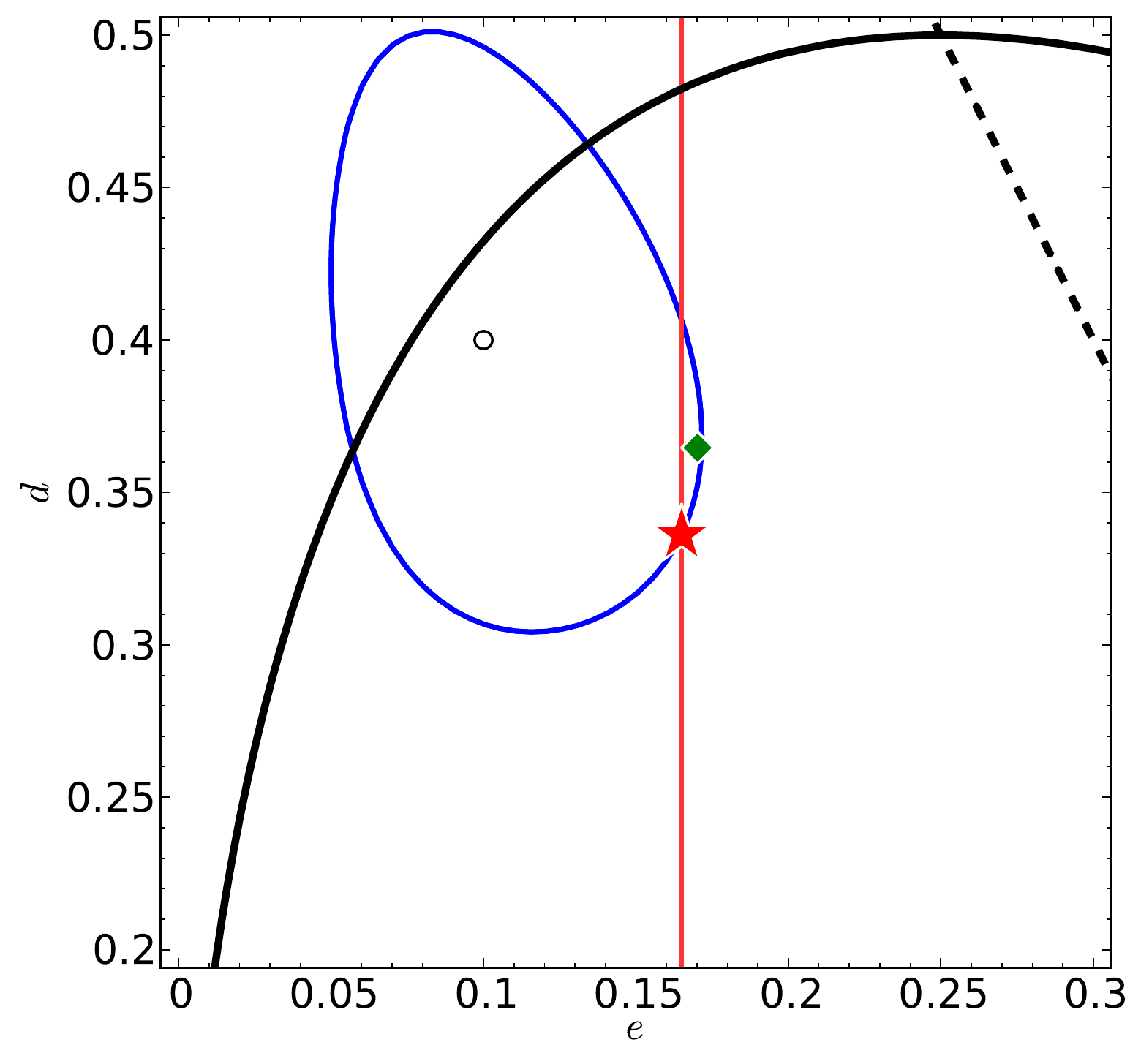} 
\caption{Example of application of PAC-Bound~\ref{bound:trinouille2}. We use the same quantities as for Figure~\ref{fig:BoundTri}. The red vertical line corresponds to the upper bound on the joint error, resulting in an improved bound of $\RDBQ \leq 0.660$ (corresponding to the red star marker). Note however that, even if the bound here is tighter, the egg-region is a bit bigger than in the case of  PAC-Bound~\ref{bound:trinouille} because all the $\delta$ has been replaced by $\delta/2$.}
\label{fig:BoundTri_prime}
\end{figure}

The computation of PAC-Bound~\ref{bound:trinouille2} is done using the same method, but we optimize over the domain $\AdelrestreintEd$ (Equation~\ref{eq:AdelrestreintEd}) instead of $\Adelrestreint$, which is also bounded and convex.  Of course, this requires computing $\sup \Edeld$ beforehand, using the same technique as for PAC-Bounds~\ref{bound:classic}, \ref{bound:variancebinouille} and~\ref{bound:semisup}.
Figure~\ref{fig:BoundTri_prime} shows an application example of PAC-Bound~\ref{bound:trinouille2}.

\subsection{Empirical Comparison Between PAC-Bounds on the Bayes Risk $\RDBQ$}

We now propose an empirical comparison of all PAC-Bounds we presented so far. The numerical results of Figure~\ref{figBornes123} are obtained
 by using AdaBoost \citep{schapire99} with decision stumps on the
 Mushroom UCI data set (which contains 8124 examples). This data set
 is randomly split into two halves: one training set $S$ and one
 testing set $T$. For each round of boosting, we compute the usual PAC-Bayesian bound of twice the Gibbs risk (PAC-Bound~\ref{bound:classic}) of the corresponding majority vote classifier, as well as the other variants of the PAC-Bayesian bounds presented in this paper.
 
\begin{figure}[t] 
\begingroup
  \makeatletter
  \providecommand\color[2][]{%
    \GenericError{(gnuplot) \space\space\space\@spaces}{%
      Package color not loaded in conjunction with
      terminal option `colourtext'%
    }{See the gnuplot documentation for explanation.%
    }{Either use 'blacktext' in gnuplot or load the package
      color.sty in LaTeX.}%
    \renewcommand\color[2][]{}%
  }%
  \providecommand\includegraphics[2][]{%
    \GenericError{(gnuplot) \space\space\space\@spaces}{%
      Package graphicx or graphics not loaded%
    }{See the gnuplot documentation for explanation.%
    }{The gnuplot epslatex terminal needs graphicx.sty or graphics.sty.}%
    \renewcommand\includegraphics[2][]{}%
  }%
  \providecommand\rotatebox[2]{#2}%
  \@ifundefined{ifGPcolor}{%
    \newif\ifGPcolor
    \GPcolorfalse
  }{}%
  \@ifundefined{ifGPblacktext}{%
    \newif\ifGPblacktext
    \GPblacktexttrue
  }{}%
  \let\gplgaddtomacro\g@addto@macro
  \gdef\gplbacktext{}%
  \gdef\gplfronttext{}%
  \makeatother
  \ifGPblacktext
    \def\colorrgb#1{}%
    \def\colorgray#1{}%
  \else
    \ifGPcolor
      \def\colorrgb#1{\color[rgb]{#1}}%
      \def\colorgray#1{\color[gray]{#1}}%
      \expandafter\def\csname LTw\endcsname{\color{white}}%
      \expandafter\def\csname LTb\endcsname{\color{black}}%
      \expandafter\def\csname LTa\endcsname{\color{black}}%
      \expandafter\def\csname LT0\endcsname{\color[rgb]{1,0,0}}%
      \expandafter\def\csname LT1\endcsname{\color[rgb]{0,1,0}}%
      \expandafter\def\csname LT2\endcsname{\color[rgb]{0,0,1}}%
      \expandafter\def\csname LT3\endcsname{\color[rgb]{1,0,1}}%
      \expandafter\def\csname LT4\endcsname{\color[rgb]{0,1,1}}%
      \expandafter\def\csname LT5\endcsname{\color[rgb]{1,1,0}}%
      \expandafter\def\csname LT6\endcsname{\color[rgb]{0,0,0}}%
      \expandafter\def\csname LT7\endcsname{\color[rgb]{1,0.3,0}}%
      \expandafter\def\csname LT8\endcsname{\color[rgb]{0.5,0.5,0.5}}%
    \else
      \def\colorrgb#1{\color{black}}%
      \def\colorgray#1{\color[gray]{#1}}%
      \expandafter\def\csname LTw\endcsname{\color{white}}%
      \expandafter\def\csname LTb\endcsname{\color{black}}%
      \expandafter\def\csname LTa\endcsname{\color{black}}%
      \expandafter\def\csname LT0\endcsname{\color{black}}%
      \expandafter\def\csname LT1\endcsname{\color{black}}%
      \expandafter\def\csname LT2\endcsname{\color{black}}%
      \expandafter\def\csname LT3\endcsname{\color{black}}%
      \expandafter\def\csname LT4\endcsname{\color{black}}%
      \expandafter\def\csname LT5\endcsname{\color{black}}%
      \expandafter\def\csname LT6\endcsname{\color{black}}%
      \expandafter\def\csname LT7\endcsname{\color{black}}%
      \expandafter\def\csname LT8\endcsname{\color{black}}%
    \fi
  \fi
  \setlength{\unitlength}{0.0500bp}%
  \begin{picture}(8640.00,5544.00)%
    \gplgaddtomacro\gplbacktext{%
      \csname LTb\endcsname%
      \put(726,465){\makebox(0,0)[r]{\strut{} 0}}%
      \put(726,994){\makebox(0,0)[r]{\strut{} 0.1}}%
      \put(726,1523){\makebox(0,0)[r]{\strut{} 0.2}}%
      \put(726,2052){\makebox(0,0)[r]{\strut{} 0.3}}%
      \put(726,2581){\makebox(0,0)[r]{\strut{} 0.4}}%
      \put(726,3110){\makebox(0,0)[r]{\strut{} 0.5}}%
      \put(726,3639){\makebox(0,0)[r]{\strut{} 0.6}}%
      \put(726,4168){\makebox(0,0)[r]{\strut{} 0.7}}%
      \put(726,4697){\makebox(0,0)[r]{\strut{} 0.8}}%
      \put(726,5226){\makebox(0,0)[r]{\strut{} 0.9}}%
      \put(2025,220){\makebox(0,0){\strut{} 10}}%
      \put(3294,220){\makebox(0,0){\strut{} 20}}%
      \put(4563,220){\makebox(0,0){\strut{} 30}}%
      \put(5831,220){\makebox(0,0){\strut{} 40}}%
      \put(7100,220){\makebox(0,0){\strut{} 50}}%
      \put(8242,220){\makebox(0,0){\small rounds}}%
    }%
    \gplgaddtomacro\gplfronttext{%
      \csname LTb\endcsname%
      \put(6879,2736){\makebox(0,0)[r]{\strut{}Bound 0}}%
      \csname LTb\endcsname%
      \put(6879,2516){\makebox(0,0)[r]{\strut{}Bound 1}}%
      \csname LTb\endcsname%
      \put(6879,2296){\makebox(0,0)[r]{\strut{}Bound 2}}%
      \csname LTb\endcsname%
      \put(6879,2076){\makebox(0,0)[r]{\strut{}Bound 2$\mathbf{'}$\hspace{-1mm}}}%
      \csname LTb\endcsname%
      \put(6879,1856){\makebox(0,0)[r]{\strut{}Bound 1$\mathbf{'}$\hspace{-1mm}}}%
      \csname LTb\endcsname%
      \put(6879,1636){\makebox(0,0)[r]{\strut{}$\CSQ$ ($\Cbound$ on train)}}%
      \csname LTb\endcsname%
      \put(6879,1416){\makebox(0,0)[r]{\strut{}$R_T(B_Q)$ (Risk on test)}}%
    }%
    \gplbacktext
    \put(0,0){\includegraphics{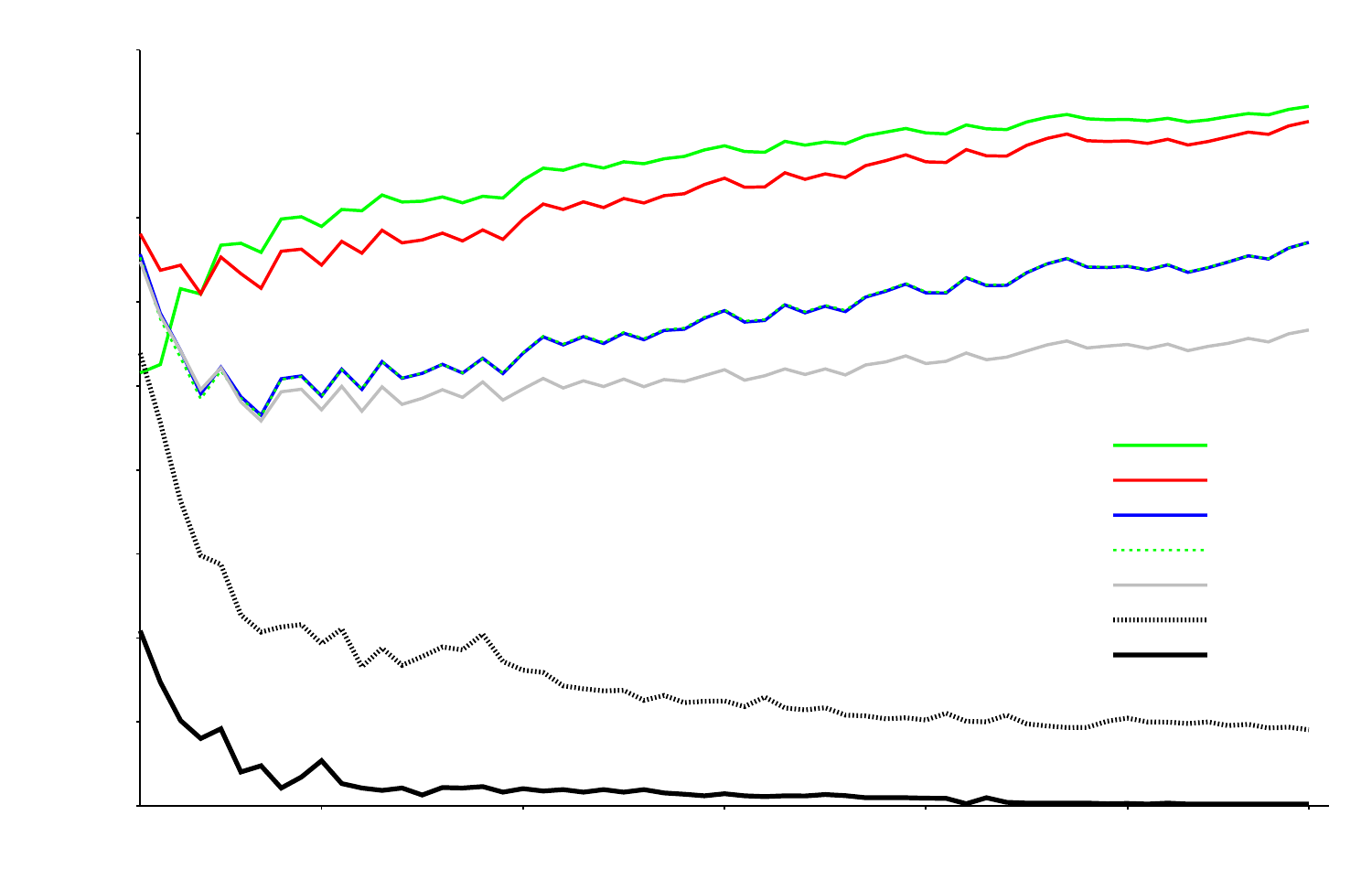}}%
    \gplfronttext
  \end{picture}%
\endgroup
\caption{Comparison of bounds of $\RDBQ$ during 60 rounds of Boosting.}
\label{figBornes123} 
\end{figure}

We can see that PAC-Bound~\ref{bound:variancebinouille} is generally tighter than PAC-Bound~\ref{bound:classic}, and we obtain a substantial improvement with PAC-Bound~\ref{bound:trinouille}. Almost no improvement is obtained with PAC-Bound~\ref{bound:trinouille2} in that case.
We can also see that using unlabeled data  to estimate $\dDQ$ helps, as PAC-Bound~\ref{bound:semisup} is the tightest.\footnote{To obtain PAC-Bound~\ref{bound:semisup}, we simulate the  case where we have access to a large number of unlabeled data by simply using the empirical value of $\dTQ$ computed on the testing set.}

However, we see in Figure~\ref{figBornes123} that after $8$ rounds of boosting, all the bounds are degrading even if the value of $\CSQ$ 
 continues to decrease. This drawback is due to the fact that the denominator of $\CSQ$ tends to $0$, that is the second moment of the margin $\momenttwo(\MQ{S})$ is close to 0 (see the first or the second forms of Theorem~\ref{thm:C-bound}).
 Hence, in this context, the first moment of the margin $\momentone(\MQ{S})$ must be small as
 well. Thus, any slack in the bound of $\momentone(\MQ{D})$ has a multiplicative
 effect on each of the three proposed PAC-bounds of $\RDBQ$.
 Unfortunately, Boosting algorithms tend to construct majority votes
 with $\momentone(\MQ{S})$ just slightly larger than~0.

\section{PAC-Bayesian Bounds without $\mathbf{\KL}$}
\label{section:Further-PAC_Bayes}

Having PAC-Bayesian theorems that bound the difference between $\CSQ$ and $\CDQ$ opens the way to structural \Cbound minimization algorithms. As for most PAC-Bayesian results, the bound on $\CDQ$ depends on an empirical estimate of it, and on the Kullback-Leibler divergence $\KL(Q\|P)$ between the output distribution $Q$ and the a priori defined distribution $P$.
In this section, we present a theoretical extension of our PAC-Bayesian approach that is mandatory to develop the $\CDQ$-minimization algorithm of Section~\ref{sect:mincq}.

The next theorems introduce PAC-Bayesian bounds that have the surprising property of having no $\KL$ term. This new approach is driven by the fact that  our attempts to construct algorithms that minimize any of the PAC-Bounds presented in the previous section ended up being unsuccessful. Surprisingly, the KL-divergence is a poor regularizer in this case, as its empirical value tends to be overweighted in comparison with the empirical value of the \Cbound (\ie, $\CSQ$). 

\medskip

There have already been some attempts to develop PAC-Bayesian bounds that do not rely on the KL-divergence
(see the localized priors of~\citealp{c-07}, or the distribution-dependent priors of \citealp{lls-13}). The usual idea is to bound the KL-divergence via some concentration inequality. In the following, the $\KL$ term simply vanishes from the bound, provided that we restrict ourselves to \emph{aligned posteriors}, a notion that is properly defined later on in this section. The fact that these new PAC-Bayesian bounds do not contain any KL divergence terms indicates that the restriction to aligned posteriors has some ``built in'' regularization action. 

The following theory is similar to the one used by \cite{gllms-11}, in which two learning algorithms inspired by the PAC-Bayesian theory are compared: one regularized with the $\KL$ divergence, using a hyperparameter to control its weight, and one regularized by restricting the posterior distributions to be \emph{aligned} on the prior distribution. Surprisingly, the latter algorithm uses one less parameter, and has been shown to have an as good accuracy.

\subsection{Self-Complemented Sets of Voters and Aligned Distributions}\label{sec:aligneetquasi-unif}
In this section, we assume that the (possibly infinite) set of voters $\Hcal$ is \emph{self-complemented}\footnote{In~\cite{lmr-11}, this notion was introduced as an \emph{auto-complemented} set of voters. However, \emph{self-complemented} is a more suitable name. Also, note that a similar notion, called a \emph{symmetric hypothesis class}, is introduced in~\cite{DBLP:journals/corr/DanielySBS13}.}.

\begin{definition} \rm \label{def:H_autocomp}
A set of voters $\Hcal$ is said to be \emph{self-complemented} if there exists a bijection $c: \Hcal \rightarrow \Hcal$ such that for any $f \in \Hcal$, $$c(f) \ =\  -f\,.$$ 
\end{definition}

Moreover, we say that a distribution $Q$ on any self-complemented $\Hcal$ is \emph{aligned} on a prior distribution $P$ if $$Q(f) + Q(c(f)) \ =\  P(f) + P(c(f))\,, \ \ \ \forall f \in \Hcal\,.$$

When $P$ is the uniform prior distribution and $Q$ is aligned on $P$, we say that $Q$ is \emph{quasi-uniform}. Note that the uniform distribution is itself a quasi-uniform distribution.

In the finite case, we consider self-complemented sets $\Hcal$ of $2n$ voters $\Xcal \rightarrow \Yover$. In this setting, for any $\xb\in\Xcal$ and any $i\in\{1, \ldots , n\}$, we have that $f_{i+n}(\xb) = -f_i(\xb)$. Moreover, finite quasi-uniform distributions $Q$ is such that for any $i\in\{1,\ldots, n\}$, 
\begin{equation} \label{eq:quasi-uniform}
Q(f_i)+Q(f_{i+n}) \ =\ \frac{1}{n}\,.
\end{equation}

Equation~\eqref{eq:quasi-uniform} shows that when a distribution $Q$ is restricted to being quasi-uniform, the sum of the weight given to a pair of complementary voters is equal to $\frac{1}{n}$. As $Q$ is a distribution, this means that the weight of any voter is lower-bounded by $0$ and upper-bounded by~$\frac{1}{n}$, giving rise to an $L_\infty$-norm regularization.
Note that, in this context, the maximum value of $\KL(Q\|P)$ is reached when all voters have a weight of either $0$ or $\frac{1}{n}$. Indeed, a quasi-uniform distribution $Q$ is such that $\KL(Q\|P)\leq n(\frac{1}{n}) \ln(\frac{1}{n}/\frac{1}{2n})=\ln 2$. Consequently, the value of the $\KL$ term is necessarily small and plays a little role in PAC-Bayesian bounds computed with quasi-uniform distributions.
The following theorems and corollaries are specializations that allow to slightly improve these PAC-Bayesian bounds by getting rid of the $\KL$ term completely. To achieve these results,  the associated proofs require restrictions on the choice of convex function $\Dcal$ and loss function~$\loss$.

\subsection{PAC-Bayesian Theorems without $\mathbf{\KL}$ for the Gibbs Risk}
Let us first specialize Theorem~\ref{thm:gen-pac-Bayes} to aligned distributions and linear loss $\linloss$. We first need a new change of measure inequality, as this is the part of Theorem~\ref{thm:gen-pac-Bayes} where the $\KL$ term appears.
\begin{lemma}[Change of measure inequality for aligned posteriors] \label{lem:change-measure-aligned}~\\
For any self-complemented set $\Hcal$, for any distribution $P$ on $\Hcal$, any distribution $Q$ aligned on $P$, and for any measurable function $\phi:\Hcal­\to \Reals$ such that $\phi(f) = \phi(c(f))$ for all $f \in \Hcal$, we have
\begin{equation*}
\esp{f\sim Q} \phi(f) \ \leq \ \ln \left( \esp{f\sim P} e^{\phi(f)} \right) .
\end{equation*}
\end{lemma}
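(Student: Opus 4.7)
The plan is to show that the two hypotheses together force $\esp{f\sim Q}\phi(f) = \esp{f\sim P}\phi(f)$, after which the claim follows from a single application of Jensen's inequality to $\ln(\cdot)$. This would make the lemma a much cleaner version of Lemma~\ref{lem:change-measure}, in which the entire $\KL(Q\|P)$ term collapses because the alignment condition neutralizes any deviation of $Q$ from $P$ on symmetric functionals.

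First, I would exploit the fact that $c$ is an involution: since $c(c(f)) = -(-f) = f$, we have $c^{-1}=c$. Combined with the symmetry assumption $\phi(f)=\phi(c(f))$, this yields
\begin{equation*}
\esp{f\sim Q}\phi(f) \ =\ \tfrac{1}{2}\Bigl(\esp{f\sim Q}\phi(f) \,+\, \esp{f\sim Q}\phi(c(f))\Bigr).
\end{equation*}
In the finite setting used throughout Section~\ref{sec:aligneetquasi-unif}, this amounts to pairing each voter $f$ with its complement $c(f)$ and rewriting the sum as
$\sum_{\{f,c(f)\}} \bigl[Q(f)+Q(c(f))\bigr]\,\phi(f)$.
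In the general (measure-theoretic) case, the same identity comes from the change-of-variables formula for the pushforward $c_{*}Q$.

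Second, I would invoke the alignment hypothesis $Q(f)+Q(c(f)) = P(f)+P(c(f))$ to replace $Q$-weights by $P$-weights term by term, and reverse the pairing step for the distribution $P$. This gives the key equality
\begin{equation*}
\esp{f\sim Q}\phi(f) \ =\ \esp{f\sim P}\phi(f).
\end{equation*}

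Finally, since $\ln$ is concave, Jensen's inequality (Lemma~\ref{lem:jensen}) yields
\begin{equation*}
\esp{f\sim P}\phi(f) \ =\ \esp{f\sim P} \ln e^{\phi(f)} \ \leq\ \ln\!\Bigl(\esp{f\sim P} e^{\phi(f)}\Bigr),
\end{equation*}
which completes the argument. There is no real obstacle here; the only subtle point is to recognize that the alignment condition has been tailored precisely so that the $Q$- and $P$-expectations of any $c$-symmetric function coincide, which is exactly why the $\KL$ regularizer can be dropped when the analysis is restricted to aligned posteriors.
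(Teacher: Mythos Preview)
Your proposal is correct and follows essentially the same approach as the paper: first use the symmetry $\phi(f)=\phi(c(f))$ together with the alignment condition $Q(f)+Q(c(f))=P(f)+P(c(f))$ to establish the equality $\esp{f\sim Q}\phi(f)=\esp{f\sim P}\phi(f)$, then apply Jensen's inequality to $\ln(\cdot)$. The paper writes out the pairing argument via integrals (multiplying both sides by $2$), while you phrase it in terms of the involution and pushforward, but the content is identical.
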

\begin{proof}
First, note that one can change the expectation over $Q$ to an expectation over $P$, using the fact that $\phi(f) = \phi(c(f))$ for any $f \in \Hcal$, and that $Q$ is aligned on $P$.
\begin{align*}
  2\cdot\esp{f\sim Q}\phi(f) \ &=\  \int_{\Hcal} df \ Q(f)\, \phi(f) + \int_{\Hcal} df \ Q(c(f))\, \phi(c(f))\\
  &=\ \int_{\Hcal} df \ Q(f)\, \phi(f) + \int_{\Hcal} df \ Q(c(f))\, \phi(f)\\
  &=\ \int_{\Hcal} df \ \Big(Q(f) + Q(c(f))\Big)\, \phi(f)\\
  &=\ \int_{\Hcal} df \ \Big(P(f) + P(c(f))\Big)\, \phi(f)\\
  &=\ \int_{\Hcal} df \ P(f)\, \phi(f) + \int_{\Hcal} df \ P(c(f))\, \phi(f)\\
  &=\ \int_{\Hcal} df \ P(f)\, \phi(f) + \int_{\Hcal} df \ P(c(f))\, \phi(c(f))\\
  &=\ 2\cdot\esp{f\sim P} \phi(f)\, .
\end{align*}
The result is obtained by changing the expectation over $Q$ to an expectation over $P$, and then by applying Jensen's inequality (Lemma~\ref{lem:jensen}, in Appendix~\ref{section:appendix_auxmath}).
\begin{eqnarray*}
\esp{f \sim Q} \phi(f) 
& = &\esp{f \sim P} \phi(f) %
\ = \ \esp{f \sim P} \ln e^{\phi(f)}  %
\ \leq \ \ln \left(\esp{f \sim P} e^{\phi(f)} \right). %
\end{eqnarray*}
\end{proof}

\begin{theorem}[PAC-Bayesian theorem for aligned posteriors] \label{thm:gen-pac-Bayes-qu}
For any distribution $D$ on \mbox{$\Xcal\!\times\!\{-1,1\}$}, any self-complemented set $\Hcal$ of voters \mbox{$\Xcal \rightarrow [-1,1]$}, any prior distribution $P$ on $\Hcal$, any convex function $\Dcal : [0,1]\times [0,1] \rightarrow \Reals$ for which $\Dcal(q, p) = \Dcal(1-q, 1-p)$, for any $m'>0$ and any $\dt\in (0,1]$, we have
\begin{equation*}
\prob{S\sim D^m}\!\LP \!\!
\begin{array}{l}
  \mbox{For all posteriors $Q$ aligned on $P$}: \\[1mm]
  \Dcal\big(\RSGQ, \RDGQ\big) \, \le\,  
  \dfrac{1}{m'}\LB 
  \ln\LP\dfrac{1}{\dt}\,\esp{S\sim D^m}\esp{f\sim P}\!\!e^{\,m'\cdot\Dcal\left(\ElinlossS(f),\,\ElinlossD(f)\right)}\RP\RB 
\end{array}
\!\! \RP  \ge\, 1 -\dt \,.
\end{equation*}
\end{theorem}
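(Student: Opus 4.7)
The plan is to mirror the proof of Theorem~\ref{thm:gen-pac-Bayes} almost step for step, substituting the usual change of measure inequality (Lemma~\ref{lem:change-measure}) by its aligned-posterior version (Lemma~\ref{lem:change-measure-aligned}). The whole point is that Lemma~\ref{lem:change-measure-aligned} has no $\KL$ term on its right-hand side, which is exactly what is needed to make the $\KL(Q\|P)$ vanish from the final bound.

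To invoke Lemma~\ref{lem:change-measure-aligned} I need to set $\phi(f) = m'\cdot\Dcal\big(\ElinlossS(f),\ElinlossD(f)\big)$ and verify that $\phi(f)=\phi(c(f))$ for every $f\in\Hcal$. This is where the two hypotheses of the theorem come into play. First, because $c(f)(\xb)=-f(\xb)$, the definition of the linear loss (Definition~\ref{def:linearloss}) gives
\begin{equation*}
\linloss\big(c(f)(\xb),y\big) \ =\ \tfrac{1}{2}\big(1+y\cdot f(\xb)\big) \ =\ 1-\linloss\big(f(\xb),y\big),
\end{equation*}
so that $\ElinlossDprim(c(f)) = 1-\ElinlossDprim(f)$ for both $D'=D$ and $D'=S$. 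Second, the assumed symmetry $\Dcal(q,p)=\Dcal(1-q,1-p)$ then yields $\Dcal(\ElinlossS(c(f)),\ElinlossD(c(f))) = \Dcal(\ElinlossS(f),\ElinlossD(f))$, so the required invariance $\phi(f)=\phi(c(f))$ holds.

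With this in hand, the proof proceeds exactly as for Theorem~\ref{thm:gen-pac-Bayes}. I would first apply Markov's inequality (Lemma~\ref{lem:markov}) to the non-negative random variable $\esp{f\sim P} e^{m'\Dcal(\ElinlossS(f),\ElinlossD(f))}$ to obtain, with probability at least $1-\delta$ over $S\sim D^m$,
\begin{equation*}
\ln\!\LB \esp{f\sim P} e^{m'\Dcal(\ElinlossS(f),\ElinlossD(f))} \RB \ \le\ \ln\!\LP \tfrac{1}{\delta}\,\esp{S\sim D^m}\esp{f\sim P} e^{m'\Dcal(\ElinlossS(f),\ElinlossD(f))} \RP.
\end{equation*}
Then, for any aligned $Q$, I would lower-bound the left-hand side using Lemma~\ref{lem:change-measure-aligned} (which gives $\esp{f\sim Q}\phi(f) \le \ln \esp{f\sim P} e^{\phi(f)}$ without any $\KL$ penalty), followed by Jensen's inequality applied to the convex $\Dcal$ to pull the $Q$-expectations inside, giving $\esp{f\sim Q}\Dcal(\ElinlossS(f),\ElinlossD(f)) \ge \Dcal(\esp{f\sim Q}\ElinlossS(f), \esp{f\sim Q}\ElinlossD(f))$. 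Finally, by Definition~\ref{def:gibbsrisk}, $\esp{f\sim Q}\ElinlossDprim(f) = \RGQ$ for $D'\in\{S,D\}$, which produces the stated inequality after rearranging the $m'$ factor.

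The main obstacle is really the verification that $\phi(f)=\phi(c(f))$, since this is the only nontrivial hypothesis checking required and it is what couples the restriction to the linear loss with the symmetry hypothesis on $\Dcal$. Once that invariance is established, everything else is the standard Markov--change of measure--Jensen chain, identical in structure to the proof of Theorem~\ref{thm:gen-pac-Bayes}, with the only real difference being the absence of the $\KL(Q\|P)$ term on the right-hand side.
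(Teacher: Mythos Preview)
Your proposal is correct and matches the paper's own proof essentially step for step: the paper also says to follow the proof of Theorem~\ref{thm:gen-pac-Bayes} with $\loss=\linloss$, replacing Lemma~\ref{lem:change-measure} by Lemma~\ref{lem:change-measure-aligned} with $\phi(f)=m'\cdot\Dcal(\ElinlossS(f),\ElinlossD(f))$, and it verifies the required invariance $\phi(f)=\phi(c(f))$ via the same chain $\Dcal(\ElinlossS(f),\ElinlossD(f))=\Dcal(1-\ElinlossS(c(f)),1-\ElinlossD(c(f)))=\Dcal(\ElinlossS(c(f)),\ElinlossD(c(f)))$. Your write-up is actually more explicit than the paper's, which simply defers the remaining Markov--Jensen steps to the proof of Theorem~\ref{thm:gen-pac-Bayes}.
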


Similarly to Theorem~\ref{thm:gen-pac-Bayes}, the statement of Theorem~\ref{thm:gen-pac-Bayes-qu} above contains a value $m'$ which is likely to be set to $m$ in most cases. However, the distinction between $m$ and $m'$ is mandatory to develop the PAC-Bayesian theory for sample-compressed voters in Section~\ref{sec:sample-compression}. Indeed, in proofs of forthcoming Theorems~\ref{thm:McAllester-sc}, \ref{thm:McAllester-sc-qu} and~\ref{thm:aligned-sc-disagreement}, we have $m'=m-\lambda$, where $\lambda$ is the size of the voters compression sequence (this concept is properly defined in Section~\ref{sec:sample-compression}).\\

\begin{proof}
The proof follows the exact same steps as the proof of Theorem~\ref{thm:gen-pac-Bayes}, using the linear loss $\loss = \linloss$ and replacing the use of the change of measure inequality (Lemma~\ref{lem:change-measure}) \linebreak by the change of measure inequality for aligned posteriors (Lemma~\ref{lem:change-measure-aligned}), with \linebreak \mbox{$\phi(f) = m'\cdot\Dcal\left(\ElinlossS(f),\,\ElinlossD(f)\right)$}. Note that this function has the required property, as \[\Dcal\left(\ElinlossS(f),\,\ElinlossD(f)\right) = \Dcal\left(1-\ElinlossS(c(f)),\,1-\ElinlossD(c(f))\right) = \Dcal\left(\ElinlossS(c(f)),\,\ElinlossD(c(f))\right)\,.\]
The other steps of the proof stay exactly the same as the proof of Theorem~\ref{thm:gen-pac-Bayes}.
\end{proof}
Appendix~\ref{section:gen-pac-Bayes-prod-qu} presents more general versions of the last two results.

\medskip
Let us specialize Theorem~\ref{thm:gen-pac-Bayes-qu} to the case where $\Dcal(q,p) = \kl(q\|p)$. Doing so, we recover the classical PAC-Bayesian theorem (Theorem~\ref{thm:pac-bayes-kl-g}), but for aligned posteriors, which therefore has no $\KL$ term.

\begin{corollary} \label{cor:pac-bayes-classic-qu}%
For any distribution $D$ on \mbox{$\Xcal\!\times\!\{-1,1\}$}, any prior distribution $P$ on a self-complemented set $\Hcal$ of voters \mbox{$\Xcal \rightarrow [-1,1]$}, and any $\dt\in (0,1]$, we have
\begin{equation*}
\prob{S\sim D^m}\left(\!\!
\begin{array}{l}
  \mbox{For all posteriors $Q$ aligned on $P$}: \\[1mm]
   \kl\big(R_S(G_Q) \,\big\|\, R_D(G_Q)\big) \, \le\, \dfrac{1}{m}\LB\ln\dfrac{\xi(m)}{\dt}\RB
\end{array}\!\!
\right) \ge \,1 -\dt\,,
\end{equation*}
where $\kl(q\|p)$ and $\xi(m)$ and  defined by Equations~\eqref{eq:small_kl} and~\eqref{eq:xi} respectively.
\end{corollary}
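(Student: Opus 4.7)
The plan is to mirror the proof of Corollary~\ref{cor:pac-bayes-classic}, but invoke the aligned-posterior version Theorem~\ref{thm:gen-pac-Bayes-qu} in place of the general Theorem~\ref{thm:gen-pac-Bayes}, so that the $\KL$ term simply never appears. Concretely, I will specialize Theorem~\ref{thm:gen-pac-Bayes-qu} with $\Dcal(q,p) = \kl(q\|p)$ and $m' = m$, on the linear loss $\linloss$, which is exactly what is needed to recover the Gibbs risk via Definition~\ref{def:gibbsrisk}.

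Before applying the theorem I need to check its hypotheses on $\Dcal$. Convexity of $\kl(\cdot\,\|\,\cdot)$ is provided by Corollary~\ref{cor:bintrin} in the appendix. The required symmetry $\Dcal(q,p) = \Dcal(1-q,1-p)$ is immediate from the definition in Equation~\eqref{eq:small_kl}, since swapping each term via $q\leftrightarrow 1-q$ and $p\leftrightarrow 1-p$ simply exchanges the two summands of $\kl$. So Theorem~\ref{thm:gen-pac-Bayes-qu} applies and gives, with probability at least $1-\dt$ over $S\sim D^m$, uniformly over all posteriors $Q$ aligned on $P$,
\begin{equation*}
\kl\bigl(\RSGQ \,\big\|\, \RDGQ\bigr) \ \le\ \frac{1}{m}\ln\!\left(\frac{1}{\dt}\,\esp{S\sim D^m}\esp{f\sim P}\!\!e^{\,m\cdot\kl\left(\ElinlossS(f)\,\|\,\ElinlossD(f)\right)}\right).
\end{equation*}

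It then remains to control the moment-generating expression inside the logarithm. Since $P$ does not depend on $S$, I swap the expectations, which is the same step used in the proof of Theorem~\ref{thm:pac-bayes-kl-g}. Then Lemma~\ref{lem:xi}, applied pointwise for each fixed voter $f$ drawn from $P$ with the linear loss, yields
\begin{equation*}
\esp{S\sim D^m} e^{\,m\cdot\kl\left(\ElinlossS(f)\,\|\,\ElinlossD(f)\right)} \ \le\ \xi(m),
\end{equation*}
so taking the outer $P$-expectation leaves a bound of $\xi(m)$. Plugging this in and taking the logarithm delivers the announced upper bound $\tfrac{1}{m}\ln\tfrac{\xi(m)}{\dt}$.

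I do not expect any real obstacle; the only subtle point is verifying that $\phi(f) = m\cdot\kl(\ElinlossS(f)\,\|\,\ElinlossD(f))$ satisfies $\phi(f) = \phi(c(f))$, which is what allows the aligned change-of-measure inequality (Lemma~\ref{lem:change-measure-aligned}) to be used inside Theorem~\ref{thm:gen-pac-Bayes-qu}. This holds because $\linloss(-f(\xb),y) = 1 - \linloss(f(\xb),y)$, so $\ElinlossS(c(f)) = 1 - \ElinlossS(f)$ and likewise under $D$, and then the symmetry of $\kl$ noted above closes the argument.
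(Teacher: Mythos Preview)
Your proof is correct and follows essentially the same approach as the paper: specialize Theorem~\ref{thm:gen-pac-Bayes-qu} with $\Dcal(q,p)=\kl(q\|p)$ and $m'=m$, then bound the exponential moment via Lemma~\ref{lem:xi} after swapping the $S$ and $P$ expectations. Your additional verification of the symmetry $\kl(q\|p)=\kl(1-q\|1-p)$ and of the condition $\phi(f)=\phi(c(f))$ is welcome explicit detail that the paper leaves implicit.
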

\begin{proof}
This result follows from
Theorem~\ref{thm:gen-pac-Bayes-qu} by choosing $\Dcal(q,p) =
\kl(q,p)$ and $m'=m$. The rest of the proof relies on Lemma~\ref{lem:xi} (as for the proof of Theorem~\ref{thm:pac-bayes-kl-g}).
\end{proof}

The following corollary is very similar to the original PAC-Bayesian bound of~\cite{m-03}, but without the $\KL$ term.

\begin{corollary} \label{cor:pac-bayes-McAllester-qu}
For any distribution $D$ on \mbox{$\Xcal\!\times\!\{-1,1\}$}, any self-complemented set $\Hcal$ of voters \mbox{$\Xcal \rightarrow [-1,1]$}, any prior distribution $P$ on $\Hcal$, and any $\dt\in (0,1]$, we have
\begin{equation*}
\prob{S\sim D^m}\left(\!\!
\begin{array}{l}
    \mbox{For all posteriors $Q$ aligned on $P$}: \\[1mm]
    R_D(G_Q) \, \le \, 
    R_S(G_Q) + \sqrt{ \dfrac{1}{2m}\LB \ln\tfrac{\xi(m)}{\dt}\RB }
\end{array}\!\!
\right) \ge\, 1 -\dt\,.
\end{equation*}
\end{corollary}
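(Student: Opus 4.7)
The plan is to mirror the proof of Corollary~\ref{cor:pac-bayes-McAllester}, but starting from the KL-free bound of Corollary~\ref{cor:pac-bayes-classic-qu} instead of from Corollary~\ref{cor:pac-bayes-classic}. Since Corollary~\ref{cor:pac-bayes-classic-qu} already establishes, with probability at least $1-\delta$ over $S \sim D^m$ and uniformly over all posteriors $Q$ aligned on $P$, that
\[
\kl\big(R_S(G_Q) \,\big\|\, R_D(G_Q)\big) \ \leq \ \tfrac{1}{m}\ln\tfrac{\xi(m)}{\delta}\,,
\]
the only remaining work is to convert this $\kl$-inequality into the advertised square-root form.

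First I would invoke Pinsker's inequality in the form $2(q-p)^2 \leq \kl(q\|p)$, which yields
\[
2\cdot \big(R_S(G_Q) - R_D(G_Q)\big)^2 \ \leq \ \tfrac{1}{m}\ln\tfrac{\xi(m)}{\delta}
\]
within the same high-probability event. Taking square roots and then isolating $R_D(G_Q)$ gives
\[
R_D(G_Q) \ \leq \ R_S(G_Q) + \sqrt{\tfrac{1}{2m}\ln\tfrac{\xi(m)}{\delta}}\,,
\]
where, as in the proof of Corollary~\ref{cor:pac-bayes-McAllester}, we simply discard the companion lower-bound event (dropping an event can only increase the probability, which remains $\geq 1-\delta$). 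This is precisely the stated inequality.

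There is no real obstacle here: the aligned-posterior restriction has already done all of the heavy lifting inside Corollary~\ref{cor:pac-bayes-classic-qu} via Lemma~\ref{lem:change-measure-aligned}, so the present corollary is a direct Pinsker-style relaxation. The only minor subtlety is to check that Pinsker's inequality is applied in the correct direction (bounding $R_D(G_Q)$ from above rather than from below), which is immediate because the one-sided relaxation preserves the high-probability event.
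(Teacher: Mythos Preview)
Your proposal is correct and matches the paper's own proof essentially line for line: the paper also derives the result from Corollary~\ref{cor:pac-bayes-classic-qu} via Pinsker's inequality $2(q-p)^2 \leq \kl(q\|p)$ and then isolates $R_D(G_Q)$.
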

\begin{proof}
 The result is derived from Corollary~\ref{cor:pac-bayes-classic-qu}, by using $2(q-p)^2 \leq \kl(q\|p)$ (Pinsker's inequality), and isolating $\RDGQ$ in the obtained inequality.
\end{proof}

Unlike Theorem~\ref{thm:gen-pac-Bayes}, Theorem~\ref{thm:gen-pac-Bayes-qu} cannot straightforwardly be used for pairs of voters, as we did in the proof of Theorem~\ref{thm:binouille}. The reason is that a posterior distribution that is the result of the product of two aligned posteriors is not necessarily aligned itself. So, we have to ensure that we can get rid of the $\KL$ term even in that case.

\bigskip
\subsection{PAC-Bayesian Theorems without $\KL$ for the Expected Disagreement $\dDQ$}
The following theorem is similar to Theorem~\ref{thm:gen-pac-Bayes-qu} for aligned posteriors, but deals with paired-voters. Instead of the linear loss $\linloss$, we use the loss $\loss_d$ of Equation~\eqref{eqn:lossd}, which is a linear loss defined on a pair of voters. Again, the next two results can be seen as a particular case of the two theorems from Appendix~\ref{section:gen-pac-Bayes-prod-qu}.

In this subsection, we use the following shorthand notation. Given $\fpaired = \langle f_i, f_j \rangle$ as defined in Definition~\ref{def:paired_voters}, the voters $f_{i^c j}$, $f_{ij^c}$ and $f_{i^c j^c}$ are defined as
$$f_{i^c j}(\xb)  \eqdef \langle c(f_i)(\xb), f_j(\xb) \rangle, \  
f_{ij^c}(\xb)  \eqdef \langle f_i(\xb), c(f_j)(\xb) \rangle, \ \mbox{and} \
f_{i^c j^c}(\xb) \eqdef \langle c(f_i)(\xb), c(f_j)(\xb) \rangle.$$

Recall that from Equation~\eqref{eq:H2Q2}, we have $\Hcal^2 \ \eqdef\ \{\fpaired \, :\, f_i, f_j \in \Hcal\}$ and $Q^{2}(\fpaired)\ \eqdef\ Q(f_i)\cdot Q(f_j)$. Similarly, we define $P^{2}(\fpaired)\ \eqdef\ P(f_i)\cdot P(f_j)$. Using this notation, let us first generalize the change of measure inequality of Lemma~\ref{lem:change-measure-aligned} to paired-voters.

\begin{lemma}\label{lem:change-measure-paired-aligned}\textbf{\emph{(Change of measure inequality for paired-voters and aligned posteriors)}} 
For any self-complemented set $\Hcal$, for any distribution $P$ on $\Hcal$, any distribution $Q$ aligned on $P$, and for any measurable function $\phi:\Hcal^2­\to \Reals$ such that $\phi(f_{ij}) = \phi(f_{i^c j}) = \phi(f_{ij^c}) = \phi(f_{i^c j^c})$ for all $f_{ij} \in \Hcal^2$, we have
\begin{equation*}
\esp{f_{ij} \sim Q^2} \phi(f_{ij}) \ \leq \ \ln \left( \esp{f_{ij} \sim P^2} e^{\phi(f_{ij})} \right) \,.
\end{equation*}
\end{lemma}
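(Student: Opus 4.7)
The plan is to mimic the proof of Lemma~\ref{lem:change-measure-aligned}, extended one dimension up. The key observation is that the fourfold symmetry assumption $\phi(f_{ij}) = \phi(f_{i^cj}) = \phi(f_{ij^c}) = \phi(f_{i^cj^c})$ is exactly what is needed to turn a $Q^2$-expectation into a $P^2$-expectation, using the alignment hypothesis one coordinate at a time.

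First, I would write
\begin{equation*}
4\cdot\esp{f_{ij}\sim Q^2}\phi(f_{ij}) \ =\ \esp{f_{ij}\sim Q^2}\Big[\phi(f_{ij})+\phi(f_{i^cj})+\phi(f_{ij^c})+\phi(f_{i^cj^c})\Big],
\end{equation*}
and then, exploiting the symmetry assumption, rewrite each of the four inner terms back as $\phi(f_{ij})$ while shifting the measure by applying $c$ to the corresponding coordinate(s). Concretely, this gives
\begin{equation*}
4\cdot\esp{f_{ij}\sim Q^2}\phi(f_{ij}) \ =\ \int_{\Hcal^2}\!df_{ij}\,\big(Q(f_i){+}Q(c(f_i))\big)\big(Q(f_j){+}Q(c(f_j))\big)\,\phi(f_{ij}).
\end{equation*}

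Second, I would apply the alignment property $Q(f)+Q(c(f)) = P(f)+P(c(f))$ separately to the two factors, converting the product of $Q$-sums into the analogous product of $P$-sums. Running the same algebraic manipulation backwards (using the fourfold symmetry of $\phi$ once more, this time to undo the grouping) yields
\begin{equation*}
\esp{f_{ij}\sim Q^2}\phi(f_{ij}) \ =\ \esp{f_{ij}\sim P^2}\phi(f_{ij}).
\end{equation*}

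Third, I would finish exactly as in Lemma~\ref{lem:change-measure-aligned}: write $\phi(f_{ij}) = \ln e^{\phi(f_{ij})}$ and apply Jensen's inequality (Lemma~\ref{lem:jensen}) to the concave function $\ln(\cdot)$ under the $P^2$-expectation.

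I expect no real obstacle: the only delicate point is bookkeeping the four summands and checking that the symmetry hypothesis on $\phi$ really does collapse all four terms onto the same value of $\phi(f_{ij})$, so that the factorization $(Q(f_i)+Q(c(f_i)))(Q(f_j)+Q(c(f_j)))$ cleanly emerges. Everything else is a direct lift of the one-voter proof.
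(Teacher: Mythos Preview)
Your proposal is correct and follows essentially the same approach as the paper's own proof: both establish the equality $\esp{f_{ij}\sim Q^2}\phi(f_{ij}) = \esp{f_{ij}\sim P^2}\phi(f_{ij})$ via the fourfold symmetry of $\phi$ and the factorization $(Q(f_i)+Q(c(f_i)))(Q(f_j)+Q(c(f_j)))$, then conclude with Jensen's inequality. The only difference is cosmetic ordering---the paper first applies the change of variables to produce four integrals and then invokes the symmetry of $\phi$, whereas you first invoke the symmetry inside one expectation and then shift the measure---but the algebra is identical.
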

\begin{proof}
First, note that one can change the expectation over $Q^2$ to an expectation over $P^2$, using the fact that $\phi(f_{ij}) = \phi(f_{i^c j}) = \phi(f_{ij^c}) = \phi(f_{i^c j^c})$ for any $f_{ij} \in \Hcal^2$, and that $Q$ is aligned on $P$. More specifically, we have the following.
{\small \allowdisplaybreaks[1]
\begin{align*}
 4\cdot&\esp{f_{ij}\sim Q^2}\phi(\fpaired) \  \\
&=\, \int_{\Hcal^2}\!\!\!\!\! d \fpaired Q^2(f_{ij})\, \phi(f_{ij}) +\! \int_{\Hcal^2}\!\!\!\!\! d \fpaired Q^2(f_{i^c j})\, \phi(f_{i^c j}) %
      +\! \int_{\Hcal^2}\!\!\!\!\! d \fpaired Q^2(f_{ij^c})\, \phi(f_{ij^c}) +\! \int_{\Hcal^2}\!\!\!\!\!  d\fpaired Q^2(f_{i^c j^c})\, \phi(f_{i^c j^c})\\[2mm]
&=\, \int_{\Hcal^2}\!\!\!\!\! d \fpaired \, Q^2(f_{ij})\, \phi(f_{ij}) + \int_{\Hcal^2}\!\!\!\!\! d \fpaired \, Q^2(f_{i^c j})\, \phi(f_{ij}) %
      + \int_{\Hcal^2} \!\!\!\!\! d \fpaired \, Q^2(f_{ij^c})\, \phi(f_{ij}) + \int_{\Hcal^2} \!\!\!\!\! d\fpaired \, Q^2(f_{i^c j^c})\, \phi(f_{ij})\\[2mm]
  &=\, \int_{\Hcal^2} \!\!\!\!\! d \fpaired \Big( Q^2(f_{ij}) + Q^2(f_{i^c j}) + Q^2(f_{ij^c}) + Q^2(f_{i^c j^c})\Big) \phi(f_{ij})\\[2mm]
  &=\, \int_{\Hcal^2} \!\!\!\!\! d \fpaired \Big( P^2(f_{ij}) + P^2(f_{i^c j}) + P^2(f_{ij^c}) + P^2(f_{i^c j^c})\Big) \phi(f_{ij})\\[-2mm]
&\ \, \vdots\\
&=\, 4\cdot\esp{f_{ij}\sim P^2}\phi(\fpaired)\,.
\end{align*}}The result is then obtained by changing the expectation over $Q^2$ to an expectation over $P^2$, and then by applying Jensen's inequality (Lemma~\ref{lem:jensen}, in Appendix~\ref{section:appendix_auxmath}).
\begin{eqnarray*}
\esp{\fpaired \sim Q^2} \phi(\fpaired) 
& = &\esp{\fpaired \sim P^2} \phi(\fpaired) %
\ = \ \esp{\fpaired \sim P^2} \ln e^{\phi(\fpaired)}  %
\ \leq \ \ln \left(\esp{\fpaired \sim P^2} e^{\phi(\fpaired)} \right)\,. %
\end{eqnarray*}\end{proof}

\begin{theorem}[PAC-Bayesian theorem for paired-voters and aligned posteriors] \label{thm:pac-Bayes-q2-qu}
For any distribution $D$ on \mbox{$\Xcal\!\times\!\{-1,1\}$}, any self-complemented set $\Hcal$ of voters \mbox{$\Xcal \rightarrow [-1,1]$}, any prior distribution $P$ on $\Hcal$, any convex function $\Dcal : [0,1]\times [0,1]
\rightarrow \Reals$ for which $\Dcal(q, p) = \Dcal(1-q, 1-p)$, for any $m'>0$ and any $\dt\in (0,1]$, we have
\begin{equation*}
\prob{S\sim D^m}\!\!\LP \!\!
\begin{array}{l}
 \mbox{For all posteriors $Q$ aligned on $P$}: \\[1mm]
 \Dcal\big(\,\dSQ,\, \dDQ\,\big)  \,\le\, 
  \dfrac{1}{m'}\!\LB 
  \ln\LP\dfrac{1}{\dt}\,\esp{S\sim D^m}\esp{\fpaired\sim P^2}\!\!\!e^{\,m'\cdot\Dcal(\Eaggloss{d}{S}(\fpaired),\,\Eaggloss{d}{D}(\fpaired))}\RP\RB 
\end{array}
\!\!\! \RP  \ge\, 1 -\dt \,,
\end{equation*}
where $\fpaired$ is given in Definition~\ref{def:paired_voters}, and where $P^2(\fpaired) \eqdef P(f_i)\cdot P(f_j).$ 
\end{theorem}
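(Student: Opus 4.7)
The plan is to mimic the proof of Theorem~\ref{thm:gen-pac-Bayes-qu} step by step, but replacing each use of Lemma~\ref{lem:change-measure-aligned} by its paired-voters counterpart Lemma~\ref{lem:change-measure-paired-aligned}, and using the loss $\loss_d$ rather than $\linloss$. So first I would note that $\esp{\fpaired\sim P^2}\exp\!\bigl[m'\Dcal(\Eaggloss{d}{S}(\fpaired),\Eaggloss{d}{D}(\fpaired))\bigr]$ is a nonnegative random variable depending on $S$, and apply Markov's inequality to get, with probability at least $1-\delta$,
\begin{equation*}
\esp{\fpaired\sim P^2}e^{m'\Dcal(\Eaggloss{d}{S}(\fpaired),\Eaggloss{d}{D}(\fpaired))} \ \le\ \frac{1}{\delta}\esp{S\sim D^m}\esp{\fpaired\sim P^2}e^{m'\Dcal(\Eaggloss{d}{S}(\fpaired),\Eaggloss{d}{D}(\fpaired))}.
\end{equation*}
Taking logarithms preserves this inequality, so the right-hand side of the theorem's conclusion will fall out as soon as we can move from the prior $P^2$ to an arbitrary aligned posterior $Q^2$ inside the logarithm.

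That transfer is exactly what Lemma~\ref{lem:change-measure-paired-aligned} does, provided I can set $\phi(\fpaired)\eqdef m'\Dcal(\Eaggloss{d}{S}(\fpaired),\Eaggloss{d}{D}(\fpaired))$ and check its four-way symmetry $\phi(f_{ij})=\phi(f_{i^cj})=\phi(f_{ij^c})=\phi(f_{i^cj^c})$. This is the only non-bookkeeping step, and I expect it to be the main obstacle. From the definition $\loss_d(\fpaired(\xb),y)=\linloss(f_i(\xb)f_j(\xb),1)$, complementing a single voter flips the sign of $f_i(\xb)f_j(\xb)$, so
\begin{equation*}
\loss_d(f_{i^cj}(\xb),y)\,=\,\loss_d(f_{ij^c}(\xb),y)\,=\,1-\loss_d(\fpaired(\xb),y),\qquad \loss_d(f_{i^cj^c}(\xb),y)\,=\,\loss_d(\fpaired(\xb),y).
\end{equation*}
Taking expectations over $(\xb,y)\sim D'$ yields $\Eaggloss{d}{D'}(f_{i^cj})=\Eaggloss{d}{D'}(f_{ij^c})=1-\Eaggloss{d}{D'}(\fpaired)$ and $\Eaggloss{d}{D'}(f_{i^cj^c})=\Eaggloss{d}{D'}(\fpaired)$, for both $D'=S$ and $D'=D$. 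The hypothesis $\Dcal(q,p)=\Dcal(1-q,1-p)$ then absorbs the two ``complemented'' cases, giving the required equality of all four values of $\phi$.

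Having secured the symmetry, I would apply Lemma~\ref{lem:change-measure-paired-aligned} to obtain, for every posterior $Q$ aligned on $P$,
\begin{equation*}
m'\esp{\fpaired\sim Q^2}\Dcal\bigl(\Eaggloss{d}{S}(\fpaired),\Eaggloss{d}{D}(\fpaired)\bigr)\ \le\ \ln\!\left(\esp{\fpaired\sim P^2}e^{m'\Dcal(\Eaggloss{d}{S}(\fpaired),\Eaggloss{d}{D}(\fpaired))}\right).
\end{equation*}
Then Jensen's inequality (the convexity of $\Dcal$, as in the proof of Theorem~\ref{thm:gen-pac-Bayes}) pushes the outer $Q^2$-expectation inside $\Dcal$, and Equation~\eqref{eq:esd_from_losses} identifies $\esp{\fpaired\sim Q^2}\Eaggloss{d}{S}(\fpaired)=\dSQ$ and $\esp{\fpaired\sim Q^2}\Eaggloss{d}{D}(\fpaired)=\dDQ$. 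Combining with the Markov step and dividing by $m'$ gives the theorem. No term involving $\KL(Q\|P)$ appears, since the change of measure inequality for aligned posteriors is KL-free.
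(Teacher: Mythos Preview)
Your proposal is correct and follows essentially the same approach as the paper: apply Markov's inequality, take logarithms, invoke Lemma~\ref{lem:change-measure-paired-aligned} with $\phi(\fpaired)=m'\Dcal(\Eaggloss{d}{S}(\fpaired),\Eaggloss{d}{D}(\fpaired))$ after verifying the four-way symmetry via the $\loss_d$ identities and the hypothesis $\Dcal(q,p)=\Dcal(1-q,1-p)$, then use Jensen and Equation~\eqref{eq:esd_from_losses}. The paper's proof is terser (it just says ``deduced from Theorem~\ref{thm:gen-pac-Bayes-qu}'' with the paired change-of-measure lemma swapped in), but your more explicit write-up matches it step for step.
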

\hspace{1cm}

\begin{proof}
Theorem~\ref{thm:pac-Bayes-q2-qu} is deduced from Theorem~\ref{thm:gen-pac-Bayes-qu}, by using the change of measure inequality given by Lemma~\ref{lem:change-measure-paired-aligned} instead of the one from Lemma~\ref{lem:change-measure-aligned}, with \mbox{$\phi(\fpaired) = m'\cdot\Dcal(\Eaggloss{d}{S}(\fpaired),\,\Eaggloss{d}{D}(\fpaired))$}. As the loss $\loss_d$ is such that
\begin{equation*}
\Eaggloss{d}{D'}(f_{i^c j^c}) \ =\ \Eaggloss{d}{D'}(f_{ij})\,,\quad \mbox{ and } \quad
\Eaggloss{d}{D'}(f_{i^c j}) \ =\ \Eaggloss{d}{D'}(f_{ij^c}) \ =\ 1-\Eaggloss{d}{D'}(f_{ij})\,,
\end{equation*}
we then have that $\phi(\fpaired)$ has the required property to apply Lemma~\ref{lem:change-measure-paired-aligned}.
\end{proof}

\noindent
Let us now specialize Theorem~\ref{thm:pac-Bayes-q2-qu} to $\Dcal(q,p) = \kl(q\|p)$. 
\begin{corollary}\label{thm:binouille-qu} 
For any distribution $D$ on \mbox{$\Xcal\!\times\!\{-1,1\}$}, any self-complemented set $\Hcal$ of voters \mbox{$\Xcal \rightarrow [-1,1]$}, any prior distribution $P$ on $\Hcal$, and any $\dt\in(0,1]$, we have
\begin{equation*}
\prob{S\sim D^m}\LP\!\!
\begin{array}{l}
\mbox{For all posteriors $Q$ aligned on $P$}: \\[1mm]
   \kl\big(\dSQ\,\|\,\dDQ\big) \ \le \ 
    \dfrac{1}{m}\,\LB \ln\tfrac{\xi(m)}{\dt}\RB
\end{array}
 \!\!\RP \ge \, 1 -\dt\,.
\end{equation*}
\end{corollary}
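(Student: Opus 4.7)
The plan is to obtain this corollary as a direct specialization of Theorem~\ref{thm:pac-Bayes-q2-qu} with the choices $\Dcal(q,p) = \kl(q\|p)$ and $m' = m$, following the same template as the derivation of Theorem~\ref{thm:pac-bayes-kl-g} from Theorem~\ref{thm:gen-pac-Bayes}. First I would verify that $\kl(\cdot\|\cdot)$ satisfies the symmetry hypothesis required by Theorem~\ref{thm:pac-Bayes-q2-qu}, namely $\kl(q\|p) = \kl(1-q\|1-p)$: this is immediate from the definition in Equation~\eqref{eq:small_kl} by swapping the two terms. Convexity of $\kl$ in its pair of arguments is already guaranteed by Corollary~\ref{cor:bintrin}, so the hypotheses of Theorem~\ref{thm:pac-Bayes-q2-qu} are met.

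Applying Theorem~\ref{thm:pac-Bayes-q2-qu} with these choices, the claim reduces to showing
\begin{equation*}
\esp{S\sim D^m}\esp{\fpaired\sim P^2}\exp\!\Big[m\cdot \kl\big(\Eaggloss{d}{S}(\fpaired)\,\|\,\Eaggloss{d}{D}(\fpaired)\big)\Big] \ \leq\ \xi(m).
\end{equation*}
Since the prior product distribution $P^2$ does not depend on the training sample $S$, I would first swap the two expectations via Fubini. Then I would apply Lemma~\ref{lem:xi}, viewing each paired-voter $\fpaired$ as a single ``voter'' taking values in $[-1,1]^2$ and equipped with the loss $\loss_d: [-1,1]^2 \times \{-1,1\} \to [0,1]$ defined in Equation~\eqref{eqn:lossd}. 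Lemma~\ref{lem:xi} was stated for an arbitrary voter and an arbitrary $[0,1]$-valued loss, so it applies verbatim to the pair $(\fpaired, \loss_d)$, yielding the upper bound $\xi(m)$ independently of $\fpaired$. Taking the outer expectation over $P^2$ then preserves the bound.

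Finally, substituting $\xi(m)$ into the conclusion of Theorem~\ref{thm:pac-Bayes-q2-qu} (with $m'=m$) and recalling from Equation~\eqref{eq:esd_from_losses} that $\E_{\fpaired\sim Q^2} \Eaggloss{d}{D'}(\fpaired) = \esd{d}{D'}{Q}$ for both $D' = S$ and $D' = D$, we obtain the stated inequality. I do not expect any genuine obstacle in this proof: the symmetry property of $\kl$, the independence of $P^2$ from $S$, and the direct applicability of Lemma~\ref{lem:xi} to the loss $\loss_d$ all combine cleanly, so the main thing to be careful about is simply checking that the hypotheses of Theorem~\ref{thm:pac-Bayes-q2-qu} and of Lemma~\ref{lem:xi} are satisfied in this instantiation.
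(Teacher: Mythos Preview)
Your proposal is correct and follows essentially the same approach as the paper: apply Theorem~\ref{thm:pac-Bayes-q2-qu} with $\Dcal(q,p)=\kl(q\|p)$ and $m'=m$, then bound the inner expectation by $\xi(m)$ via Lemma~\ref{lem:xi}. Your additional care in checking the symmetry $\kl(q\|p)=\kl(1-q\|1-p)$ and the applicability of Lemma~\ref{lem:xi} to the paired-voter loss $\loss_d$ is appropriate, and note that the left-hand side of Theorem~\ref{thm:pac-Bayes-q2-qu} is already stated in terms of $\dSQ$ and $\dDQ$, so invoking Equation~\eqref{eq:esd_from_losses} is not strictly necessary.
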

\begin{proof}
 The result is directly obtained from Theorem~\ref{thm:pac-Bayes-q2-qu}, 
 by choosing $\Dcal(q,p) = \kl(q,p)$.
 The rest of the proof relies on Lemma~\ref{lem:xi}.
\end{proof}

\noindent
Similarly as for Corollary~\ref{cor:pac-bayes-McAllester-qu}, we can easily derive the following result.

\begin{corollary} \label{cor:pac-bayes-McAllester-qu2}
For any distribution $D$ on \mbox{$\Xcal\!\times\!\{-1,1\}$}, for any self-complemented set $\Hcal$ of voters $\Xcal \rightarrow [-1,1]$, any prior distribution $P$ on $\Hcal$, and any $\dt\in (0,1]$, we have
\begin{equation*}
\prob{S\sim D^m}\left(\!\!
\begin{array}{l}
\mbox{For all posteriors $Q$ aligned on $P$}: \\[1mm]
    \dDQ \, \ge \,
    \dSQ - \sqrt{ \dfrac{1}{2m}\LB \ln\tfrac{\xi(m)}{\dt}\RB }
\end{array}\!\!
\right) \ge\, 1 -\dt\,.
\end{equation*}
\end{corollary}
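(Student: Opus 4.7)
The proof will follow exactly the same template as the derivation of Corollary~\ref{cor:pac-bayes-McAllester-qu} from Corollary~\ref{cor:pac-bayes-classic-qu}, except that we start from Corollary~\ref{thm:binouille-qu} instead of Corollary~\ref{cor:pac-bayes-classic-qu}. The plan is to combine the ``KL-free'' bound on $\kl(\dSQ \| \dDQ)$ for aligned posteriors with Pinsker's inequality, then isolate the quantity of interest.

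More precisely, I will first invoke Corollary~\ref{thm:binouille-qu} to assert that, with probability at least $1-\delta$ over $S \sim D^m$, every posterior $Q$ aligned on $P$ satisfies
\begin{equation*}
\kl\bigl(\dSQ \,\|\, \dDQ\bigr) \ \le \ \frac{1}{m}\ln\frac{\xi(m)}{\delta}\,.
\end{equation*}
Next, I apply Pinsker's inequality $2(q-p)^2 \le \kl(q\|p)$ with $q = \dSQ$ and $p = \dDQ$, giving
\begin{equation*}
2\bigl(\dSQ - \dDQ\bigr)^2 \ \le \ \frac{1}{m}\ln\frac{\xi(m)}{\delta}\,,
\end{equation*}
which, upon taking square roots, yields the two-sided bound $|\dSQ - \dDQ| \le \sqrt{\tfrac{1}{2m}\ln\tfrac{\xi(m)}{\delta}}$.

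Finally, to obtain the one-sided statement of the corollary, I drop the upper bound on $\dDQ$ (i.e.\ the event $\dDQ \le \dSQ + \sqrt{\cdots}$), which, as pointed out in the proof of Corollary~\ref{cor:pac-bayes-McAllester}, can only enlarge the probability of the remaining event. This leaves precisely
\begin{equation*}
\dDQ \ \ge \ \dSQ - \sqrt{\frac{1}{2m}\ln\frac{\xi(m)}{\delta}}
\end{equation*}
uniformly over all aligned posteriors $Q$, with probability at least $1-\delta$. There is no real obstacle here since every nontrivial ingredient (Corollary~\ref{thm:binouille-qu}, Pinsker's inequality, union-bound-style reasoning for omitting events) is already established in the preceding text; the proof is essentially a two-line calculation.
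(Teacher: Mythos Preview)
Your proposal is correct and follows exactly the same approach as the paper: derive the result from Corollary~\ref{thm:binouille-qu} via Pinsker's inequality $2(q-p)^2 \le \kl(q\|p)$, then isolate $\dDQ$ and keep only the relevant one-sided inequality. The paper's proof is just a one-line summary of what you wrote.
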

\begin{proof}
 The result is derived from Corollary~\ref{thm:binouille-qu}, by using $2(q-p)^2\leq  \kl(q\|p)$ (Pinsker's inequality), and isolating $\dDQ$ in the obtained inequality.
\end{proof}

\subsection{A Bound for the Risk of the Majority Vote without $\KL$ Term}  

Finally, we make use of these results to bound $\CDQ$ -- and therefore $\RDBQ$ -- for aligned posteriors $Q$,  giving rise to PAC-Bound~\ref{bound:variancebinouille-qu}.  Aside from the fact  that this bound has no $\KL$ term, it is similar to PAC-Bound~\ref{bound:variancebinouille},
as it separately bounds the Gibbs risk and the expected disagreement. 
 This new PAC-Bayesian bound provides us with a starting point to design the MinCq leaning algorithm introduced in Section~\ref{sect:mincq}.
\begin{pacbound}{3}\label{bound:variancebinouille-qu}
For any distribution $D$ on \mbox{$\Xcal\!\times\!\{-1,1\}$}, for any self-complemented set $\Hcal$ of voters \mbox{$\Xcal \rightarrow [-1,1]$}, for any prior distribution $P$ on $\Hcal$, and any $\dt\in (0,1]$, we have
\begin{equation*}
  \prob{S\sim D^m}\!\left(\!
  \begin{array}{l}
	\forall\,Q \mbox{ aligned on } P\,:\\
	   \RDBQ \,\, \leq \,\, 
   1-\dfrac{\big( \,1- 2\cdot \overline{r} \,\big)^2}{ 1 - 2\cdot \underline{d} } \ = \
   1-\dfrac{\big( \, \underline{\mu_1}_{} \, \big)^2 }{ \overline{\mu_2} } 
   \end{array}\!%
\right) \ge\, 1 -\dt\,,
\end{equation*}
where \\[-6mm]
\begin{align*}
 \overline{r} &\, \eqdef\, \min\left(\tfrac{1}{2},\, \RSGQ + \sqrt{ \mbox{\small $\frac{1}{2m}\LB \ln\frac{\xi(m)}{\dt/2}\RB$} } \right),
 &\quad
 \underline{d} &\, \eqdef\, \max\left(0,\, \dSQ - \sqrt{ \mbox{\small $\frac{1}{2m}\LB \ln\frac{\xi(m)}{\dt/2}\RB$} }\right),\\
 \underline{\mu_1} &\, \eqdef\, \max\left(0,\,\momentone(\MQ{S}) - \sqrt{ \mbox{\small $\frac{2}{m}\LB \ln\frac{\xi(m)}{\dt/2}\RB$} }\right),
 &\quad
 \overline{\mu_2} &\, \eqdef\, \min\left(1,\, \momenttwo(\MQ{S}) + \sqrt{  \mbox{\small $\frac{2}{m}\LB \ln\frac{\xi(m)}{\dt/2}\RB$}}\right).
\end{align*}
\end{pacbound}
\begin{proof}
 The inequality is a consequence of Theorem~\ref{thm:C-bound}, as well as Corollaries~\ref{cor:pac-bayes-McAllester-qu} and~\ref{cor:pac-bayes-McAllester-qu2}. The equality
 $1-\frac{( 1- 2\cdot \overline{r} )^2}{ 1 - 2\cdot \underline{d} } = 1-\frac{(\underline{\mu_1}_{} )^2 }{ \overline{\mu_2} } $
 is a direct application of Equations~\eqref{eq:RGibbsMq} and \eqref{eq:dQ_Mq}.
\end{proof}

PAC-Bound~\ref{bound:variancebinouille-qu-sc} that is presented at the end of Section~\ref{sec:sample-compression} accepts voters that are kernel functions defined using a part of the training set $S$. This is unusual in the PAC-Bayesian theory, since the prior $P$ on the set of voters has to be defined before seeing the training set~$S$. To overcome this difficulty, we use the sample compression theory. 

\section{PAC-Bayesian Theory for Sample-Compressed Voters}
\label{sec:sample-compression}

\newcommand{\sigb}{{\boldsymbol{\sigma}}}

\newcommand{\fsc}{{f_{(S_\ib, \sigma)}}}
\newcommand{\fscprim}{{f_{(\ib'\!\!, \sigma'\!)}}}
\newcommand{\fscpaired}{{f_{ (\ib, \sigma)(\ib'\!\!, \sigma'\!) }}}

\newcommand{\Rsc}{{\Rcal(S_\ib, \sigb)}}
\newcommand{\Rscprim}{{\Rcal(S_{\ib'}, \sigb')}}
\newcommand{\Rscpaired}{{\overline{\Rcal}(S_{\ib,\ib'}, \sigb,\sigb')}}

\newcommand{\Il}{{\Ical_\lambda}}
\newcommand{\Sl}{{\Sigma_\lambda}}
\newcommand{\ISl}{{\Il\times\Sl}}
\newcommand{\HSl}{\Hcal^\Rcal_{S,\lambda}}

\newcommand{\HSlk}{\Hcal^{\Rcal_k}_{S,1}}

PAC-Bayesian theorems of Sections~\ref{section:PAC-Bayes}~and~\ref{section:Further-PAC_Bayes} are not valid
when $\Hcal$ consists of a set of functions of the form $\pm k(\xb_i, \cdot)$ for some kernel \mbox{$k:\Xcal\times\Xcal\rightarrow[-1,1]$}, as is the case with the Support Vector Machine classifier (see Equation~\ref{eq:svm_intro}). This is because the  definition of each involved voter depends on an example $(\xb_i,y_i)$ of the training data $S$. This is problematic from the PAC-Bayesian point of view because the prior on the voters is supposed to be defined before seeing the data $S$. There are two known methods to overcome this problem.

 The first method, introduced by~\cite{ls-03}, considers a surrogate set of voters $\Hcal^k$ of \emph{all} the linear classifiers in the space induced\footnote{This space is also known as a Reproducible  Kernel Hilbert Space (RKHS). For more details, see \cite{cs-00} and~\citet{shs-01}} by the kernel $k$.
They then make use of the representer theorem to show that the classification function turns out to be a linear combination of the examples, similar to the Support Vector Machine classifier (Equation~\ref{eq:svm_intro}). To avoid the curse of dimensionality, they propose 
 restricting the choice of the prior and posterior distributions on $\Hcal^k$ to isotropic Gaussian centered on a vector representing a particular linear classifier. Based on this approach, \cite{gllm-09} suggests a learning algorithm for linear classifiers that exactly consists in a PAC-Bayesian bound minimization.

The second method, that is presented in the present section, is based on the sample compression setting of~\citet{fw-95}. It has been adapted to the PAC-Bayesian theory by~\citet{lm-05,lm-07}, allowing one to directly deal with the case where voters are constructed using examples in the training set, without involving any RKHS notion nor any representer theorem. Conversely to the first method described above, the sample compression approach allows one not only to deal with kernel functions, but with any kind of similarity measure between examples, hence to deal with any kind of voters.

\subsection{The General Sample Compression Setting}
In the \emph{sample compression setting}, learning algorithms have access to a data-dependent set of voters, that we refer to as \emph{sc-voters}. Given a training sequence\footnote{The sample compression theory considers the training examples as a sequence instead of a set, because it refers to the training examples by their indices.} $S=\tuple{(\xb_1,y_1), \dots,(\xb_{m},y_{m})}$, each sc-voter is described by a sequence~$S_\ib$ of elements of $S$ called the \textit{compression sequence}, 
and a \textit{message}~$\sigma$ which represents the additional information needed to obtain a voter from $S_\ib$. If $\ib=\langle i_1,i_2,..,i_k\rangle$, then $S_\ib\eqdef \langle(x_{i_1}, y_{i_1}),\, (x_{i_2}, y_{i_2}),\ldots,  (x_{i_k}, y_{i_k})\rangle$.
In this paper, repetitions are allowed in $S_\ib$, 
and $k$, the number of indices present in $\ib$ (counting the repetitions), is denoted by $\nib$.

\medskip
The fact that each sc-voter is described by a compression sequence and a message implies that there exists a \textit{reconstruction function}
$\Rcal(S_\ib,\sigma)$ that outputs a classifier
when given an arbitrary compression sequence $S_\ib$ and a message $\sigma$. The message $\sigma$ is chosen from
 the set~$\Sigma_{S_\ib}$
 of all messages that can be supplied with the compression sequence $S_\ib$. In the PAC-Bayesian setting, $\Sigma_{S_\ib}$ must be defined a priori (before
 observing the training data) for all possible sequences $S_\ib$, and can be either a discrete or a continuous set. The sample compression setting strictly generalizes the (classical) non-sample-compressed setting, since the latter corresponds to the case where $\nib=0$, the voters being then defined only via the messages.

\subsection{A Simplified Sample Compression Setting}
For the needs of this paper, we consider a simplified framework where sc-voters have a compression sequence of at most $\lambda$ examples (possibly with repetitions) and a message string of $\lambda$ bits that we represent by a sequence of ``$-1$'' and ``$+1$''.
Instead of being defined on sc-voters, the weighted distribution $Q$ is defined on $\ISl$, where 
\begin{eqnarray}
\label{eq:ISlambda}
\Il \, \eqdef\, \Big\{\langle i_1,i_2,..,i_k\rangle\, :\, k\in\{0,..,\lambda\}   \, \mbox{ and } \, i_j\in \{1,..,m\} \Big\}
\quad \mbox{ and } \quad 
\Sl \, \eqdef\,  \Big\{\!-1,1\Big\}^\lambda .
\end{eqnarray}
In other words, $Q(\ib,\sigb)$ corresponds to the weight of the sc-voter output by $\Rsc$, \ie, the sc-voter of compression sequence $\ib=\langle i_1,\ldots,i_\nib\rangle\in\Il$ 
and message 
\mbox{$\sigb = \langle \sigma_1,\ldots,\sigma_\lambda\rangle\in\Sl$}.
In particular, a prior (resp., a posterior) on the set of all sc-voters is now simply a prior on the set $\Ical_{\lambda}\times\Sigma_{\lambda}$. Thus, such a prior can really be defined \emph{a priori\/}, before seeing the data~$S$.\footnote{\cite{lm-07} describe a more general setting where, for each $S\in(\Xcal\times\Ycal)^m$, a prior is defined on
$\Ical_{\lambda}\times\Sigma_{S_\ib}$. Hence, the messages may depend on the compression sequence $S_\ib$.}
  The set of sc-voters is therefore only defined when the training sequence $S$ is given, and corresponds to 
\begin{eqnarray*}
\HSl \, \eqdef\, \{\Rsc \, :\, \ib\in\Il,\,\sigb\in\Sl\}\,.
\end{eqnarray*}

Finally, given a training sequence~$S$ and a reconstruction function $\Rcal$, for a distribution~$Q$ on $\Ical_{\lambda}\times\Sigma_{\lambda}$, we define the Bayes classifier as
\begin{equation*}
\BQS \ \eqdef\ \sgn\left[\esp{(\ib,\sigb)\sim Q} \, \Rsc \,\right]\,.
\end{equation*}
We then define the Bayes risk~$\RBQS$ and the Gibbs risk~$\RGQS$ of a distribution $Q$ on $\ISl$ relative to $D'$ as 
\begin{eqnarray*} %
\RBQS &\eqdef &  \Ezoloss{D'}\big( \BQS \big)\,,\\[1mm]
\RGQS &\eqdef &\esp{(\ib,\sigb)\sim Q} \ElinlossDprim\big( \, \Rsc \, \big)\,.
\end{eqnarray*}

\subsection{A First Sample-Compressed PAC-Bayesian Theorem}

To derive PAC-Bayesian bounds for majority votes of sc-voters, one must deal with the following issue: even if the training sequence $S$ is drawn i.i.d.\ from a data-generating distribution~$D$, the empirical risk of the Gibbs $\RSGQS$ is not an unbiased estimate of its true risk $\RDGQS$. For instance, the reconstruction function $\Rcal$ can be such that an sc-voter output by $\Rsc$ never errs on an example belonging to its compression sequence $S_\ib$; this biases the empirical risk because examples of $S_\ib$ are all in $S$.

To deal with this bias, the~$\frac{1}{m}$ factor in the usual PAC-Bayesian bounds is replaced by a factor of the form $\frac{1}{m-l}$ in their sample compression versions. In \citet{lm-05,lm-07}, $l$ corresponds to the $Q$-average size of the sample compression sequence. In  the present paper, we restrain ourselves to a simpler case, where $l$ is the maximum possible size of a compression sequence (\ie, $l=\lambda$).  This simplification allows us to deal with the biased character of the empirical Gibbs risk using a proof approach similar to the one proposed in \cite{gllms-11}. The key step of this approach is summarized in the following lemma.

\begin{lemma}
\label{lem:LaplaceTransform}
Let $\Rcal$ be a reconstruction function that outputs sc-voters of size at most~$\lambda$ $($where $\lambda<m)$.
For any distribution $D$ on \mbox{$\Xcal\!\times\!\{-1,1\}$}, and
for any prior distribution~$P$ on $\ISl$,
\begin{eqnarray*}
\hspace{-2cm}  \esp{S\sim D^m}\esp{(\ib,\sigb)\sim P}\!\!\!\!\!\! e^{(m-\lambda)\cdot2\cdot\big(\ElinlossS(\Rsc)-\ElinlossD(\Rsc)\big)^2} 
&\leq& 
e^{4\,\lambda} \cdot \xi(m\!-\!\lambda)\,,
\end{eqnarray*}
where $\xi(\cdot)$ is defined by Equation~\eqref{eq:xi}, and therefore we have that \,$\xi(m\!-\!\lambda)\leq 2\sqrt{m\!-\!\lambda}$\,.
\end{lemma}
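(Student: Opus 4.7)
The plan is to fix $(\ib,\sigb)\in\ISl$ and condition on the examples indexed by $\ib$, so that the sc-voter $f:=\Rsc$ becomes deterministic. Let $k\le\nib\le\lambda$ denote the number of distinct indices in $\ib$, and let $\ibc$ be the complement of those distinct indices in $\{1,\ldots,m\}$, so that the $|\ibc|=m-k\ge m-\lambda$ examples $\{(\xb_i,y_i):i\in\ibc\}$ are i.i.d.\ from $D$ and independent of~$f$. Writing $\ElinlossS(f)$ as a convex combination, I would decompose
\begin{equation*}
\ElinlossS(f)-\ElinlossD(f)\,=\,\tfrac{k\,a+(m-k)\,b}{m}\,,
\end{equation*}
where $a\in[-1,1]$ is the centered empirical loss of $f$ on the examples used in its definition (a constant once $S_\ib$ is fixed) and $b:=\ElinlossSibc(f)-\ElinlossD(f)$ is a centered average of $m-k$ i.i.d.\ $[0,1]$-valued random variables.

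The heart of the argument is a \emph{weighted} Young's inequality $(x+y)^2\le(1+\alpha)x^2+(1+\alpha^{-1})y^2$ with the specific choice $\alpha=(m-k)^2/(k(2m-k))$, which is engineered so that $(1+\alpha^{-1})(m-k)^2/m^2=1$ exactly. A short calculation then gives
\begin{equation*}
2(m-\lambda)\bigl(\ElinlossS(f)-\ElinlossD(f)\bigr)^2 \,\le\, \frac{2(m-\lambda)\,k\,a^2}{2m-k}+2(m-\lambda)\,b^2 \,\le\, 4\lambda+2(m-\lambda)\,b^2\,,
\end{equation*}
using $|a|\le1$, $2m-k\ge m$, and $k\le\lambda\le m$. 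This is exactly where the $e^{4\lambda}$ factor originates. The hard part is spotting the correct weighting: the naive split $(x+y)^2\le2x^2+2y^2$ would produce a $b^2$-coefficient of $4(m-\lambda)$, but the Laplace transform $\esp{}\,e^{4(m-\lambda)b^2}$ lies beyond the applicability of Maurer's lemma, which (combined with Pinsker's inequality) controls only up to $2(m-\lambda)$ in the exponent.

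The final step is to show $\esp{S_\ibc\mid S_\ib}e^{2(m-\lambda)b^2}\le\xi(m-\lambda)$. Since $|\ibc|=m-k\ge m-\lambda$, I would realize $b$ as the uniform average, over all size-$(m-\lambda)$ subsets $T\subset\ibc$, of the centered sub-averages $b_T:=\tfrac{1}{m-\lambda}\sum_{i\in T}\bigl(\linloss(f(\xb_i),y_i)-\ElinlossD(f)\bigr)$; since $z\mapsto e^{2(m-\lambda)z^2}$ is convex, Jensen's inequality yields
\begin{equation*}
e^{2(m-\lambda)b^2}\,\le\,\frac{1}{\binom{m-k}{m-\lambda}}\sum_T e^{2(m-\lambda)b_T^2}\,.
\end{equation*}
For each fixed~$T$, $b_T$ is a centered average of exactly $m-\lambda$ i.i.d.\ samples independent of~$f$, so Maurer's lemma combined with Pinsker's inequality (precisely as in the proof of Lemma~\ref{lem:xi}) gives $\esp{S_\ibc}e^{2(m-\lambda)b_T^2}\le\xi(m-\lambda)$. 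Putting everything together, the conditional expectation is at most $e^{4\lambda}\xi(m-\lambda)$; taking the outer expectations over $S_\ib$ and over $(\ib,\sigb)\sim P$ preserves the bound since it is independent of both, and the stated $\xi(m-\lambda)\le2\sqrt{m-\lambda}$ follows from Lemma~\ref{lem:xi}.
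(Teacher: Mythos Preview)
Your proof is correct and follows the same high-level strategy as the paper: split the training set into the compression-sequence part and its complement, absorb the compression contribution into an additive $4\lambda$ in the exponent, and invoke Pinsker plus Lemma~\ref{lem:xi} on the complement. The technical execution, however, differs in two places.

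First, the paper simply writes $\ElinlossS(\Rsc)=\frac{1}{m}\big[\lambda\,\ElinlossSib(\Rsc)+(m{-}\lambda)\,\ElinlossSibc(\Rsc)\big]$, treating the compression as having size exactly~$\lambda$, and then expands the square directly. After bounding $|\ElinlossSib-\ElinlossSibc|\le1$ and the cross term by $\tfrac{2\lambda}{m}$, it obtains $2\lambda\big(2-\tfrac{\lambda}{m}-(\tfrac{\lambda}{m})^2\big)\le4\lambda$ for the constant part. Your weighted Young inequality with $\alpha=(m{-}k)^2/\big(k(2m{-}k)\big)$ is engineered so that the $b^2$-coefficient lands on $2(m{-}\lambda)$ \emph{exactly}, and the residual $2(m{-}\lambda)k/(2m{-}k)$ is in fact $\le2\lambda$, slightly sharper than the paper's $4\lambda$ (you simply relaxed to match the stated constant).

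Second, because the paper assumes $|\ibc|=m-\lambda$, it applies Lemma~\ref{lem:xi} directly. You allow $k\le\lambda$ distinct indices, so $|\ibc|=m-k\ge m-\lambda$, and you bridge the gap with a Hoeffding/U-statistic device: write $b$ as the uniform average of sub-averages $b_T$ over all $(m{-}\lambda)$-subsets $T\subset\ibc$, then push the convex map $z\mapsto e^{2(m-\lambda)z^2}$ inside via Jensen. This is a clean way to handle variable-size compressions that the paper sidesteps by (implicitly) padding $\ib$ to length~$\lambda$. Both routes are valid; yours is more explicit about the case $|\ib|<\lambda$ at the cost of one extra lemma-level step.
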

\begin{proof}
As the the choice of $(\ib,\sigb)$ according to the prior $P$ is independent\footnote{Note that because of this independence, the exchange in the order of the two expectations (Line~\ref{eq:proof-sc-newa}) is trivial. This independence is a direct consequence of our choice to only consider the simplified setting described by Equation~\eqref{eq:ISlambda}. In the more general setting of \citet{lm-07}, this part of the proof is more complicated.} of $S$, we have
{\small
\begin{eqnarray} 
\nonumber
& & \hspace{-2cm} \esp{S\sim D^m}\esp{(\ib,\sigb)\sim P}\!\!\!\!\!\! e^{(m-\lambda)\cdot2\cdot\big(\ElinlossS(\Rsc)-\ElinlossD(\Rsc)\big)^2} \\
\label{eq:proof-sc-newa}
&=& 
\esp{(\ib,\sigb)\sim P} \esp{S\sim D^m}\!\!\!\!\! e^{(m-\lambda)\cdot2\cdot\big(\ElinlossS(\Rsc)-\ElinlossD(\Rsc)\big)^2} \\
\label{eq:proof-sc-new}
&=& 
\esp{(\ib,\sigb)\sim P}  \esp{S_\ib\sim D^\lambda} \esp{S_\ibc\sim D^{m-\lambda}}\!\!\!\!\! e^{(m-\lambda)\cdot2\cdot\big(\ElinlossS(\Rsc)-\ElinlossD(\Rsc)\big)^2}.
\end{eqnarray}
}

\vspace{-4mm}
Let us now rewrite the empirical loss of an sc-voter as a combination of the loss on its compression sequence $S_\ib$ and the loss on the other training examples $S_\ibc$.
\begin{eqnarray*}
 \ElinlossS(\Rsc) 
 & = & \frac{1}{m} \left[ \lambda \cdot \ElinlossSib(\Rsc)+(m\!-\!\lambda)\cdot\ElinlossSibc(\Rsc) \right]\,.
\end{eqnarray*}

\noindent
Since $0\leq \ElinlossDprim(\Rsc) \leq 1$ \ and \ $2\cdot(q-p)^2 \leq \kl(q\|p)$ (Pinsker's inequality), we have
{\small
\begin{eqnarray}
\nonumber
\hspace{.2cm} & & \hspace{-1.2cm} (m-\lambda)\cdot2\cdot\Big(\ElinlossS(\Rsc)-\ElinlossD(\Rsc)\Big)^2 \\
\nonumber
 &=& 
 (m-\lambda) \cdot 2\cdot\Big(\tfrac{1}{m} \big[ \lambda \cdot \ElinlossSib(\Rsc)+(m\!-\!\lambda)\cdot\ElinlossSibc(\Rsc) \big] -\ElinlossD(\Rsc)\Big)^2 \\
\nonumber
 &=& 
 (m-\lambda) \cdot 2\cdot\Big(\tfrac{\lambda}{m}\big[ \ElinlossSib(\Rsc)-\ElinlossSibc(\Rsc)\big]+\big[\ElinlossSibc(\Rsc) -\ElinlossD(\Rsc)\big]\Big)^2 \\
 \nonumber
 &=&   
 (m-\lambda) \cdot 2\cdot\Big( \big(\tfrac{\lambda}{m}\big)^2 \big[ \ElinlossSib(\Rsc)-\ElinlossSibc(\Rsc)\big]^2 + 
 \big[\ElinlossSibc(\Rsc) \!-\!\ElinlossD(\Rsc)\big]^2  \\
 \nonumber & & \hspace{35mm} +\, \tfrac{2\lambda}{m} \big[ \ElinlossSib(\Rsc)-\ElinlossSibc(\Rsc)\big] \big[\ElinlossSibc(\Rsc) \!-\!\ElinlossD(\Rsc)\big] \Big)\\
\nonumber
 &\leq&   
      (m-\lambda) \cdot 2\cdot\Big( \big(\tfrac{\lambda}{m}\big)^2 + 
    \big[\ElinlossSibc(\Rsc) -\ElinlossD(\Rsc)\big]^2 + \tfrac{2\lambda}{m}\Big) \\
\nonumber
 &=&   
      2\, \lambda \cdot \Big(2 - \tfrac{\lambda}{m} - \big(\tfrac{\lambda}{m}\big)^2 \Big) + (m-\lambda) \cdot 2\cdot
      \big[\ElinlossSibc(\Rsc) -\ElinlossD(\Rsc)\big]^2 \\
\nonumber
 &\leq& 
 4\,\lambda + (m-\lambda) \cdot 2\cdot\big[\ElinlossSibc(\Rsc) -\ElinlossD(\Rsc)\big]^2 \\
 \label{eq:pudimagination}
 &\leq& 
 4\,\lambda + (m-\lambda) \cdot \kl\big(\ElinlossSibc(\Rsc) \,\|\, \ElinlossD(\Rsc)\big)
 \,.
\end{eqnarray}
}Note that $\Rsc$ does not depend on examples contained in $S_{\ibc}$. Thus,  from the point of view of $S_{\ibc}$, \ $\Rsc$ is a classical voter (not a sample-compressed one). Therefore, one can apply Lemma~\ref{lem:xi}, replacing $S\!\sim\! D^m$\,  by  
$S_{\ibc}\!\sim\! D^{m-\lambda}$, and~$f$~by~$\Rsc$. Lemma~\ref{lem:xi}, together with Equations~\eqref{eq:proof-sc-new} and~\eqref{eq:pudimagination}, gives
\begin{eqnarray*}
   & & \hspace{-2cm} \esp{(\ib,\sigb)\sim P}  \esp{S_\ib\sim D^\lambda} \esp{S_\ibc\sim D^{m-\lambda}}\!\!\!\! e^{(m-\lambda)\cdot2\cdot\big(\ElinlossS(\Rsc)-\ElinlossD(\Rsc)\big)^2} \\[-1mm]
   &\leq& e^{4\,\lambda} \cdot \esp{(\ib,\sigb)\sim P}  \esp{S_\ib\sim D^\lambda} \esp{S_\ibc\sim D^{m-\lambda}}\!\!\!\! e^{ (m-\lambda) \cdot \kl\big(\ElinlossSibc(\Rsc) \,\|\, \ElinlossD(\Rsc)\big)} \\
   &\leq& e^{4\,\lambda} \cdot \esp{(\ib,\sigb)\sim P} \esp{S_\ib\sim D^\lambda} \xi(m\!-\!\lambda) %
   \ =\ e^{4\,\lambda} \cdot \xi(m\!-\!\lambda)\,,
\end{eqnarray*}
and we are done.
\end{proof}
The next PAC-Bayesian theorem presents the generalization of McAllester's PAC-Bayesian bound (Corollary~\ref{cor:pac-bayes-McAllester}) for the sample compression case.
\begin{theorem} \label{thm:McAllester-sc}
Let $\Rcal$ be a reconstruction function that outputs sc-voters of size at most~$\lambda$ $($where $\lambda<m)$.
For any distribution $D$ on \mbox{$\Xcal\!\times\!\{-1,1\}$}, for any prior distribution~$P$ on $\ISl$ , and any $\dt\in (0,1]$, we have
\begin{equation*}
\prob{S\sim D^m}\left(
\begin{array}{l}
   \mbox{For all posteriors $Q$ \,:}\\[1mm]
    \RDGQS\,\le\,
    \RSGQS + \sqrt{ \dfrac{1}{2(m\!-\!\lambda)}\LB \KL(Q\|P) + 4\lambda + \ln\tfrac{\xi(m-\lambda)}{\dt}\RB }
\end{array}
\right) \ge\, 1 -\dt\,.
\end{equation*}
\end{theorem}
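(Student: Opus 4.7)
The plan is to follow the standard PAC-Bayesian proof template, with Lemma~\ref{lem:LaplaceTransform} providing exactly the Laplace transform bound needed to handle the sample-compression bias. The roadmap mirrors the proof of Theorem~\ref{thm:gen-pac-Bayes}, but with $\Dcal(q,p) = 2(q-p)^2$, with $m'=m-\lambda$, and with the set of voters replaced by $\Ical_\lambda \times \Sigma_\lambda$.

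First, I would observe that $\Eb_{(\ib,\sigb)\sim P} \, e^{(m-\lambda)\cdot 2 (\ElinlossS(\Rsc)-\ElinlossD(\Rsc))^2}$ is a non-negative random variable in $S$. Applying Markov's inequality (Lemma~\ref{lem:markov}) together with Lemma~\ref{lem:LaplaceTransform} yields, with probability at least $1-\delta$ over the draw of $S\sim D^m$,
\begin{equation*}
\esp{(\ib,\sigb)\sim P}\!\!\! e^{(m-\lambda)\cdot 2 (\ElinlossS(\Rsc)-\ElinlossD(\Rsc))^2} \ \le\ \frac{1}{\delta}\cdot e^{4\lambda}\cdot\xi(m\!-\!\lambda)\,.
\end{equation*}
Taking the logarithm of both sides preserves the inequality.

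Second, I would apply the change of measure inequality (Lemma~\ref{lem:change-measure}), with $\phi(\ib,\sigb) = (m-\lambda)\cdot 2 (\ElinlossS(\Rsc)-\ElinlossD(\Rsc))^2$, to pass from the prior $P$ to an arbitrary posterior $Q$ on $\Ical_\lambda\times\Sigma_\lambda$. This produces a $\KL(Q\|P)$ term on the right-hand side, giving
\begin{equation*}
(m\!-\!\lambda)\cdot 2\cdot \esp{(\ib,\sigb)\sim Q}(\ElinlossS(\Rsc)-\ElinlossD(\Rsc))^2 \ \le\ \KL(Q\|P) + 4\lambda + \ln\tfrac{\xi(m-\lambda)}{\delta}\,.
\end{equation*}
Next, since the function $(q,p)\mapsto(q-p)^2$ is convex on $[0,1]^2$, Jensen's inequality (Lemma~\ref{lem:jensen}) lets me pull the expectation over $(\ib,\sigb)\sim Q$ inside the square, so that the left-hand side becomes $(m\!-\!\lambda)\cdot 2\cdot \big(\RSGQS - \RDGQS\big)^2$, using the defining identities $\RSGQS = \esp{(\ib,\sigb)\sim Q}\ElinlossS(\Rsc)$ and $\RDGQS = \esp{(\ib,\sigb)\sim Q}\ElinlossD(\Rsc)$.

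Finally, I isolate $\RDGQS$ by taking square roots and dropping the lower-bound half of the resulting two-sided inequality (this only increases the probability of the event), which yields the stated bound. The only real subtlety is making sure that the change-of-measure and Markov steps are done in the correct order, so that the bound holds uniformly over all posteriors $Q$ for a single high-probability event on $S$; this is standard. The $4\lambda$ overhead and the $(m-\lambda)$ denominator are entirely inherited from Lemma~\ref{lem:LaplaceTransform}, so no new sample-compression-specific argument is needed here.
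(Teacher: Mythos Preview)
Your proposal is correct and follows essentially the same approach as the paper: the paper's proof simply says to apply the steps of Theorem~\ref{thm:gen-pac-Bayes} with $m'=m-\lambda$, $\Dcal(q,p)=2(q-p)^2$, and $f=\Rsc$, then invoke Lemma~\ref{lem:LaplaceTransform} and isolate $\RDGQS$. You have spelled out exactly those steps (Markov, logarithm, change of measure, Jensen on the convex $(q,p)\mapsto(q-p)^2$, then square root), so there is no substantive difference.
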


\bigskip
\begin{proof}
We apply the exact same steps as in the proof of
Theorem~\ref{thm:gen-pac-Bayes}, with $m'=m-\lambda$,\, $f=\Rsc$, \,and\, $\Dcal(q,p) = 2(q-p)^2$, we obtain
{\small
\begin{align*}
\prob{S\sim D^m}\!\!\LP \!\!
\begin{array}{l}
  \text{For all posteriors}\, Q\,\,\colon \\
 \! 2\Big( \RSGQS\!-\!\RDGQS \Big)^2\\
  \quad \le 
  \dfrac{1}{m\!-\!\lambda}\!\LB  \KL(Q\|P) +
  \ln\!\LP\dfrac{1}{\dt}\esp{S\sim D^m}\esp{(\ib,\sigb)\sim P}\!\!\!\!\!\! e^{(m-\lambda)\cdot2\cdot\big(\ElinlossS(\Rsc)-\ElinlossD(\Rsc)\big)^2}\RP\RB 
\end{array}
\!\!\!
 \RP %
 \mbox{\normalsize $\ge \, 1 -\dt$} \, .
\end{align*}
}The result then follows from Lemma~\ref{lem:LaplaceTransform} and easy calculations.
\end{proof}

All the PAC-Bayesian results presented in the preceding sections can be similarly generalized. We leave them to the reader with the exception of the PAC-Bayesian bounds that have no $\KL$, that are used in the next section, as we present the learning algorithm MinCq that minimizes the \Cbound.

\subsection{Sample-Compressed PAC-Bayesian Bounds without $\KL$}

The bounds presented in this section generalize the results presented in Section~\ref{section:Further-PAC_Bayes} to the sample compression case.  We first need to generalize the notion of self-complement (Definition~\ref{def:H_autocomp}) to sc-voters.

\begin{definition} \rm \label{def:R_autocomp}
A reconstruction function $\Rcal$ is said to be \emph{self-complemented} if for any training sequence $S\in(\Xcal\times\Ycal)^m$ and any $(\ib,\sigb)\in\Il \times \Sl$, we have
\begin{eqnarray*}
-\Rsc&=&\Rcal(S_\ib,-\sigb)\,,
\end{eqnarray*}
where, if \ $\sigb=\langle \sigma_1, .., \sigma_\lambda\rangle$,\  then  \ $-\sigb=\langle -\sigma_1, .., -\sigma_\lambda\rangle$.
\end{definition}

\subsubsection{A PAC-Bayesian Theorem for the Gibbs Risk of Sc-Voters}

\begin{theorem} \label{thm:McAllester-sc-qu}
Let $\Rcal$ be a self-complemented reconstruction function that outputs sc-voters of size at most~$\lambda$ $($where $\lambda<m)$.
For any distribution $D$ on \mbox{$\Xcal\!\times\!\{-1,1\}$}, for any prior distribution~$P$ on $\ISl$ , and any $\dt\in (0,1]$, we have
\begin{equation*}
\prob{S\sim D^m}\left(
\begin{array}{l}
   \mbox{For all posteriors $Q$ aligned on $P$\,:}\\[1mm]
    \RDGQS\,\le\,
    \RSGQS + \sqrt{ \dfrac{1}{2(m\!-\!\lambda)}\LB 4\lambda + \ln\tfrac{\xi(m-\lambda)}{\dt}\RB }
\end{array}
\right) \ge\, 1 -\dt\,.
\end{equation*}
\end{theorem}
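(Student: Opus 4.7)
The plan is to combine the ingredients of Theorem~\ref{thm:McAllester-sc} (which gives the $4\lambda + \ln\xi(m-\lambda)$ term via Lemma~\ref{lem:LaplaceTransform}) with the idea behind Corollary~\ref{cor:pac-bayes-McAllester-qu} (which gets rid of the $\KL$ term by restricting to aligned posteriors). The key step will be to establish a sample-compressed analogue of the change of measure inequality for aligned posteriors (Lemma~\ref{lem:change-measure-aligned}), now formulated over $\ISl$.

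First I would extend the notion of alignment to distributions over $\ISl$: a posterior $Q$ is aligned on a prior $P$ whenever $Q(\ib,\sigb) + Q(\ib,-\sigb) = P(\ib,\sigb) + P(\ib,-\sigb)$ for all $(\ib,\sigb)\in\ISl$. The analogue of Lemma~\ref{lem:change-measure-aligned} then reads: for any measurable $\phi:\ISl\to\mathbb{R}$ satisfying $\phi(\ib,\sigb)=\phi(\ib,-\sigb)$, one has $\Eb_{(\ib,\sigb)\sim Q}\phi \leq \ln \Eb_{(\ib,\sigb)\sim P} e^{\phi}$. The proof is a verbatim copy of Lemma~\ref{lem:change-measure-aligned}, pairing each $(\ib,\sigb)$ with its complement $(\ib,-\sigb)$ and using the alignment identity to swap the expectation over $Q$ for one over $P$, followed by Jensen's inequality on $\ln(\cdot)$.

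Next I would apply this inequality with $\phi(\ib,\sigb)=(m-\lambda)\cdot 2\big(\ElinlossS(\Rsc)-\ElinlossD(\Rsc)\big)^2$. The required symmetry $\phi(\ib,\sigb)=\phi(\ib,-\sigb)$ is precisely where self-complementation of $\Rcal$ is used: since $\Rcal(S_\ib,-\sigb)=-\Rsc$, we get $\linloss(-\Rsc(\xb),y)=1-\linloss(\Rsc(\xb),y)$, hence $\ElinlossDprim(\Rcal(S_\ib,-\sigb))=1-\ElinlossDprim(\Rsc)$ for any distribution $D'$, so that $\ElinlossS(\Rcal(S_\ib,-\sigb))-\ElinlossD(\Rcal(S_\ib,-\sigb))=-\bigl(\ElinlossS(\Rsc)-\ElinlossD(\Rsc)\bigr)$, and squaring preserves the value.

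Then I would follow the exact same calculation as in the proof of Theorem~\ref{thm:McAllester-sc}: apply Markov's inequality to the non-negative random variable $\Eb_{(\ib,\sigb)\sim P} e^{\phi(\ib,\sigb)}$ and take logarithms; bound its expectation by $e^{4\lambda}\cdot\xi(m-\lambda)$ via Lemma~\ref{lem:LaplaceTransform}; use Jensen's inequality (convexity of $(\cdot)^2$) to move the expectation over $Q$ inside the square, giving $2(m-\lambda)\bigl(\RSGQS-\RDGQS\bigr)^2\le 4\lambda+\ln\tfrac{\xi(m-\lambda)}{\dt}$; and finally isolate $\RDGQS$. The main obstacle, which is really the only nonroutine part, is verifying the symmetry $\phi(\ib,\sigb)=\phi(\ib,-\sigb)$ under self-complementation so that the new change of measure inequality applies; everything else is a bookkeeping merge of the two proof templates already in the paper.
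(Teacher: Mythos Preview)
Your proposal is correct and follows essentially the same route as the paper. The paper's proof invokes Theorem~\ref{thm:gen-pac-Bayes-qu} with $\Dcal(q,p)=2(q-p)^2$ and $m'=m-\lambda$, noting that $2(q-p)^2=2((1-q)-(1-p))^2$ (the symmetry you spell out via self-complementation), and then applies Lemma~\ref{lem:LaplaceTransform}; your version makes the same argument but is more explicit about extending the alignment notion and the change-of-measure inequality to $\ISl$.
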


\begin{proof}
First note that $2\!\cdot (q-p)^2=2\!\cdot ((1-q)-(1-p))^2$.
Then apply the exact same steps as in the proof of Theorem~\ref{thm:gen-pac-Bayes-qu}  with $m'=m-\lambda$,\, $f=\Rsc$, \,and\, $\Dcal(q,p) = 2(q-p)^2$ to obtain
{\small
\begin{align*}
\prob{S\sim D^m}\!\!\LP \!\!
\begin{array}{l}
  \text{For all posteriors}\, Q\,\,\text{aligned on}\,\,P\colon \\
 \! 2\Big( \RSGQS\!-\!\RDGQS \Big)^2
  \!\! \le \!
  \dfrac{1}{m\!-\!\lambda}\!\LB 
  \ln\!\LP\!\dfrac{1}{\dt}\esp{S\sim D^m}\!\esp{(\ib,\sigb)\sim P}\!\!\!\!\!\! e^{(m-\lambda)\cdot2\cdot\big(\ElinlossS(\Rsc)-\ElinlossD(\Rsc)\big)^2}\RP\!\RB 
\end{array}
\!\!\!
 \RP \\
\mbox{\normalsize $\ge \, 1 -\dt$\,.\phantom{.}} 
\end{align*}
}The result then follows from Lemma~\ref{lem:LaplaceTransform} and easy calculations.
\end{proof}

\subsubsection{A PAC-Bayesian Theorem for the Disagreement of Sc-Voters}
Given a training sequence~$S$ and a reconstruction function $\Rcal$, we define the expected disagreement of a distribution $Q$ on $\ISl$ relative to $D'$ as 
\begin{eqnarray*} 
  \dQS &\eqdef& \esp{\xb\sim D_\Xcal'} \esp{(\ib,\sigb)\sim Q} \esp{(\ib',\sigb')\sim Q}  \linloss \,\big(\,\Rsc(\xb), \Rscprim(\xb) \,\big)\\
  &=& \esp{(\ib, \ib', \sigb, \sigb') \sim Q^2}\!\! \Eaggloss{d}{D'} \Big( \Rscpaired    \Big) \, ,
\end{eqnarray*}
where 
\begin{eqnarray*}
Q^2(\ib, \ib', \sigb, \sigb')&\eqdef &Q(\ib,\sigb)\cdot Q(\ib',\sigb')\,,\\
\Rscpaired(x) &\eqdef& \tuple{\Rsc(x), \Rscprim(x)}\,.
\end{eqnarray*}

\noindent
Thus, $\overline{\Rcal}$ is a new reconstruction function that outputs an \emph{sc-paired-voter} which is the sample-compressed version of the paired-voter of Definition~\ref{def:paired_voters}. From there, we adapt Corollary~\ref{cor:pac-bayes-McAllester-qu2} to sc-voters, and we obtain the following PAC-Bayesian theorem. This result bounds $\dDQS$ for posterior distributions $Q$ aligned on a prior distribution $P$.

\begin{theorem}
 \label{thm:aligned-sc-disagreement}
Let $\Rcal$ be a self-complemented reconstruction function that outputs sc-voters of size at most~$\lambda$ $($where $\lambda<\lfloor\frac{m}{2}\rfloor)$. For any distribution $D$ on \mbox{$\Xcal\!\times\!\{-1,1\}$}, for any prior distribution $P$ on $\ISl$, and any $\dt\in (0,1]$, we have
\begin{equation*}
\prob{S\sim D^m}\left(
\begin{array}{l}
  \mbox{For all posteriors $Q$ aligned on $P$}: \\[1mm]
    \dDQS \,\ge\,
    \dSQS - \sqrt{ \dfrac{1}{2(m\!-\!2\, \lambda)}\LB 8 \lambda + \ln\tfrac{\xi(m-2\, \lambda)}{\dt}\RB }
\end{array}
\right) \ge\, 1 -\dt\,.
\end{equation*}
\end{theorem}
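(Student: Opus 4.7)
The plan is to mirror the derivations of Theorem~\ref{thm:McAllester-sc-qu} and Corollary~\ref{cor:pac-bayes-McAllester-qu2}, applied to the disagreement loss $\loss_d$ on the sc-paired-voter $\Rscpaired$ rather than to the linear loss $\linloss$ on a single sc-voter. The final bound on $\dDQS$ will follow by instantiating a PAC-Bayesian inequality with $\Dcal(q,p)=2(q-p)^2$ (so that Pinsker's inequality applies), then isolating $\dDQS$ and discarding the upper-bound direction, which can only enlarge the probability.

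Following the template of the proof of Theorem~\ref{thm:gen-pac-Bayes-qu}, transposed to the paired-voter and sample-compression setting, I would apply Markov's inequality to the non-negative random variable $\esp{(\ib,\ib',\sigb,\sigb')\sim P^2} e^{(m-2\lambda)\cdot 2(\Eaggloss{d}{S}(\Rscpaired)-\Eaggloss{d}{D}(\Rscpaired))^2}$ and then invoke the change-of-measure inequality for paired-voters and aligned posteriors (Lemma~\ref{lem:change-measure-paired-aligned}) with $\phi(\ib,\ib',\sigb,\sigb')=(m-2\lambda)\cdot 2(\Eaggloss{d}{S}(\Rscpaired)-\Eaggloss{d}{D}(\Rscpaired))^2$. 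The required four-way invariance $\phi(f_{ij})=\phi(f_{i^c j})=\phi(f_{ij^c})=\phi(f_{i^c j^c})$ holds because $\Rcal$ is self-complemented (so that $\sigb\mapsto-\sigb$ flips the sign of $\Rsc$) and because $\loss_d(\fpaired(\xb),y)$ depends only on the product $f_i(\xb)f_j(\xb)$: flipping both voters' signs leaves $\Eaggloss{d}{D'}$ unchanged, while flipping a single sign sends $\Eaggloss{d}{D'}$ to $1-\Eaggloss{d}{D'}$, and the squared difference is preserved since $(q-p)^2=((1-q)-(1-p))^2$. Pushing the $Q^2$-expectation inside the convex map $2(\cdot)^2$ via Jensen's inequality then yields an intermediate bound of the form $2(\dSQS-\dDQS)^2\le\tfrac{1}{m-2\lambda}\ln(\tfrac{\Lambda}{\delta})$ without any $\KL$ term, where $\Lambda$ is the Laplace-transform expectation just displayed.

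The main technical step is the paired-voter analog of Lemma~\ref{lem:LaplaceTransform}: show that $\Lambda\le e^{8\lambda}\cdot\xi(m-2\lambda)$. The key observation is that $\Rscpaired$ is itself a single sample-compressed object whose compression sequence is the concatenation $S_{\ib,\ib'}$, of length at most $2\lambda$; it is therefore independent of any training example whose index lies outside $\ib\cup\ib'$, and $\loss_d$ takes values in $[0,1]$. These are exactly the two properties exploited by the proof of Lemma~\ref{lem:LaplaceTransform}, so that argument applies verbatim with $\lambda$ replaced everywhere by $2\lambda$: the same convex decomposition of $\Eaggloss{d}{S}(\Rscpaired)$ into its $S_{\ib,\ib'}$-part and its complementary part, the same expansion of the square with cross terms bounded using $\loss_d\in[0,1]$, and the same Pinsker step produce the additive constant $2\cdot 2\lambda\cdot(2-\tfrac{2\lambda}{m}-(\tfrac{2\lambda}{m})^2)\le 8\lambda$, after which Lemma~\ref{lem:xi} applied to the $m-2\lambda$ remaining i.i.d.\ examples contributes the factor $\xi(m-2\lambda)$.

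Combining the two ingredients gives $2(\dSQS-\dDQS)^2\le\tfrac{1}{m-2\lambda}\,[\,8\lambda+\ln\tfrac{\xi(m-2\lambda)}{\delta}\,]$ with probability at least $1-\delta$, and isolating $\dDQS$ yields the theorem. The main obstacle is the paired-voter analog of Lemma~\ref{lem:LaplaceTransform}: one must articulate that an sc-paired-voter behaves as a single sc-object of compression size at most $2\lambda$ so that the Laplace-transform argument of the original lemma can be reused; the other delicate point is verifying the four-way invariance of $\phi$ needed to apply the paired change-of-measure inequality. Everything else follows patterns already established in Theorems~\ref{thm:McAllester-sc} and~\ref{thm:McAllester-sc-qu}.
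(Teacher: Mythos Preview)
Your proposal is correct and matches the paper's proof essentially step for step: the paper likewise instantiates the paired-voter aligned-posterior argument (Theorem~\ref{thm:pac-Bayes-q2-qu}, which itself rests on Lemma~\ref{lem:change-measure-paired-aligned}) with $\Dcal(q,p)=2(q-p)^2$ and $m'=m-2\lambda$, then bounds the Laplace-transform term by $e^{8\lambda}\,\xi(m-2\lambda)$ via the computations of Lemma~\ref{lem:LaplaceTransform} with $\lambda$ replaced by $2\lambda$, and finally isolates $\dDQS$. Your explicit verification of the four-way invariance of $\phi$ and your articulation of the sc-paired-voter as a single sc-object of compression size $\le 2\lambda$ are exactly the two ``delicate points'' the paper leaves implicit.
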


\begin{proof}
Let  $P^2(\ib, \ib', \sigb, \sigb')\eqdef P(\ib,\sigb)\cdot P(\ib',\sigb')$.
Now note that $2\cdot (q-p)^2=2\cdot ((1\hspace{-0.3mm}-\hspace{-0.3mm}q)-(1\hspace{-0.3mm}-\hspace{-0.3mm}p))^2$.
Then apply the exact same steps as in the proof of
Theorem~\ref{thm:pac-Bayes-q2-qu} 
with $m'=m-2\lambda$,\, \linebreak $f_{ij}=\Rscpaired$ \,and\, $\Dcal(q,p) = 2(q-p)^2$ to obtain
{\small
\begin{align*}
\prob{S\sim D^m}\!\!\LP \!\!
\begin{array}{l}
  \text{For all posteriors}\, Q\,\,\text{aligned on}\,\,P\colon \\
 \! 2\Big( \dSQS\!-\!\dDQS \Big)^2\!\!
  \! \le \!
  \dfrac{1}{m}\!\LB 
  \ln\!\LP\dfrac{1}{\dt}\esp{S\sim D^m}
  \hspace{-.75cm} \esp{\qquad (\ib, \ib', \sigb, \sigb') \sim P^2} \hspace{-11.5mm} e^{m\cdot 2\cdot\big(\Eaggloss{d}{S}(\Rscpaired)-\Eaggloss{d}{D}(\Rscpaired)\big)^2}\RP\RB 
\end{array}
\!\!\! \RP \\ 
 \mbox{\normalsize $\ge \ 1 -\dt$} \, .
\end{align*}
}Calculations similar to the ones of the proof of Lemma~\ref{lem:LaplaceTransform} 
 (with $\lambda$ replaced by $2\lambda$) give
\begin{eqnarray*}
\esp{S\sim D^m}
  \hspace{-.75cm} \esp{\qquad (\ib, \ib', \sigb, \sigb') \sim P^2} \hspace{-11.5mm} e^{(m-2\lambda)\cdot 2\cdot\big(\Eaggloss{d}{S}(\Rscpaired)-\Eaggloss{d}{D}(\Rscpaired)\big)^2} 
&\leq& 
e^{8\,\lambda} \cdot \xi(m\!-\!2\lambda)\,.
\end{eqnarray*}
Therefore, we have
{\small
\begin{equation*}
\prob{S\sim D^m}\!\!\LP \!\!
\begin{array}{l}
  \text{For all posteriors}\, Q\,\,\text{aligned on}\,\,P\colon \\
 \! 2\Big( \dSQS\!-\!\dDQS \Big)^2\!\!
  \le
  \dfrac{1}{m\!-\!2\lambda}\!\LB 
 8\lambda +\ln\tfrac{ \xi(m-2\, \lambda)}{\dt}\RB 
\end{array}
\!\!\! \RP \ge \ 1 -\dt\,.
\end{equation*}
}and the result is obtained by isolating $\dDQS$ in the inequality.
\end{proof}

\subsubsection{A Sample Compression Bound for the Risk of the Majority Vote}  
Let us now exploit Theorems~\ref{thm:McAllester-sc-qu} and \ref{thm:aligned-sc-disagreement}, together with the \Cbound of Theorem~\ref{thm:C-bound}, to obtain a bound on the risk on a majority vote with kernel functions as voters.  Given any similarity function (possibly a kernel) $k:\Xcal\times\Xcal\rightarrow [-1,1]$ and a training sequence size of $m$, 
we consider a majority vote of sc-voters of compression size at most $1$ given by the following reconstruction function,
\begin{equation*}
\Rcal_{k}\big(S_\ib, \langle\sigma\rangle\big)(x) \ \eqdef \ 
\begin{cases}
\sigma & \mbox{ if \,$\ib\!=\!\langle\,\rangle$,}\\
\sigma \cdot k(x_i, x) & \mbox{ otherwise ( $\ib \!=\!\langle i \rangle$ ),}
\end{cases}
\end{equation*}
where $\ib \in \Ical_1 = \{ \langle\,\rangle, \langle 1 \rangle, \langle 2 \rangle, \ldots, \langle m \rangle\}$ and $\langle\sigma\rangle\in\Sigma_1$ (thus, $\sigma \in \{-1,1\}$). 
Here, the elements of sets $\Ical_1$ and $\Sigma_1$ are obtained from Equation~\eqref{eq:ISlambda}, with $\lambda=1$. Note that $\Rcal_{k}$  is self-complemented (Definition~\ref{def:R_autocomp}) because
$-\Rcal_{k}\big(S_\ib, \langle\sigma\rangle\big)=\Rcal_{k}\big(S_\ib, \langle-\sigma\rangle\big)$ \ for any $(\ib,\sigb)$.

Once the training sequence $S\sim D^m$ is observed, the (self-complemented) reconstruction function $\Rcal_k$ gives rise to the following set of $2m\!+\!2$ sc-voters,
\begin{equation*}
\HSlk \ \eqdef \ \Big\{ b(\cdot), k(x_1,\cdot), k(x_2,\cdot), \ldots, k(x_m,\cdot),  -b(\cdot), -k(x_1,\cdot), -k(x_2,\cdot), \ldots, -k(x_m,\cdot) \Big\}\,,
\end{equation*}
where $b:\Xcal\rightarrow \{1\}$ is a ``dummy voter'' that always outputs $1$ and allows introducing a \emph{bias} value into the majority vote classifier.
Note that $\HSlk$ is a self-complemented set of sc-voters, and the margin of the majority vote given by the distribution $Q$ on $\HSlk$ is
\begin{equation*}
M_{Q,S}(x,y) \ \eqdef \ y \left( Q\big(\,b(\cdot)\,\big) - Q\big(\,-\!b(\cdot)\,\big)
+ \sum_{i=1}^m \left[ Q\big(\,k(x_i,\cdot)\,\big) -  Q\big(\,-\!k(x_i,\cdot)\,\big) \right] k(x_i,x) 
\right)\,.
\end{equation*}
Consequently, the empirical first and second moments of this margin are
\begin{equation*} %
\momentone(\MQS{S}) \ = \ \frac{1}{m}\sum_{i=1}^m M_{Q,S} (x_i,y_i),
\quad\mbox{\normalsize and}\quad 
\momenttwo(\MQS{S}) \ = \ \frac{1}{m} \sum_{i=1}^m \Big[M_{Q,S} (x_i,y_i)\Big]^2\,.
\end{equation*}
Hence,  the empirical Gibbs risk and the empirical expected disagreement can be expressed by
\begin{equation} \label{eq:RDkernels}
\RSGQS \ = \ \frac{1}{2} \left(1-\momentone(\MQS{S}) \right),
\quad\mbox{\normalsize and}\quad 
\dSQS \ = \ \frac{1}{2} \left(1-\momenttwo(\MQS{S}) \right)\,.
\end{equation}

Thus, we obtain the following bound on the risk of a majority vote of kernel voters $R_D(B_{Q,S})$  for aligned posteriors $Q$.  

\begin{pacbound}{3'}\label{bound:variancebinouille-qu-sc}
Let $k:\Xcal\times\Xcal\rightarrow [-1,1]$.
For any distribution $D$ on \mbox{$\Xcal\!\times\!\{-1,1\}$}, for any prior distribution $P$ on $\HSlk$, and any $\dt\in (0,1]$, we have
\begin{equation*}
  \prob{S\sim D^m}\!\left(\!
  \begin{array}{l}
	\forall\,Q \mbox{ aligned on } P\,:\\
   \RDBQS \,\, \leq \,\, 1-\dfrac{\big( \,1- 2\cdot \overline{r} \,\big)^2}{ 1 - 2\cdot \underline{d} } \ = \
   1-\dfrac{\big( \, \underline{\mu_1}_{} \, \big)^2 }{ \overline{\mu_2} } 
   \end{array}
\right) \ge\, 1 -\dt\,,
\end{equation*}
where \\[-8mm]
\begin{align*}
 \overline{r} &\, \eqdef\, \min\left(\tfrac{1}{2},\, \RSGQS + \sqrt{ \mbox{\small $\frac{1}{2(m-1)}\LB 4 + \ln\frac{\xi(m-1)}{\dt/2}\RB$}} \right),
 \\
 \underline{d} &\, \eqdef\, \max\left(0,\, \dSQS - \sqrt{ \mbox{\small $\frac{1}{2(m-2)}\LB 8 + \ln\frac{\xi(m-2)}{\dt/2}\RB$} }\right),\\
 \underline{\mu_1} &\, \eqdef\, \max\left(0,\,\momentone(\MQS{S}) - \sqrt{ \mbox{\small $\frac{2}{m-1}\LB 4 + \ln\frac{\xi(m-1)}{\dt/2}\RB$} }\right),
 \\
 \overline{\mu_2} &\, \eqdef\, \min\left(1,\, \momenttwo(\MQS{S}) + \sqrt{  \mbox{\small $\frac{2}{m-2}\LB 8 + \ln\frac{\xi(m-2)}{\dt/2}\RB$} }\right).
\end{align*}

\end{pacbound}
\begin{proof}
The proof is almost identical to the one of PAC-Bound~\ref{bound:variancebinouille-qu}, except that it relies on sample-compressed PAC-Bayesian bounds.
 Indeed, the inequality is a consequence of Theorem~\ref{thm:C-bound}, as well as Theorems~\ref{thm:McAllester-sc-qu} and~\ref{thm:aligned-sc-disagreement}. The equality 
 $1-\frac{( 1- 2\cdot \overline{r} )^2}{ 1 - 2\cdot \underline{d} } = 1-\frac{(\underline{\mu_1}_{} )^2 }{ \overline{\mu_2} } $
     is a direct application of Equation~\eqref{eq:RDkernels}.
\end{proof}

PAC-Bounds~\ref{bound:variancebinouille-qu} and~\ref{bound:variancebinouille-qu-sc} are expressed in two forms.
The first form relies on bounds on the Gibbs risk and the expected disagreement (denoted $\overline{r}$ and $\underline{d}$). The second form relies on bounds on the first and second moments of the margin (denoted $ \underline{\mu_1}$ and $\overline{\mu_2}$). This latter form is used to justify the learning algorithm presented in Section~\ref{sect:mincq}. 

\section{MinCq: Learning by Minimizing the \Cbound}
\label{sect:mincq}
In this section, we propose a new algorithm, that we call MinCq, for constructing a weighted majority vote of voters. One version of this algorithm is designed for the supervised inductive framework and minimizes the \Cbound. A second version of MinCq that minimizes the \Cbound in the transductive (or semi-supervised) setting can be found in~\cite{lmr-11}. Both versions can be expressed as quadratic programs on positive semi-definite matrices.

\smallskip
As is the case for Boosting algorithms \citep{schapire99}, MinCq is designed to output a $Q$-weighted majority vote of voters that perform rather poorly individually and, consequently, are often called weak learners. Hence, the decision of each vote is based on a small majority (\ie, with a Gibbs risk just a bit lower than 1/2). Recall that in situations where the Gibbs risk is high (\ie, the first moment of the margin is close to $0$), the  \Cbound can nevertheless remain small if the voters of the majority vote are maximally uncorrelated.

\smallskip
Unfortunately, minimizing the empirical value of the \Cbound tends to overfit the data. To overcome this problem, MinCq uses a distribution~$Q$ of voters which is constrained to be quasi-uniform (see Equation~\ref{eq:quasi-uniform}) and for which the first moment of the margin is forced to be not too close to $0$. More precisely, the value $\momentone(\MQ{S})$ is constrained to be bigger than some strictly positive constant $\mu$. This $\mu$ then becomes a hyperparameter of the algorithm that has to be fixed by cross-validation, as the parameter $C$ is for SVM. This new learning strategy is justified by PAC-Bound~\ref{bound:variancebinouille-qu}, dedicated to quasi-uniform posteriors\footnote{PAC-Bound~\ref{bound:variancebinouille-qu} is dedicated to posteriors $Q$ that are aligned on a prior distribution $P$, but in this section we always consider that the prior distribution $P$ is uniform, thus leading to a quasi-uniform posterior~$Q$.}, and PAC-Bound~\ref{bound:variancebinouille-qu-sc}, that is specialized to kernel voters. Hence, MinCq can be viewed as the algorithm that simply looks for the majority vote of margin at least $\mu$ that minimizes PAC-Bound~\ref{bound:variancebinouille-qu} (or PAC-Bound~\ref{bound:variancebinouille-qu-sc} in the sample compression case).

\smallskip
MinCq is also justified by two important properties of quasi-uniform majority votes. First, as we shall see in Theorem~\ref{thm:bayes-equiv}, there is no generality loss when restricting ourselves to quasi-uniform distributions. Second, as we shall see in Theorem~\ref{thm:equiv-fixe-marge}, for any margin threshold $\mu> 0$ and any quasi-uniform distribution $Q$ such that $\momentone(\MQ{S}) \geq \mu$, there is another quasi-uniform distribution $Q'$ whose margin is exactly~$\mu$ that achieves the same majority vote and therefore has the same \Cbound value.

\smallskip
Thus, to minimize the \Cbound, the learner must substantially reduce the variance of the margin distribution -- \ie, $\momenttwo(\MQ{S})$ -- while maintaining its first moment -- \ie, $\momentone(\MQ{S})$ -- over the threshold $\mu$. Many learning algorithms actually exploit this strategy in different ways. Indeed, the variance of the margin distribution is controlled by~\citet{b-01} for producing random forests, by~\citet{citeulike:5895440} in the transfer learning setting, and by~\citet{MDBoost2010Shen} in the Boosting setting. Thus, the idea of minimizing the variance of the margin is well-known and used. We propose a new theoretical justification for all these types of algorithms and propose a novel learning algorithm, called MinCq, that directly minimizes the \Cbound.

\subsection{From the \Cbound to the MinCq Learning Algorithm}
We only consider learning algorithms that construct majority votes based on a (finite) self-complemented hypothesis space $\Hcal=\{f_1, \ldots, f_{2n}\}$ of real-valued voters. Recall that these voters can be classifiers such as decision stumps or can be given by a kernel $k$ evaluated on the examples of $S$ such as $f_i(\cdot) = k(\xb_i, \cdot)$.

We consider the second form of the \Cbound, which relies on the first two moments of the margin of the majority vote classifier (see Theorem~\ref{thm:C-bound}):
\begin{equation*}
\CQ \ =\  1-\frac{\Big(\momentone(\MQ{D'})\Big)^2}{\phantom{\Big(} \momenttwo(\MQ{D'})\phantom{\Big)}}\,.
\end{equation*}
Our first attempts to minimize the \Cbound confronted us with two problems.\\[2mm]
\emph{Problem}~1:  an empirical \Cbound minimization without any regularization tends to overfit the %
data.\\[2mm]
\emph{Problem}~2: most of the time, the distributions $Q$ minimizing the \Cbound $\CSQ$  are such that both $\momentone(\MQ{S})$ and $\momenttwo(\MQ{S})$ are very close to $0$. Since $\CSQ=1-(\momentone(\MQ{S}))^2/\momenttwo(\MQ{S})\,
$, this gives a $0/0$ numerical instability. 
Since $(\momentone(\MQ{D}))^2/\momenttwo(\MQ{D})\,$ can only be empirically estimated by $(\momentone(\MQ{S}))^2/\momenttwo(\MQ{S})$, Problem~2 amplifies Problem~1.

\smallskip
A natural way to resolve Problem~1 is to restrict ourselves to quasi-uniform distributions, \ie, distributions that are aligned on the uniform prior (see Section~\ref{sec:aligneetquasi-unif} for the definition). In Section~\ref{section:Further-PAC_Bayes}, we show that with such distributions, one can upper-bound the Bayes risk without needing a $\KL$-regularization term. Hence, according to this PAC-Bayesian theory, these distributions have some ``built-in'' regularization effect that should prevent overfitting. Section~\ref{sec:sample-compression} generalizes these results to the sample compression setting, which is necessary in the case where voters such as kernels are defined using the training set.

\smallskip
The next theorem shows that this restriction on $Q$ does not reduce the set of possible majority votes.

\begin{theorem}\label{thm:bayes-equiv}
Let $\Hcal$ be a self-complemented set.
For all distributions $Q$ on $\Hcal$, there exists a quasi-uniform distribution $Q'$ on $\Hcal$ that gives the same majority vote as $Q$, and that has the same empirical and true \Cbound values, i.e.,
\begin{eqnarray*}
 B_{Q'} = B_Q\,, \quad
 \CSQprime = \CSQ \quad \mbox{ and } \quad \CDQprime = \CDQ \,.
\end{eqnarray*}
\end{theorem}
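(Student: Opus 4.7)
The plan is to exploit two key observations: (i) the majority vote $B_Q$ and the \Cbound both depend on $Q$ only through the vector of \emph{effective voter weights} $w_i := Q(f_i) - Q(f_{i+n})$ for $i=1,\ldots,n$, and (ii) both are invariant under positive rescaling of these effective weights. Together, these give enough slack to always place the effective weights inside the quasi-uniform ``box'' prescribed by Equation~\eqref{eq:quasi-uniform}.

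First I would use the self-complemented structure ($f_{i+n} = -f_i$) to rewrite
\begin{equation*}
\esp{f\sim Q} f(x) \ =\ \sum_{i=1}^n \big(Q(f_i) - Q(f_{i+n})\big)\, f_i(x) \ =\ \sum_{i=1}^n w_i\, f_i(x),
\end{equation*}
which shows that the margin $M_Q(x,y)$, and hence $B_Q(x)$, depends on $Q$ only via $(w_1, \ldots, w_n)$. Next, I would construct $Q'$ by picking any $\alpha \in (0,1]$ small enough that $\alpha \max_i |w_i| \le 1/n$ (take $\alpha = 1$ if all $w_i = 0$), and setting
\begin{equation*}
Q'(f_i) \ :=\ \tfrac{1}{2n} + \tfrac{\alpha w_i}{2}, \qquad Q'(f_{i+n})\ :=\ \tfrac{1}{2n} - \tfrac{\alpha w_i}{2}, \qquad i = 1,\ldots, n.
\end{equation*}
The choice of $\alpha$ ensures $Q'(f_j)\in[0,1/n]$ for all $j$; by construction $Q'(f_i)+Q'(f_{i+n}) = 1/n$, so $Q'$ is quasi-uniform and $\sum_j Q'(f_j) = 1$.

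Then I would verify that $\esp{f\sim Q'} f(x) = \alpha\, \esp{f\sim Q} f(x)$ for every $x$, which immediately gives $B_{Q'}(x) = \sgn(\alpha\, \Eb_{f\sim Q} f(x)) = B_Q(x)$ (using $\alpha>0$ and $\sgn(0)=0$) and $M_{Q'}(x,y) = \alpha M_Q(x,y)$. Substituting $\momentone(M_{Q'}^{D'}) = \alpha\momentone(\MQ{D'})$ and $\momenttwo(M_{Q'}^{D'}) = \alpha^2 \momenttwo(\MQ{D'})$ into the second form of the \Cbound from Theorem~\ref{thm:C-bound}, the factor $\alpha$ cancels:
\begin{equation*}
\Ccal_{Q'}^{D'} \ =\ 1 - \frac{\big(\alpha\momentone(\MQ{D'})\big)^2}{\alpha^2 \momenttwo(\MQ{D'})} \ =\ 1 - \frac{\big(\momentone(\MQ{D'})\big)^2}{\momenttwo(\MQ{D'})} \ =\ \Ccal_Q^{D'},
\end{equation*}
applied once with $D' = D$ and once with $D' = S$.

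There is no genuine obstacle; the argument is essentially a change-of-representation trick. The only point that deserves a line of care is the feasibility bound $\alpha|w_i|\le 1/n$, which is what lets us use the scale-invariance of $\Ccal$ in a nontrivial way. The degenerate case $\momenttwo(\MQ{D'}) = 0$ is handled by taking $Q'$ uniform, whereupon both margins vanish identically and both \Cbound expressions are equal (or equally undefined) in a vacuous sense.
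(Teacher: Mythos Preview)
Your proposal is correct and takes essentially the same approach as the paper: the paper constructs $Q'(f_i) = \tfrac{1}{2n} + \tfrac{Q(f_i)-Q(f_{i+n})}{2nM}$ with $M = \max_i |w_i|$, which is exactly your construction with the specific choice $\alpha = 1/(nM)$, and then argues identically via $\Eb_{f\sim Q'} f(x) = \alpha\, \Eb_{f\sim Q} f(x)$ and scale-invariance of the \Cbound. Your version is in fact slightly more careful, since you explicitly handle the degenerate case $M=0$ (all $w_i=0$), which the paper's formula leaves as a division by zero.
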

\begin{proof}
Let $Q$ be a distribution on $\Hcal\!=\!\{f_1, \ldots, f_{2n}\}$, let \mbox{$M \eqdef \max_{ i \in \{ 1,.., n \} }\! | Q(f_{i+n})-Q(f_i) | $}, and let $Q'$ be defined as 
$$Q'(f_i)\ \eqdef\ \frac{1}{2n} +\frac{Q(f_i)\,-\,Q(f_{i+n})}{2nM}\,,$$ 
where the indices of $f$ are defined \mbox{modulo $2n$} (\ie, $f_{(i+n)+n}=f_i$). Then it is easy to show that $Q'$ is a quasi-uniform distribution. Moreover, for any example $\xb\in \Xcal$, we have
\begin{eqnarray*}
\esp{f\sim Q'}f(\xb) &\ \eqdef \ &\sum_{i=1}^{2n} Q'(f_i)\,f_i(\xb) %
\ = \ \sum_{i=1}^{n} (Q'(f_i)-Q'(f_{i+n}))\,f_i(\xb) \\
 &=& \sum_{i=1}^{n} \frac{2Q(f_i)-2Q(f_{i+n})}{2nM}\,f_i(\xb) %
 \ = \ \frac{1}{nM}\sum_{i=1}^{2n} Q(f_i)\,f_i(\xb) \\
 &=& \frac{1}{nM} \esp{f\sim Q}f(\xb)\,.
\end{eqnarray*}
Since $nM>0$, this implies that $B_{Q'}(\xb)=B_Q(\xb)$ for all $\xb\in\Xcal$. It also shows that
$M_{Q'}(\xb,y)\!=\!\frac{1}{nM} M_Q(\xb,y)$, which implies that
$\left(\momentone(\MQprime{D'})\right)^2\!\!=\!\left(\frac{1}{nM} \momentone(\MQ{D'})\right)^2$ \,and $\momenttwo(\MQprime{D'})\!=\left(\frac{1}{nM}\right)^2\!\! \momenttwo(\MQ{D'})$ for both
$D'=D$ and $D'=S$.\\[2mm]
The theorem then follows from the definition of the \Cbound.
\end{proof}

Theorem~\ref{thm:bayes-equiv} points out a nice property of the \Cbound: different distributions $Q$ that give rise to a same majority vote have the same (real and empirical) \Cbound values. Since the \Cbound is a bound on majority votes, this is a suitable property.
Moreover, PAC-Bounds~\ref{bound:variancebinouille-qu} and~\ref{bound:variancebinouille-qu-sc}, together with Theorem~\ref{thm:bayes-equiv}, indicate that restricting ourselves to quasi-uniform distributions is a natural solution to the problem of overfitting (see Problem~1). Unfortunately, Problem~2 remains %
since a consequence of the next theorem is that, among all the posteriors $Q$ that minimize the $\Cbound$, there is always one whose empirical margin $\momentone(\MQ{S})$ is as close to  $0$ as we want.

\begin{theorem}\label{thm:equiv-fixe-marge}
Let $\Hcal$ be a self-complemented set.
For all $\mu\in(0,1]$ and
for all quasi-uniform distributions $Q$ on $\Hcal$ having an empirical margin $\momentone(\MQ{S})\geq \mu$, there exists a quasi-uniform distribution $Q'$ on $\Hcal$, having an empirical margin equal to~$\mu$, such that $Q$ and $Q'$ induce the same majority vote and have the same empirical and true $\Cbound$ values, i.e.,
\begin{equation*}
\momentone(\MQprime{S}) =\mu \, , \ \  B_{Q'} = B_Q \,, \ \
\CSQprime = \CSQ \ \  \mbox{ and } \ \  \CDQprime= \CDQ .
\end{equation*}
\end{theorem}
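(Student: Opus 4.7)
The natural move is to interpolate between $Q$ and the uniform distribution $U$ on $\Hcal$. Define
\begin{equation*}
  Q' \ \eqdef \ \alpha\, Q + (1-\alpha)\, U\,, \qquad \alpha\ \eqdef\ \frac{\mu}{\momentone(\MQ{S})}\,.
\end{equation*}
Since $\momentone(\MQ{S})\ge\mu>0$, we have $\alpha\in(0,1]$, so $Q'$ is a well-defined probability distribution on $\Hcal$. I would first check that $Q'$ is quasi-uniform: for each $i\in\{1,\dots,n\}$, $Q'(f_i)+Q'(f_{i+n}) = \alpha(Q(f_i)+Q(f_{i+n})) + (1-\alpha)(U(f_i)+U(f_{i+n})) = \alpha\cdot\tfrac{1}{n} + (1-\alpha)\cdot\tfrac{1}{n} = \tfrac{1}{n}$, using that both $Q$ and $U$ are quasi-uniform.

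Next I would exploit the key observation that $U$ yields the zero function in expectation: because $\Hcal$ is self-complemented and $U$ is uniform, $\esp{f\sim U} f(\xb) = 0$ for every $\xb\in\Xcal$. Consequently, for every example $\ex$,
\begin{equation*}
  M_{Q'}(\xb,y) \ = \ \alpha\cdot M_Q(\xb,y) + (1-\alpha)\cdot 0 \ =\ \alpha\, M_Q(\xb,y)\,.
\end{equation*}
Taking expectations over $S$ then gives $\momentone(\MQprime{S}) = \alpha\,\momentone(\MQ{S}) = \mu$, as required.

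From the pointwise identity $M_{Q'}(\xb,y) = \alpha\,M_Q(\xb,y)$ with $\alpha>0$, we also immediately get $\sgn(\esp{f\sim Q'}f(\xb)) = \sgn(\alpha\cdot\esp{f\sim Q}f(\xb)) = \sgn(\esp{f\sim Q}f(\xb))$ for all $\xb$, hence $B_{Q'}=B_Q$. Finally, the scaling propagates to the moments:
\begin{equation*}
  \bigl(\momentone(\MQprime{D'})\bigr)^2 = \alpha^2\bigl(\momentone(\MQ{D'})\bigr)^2\,,\qquad \momenttwo(\MQprime{D'}) = \alpha^2\,\momenttwo(\MQ{D'})\,,
\end{equation*}
for both $D'=D$ and $D'=S$, so the ratio $(\momentone)^2/\momenttwo$ appearing in the second form of the \Cbound (Theorem~\ref{thm:C-bound}) is invariant, giving $\CSQprime=\CSQ$ and $\CDQprime=\CDQ$. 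There is no real obstacle here; the only thing to be careful about is that $\alpha>0$ strictly (guaranteed by $\mu>0$), so $\sgn$ is preserved and the \Cbound ratio is well-defined (the second moment under $Q'$ is $\alpha^2$ times that of $Q$, which is positive as long as $\momentone(\MQ{S})>0$, which holds since $\momentone(\MQ{S})\ge\mu>0$).
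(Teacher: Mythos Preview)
Your proof is correct and is essentially identical to the paper's: both define $Q'$ as the convex combination $\alpha Q + (1-\alpha)U$ with $\alpha = \mu/\momentone(\MQ{S})$ (the paper writes $U$ explicitly as $1/2n$), and both conclude via the scaling $\esp{f\sim Q'}f(\xb) = \alpha\,\esp{f\sim Q}f(\xb)$. If anything, you spell out the reason $\esp{f\sim U}f(\xb)=0$ more explicitly than the paper, which simply refers back to the argument of Theorem~\ref{thm:bayes-equiv}.
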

\begin{proof}
Let $Q$ be a quasi-uniform distribution on $\Hcal\!=\!\{f_1, \ldots, f_{2n}\}$ such that $\momentone(\MQ{S})\geq \mu$. We define $Q'$ as
\vspace{-2mm}
\begin{equation*}
Q'(f_i)\ \eqdef\  \frac{\mu}{\momentone(\MQ{S})}\cdot Q(f_i) \,+\, \left(1-\frac{\mu}{\momentone(\MQ{S})}\right)\cdot 1/2n\, , \quad i \in \{1,..,2n \} \,. 
\end{equation*}
Clearly $Q'$ is a quasi-uniform distribution since it is a convex combination of a quasi-uniform distribution and the uniform one.
Then, similarly as in the proof of Theorem~\ref{thm:bayes-equiv}, one can easily show that $\esp{f\sim Q'}f(\xb) = \frac{\mu}{\momentone(\MQ{S})}\esp{f\sim Q}f(\xb)$, which implies the result.
\end{proof}
Training set bounds (such as VC-bounds for example)  are known to degrade when the capacity of classification increases. As shown by Theorem~\ref{thm:equiv-fixe-marge} for the majority vote setting, this capacity increases as $\mu$ decreases to 0. Thus, we expect that any training set bound degrades for small $\mu$. This is  not the case for the $\Cbound$ itself, but the $\Cbound$ is not a training set bound. To obtain a training set bound, we have to relate the empirical value $\CSQ$ to the true one $\CDQ$, which is done via PAC-Bounds~\ref{bound:variancebinouille-qu} and~\ref{bound:variancebinouille-qu-sc}. In these bounds, there is indeed a degradation as $\mu$ decreases because the true \Cbound is of the form $1- (\momentone(\MQ{D}))^2/\momenttwo(\MQ{D})$. Since $\mu=\momentone(\MQ{S})$, and because a small $\momentone(\MQ{S})$ tends to produce small $\momenttwo(\MQ{S})$, the bounds on $\CDQ$ given $\CSQ$ that outcomes from PAC-Bounds~\ref{bound:variancebinouille-qu} and~\ref{bound:variancebinouille-qu-sc} are therefore much looser for small $\mu$ because of the 0/0 instability.
As explained in the introduction of the present section, one way to overcome the instability identified in Problem~2 is to restrict ourselves to quasi-uniform distributions whose empirical margin is greater or equal than some threshold $\mu$. Interestingly, thanks to Theorem~\ref{thm:equiv-fixe-marge}, this is equivalent to restricting ourselves to distributions having empirical margin \emph{exactly equal to} $\mu$. From Theorems~\ref{thm:C-bound} and~\ref{thm:equiv-fixe-marge}, it then follows that \emph{minimizing the \Cbound, under the constraint $\momentone(\MQ{S})\!\geq\! \mu$, is equivalent to minimizing $\momenttwo(\MQ{S})$, under the constraint $\momentone(\MQ{S})\!=\! \mu$}\,, from this observation, and the fact that minimizing PAC-Bounds~\ref{bound:variancebinouille-qu} and~\ref{bound:variancebinouille-qu-sc} is equivalent to minimizing the empirical \Cbound $\CSQ$, we can now define the algorithm MinCq.

\smallskip
In this section, $\mu$ always represents a restriction on the margin. Moreover, we say that a value $\mu$ is $D'$-\emph{realizable} if there exists some quasi-uniform distribution~$Q$ such that $\momentone(\MQ{D'})=\mu$. The proposed algorithm, called MinCq, is then defined as follows.

\begin{definition}[MinCq Algorithm]\rm  \label{def:MinCq}
Given a self-complemented set $\Hcal$ of voters, a training set $S$, and a $S$-realizable $\mu>0$, among all
quasi-uniform distributions $Q$ of empirical margin $\momentone(\MQ{S})$ exactly equal to $\mu$,
 the algorithm MinCq consists in finding one that minimizes $\momenttwo(\MQ{S})$.
\end{definition}

This algorithm can be translated as a simple quadratic program (QP) that has only $n$ variables (instead of $2n$), and thus can be easily solved by any QP solver. In the next subsection, we explain how the algorithm of Definition~\ref{def:MinCq} can be turned into a~QP.

\subsection{MinCq as a Quadratic Program}
Given a training set $S$, and a self-complemented set $\Hcal$ of voters $\{f_1, f_2, \ldots, f_{2n}\}$, let

\begin{equation*} %
 \Mcal_{i}  \eqdef  \esp{(\xb,y)\sim S} y\,f_i(\xb)\,\quad \quad \mbox{and} \quad \quad
 \Mcal_{i,j}  \eqdef \esp{(\xb,y)\sim S} f_i(\xb)\,f_j(\xb)\,.
\end{equation*}

Let $\Mb$ be a symmetric $n \times n$ matrix, $\ab$ be a column vector of $n$ elements, and $\mb$ be a column vector of $n$ elements defined by

\begin{align}
\Mb \eqdef
\begin{bmatrix}
\Mcal_{1,1} & \Mcal_{1,2} & \ldots & \Mcal_{1,n} \\
\Mcal_{2,1} & \Mcal_{2,2} & \ldots & \Mcal_{2,n} \\
\vdots     & \vdots     & \ddots & \vdots     \\
\Mcal_{n,1} & \Mcal_{n,2} & \ldots & \Mcal_{n,n} \\
\end{bmatrix}\,,
\quad  
\ab \eqdef
\begin{bmatrix}
\vspace{2mm}
\frac{1}{n}\sum_{i=1}^n \Mcal_{i,1} \\
\frac{1}{n}\sum_{i=1}^n \Mcal_{i,2} \\
\vdots    \\
\frac{1}{n}\sum_{i=1}^n \Mcal_{i,n} \\
\end{bmatrix}\,,
\quad \mbox{and} \quad
\mb \eqdef
\begin{bmatrix}
\Mcal_{1} \\
\Mcal_{2} \\
\vdots    \\
\Mcal_{n} \\
\end{bmatrix}.
\label{eq:mincqmat}
\end{align}

Finally,  let $\qb$ be the column vector of $n$ QP-variables, where each element $q_i$ represents the weight $Q(f_i)$.

Using the above definitions and the fact that $\Hcal$ is self-complemented, one can show that
\begin{equation*}
\Mcal_{i+n} = -\Mcal_i\,, \quad \Mcal_{i+n,j} = \Mcal_{i,j+n} = -\Mcal_{i,j}\,,
\quad \mbox{and} \quad q_{i+n} = \frac{1}{n} - q_i\,.
\end{equation*}
Moreover, it follows from the definitions of the first two moments of the margin $\momentone(\MQ{S})$ and $\momenttwo(\MQ{S})$ (see  Equations~\ref{eq:margin_moment_one} and~\ref{eq:margin_moment_two}) that
\begin{eqnarray*}
 \momentone(\MQ{S}) \ =\ \sum_{i=1}^{2n} q_i\, \Mcal_i\,, \quad \mbox{ and } \quad
\momenttwo(\MQ{S}) \ =\  \sum_{i=1}^{2n}\sum_{j=1}^{2n} q_iq_j\,\Mcal_{i,j} \,.
\end{eqnarray*}

\medskip
As MinCq consists in finding the quasi-uniform distribution $Q$ that minimizes $\momenttwo(\MQ{S})$, with a margin $\momentone(\MQ{S})$ exactly equal to the hyperparameter $\mu$, let us now rewrite $\momenttwo(\MQ{S})$ and $\momentone(\MQ{S})$ using the vectors and matrices defined in Equation~\eqref{eq:mincqmat}. It follows that

{\small
\begin{eqnarray}%
\nonumber
\momenttwo(\MQ{S}) \ = \ \sum_{i=1}^{2n}\sum_{j=1}^{2n} q_i q_j\, \Mcal_{i,j} %
\nonumber
&=& \sum_{i=1}^{n}\sum_{j=1}^{n} \Big[ q_i q_j - q_{i\!+\!n} q_j - q_i q_{j\!+\!n} + q_{i\!+\!n} q_{j\!+\!n} \Big] \Mcal_{i,j} \\[2mm]
\nonumber
&=& \sum_{i=1}^{n}\sum_{j=1}^{n} \left[4q_i q_j-\frac{4}{n}\,q_i+\frac{1}{n^2}\right] \Mcal_{i,j} \\[2mm]
\nonumber
&=& 4 \sum_{i=1}^{n}\sum_{j=1}^{n}q_i q_j\ \Mcal_{i,j}  - \frac{4}{n} \sum_{i=1}^{n}\sum_{j=1}^{n}q_i\ \Mcal_{i,j} + \frac{1}{n^2} \sum_{i=1}^{n}\sum_{j=1}^{n}\Mcal_{i,j} \\[2mm]
\label{eq:quatredetrop}
&=& 4\Big( \qb^\top \, \Mb \, \qb \ - \ \ab^\top \ \qb\Big)
    \  + \ \frac{1}{n^2}\sum_{i=1}^{n}\sum_{j=1}^{n}\Mcal_{i,j}\,,
\end{eqnarray}
}and
{\small
\begin{eqnarray*}
 \momentone(\MQ{S}) \ = \ \sum_{i=1}^{2n} q_i \Mcal_i 
 &=& \sum_{i=1}^{n} \big(q_i - q_{i+n}\big) \Mcal_i \ =\ \sum_{i=1}^{n} \left(2q_i-\frac{1}{n}\right) \Mcal_i 
 \ = \ 2\sum_{i=1}^{n} q_i\ \Mcal_i - \frac{1}{n}\sum_{i=1}^{n} \Mcal_i \\
 &=& 2{\mb}^\top \qb - \frac{1}{n} \sum_{i=1}^{n}  \Mcal_i\,.
\end{eqnarray*}
} %

As the objective function $\momenttwo(\MQ{S})$ and the constraint $ \momentone(\MQ{S}) =\mu$ of the QP can be defined using only $n$ variables, there is no need to consider in the QP the weights of the last $n$ voter. These weights can always be recovered from the $n$ first, because 
$
\mbox{$q_{i+n} \,= \, \tfrac{1}{n}- q_i$}, \  \mbox{for any }\,i\,.
$
 Note however that to be sure that the solution of the QP has the quasi-uniformity property, we have to add the following constraints to the program:
$$
q_{i} \ \in \ [0, \tfrac{1}{n}] \quad\quad\quad\quad\quad\quad\ \mbox{for any }\,i\,.
$$

Note that  the multiplicative constant $4$ and the additive constant $\frac{1}{n^2}\sum_{i=1}^{n}\sum_{j=1}^{n}\Mcal_{i,j}$ from Equation~\eqref{eq:quatredetrop} can be omitted, as the optimal solution will stay the same.
From all that precedes and given any $S$-realizable~$\mu$, MinCq solves the optimization problem described by Program~1.%

\floatstyle{boxed} \floatname{algorithm}{Program}
\begin{algorithm}[H]%
\begin{algorithmic}%
\vspace{1mm}
\STATE{\textbf{Solve} $\quad\quad\arg\!\min_{\qb} \quad \qb^\top \  \Mb \ \qb
     \,\,-\,\, {\ab}^\top\  \qb$
}
\vspace{3mm}
\STATE{\hspace{2mm}\textbf{under constraints :} ${\mb}^\top\   \qb= \frac{\mu}{2} \!+\! \frac{1}{2n}\sum_{i=1}^{n} \Mcal_i$
}
\vspace{2mm}
\STATE{\hspace{28.7mm}\textbf{and :} $0 \leq q_i \leq \frac{1}{n}\quad\forall i\in\{1,\ldots,n\}$
}
\end{algorithmic}
\caption[]{:  \textbf{MinCq} - \emph{a quadratic program for classification}}
\end{algorithm}
To prove that Program~1 %
is a quadratic program, it suffices to show that $\Mb$ is a positive semi-definite matrix. This is a direct consequence of the fact that each $\Mcal_{i,j}$ can be viewed as a scalar product, since%
\begin{eqnarray*}
\Mcal_{i,j} =\  \scriptstyle \left(\!\sqrt{\frac{1}{|S|}}\, f_i(\xb)\!\right)_{\xb\in S_\Xcal}\ \,  \mbox{\huge{$_\cdot$}}\  \ \left(\!\sqrt{\frac{1}{|S|}}\, f_j(\xb)\!\right)_{\xb\in S_\Xcal}\,,
\quad \mbox{ where $S_\Xcal\eqdef \{\xb\colon (\xb,y)\in S\}$.}
\end{eqnarray*}

\bigskip
\noindent
Finally, the $Q$-weighted majority vote output by MinCq is
\begin{eqnarray*}
B_Q(\xb) \ = \ \sgn\LB \esp{f\sim Q} f(\xb) \RB
& = &
 \sgn\LB\sum_{i=1}^{2n} q_i f_i(x)    \RB
 \ = \ 
 \sgn\LB\sum_{i=1}^{n} q_i f_i(x) + \sum_{i=n+1}^{2n} q_i f_i(x)   \RB\\
  & = &
 \sgn\LB\sum_{i=1}^{n} q_i f_i(x) + \sum_{i=1}^{n} (\tfrac{1}{n}-q_i) \cdot -f_i(x)   \RB\\
 & = &
 \sgn\LB\sum_{i=1}^{n} (2q_i-\tfrac{1}{n}) f_i(x)    \RB.
\end{eqnarray*}

\subsection{Experiments}

We now compare MinCq to state-of-the-art learning algorithms in three different contexts: \emph{handwritten digits recognition}, \emph{classical binary classification tasks}, and \emph{Amazon reviews sentiment analysis}. A \emph{context}~\citep{lacoste-2012} represents a distribution on the different tasks a learning algorithm can encounter, and a sample from a context is a collection of data sets.

For each context, each data set is randomly split into a training set $S$ and a testing set $T$. When hyperparameters have to be chosen for an algorithm, 5-fold cross-validation is run on the training set $S$, and the hyperparameter values that minimize the mean cross-validation risk are chosen. Using these values, the algorithm is trained on the whole training set $S$, and then evaluated on the testing set $T$.

For the first two contexts, we compare MinCq using decision stumps as voters (referred to as StumpsMinCq), MinCq using RBF kernel functions $k(\xb, \xb') = \exp(-\gamma||\xb - \xb'||^2)$ as voters (referred to as RbfMinCq), AdaBoost~\citep{schapire99} using decision stumps (referred to as StumpsAdaBoost), and the soft-margin Support Vector Machine (SVM)~\citep{DBLP:journals/ml/CortesV95} using the RBF kernel, referred to as RbfSVM. For the last context, we compare MinCq using linear kernel functions $k(\xb, \xb') = \xb \cdot \xb'$ as voters (referred to as LinearMinCq), and the SVM using the same linear kernel, referred to as LinearSVM.

For the three variants of MinCq, the quadratic program is solved using CVXOPT~\citep{dahvan07}, an off-the-shelf convex optimization solver.

\begin{description}
\item[StumpsAdaBoost:] For StumpsAdaBoost, we use decision stumps as weak learners. For each attribute, 10 decision stumps (and their complement) are generated, for a total of 20 decision stumps per attribute. The number of boosting rounds is chosen among the following 15 values: 10, 25, 50, 75, 100, 125, 150, 175, 200, 225, 250, 275, 500, 750 and 1000.
\item[StumpsMinCq:] For StumpsMinCq, we use the same 10 decision stumps per attribute as for StumpsAdaBoost. Note that we do not need to consider the complement stumps in this case, as MinCq automatically considers self-complemented sets of voters. MinCq's hyperparameter $\mu$ is chosen among 15 values between $10^{-4}$ and $10^{0}$ on a logarithmic scale.
\item[RbfSVM:] The $\gamma$ hyperparameter of the RBF kernel and the $C$ hyperparameter of the SVM are chosen among 15 values between $10^{-4}$ and $10^1$ for $\gamma$, and among 15 values between $10^0$ and $10^8$ for $C$, both on a logarithmic scale.
\item[RbfMinCq:] For RbfMinCq, we consider 15 values of $\mu$ between $10^{-4}$ and $10^{-2}$ on a logarithmic scale, and the same 15 values of $\gamma$ as in SVM for the RBF kernel voters. 
\item[LinearSVM:] When using the linear kernel, the $C$ parameter of the SVM is chosen among 15 values between $10^{-4}$ and $10^{2}$, on a logarithmic scale. All SVM experiments are done using the implementation of~\cite{scikit-learn}.
\item[LinearMinCq:] For LinearMinCq, we consider 15 values of $\mu$ between $10^{-4}$ and $10^{-2}$ on a logarithmic scale.
\end{description}

When using the RBF kernel for the SVM or MinCq, each data set is normalized using a hyperbolic tangent. For each example $x$, each attribute $x_1, x_2, \ldots, x_n$ is renormalized with $x_{i}^{'} = \tanh\Big[\frac{x_i - \overline{x_i}}{\sigma_i} \Big]$, where $\overline{x_i}$ and $\sigma_i$ are the mean and standard deviation of the $i^{\text{\tiny th}}$ attribute respectively, calculated on the training set $S$. Normalizing the features when using the RBF kernel is a common practice and gives better results for both MinCq and SVM. Empirically, we observe that the performance gain of RbfMinCq with normalized data is even more significant than for RbfSVM.

\begin{figure}[t]
  \hspace{-12mm}\includegraphics[scale=0.45]{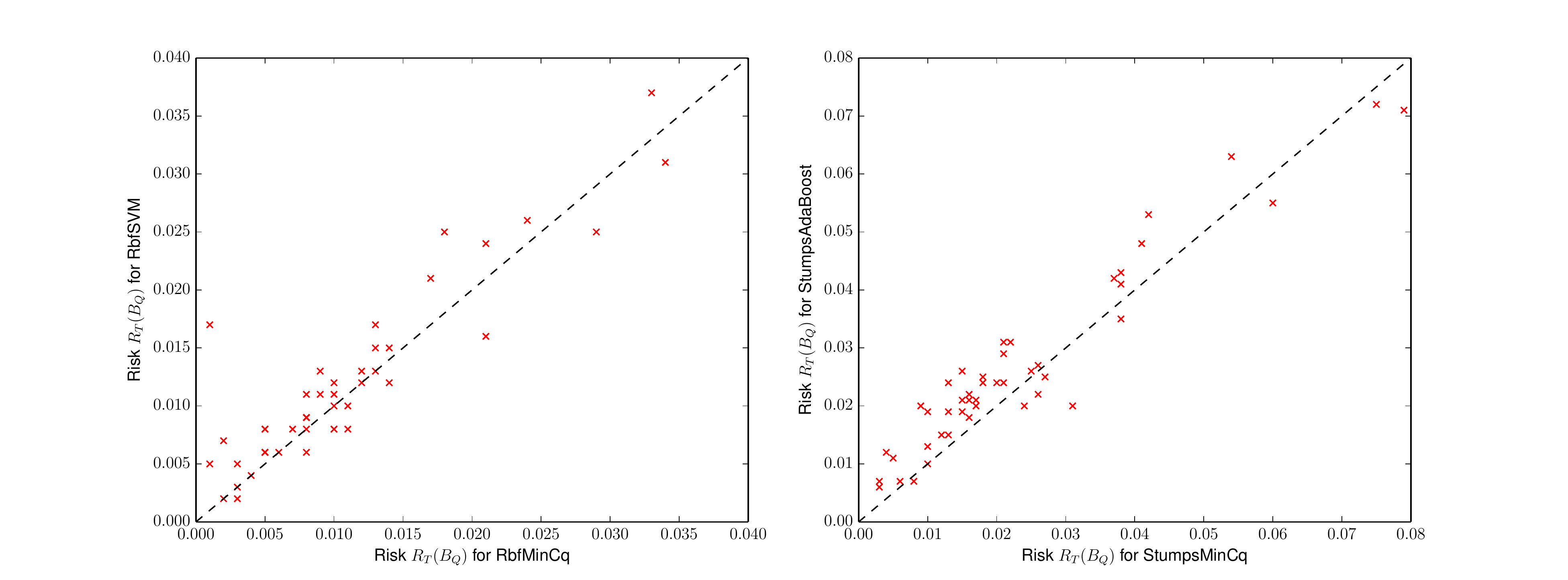}
\vspace{-4mm}
 \caption{Comparison of the risks on the testing set for each algorithm and each MNIST binary data set. The figure on the left shows a comparison of the risks of RbfMinCq ($x$-axis) and RbfSVM ($y$-axis). The figure on the right compares StumpsMinCq ($x$-axis) and StumpsAdaBoost ($y$-axis). On each scatter plot, a point represents a pair of risks for a particular MNIST binary data set. A point above the diagonal line indicates better performance for MinCq.}
 \vspace{-4mm}
 \label{fig:mincq_mnist}
\end{figure}
\begin{table}[t]
\begin{center}
\begin{scriptsize}

\rowcolors{3}{}{black!10}
\begin{tabular}{lcc}
\toprule
\multicolumn{3}{c}{Statistical Comparison Tests} \\
\cmidrule(l r){1-3}
& RbfMinCq vs RbfSVM & StumpsMinCq vs StumpsAdaBoost \\
\cmidrule(l r){2-2} \cmidrule(l r){3-3}
Poisson binomial test  & 88\% & 99\%  \\
Sign test ($p$-value) & 0.01 & 0.00  \\
\bottomrule
\hline

\end{tabular}
\end{scriptsize}
\end{center}
\vspace{-2mm}
\caption{Statistical tests comparing MinCq to either RbfSVM or StumpsAdaBoost. The Poisson binomial test gives the probability that MinCq has a better performance than another algorithm on this context. The sign test gives a $p$-value representing the probability that the null hypothesis is true (\ie, MinCq and the other algorithm both have the same performance on this context).}
\vspace{-4mm}
\label{tab:mincq_mnist} 
\end{table}

\subsubsection{Handwritten Digits Recognition Context}
The first context of interest to compare MinCq with other learning algorithms is the handwritten digits recognition. For this task, we use the \emph{MNIST database of handwritten digits} of~\cite{mnistlecun}. We split the original data set into 45 binary classification tasks, where the union of all binary data sets recovers the original data set, and the intersection of any pair of binary data sets gives the empty set. Therefore, any example from the original data set appears on one and only one binary data set, thus avoiding any correlation between the binary data sets. For each resulting binary data set, we randomly choose $500$ examples to be in the training set $S$, and the testing set $T$ consists of the remaining examples. Figure~\ref{fig:mincq_mnist} shows the resulting test risk for each binary data set and each algorithm.

Table~\ref{tab:mincq_mnist} shows two statistical tests to compare the algorithms on the handwritten digits recognition context: the Poisson binomial test~\citep{lacoste-2012} and the sign test~\citep{mendenhall1983nonparametric}. Both methods suggest that RbfMinCq outperforms RbfSVM on this context, and that StumpsMinCq outperforms StumpsAdaBoost.

\subsubsection{Classical Binary Classification Tasks Context}
This second context of interest is a more general one: it consists of multiple binary classification data sets coming from the UCI Machine Learning Repository~\citep{uci-98}. These data sets are commonly used as a benchmark for learning algorithms, and may help to answer the question ``How well may a learning algorithm perform on many unrelated classification tasks''. For each data set, half of the examples (up to a maximum of $500$) are randomly chosen to be in the training set $S$, and the remaining examples are in the testing set $T$. Table~\ref{tab:mincq_uci} shows the resulting test risks on this context, for each algorithm.

\begin{table}[t]
\begin{center}
\begin{scriptsize}

\rowcolors{5}{}{black!10}
\begin{tabular}{lcccccc}
\toprule
\multicolumn{3}{c}{Data Set Information}  & \multicolumn{4}{c}{Risk $\RTBQ$ for Each Algorithm} \\
\cmidrule(l r){1-3} \cmidrule(l r){4-7}
Name & $|S|$ & $|T|$ & RbfMinCq &  RbfSVM &  StumpsMinCq & StumpsAdaBoost \\
\cmidrule(l r){1-3} \cmidrule(l r){4-5} \cmidrule(l r){6-7}
  Australian & 345 & 345   &      0.142       & \textbf{0.133}       &    \textbf{0.165}       &           0.168           \\
     Balance & 313 & 312   &      0.054       & \textbf{0.042}       &            0.042        &   \textbf{0.032}          \\
BreastCancer & 350 & 349   & \textbf{0.037}   &         0.046        &    \textbf{0.037}       &           0.060           \\
         Car & 500 & 1228  &      0.074       & \textbf{0.032}       &            0.320        &   \textbf{0.291}          \\
         Cmc & 500 & 973   & \textbf{0.303}   &         0.306        &            0.140        &   \textbf{0.134}          \\
    Credit-A & 345 & 345   & \textbf{0.122}   &         0.133        &    \textbf{0.304}       &           0.308           \\
    Cylinder & 270 & 270   & \textbf{0.204}   &         0.233        &    \textbf{0.125}       &           0.148           \\
       Ecoli & 168 & 168   &      0.077       & \textbf{0.071}       &    \textbf{0.289}       &   \textbf{0.289}          \\
       Flags & 97  & 97    & \textbf{0.289}   &         0.320        &    \textbf{0.071}       &   \textbf{0.071}          \\
       Glass & 107 & 107   & \textbf{0.206}   & \textbf{0.206}       &    \textbf{0.268}       &           0.309           \\
       Heart & 135 & 135   &      0.163       & \textbf{0.156}       &    \textbf{0.262}       &           0.271           \\
   Hepatitis & 78  & 77    &      0.169       & \textbf{0.143}       &    \textbf{0.185}       &   \textbf{0.185}          \\
       Horse & 184 & 184   & \textbf{0.185}   &         0.196        &    \textbf{0.169}       &           0.221           \\
  Ionosphere & 176 & 175   &      0.114       & \textbf{0.069}       &            0.245        &   \textbf{0.174}          \\
   Letter:AB & 500 & 1055  &      0.007       & \textbf{0.003}       &    \textbf{0.109}       &           0.120           \\
   Letter:DO & 500 & 1058  &      0.021       & \textbf{0.018}       &    \textbf{0.005}       &           0.010           \\
   Letter:OQ & 500 & 1036  & \textbf{0.023}   &         0.036        &    \textbf{0.020}       &           0.048           \\
       Liver & 173 & 172   & \textbf{0.267}   &         0.285        &    \textbf{0.042}       &           0.052           \\
       Monks & 216 & 216   &      0.245       & \textbf{0.208}       &            0.306        &   \textbf{0.236}          \\
     Nursery & 500 & 12459 & \textbf{0.025}   &         0.026        &    \textbf{0.025}       &           0.026           \\
   Optdigits & 500 & 3323  &      0.034       & \textbf{0.027}       &    \textbf{0.089}       &   \textbf{0.089}          \\
   Pageblock & 500 & 4973  & \textbf{0.045}   &         0.048        &            0.059        &   \textbf{0.055}          \\
   Pendigits & 500 & 6994  & \textbf{0.007}   &         0.008        &    \textbf{0.069}       &           0.084           \\
        Pima & 384 & 384   & \textbf{0.253}   &         0.255        &            0.273        &   \textbf{0.250}          \\
     Segment & 500 & 1810  & \textbf{0.017}   &         0.018        &            0.040        &   \textbf{0.022}          \\
    Spambase & 500 & 4101  & \textbf{0.067}   &         0.077        &            0.133        &   \textbf{0.070}          \\
 Tic-tac-toe & 479 & 479   &      0.033       & \textbf{0.025}       &    \textbf{0.330}       &           0.353           \\
      USvote & 218 & 217   & \textbf{0.051}   & \textbf{0.051}       &    \textbf{0.051}       &   \textbf{0.051}          \\
        Wine & 89  & 89    & \textbf{0.034}   &         0.045        &            0.169        &   \textbf{0.034}          \\
       Yeast & 500 & 984   &      0.286       & \textbf{0.279}       &            0.324        &   \textbf{0.306}          \\
         Zoo & 51  & 50    & \textbf{0.040}   &         0.060        &            0.060        &   \textbf{0.040}          \\
\bottomrule
\end{tabular}

\vspace{5mm}

\rowcolors{3}{}{black!10}
\begin{tabular}{lcc}
\toprule
\multicolumn{3}{c}{Statistical Comparison Tests} \\
\cmidrule(l r){1-3}
& RbfMinCq vs RbfSVM & StumpsMinCq vs StumpsAdaBoost \\
\cmidrule(l r){2-2} \cmidrule(l r){3-3}
Poisson binomial test  & 54\% & 48\%  \\
Sign test ($p$-value) & 0.36 & 0.35  \\
\bottomrule
\hline

\end{tabular}

\end{scriptsize}
\end{center}
\vspace{-3mm}
\caption{Risk on the testing set for all algorithms, on the classical binary classification task context. See Table~\ref{tab:mincq_mnist} for an explanation of the statistical tests.}
\label{tab:mincq_uci} 
\vspace{-3mm}
\end{table}

Table~\ref{tab:mincq_uci} also shows a statistical comparison of all algorithms on the classical binary classification tasks context, using the Poisson binomial test and the sign test. On this context, both statistical tests show no significant performance difference between RbfMinCq and RbfSVM, and between StumpsMinCq and StumpsAdaBoost, implying that these pairs of algorithms perform similarly well on this general context.

\subsubsection{Amazon Reviews Sentiment Analysis} This context contains 4 sentiment analysis data sets, representing product types (\emph{books}, \emph{DVDs}, \emph{electronics} and \emph{kitchen appliances}). The task is to learn from an Amazon.com product user review in natural language, and predict the \emph{polarity} of the review, that is either negative (3 stars or less) or positive (4 or 5 stars). The data sets come from~\cite{blitzer2007biographies}, where the natural language reviews have already been converted into a set of \emph{unigrams} and \emph{bigrams} of terms, with a count. For each data set, a training set of $1000$ positive reviews and $1000$ negative reviews are provided, and the remaining reviews are available in a testing set. The original feature space of these data sets is between $90,000$ and $200,000$ dimensions. However, as most of the unigrams and bigrams are not significant and to reduce the dimensionality, we only consider unigrams and bigrams that appear at least $10$ times on the training set \citep[as in][]{chen2011cotraining}, reducing the numbers of dimensions to between $3500$ and $6000$. Again as in~\cite{chen2011cotraining}, we apply standard \emph{tf-idf} feature re-weighting~\citep{salton1988term}. Table~\ref{tab:mincq_amazon} shows the resulting test risks for each algorithm.

\begin{table}[t]
\begin{center}
\begin{scriptsize}
\rowcolors{3}{black!10}{}
\begin{tabular}{lcccc}
\toprule
\multicolumn{3}{c}{Data Set Information}  & \multicolumn{2}{c}{Risk $\RTBQ$ for Each Algorithm} \\
\cmidrule(l r){1-3} \cmidrule(l r){4-5}
Name & $|S|$ & $|T|$ & LinearMinCq &  LinearSVM \\
\cmidrule(l r){1-3} \cmidrule(l r){4-5}
      Books & 2000 & 4465 &  \textbf{0.158} & \textbf{0.158}  \\
        DVD & 2000 & 3586 &  \textbf{0.162} &         0.163   \\
    Kitchen & 2000 & 5945 &  \textbf{0.130} &         0.131   \\
Electronics & 2000 & 5681 &  \textbf{0.116} &         0.118   \\
\bottomrule
\end{tabular}

\vspace{5mm}

\rowcolors{4}{black!10}{}
\begin{tabular}{lcc}
\toprule
\multicolumn{2}{c}{Statistical Comparison Tests} \\
\cmidrule(l r){1-2}
& LinearMinCq vs LinearSVM \\
\cmidrule(l r){2-2}
Poisson binomial test  & 68\%  \\
Sign test ($p$-value) & 0.31  \\
\bottomrule
\hline

\end{tabular}

\end{scriptsize}
\end{center}
\caption{Risk on the testing set for all algorithms, on the Amazon reviews sentiment analysis context. See Table~\ref{tab:mincq_mnist} for an explanation of the statistical tests.}
\label{tab:mincq_amazon} 
\vspace{-3mm}
\end{table}

Table~\ref{tab:mincq_amazon} also shows a statistical comparison of the algorithms on this context, again using the Poisson binomial test and the sign test. LinearMinCq has an edge over LinearSVM, as it wins or draws on each data set. However, both statistical tests show no significant performance difference between LinearMinCq and LinearSVM.

\medskip

These experiments show that minimizing the $\Cbound$, and thus favoring majority votes for which the voters are maximally uncorrelated, is a sound approach. MinCq is very competitive with both AdaBoost and the SVM on the classical binary tasks context and the Amazon reviews sentiment analysis context. MinCq even shows a highly significant performance gain on the handwritten digits recognition context, implying that on certain types of tasks or data sets, minimizing the $\Cbound$ offers a state-of-the-art performance.

However, for all above experiments, we observe that the empirical values of the PAC-Bounds are trivial (close to $1$). Remember that, inspired by PAC-Bounds~\ref{bound:variancebinouille-qu} and~\ref{bound:variancebinouille-qu-sc}, the MinCq algorithm learns the weights of a majority vote by minimizing the second moment of the margin while fixing its first moment $\mu$ to some value. In these experiments, the value of $\mu$ chosen by cross-validation is always very close to $0$ (basically, $\mu = 10^{-4}$). This implies that $\CSQ = 1 - \frac{\mu^2}{\momenttwo(\MQ{S})}$ is very close to the $1 - \frac{0}{0}$ form, leading to a severe degradation of PAC-Bayesian bounds for $\CDQ$. Note that the voters were all \emph{weak} in the former experiments. This explains why very small values of $\mu$ were selected by cross-validation.

\subsubsection{Experiments with Stronger Voters}

In the following experiment, we show that one can obtain much better bound values by using \emph{stronger} voters, that is, voters with a better individual performance. To do so, instead of considering decision stumps, we consider decision trees.\footnote{A decision stump can be seen as a (weak) decision tree of depth $1$.} We use $100$ decision tree classifiers generated with the implementation of~\cite{scikit-learn} (we set the maximum depth to $10$ and the number of features per node to $1$). By using these strong voters, it is possible to achieve higher values of $\mu$.\footnote{Note that the set of decision trees was learned on a \emph{fresh} set of examples, disjoint from the training data. We do so to ensure that all computed PAC-Bounds are valid, even if they are not designed to handle \emph{sample-compressed} voters.}

\begin{figure}[h]
\centering
	\includegraphics[width=0.85\linewidth]{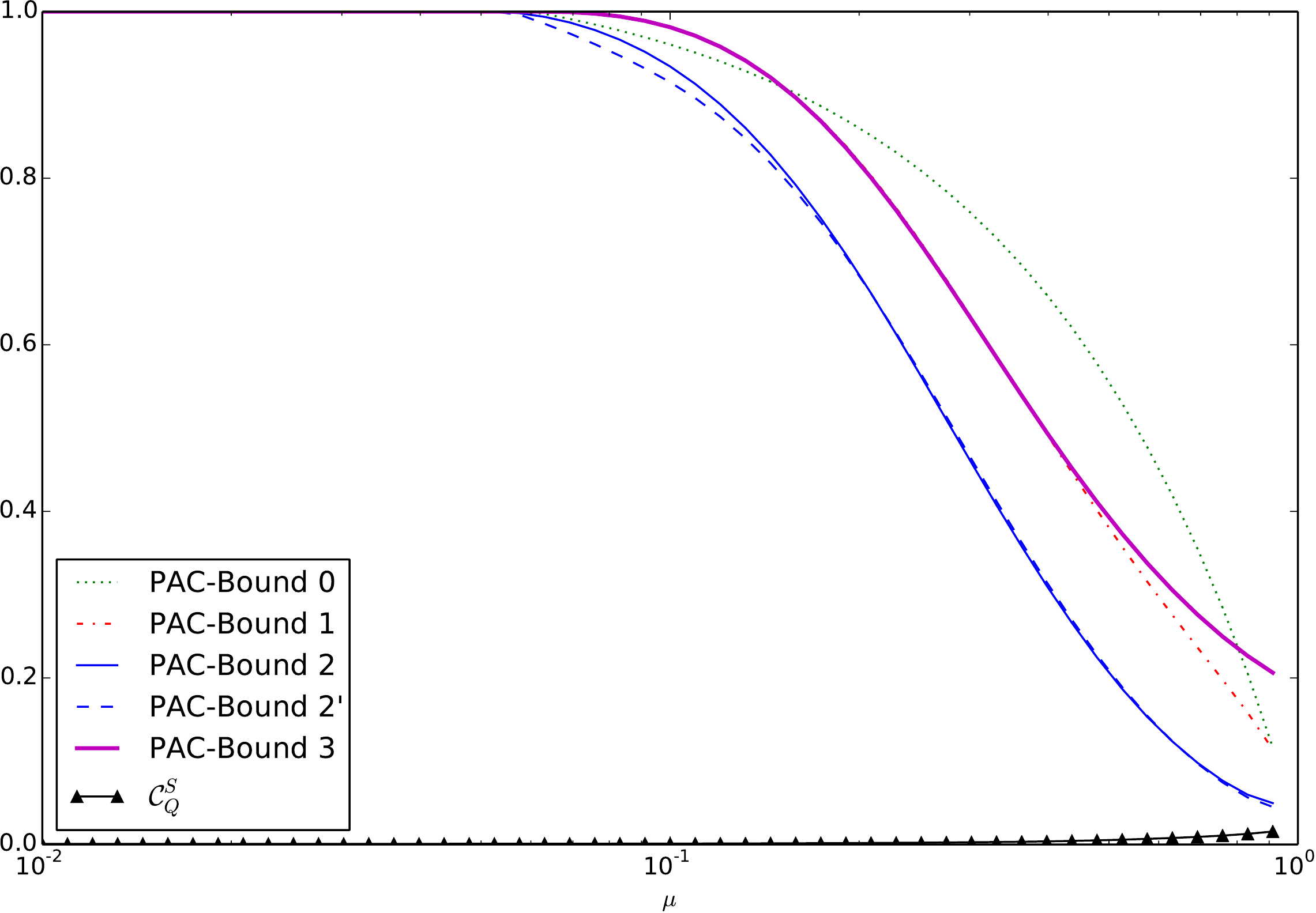}
	\caption{Values of empirical \Cbound and corresponding PAC-Bounds~\ref{bound:classic}, \ref{bound:variancebinouille}, \ref{bound:trinouille}, \ref{bound:trinouille2} and \ref{bound:variancebinouille-qu} on the majority votes output by MinCq, for multiple values of $\mu$.}
	\label{fig:boundsMinCq}
\end{figure}

Figure~\ref{fig:boundsMinCq} shows the empirical \Cbound value and its corresponding PAC-Bayesian bound values for multiple values of $\mu$ on the Mushroom UCI data set. From the $8124$ examples, $500$ have been used to construct the set of voters, $4062$ for the training set, and the remaining examples for the testing set. The figure shows the PAC-Bayesian bounds get tighter when $\mu$ is increasing. Note however that the empirical \Cbound slightly increases from $0.001$ to $0.016$. The risk on the testing set of the majority vote (not shown in the figure) is $0$ for most values of $\mu$, but also increases a bit for the highest values (remaining below $0.001$). 

Hence, we obtain tight bounds for high values of $\mu$ (PAC-Bounds~\ref{bound:trinouille} and \ref{bound:trinouille2} are under $0.2$). Nevertheless,  these PAC-Bayesian bounds are not tight enough to precisely guide the selection of $\mu$. This is why we rely on cross-validation to select a good value of $\mu$.

Finally, we also see that PAC-Bound~\ref{bound:variancebinouille-qu} is looser than other bounds over $\CDQ$, but this was expected as it was not designed to be as tight as possible. That being said, PAC-Bound~\ref{bound:variancebinouille-qu} has the same behavior than PAC-Bounds \ref{bound:variancebinouille} and \ref{bound:trinouille}. This suggests that we can rely on it to justify the MinCq learning algorithm once the hyperparameter $\mu$ is fixed.

\section{Conclusion}
\label{section:Finis-tu, never ending paper?}

In this paper, we have revisited the work presented in \cite{nips06-mv} and \cite{lmr-11}. We clarified the presentation of previous results and extended them, as well as actualizing the discussion regarding the ever growing development of PAC-Bayesian theory.
 
 We have derived a risk bound (called the \Cbound) for the weighted majority vote that depends on the first and the second moment of the associated margin distribution (Theorem~\ref{thm:C-bound}). 
 The proposed bound
 is based on the one-sided Chebyshev inequality, which, under the mild condition of Proposition~\ref{prop:CQ_optimal_suite},  is the
 tightest inequality for any real-valued random variable given only
 its first two moments. Also, as shown empirically by Figure~\ref{fig:C_Q
 est bon}, this bound has a strong predictive power on the
 risk of the majority vote.
 
 We have also shown that the original PAC-Bayesian theorem, together
 with new ones, can be used to obtain high-confidence estimates of
 this new risk bound that holds uniformly for \emph{all} posterior
 distributions. We have generalized these PAC-Bayesian results to the (more general) sample compression setting, allowing
 one to make use of voters that are constructed with elements of the training data, such as kernel functions $y_i k(\xb_i, \cdot)$. Moreover, we have presented PAC-Bayesian bounds that have the uncommon property of having no Kullback-Leibler divergence term (PAC-Bounds~\ref{bound:variancebinouille-qu} and~\ref{bound:variancebinouille-qu-sc}). These bounds, together with the \Cbound, gave the theoretical foundation to the learning algorithm introduced at the end of the paper, that we have called MinCq. The latter turns out to be expressible in the nice form of a quadratic program. MinCq is not only based on solid theoretical guarantees, it also performs very well on natural data, namely when compared with the state-of-the-art SVM.
 
 \bigskip

This work tackled the simplest problem in machine learning (the supervised binary classification in presence of i.i.d.\ data), and we now consider that the PAC-Bayesian theory is mature enough to embrace a variety of more sophisticated frameworks. Indeed, in the recent years several authors applied this theory to many more complex paradigms: Transductive Learning~\citep{Derbeko04,c-07,begin-14}, \,Domain Adaptation~\citep{germain-da-13}, Density Estimation~\citep{seldin-tishby-09,higgs-10}, Structured output Prediction~\citep{mcallester-07,giguere-13,london-14}, Co-clustering \citep{seldin-tishby-09,seldin-tishby-10},  Martingales~\citep{seldin-12}, U-Statistics of higher order~\citep{lls-13} or other non-i.i.d. settings~\citep{ralaivola-10}, Multi-armed Bandit~\citep{seldin-11} and Reinforcement Learning~\citep{fard-10,fard-11}.

\section*{Acknowledgements}
This work has been supported by National Science and Engineering Research Council (NSERC) Discovery
grants 262067 and 0122405.
Computations were performed on the Colosse supercomputer grid at Universit\'e Laval, under the auspices of Calcul Qu\'ebec and Compute Canada. The operations of Colosse are funded by the NSERC, the Canada Foundation for Innovation (CFI), NanoQu\'ebec, and the Fonds de recherche du Qu\'ebec -- Nature et technologies (FRQNT).

\medskip
\noindent
We would also like to thank Nicolas Usunier for his insightful comments and participation in the preliminary work on majority votes, and Malik Younsi for solving one of our conjectures leading to the nice $\xi(m)\!+\!m$ term in the statement of Theorem~\ref{thm:pac-trinouille-kl}. 

\medskip
\noindent
Finally, we sincerely thank the three anonymous reviewers for the exceptional quality of their work.

\appendix
\section{Auxiliary mathematical results}
\label{section:appendix_auxmath}

\begin{lemma}[Markov's inequality]
\label{lem:markov}
For any random variable $X$ such that $\Eb(X)=\mu$, and for any $a>0$, we have
\begin{equation*}
\pr\,(|X|\geq a) \ \leq \ \frac{\mu}{a}\,.
\end{equation*}
\end{lemma}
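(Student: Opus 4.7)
My plan is to invoke the classical indicator-function argument, which turns the probability into an expectation by brute force. Specifically, I would start from the pointwise inequality
\begin{equation*}
 a \cdot \mathbf{1}_{\{|X| \geq a\}} \ \leq \ |X|,
\end{equation*}
valid for every outcome: whenever $|X| \geq a$ both sides lie in $[a, |X|]$ (the left equals $a$, the right is at least $a$), and whenever $|X| < a$ the left vanishes while the right is still non-negative. Since this inequality holds pointwise and the expectation is monotone, I would take $\mathbb{E}$ on both sides to obtain $a \cdot \Pr(|X| \geq a) \leq \mathbb{E}|X|$, then divide by $a>0$.

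This yields $\Pr(|X| \geq a) \leq \mathbb{E}|X|/a$. To match the statement, whose right-hand side is $\mu/a$ with $\mu = \mathbb{E}(X)$, I would observe that the identification $\mathbb{E}|X| = \mathbb{E}(X)$ requires $X$ to be almost surely non-negative. Every invocation of the lemma in the paper is of this form: in the proof of Proposition~\ref{thm:2RG_rediscover} it is applied to $1 - M_Q(\mathbf{x},y) \in [0,2]$, and in the proof of Theorem~\ref{thm:gen-pac-Bayes} it is applied to $\mathbb{E}_{f\sim P}\, e^{m'\cdot\mathcal{D}(\cdot,\cdot)} \geq 0$, so the substitution $\mathbb{E}|X| = \mu$ is always legitimate in context.

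There is essentially no obstacle here; the entire content is the pointwise indicator bound and monotonicity of the expectation. The only subtlety worth flagging is the implicit non-negativity convention that allows $\mu/a$ to serve as the upper bound in place of the more general $\mathbb{E}|X|/a$, a distinction that evaporates in all downstream uses.
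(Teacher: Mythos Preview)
The paper states Lemma~\ref{lem:markov} without proof, treating it as a standard result from probability theory. Your indicator-function argument is the classical proof and is entirely correct; your remark that the bound as written tacitly assumes $X \geq 0$ (so that $\mathbb{E}|X| = \mu$) is also well observed and, as you note, is satisfied in every application the paper makes of the lemma.
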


\begin{lemma}[Jensen's inequality]
\label{lem:jensen}
For any random variable $X$ and any convex function~$f$, we have 
$$
f(\esp{}[X])\ \leq\ \esp{}[f(X)]\,.
$$
\end{lemma}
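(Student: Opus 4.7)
The plan is to use the supporting hyperplane characterization of convexity. Specifically, for a convex function $f$ on an interval containing the range of $X$, at every interior point $x_0$ of the domain there exists a subgradient $c \in \mathbb{R}$ such that the affine function $x \mapsto f(x_0) + c(x - x_0)$ lies everywhere below $f$, i.e.
$$f(x) \ \geq\ f(x_0) + c\,(x - x_0) \qquad \text{for all } x.$$
This follows from the fact that the one-sided derivatives of a convex function on an interval exist at every interior point, and any value between the left and right derivative works as $c$. (If $f$ is differentiable, one simply takes $c = f'(x_0)$; for the general case, one invokes the existence of a subgradient for convex functions.)

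The next step is to apply this supporting line at the point $x_0 = \mathbb{E}[X]$, which requires $\mathbb{E}[X]$ to lie in the (interior of the) domain of $f$. Assuming $X$ takes values where $f$ is defined, this gives pointwise
$$f(X) \ \geq\ f\bigl(\mathbb{E}[X]\bigr) + c\,\bigl(X - \mathbb{E}[X]\bigr).$$
Taking expectations on both sides, using linearity of expectation and the fact that the right-hand side's second term is constant times a random variable, we get
$$\mathbb{E}\bigl[f(X)\bigr] \ \geq\ f\bigl(\mathbb{E}[X]\bigr) + c\,\bigl(\mathbb{E}[X] - \mathbb{E}[X]\bigr) \ =\ f\bigl(\mathbb{E}[X]\bigr),$$
which is the claimed inequality.

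The main subtlety, and the only real obstacle, is the existence of a subgradient at $\mathbb{E}[X]$: this is where the convexity is genuinely used. If one is willing to assume differentiability of $f$, the proof becomes a one-line Taylor-style argument; in the general convex (possibly non-smooth) case, one needs the classical result that convex functions on an interval admit left- and right-derivatives at every interior point. Edge cases (such as $\mathbb{E}[X]$ lying at a boundary point of the domain, or $f$ not being integrable) can be handled by standard conventions or simply excluded from the statement, since the present paper applies Jensen's inequality only to the concave function $\ln(\cdot)$ on $(0,\infty)$ (via $-\ln$, convex), where differentiability makes everything straightforward.
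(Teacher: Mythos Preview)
Your argument is the standard supporting-line proof of Jensen's inequality and is correct. The paper itself does not prove Lemma~\ref{lem:jensen}; it is simply stated in the appendix as a classical auxiliary result, so there is nothing to compare against.
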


\begin{lemma}[One-sided Chebyshev inequality]
\label{th:chebychev}
For any random variable $X$ such that $\Eb(X)=\mu$
\,and\, $\Var(X)=\sigma^2$, \, and for any $a>0$, we have
\begin{equation*}
   \pr\,\Big(X-\mu \geq a\Big) \ \leq\  \frac{\sigma^2}{\sigma^2 + a^2} \,.
\end{equation*}
\end{lemma}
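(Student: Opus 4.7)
The plan is to prove this one-sided Chebyshev inequality (often called Cantelli's inequality) via a clever reduction to the ordinary Markov inequality, which is already stated as Lemma~\ref{lem:markov}. The key trick is to shift $X$ and square: although $X-\mu$ can be negative, for any constant $c>0$ the random variable $(X-\mu+c)^2$ is non-negative, which puts it inside the scope of Markov's inequality, and the squaring turns the one-sided event $\{X-\mu \ge a\}$ into an event about magnitudes.

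First I would set $Y \eqdef X - \mu$, so that $\Eb(Y)=0$ and $\Var(Y)=\Eb(Y^2)=\sigma^2$. Then I would observe that, for any fixed $c>0$, the event $\{Y \ge a\}$ implies $\{Y+c \ge a+c\}$, and since $a+c>0$ this in turn implies $\{(Y+c)^2 \ge (a+c)^2\}$. Thus
\begin{equation*}
\pr(Y \ge a) \ \le \ \pr\!\left((Y+c)^2 \ge (a+c)^2\right).
\end{equation*}
Applying Lemma~\ref{lem:markov} to the non-negative random variable $(Y+c)^2$ gives
\begin{equation*}
\pr\!\left((Y+c)^2 \ge (a+c)^2\right) \ \le \ \frac{\Eb\!\left[(Y+c)^2\right]}{(a+c)^2} \ = \ \frac{\sigma^2 + c^2}{(a+c)^2},
\end{equation*}
using $\Eb(Y)=0$ to expand the square.

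The remaining step is to optimize over $c>0$ to get the tightest bound. I would compute the derivative of $(\sigma^2+c^2)/(a+c)^2$ with respect to $c$, set it to zero, and find the minimizer $c^* = \sigma^2/a$. Substituting back yields
\begin{equation*}
\frac{\sigma^2 + (\sigma^2/a)^2}{(a+\sigma^2/a)^2} \ = \ \frac{\sigma^2(a^2+\sigma^2)/a^2}{(a^2+\sigma^2)^2/a^2} \ = \ \frac{\sigma^2}{\sigma^2+a^2},
\end{equation*}
which is exactly the desired bound. There is no serious obstacle here; the only thing to double-check is that the optimal $c^*=\sigma^2/a$ is indeed strictly positive (it is, since $a>0$ and the degenerate case $\sigma^2=0$ makes the bound trivially $0$ and the inequality reduces to ordinary Markov on a point mass). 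The argument is completely classical and self-contained once Lemma~\ref{lem:markov} is granted.
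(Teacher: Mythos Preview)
Your proof is correct and follows essentially the same approach as the paper. The only cosmetic difference is that the paper plugs in the optimal shift $c=\sigma^2/a$ from the outset and verifies the bound directly, whereas you introduce a free parameter $c>0$, apply Markov, and then optimize to discover $c^*=\sigma^2/a$; the underlying trick of squaring the shifted variable and invoking Lemma~\ref{lem:markov} is identical.
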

\begin{proof}
First observe that 
$\pr\Big(X-\mu\geq a\Big) \leq 
\pr\Big(\left[X-\mu +\frac{\sigma^2}{a}\right]^2\geq \left[a+\frac{\sigma^2}{a}\right]^2\Big)$.
Let us now apply Markov's inequality (Lemma~\ref{lem:markov}) to bound this probability. We obtain
\begin{eqnarray*}
 \pr\LP\left[X-\mu +\frac{\sigma^2}{a}\right]^2\geq \left[a+\frac{\sigma^2}{a}\right]^2 \RP
 &\leq& \frac{\Eb \LB X-\mu +\frac{\sigma^2}{a} \RB^2}{\LB a +\frac{\sigma^2}{a} \RB^2}
  \hspace{20mm} \mbox{\small(Markov's inequality)}\hspace{-20mm}\\ 
 &=& \frac{\Eb \LP X-\mu \RP^2 +2 \LP\frac{\sigma^2}{a} \RP \Eb \LP X-\mu \RP  + \LP\frac{\sigma^2}{a} \RP^2}{\LB a +\frac{\sigma^2}{a} \RB^2}\\ 
 &=& \frac{\sigma^2  + \LP\frac{\sigma^2}{a} \RP^2}{\LB a +\frac{\sigma^2}{a} \RB^2}
  \ =\ \frac{\sigma^2 \LP 1 + \frac{\sigma^2}{a^2} \RP}{\LP \sigma^2 + a^2 \RP \LP 1 + \frac{\sigma^2}{a^2} \RP} 
  \ =\ \frac{\sigma^2}{\sigma^2+a^2}\, ,
\end{eqnarray*}
because $\Eb \LP X-\mu \RP^2 = \Var(X) = \sigma^2$ and $\Eb \LP X-\mu \RP = \Eb(X) - \Eb(X) = 0$. 
\end{proof}

Note that the proof Theorem~\ref{thm:KL_convex} (below) by~\cite{ct-91} considers that probability distributions $Q$ and $P$ are discrete, but their argument is straightforwardly generalizable to continuous distributions.
\begin{theorem}{\rm \citep[Theorem~2.7.2]{ct-91}} \label{thm:KL_convex}
The Kullback-Leibler divergence $\KL(Q\|P)$ is convex in the pair $(Q,P)$, \ie, if $(Q_1,P_1)$ and $(Q_2,P_2)$ are two pairs of probability distributions, then
\begin{equation*}
\KL\big(\,\lambda Q_1 + (1\!-\!\lambda) Q_2 \,\big\|\, \lambda P_1 + (1\!-\!\lambda) P_2 \,\big)
\ \leq\   
\lambda\,\KL\big(Q_1\big\|P_1\big)  + (1\!-\!\lambda)\,\KL\big(Q_2\big\|P_2\big)\,, %
\end{equation*}
for all $\lambda\in[0,1]$\,.
\end{theorem}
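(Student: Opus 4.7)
My plan is to prove the theorem by reducing it, pointwise, to the log-sum inequality (equivalently, to Jensen's inequality applied to the convex function $t \mapsto t \log t$). For a discrete support, the KL divergence is a sum of terms of the form $q(x) \log \tfrac{q(x)}{p(x)}$, and convexity of the whole functional will follow once I prove the convexity of each individual term viewed as a function of the pair $(q(x), p(x)) \in \mathbb{R}_{\geq 0}^2$. The continuous case then follows by the same argument applied under the integral sign, since the inequality holds pointwise in $x$.

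First I would recall (or prove in one line) the log-sum inequality: for nonnegative reals $a_1, a_2, b_1, b_2$ with $b_1, b_2 > 0$,
\begin{equation*}
a_1 \log \tfrac{a_1}{b_1} + a_2 \log \tfrac{a_2}{b_2} \ \geq \ (a_1+a_2) \log \tfrac{a_1+a_2}{b_1+b_2},
\end{equation*}
with the usual convention $0 \log 0 = 0$. This is just Jensen's inequality applied to $f(t) = t \log t$ using the weighted average $\tfrac{b_1}{b_1+b_2} \cdot \tfrac{a_1}{b_1} + \tfrac{b_2}{b_1+b_2} \cdot \tfrac{a_2}{b_2}$. Then I would apply it with $a_1 = \lambda\, Q_1(x)$, $a_2 = (1-\lambda)\, Q_2(x)$, $b_1 = \lambda\, P_1(x)$, $b_2 = (1-\lambda)\, P_2(x)$. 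The constants $\lambda$ and $1-\lambda$ cancel inside the logarithms on the left-hand side, yielding
\begin{equation*}
\lambda\, Q_1(x) \log \tfrac{Q_1(x)}{P_1(x)} + (1-\lambda)\, Q_2(x) \log \tfrac{Q_2(x)}{P_2(x)} \ \geq \ \bigl(\lambda Q_1 + (1-\lambda) Q_2\bigr)(x) \, \log \tfrac{(\lambda Q_1 + (1-\lambda) Q_2)(x)}{(\lambda P_1 + (1-\lambda) P_2)(x)}.
\end{equation*}

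Finally, summing over $x$ (or integrating with respect to a dominating measure in the continuous case) gives exactly the desired inequality
\begin{equation*}
\lambda\, \KL(Q_1 \| P_1) + (1-\lambda)\, \KL(Q_2 \| P_2) \ \geq \ \KL\bigl(\lambda Q_1 + (1-\lambda) Q_2 \,\big\|\, \lambda P_1 + (1-\lambda) P_2\bigr).
\end{equation*}
There is no real obstacle here: the only mild technicality is handling the cases where some $P_i(x) = 0$ or $Q_i(x) = 0$, which is dispatched by the standard conventions $0 \log 0 = 0$ and $q \log(q/0) = +\infty$ for $q > 0$ (consistent with $\KL$ being $+\infty$ when $Q \not\ll P$). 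The whole argument is essentially a one-line application of the log-sum inequality at each point of the support.
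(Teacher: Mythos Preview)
Your proposal is correct and is essentially the standard proof from Cover and Thomas (which the paper cites rather than reproducing): the argument there is exactly the pointwise application of the log-sum inequality followed by summation over the support. The paper itself does not give an independent proof; it merely notes that Cover and Thomas' discrete argument extends to the continuous case, which is precisely the remark you make about integrating against a dominating measure.
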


\begin{corollary}  Both following functions are convex:
\label{cor:bintrin}
\begin{enumerate}
\item
The function $\kl(q\|p)$ of Equation~\eqref{eq:small_kl}, \ie,  the Kullback-Leibler divergence between two Bernoulli distributions;
\item
The function $\kl(q_1,q_2\|p_1,p_2)$ of Equation~\eqref{eq:small_kl2}, \ie, the Kullback-Leibler divergence between two distributions of trivalent random variables.
\end{enumerate}
\end{corollary}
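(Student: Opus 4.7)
The plan is to view both functions as special instances of the general Kullback--Leibler divergence $\KL(Q\|P)$ between a pair of probability distributions on a finite set, and then invoke Theorem~\ref{thm:KL_convex}, which states that $(Q,P) \mapsto \KL(Q\|P)$ is jointly convex.

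For the first item, I would observe that $\kl(q\|p)$ is precisely $\KL(\widetilde Q \| \widetilde P)$ where $\widetilde Q = (q,\,1{-}q)$ and $\widetilde P = (p,\,1{-}p)$ are Bernoulli distributions. The key point to check is that the affine correspondence $q \mapsto (q,\,1{-}q)$ commutes with convex combinations: for any $\lambda \in [0,1]$ and any $q_1,q_2,p_1,p_2$,
\begin{equation*}
\bigl(\lambda q_1 + (1{-}\lambda) q_2,\ 1 - \lambda q_1 - (1{-}\lambda) q_2\bigr) \ =\ \lambda\,(q_1, 1{-}q_1) + (1{-}\lambda)\,(q_2, 1{-}q_2),
\end{equation*}
and similarly for the $p$'s. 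Applying Theorem~\ref{thm:KL_convex} to the pairs $(\widetilde Q_1,\widetilde P_1)$ and $(\widetilde Q_2,\widetilde P_2)$ then directly gives
\begin{equation*}
\kl\bigl(\lambda q_1 {+} (1{-}\lambda)q_2 \,\big\|\, \lambda p_1 {+} (1{-}\lambda)p_2\bigr) \ \leq\ \lambda\,\kl(q_1\|p_1) + (1{-}\lambda)\,\kl(q_2\|p_2),
\end{equation*}
which is the desired convexity.

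For the second item, the same argument applies verbatim with $\widetilde Q = (q_1, q_2, 1{-}q_1{-}q_2)$ and $\widetilde P = (p_1, p_2, 1{-}p_1{-}p_2)$, since the map $(q_1,q_2) \mapsto (q_1, q_2, 1-q_1-q_2)$ is affine and again commutes with convex combinations coordinate-wise. Theorem~\ref{thm:KL_convex} then yields the joint convexity of $\kl(q_1, q_2 \| p_1, p_2)$. There is no real obstacle here; the only thing to be careful about is verifying that the affine embedding from the parameters into the simplex is compatible with convex combinations, which amounts to an elementary coordinate-wise identity. The proof is essentially a one-line reduction to Theorem~\ref{thm:KL_convex} in each case.
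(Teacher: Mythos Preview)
Your proposal is correct and follows exactly the same approach as the paper, which simply states that the result is a ``straightforward consequence of Theorem~\ref{thm:KL_convex}.'' Your explicit verification that the affine parametrizations $q\mapsto(q,1{-}q)$ and $(q_1,q_2)\mapsto(q_1,q_2,1{-}q_1{-}q_2)$ commute with convex combinations is a welcome elaboration of that one-line reduction.
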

\begin{proof}
Straightforward consequence of Theorem~\ref{thm:KL_convex}.
\end{proof}

\begin{lemma}{\rm \citep{m-04}} \label{lem:maurer}
 Let $X$ be any random variable with values in $[0,1]$ and expectation $\mu = \Eb(X)$. Denote $\Xb$ the vector containing the results  of $n$ independent realizations of~$X$. Then, consider a Bernoulli random variable $X'$ ($\{0,1\}$-valued) of probability of  success $\mu$, \ie, $\pr(X'\!=\!1)=\mu$.  Denote $\Xb'\in\{0,1\}^n$ the vector containing the results of $n$ independent realizations of~$X'$.\\[1mm] 
 If function $f:[0,1]^n \rightarrow \Reals$ is convex, then
 \begin{equation*}
  \Eb \big[ f(\Xb) \big] \ \leq \ \Eb \big[ f(\Xb') \big]\,.
 \end{equation*}
\end{lemma}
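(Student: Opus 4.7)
The plan is to prove the inequality by constructing a coupling that reveals each continuous $X_i$ as the conditional expectation of a Bernoulli random variable, and then applying Jensen's inequality (Lemma~\ref{lem:jensen}) conditionally. First, for each realization $\Xb=(x_1,\ldots,x_n)$ with $x_i\in[0,1]$, I would introduce an auxiliary random vector $\Yb=(Y_1,\ldots,Y_n)\in\{0,1\}^n$ such that, conditionally on $\Xb$, the components $Y_i$ are independent Bernoulli variables with $\pr(Y_i=1\mid\Xb)=x_i$. By construction $\Eb[\Yb\mid\Xb]=\Xb$ componentwise.

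Next, I would apply Jensen's inequality to the convex function $f$ in its conditional form: since $f(\Eb[\Yb\mid\Xb])\leq \Eb[f(\Yb)\mid\Xb]$, we obtain $f(\Xb)\leq \Eb[f(\Yb)\mid\Xb]$. Taking the outer expectation with respect to $\Xb$ and using the tower property gives
\begin{equation*}
\Eb[f(\Xb)] \ \leq\ \Eb\big[\Eb[f(\Yb)\mid\Xb]\big]\ =\ \Eb[f(\Yb)]\,.
\end{equation*}

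The remaining step is to verify that $\Yb$ has the same joint distribution as $\Xb'$. Marginally, each $Y_i$ is Bernoulli with parameter $\pr(Y_i=1)=\Eb[\pr(Y_i=1\mid X_i)]=\Eb[X_i]=\mu$, which matches $X'$. For the joint law, using the conditional independence of the $Y_i$'s given $\Xb$ together with the independence of the $X_i$'s, I would compute
\begin{equation*}
\pr(Y_1=y_1,\ldots,Y_n=y_n)\ =\ \Eb\bigg[\prod_{i=1}^{n}\pr(Y_i=y_i\mid X_i)\bigg]\ =\ \prod_{i=1}^{n}\Eb[\pr(Y_i=y_i\mid X_i)]\ =\ \prod_{i=1}^{n}\pr(Y_i=y_i)\,,
\end{equation*}
which shows that the $Y_i$'s are independent and therefore $\Yb$ is distributed as $\Xb'$. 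Substituting this equality in distribution yields $\Eb[f(\Yb)]=\Eb[f(\Xb')]$, completing the proof.

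The conceptually delicate point is the coupling construction: one must be careful to specify the $Y_i$'s as conditionally independent given $\Xb$ (rather than merely pairwise defined), because independence of the marginal law of $\Yb$ relies crucially on this. Once the coupling is set up, the argument is a routine combination of conditional Jensen and the tower property, so no technical obstacle should arise.
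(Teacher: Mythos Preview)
Your proof is correct. The paper does not prove Lemma~\ref{lem:maurer} itself (it is cited from \citealp{m-04}), but its proof of the trivalent generalization (Lemma~\ref{lem:general-maurer}) is essentially the same argument in combinatorial rather than probabilistic language: your coupling $Y_i\mid X_i\sim\mathrm{Bernoulli}(X_i)$ is exactly the paper's explicit decomposition of each point $\xb$ as a convex combination of hypercube vertices with weights $\prod_i x_i^{\eta_i}(1-x_i)^{1-\eta_i}$, your conditional Jensen step is its convexity step, and your verification that $\Yb$ has the law of $\Xb'$ is its computation that the expected weights equal the product Bernoulli$(\mu)$ law.
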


The proof of Lemma~\ref{lem:general-maurer} (below) follows the key steps of the proof of Lemma~\ref{lem:maurer} by~\citet{m-04}, but we include a few more mathematical details for completeness.
Interestingly, the proof highlights that one can generalize Maurer's lemma even more, to embrace random variables of any (countable) number of possible outputs. Note that another generalization of Maurer's lemma is given in~\citet{seldin-12} to embrace the case where the random variables $X_1, \ldots, X_n$ are a martingale sequence instead of being independent.
\begin{lemma}[Generalization of Lemma~\ref{lem:maurer}]
 \label{lem:general-maurer}
 Let the tuple $(X,Y)$ be a random variable with values in $[0,1]^2$, such that $X+Y \leq 1$, and with expectation $(\mu_X,\mu_Y) = (\Eb(X), \Eb(Y))$. Given $n$ independent realizations of $(X,Y)$, denote $\Xb = (X_1, \ldots, X_n)$ the vector of corresponding $X$-values and $\Yb = (Y_1, \ldots, Y_n)$ the vector of corresponding $Y$-values. Then, consider a random variable $(X', Y')$ with three possible outcomes, $(1,0)$, $(0,1)$ and $(0,0)$, of expectations $\mu_X$, $\mu_Y$ and $1\!-\!\mu_X\!-\!\mu_Y$, respectively. Denote $\Xb', \Yb' \in \{0,1\}^n$ the vectors of $n$ independent realizations of $(X',Y')$.\\[1mm]
 If a function $f:[0,1]^n\!\times\![0,1]^n\rightarrow \Reals$ is convex, then
 \begin{equation*}
  \Eb \big[ f(\Xb, \Yb) \big] \ \leq \ \Eb \big[ f(\Xb', \Yb') \big]\,.
 \end{equation*}
\end{lemma}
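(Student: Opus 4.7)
The plan is to mimic Maurer's original argument by constructing, conditionally on the sample $(\Xb,\Yb)$, a coupled trivalent sample whose conditional expectation recovers $(\Xb,\Yb)$ and whose unconditional law is exactly that of $(\Xb',\Yb')$. First I would, for each realization $(X_i,Y_i)$, introduce an auxiliary pair $(B_{X_i},B_{Y_i})$ supported on $\{(1,0),(0,1),(0,0)\}$ whose conditional law given $(X_i,Y_i)$ is
\begin{equation*}
\Pr\!\big((B_{X_i},B_{Y_i})=(1,0)\,\big|\,X_i,Y_i\big)=X_i,\quad
\Pr\!\big((B_{X_i},B_{Y_i})=(0,1)\,\big|\,X_i,Y_i\big)=Y_i,
\end{equation*}
and $\Pr((B_{X_i},B_{Y_i})=(0,0)\mid X_i,Y_i)=1-X_i-Y_i$ (well-defined because $X_i+Y_i\le 1$). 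These auxiliary pairs are taken to be conditionally independent across $i$. Writing $\Bb_X=(B_{X_1},\ldots,B_{X_n})$ and $\Bb_Y=(B_{Y_1},\ldots,B_{Y_n})$, one then has, coordinatewise, $\Eb[\Bb_X\mid\Xb,\Yb]=\Xb$ and $\Eb[\Bb_Y\mid\Xb,\Yb]=\Yb$.

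Next I would apply Jensen's inequality (Lemma~\ref{lem:jensen}) to the convex function $f$ under the conditional expectation given $(\Xb,\Yb)$:
\begin{equation*}
f(\Xb,\Yb)\ =\ f\!\big(\Eb[\Bb_X\mid\Xb,\Yb],\,\Eb[\Bb_Y\mid\Xb,\Yb]\big)\ \le\ \Eb\!\big[f(\Bb_X,\Bb_Y)\,\big|\,\Xb,\Yb\big].
\end{equation*}
Taking total expectations, one obtains $\Eb[f(\Xb,\Yb)]\le\Eb[f(\Bb_X,\Bb_Y)]$.

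To conclude, I would identify the unconditional law of $(\Bb_X,\Bb_Y)$ with that of $(\Xb',\Yb')$. By the tower property, the marginal probabilities are
\begin{equation*}
\Pr\!\big((B_{X_i},B_{Y_i})=(1,0)\big)=\Eb[X_i]=\mu_X,\quad \Pr\!\big((B_{X_i},B_{Y_i})=(0,1)\big)=\mu_Y,
\end{equation*}
and $\Pr((B_{X_i},B_{Y_i})=(0,0))=1-\mu_X-\mu_Y$, which is exactly the law of $(X',Y')$. Independence across $i$ follows because the $(X_i,Y_i)$ are i.i.d.\ and the auxiliary variables were drawn conditionally independently across $i$ from kernels depending only on the corresponding $(X_i,Y_i)$; hence $(\Bb_X,\Bb_Y)\stackrel{d}{=}(\Xb',\Yb')$, and substitution into the previous display yields the lemma.

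The main obstacle, though minor, is the bookkeeping: one must be careful that the conditional-independence construction genuinely produces the same joint law as $n$ independent copies of $(X',Y')$, which hinges precisely on the i.i.d.\ assumption on $(X_i,Y_i)$ and on the trivalent (not bivalent) conditional kernel being well-defined under the constraint $X_i+Y_i\le 1$. Once those two points are explicit, the argument reduces to a single application of Jensen's inequality, exactly as in Maurer's original proof.
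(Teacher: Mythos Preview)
Your argument is correct and is, at bottom, the same idea as the paper's proof: write each $(X_i,Y_i)$ as the mean of a trivalent distribution on $\{(1,0),(0,1),(0,0)\}$, apply convexity, and identify the resulting law with that of $(\Xb',\Yb')$. The packaging differs, however. The paper carries this out combinatorially: it proves an explicit identity (their Lemma~\ref{lem:maurer-induction}, shown by induction) expressing $(\xb,\yb)$ as a convex combination $\sum_{\etab\in H^n}\rho_{\etab}(\xb,\yb)\,\etab$ with product coefficients, applies convexity pointwise to $f$, and then factors the expectation using independence. Your version replaces that explicit decomposition by the randomization $(B_{X_i},B_{Y_i})$ and a single application of conditional Jensen; the product coefficients $\rho_{\etab}$ are exactly the conditional law of $(\Bb_X,\Bb_Y)$ given $(\Xb,\Yb)$, so the two inequalities coincide line by line. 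What your route buys is that the induction lemma becomes unnecessary and the independence step is absorbed into the observation that i.i.d.\ inputs fed through identical, conditionally independent kernels produce i.i.d.\ outputs; what the paper's route buys is a fully explicit formula for the extreme-point weights, which makes the subsequent trinomial computation in Lemma~\ref{lem:xi2} more transparent.
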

\begin{proof} \renewcommand{\xb}{{\mathbf x}}
 Given two vectors $\xb=(x_1, \ldots, x_n), \yb=(y_1, \ldots, y_n)\in [0,1]^n$, let us define
 \begin{equation*}
  (\xb,\yb) \ \eqdef\  \big( \, (x_1, y_1), (x_2, y_2), \ldots, (x_n, y_n)\,\big) \ \in \ ([0,1]\!\times\![0,1])^{n}\,.
 \end{equation*}
Consider $H=\{(1,0), (0,1), (0,0)\}$. Lemma~\ref{lem:maurer-induction} (below) 
shows that any point $(\xb,\yb)$ can be written as a convex combination of the extreme points $\etab = (\eta_1, \eta_2, \ldots, \eta_n) \in H^n$:
\begin{equation} \label{eq:maurer-any-point}
    (\xb,\yb) \ = \
   \sum_{\etab \, \in H^n} \left[ 
   \left(\prod_{i:\eta_i=(1,0)} \hspace{-2mm} x_i \right) 
   \left(\prod_{i:\eta_i=(0,1)} \hspace{-2mm} y_i  \right) 
   \left(\prod_{i:\eta_i=(0,0)} \hspace{-2mm} 1\!-\!x_i\!-\!y_i \right)    
   \right] \!\cdot \etab\,.
 \end{equation}
 Convexity of function ${f}$ implies
   {\small
  \begin{equation}  \label{eq:maurer-convexity-implies}
    {f} (\xb,\yb) \ \leq \
   \sum_{\etab \, \in H^n} \left[ 
   \left(\prod_{i:\eta_i=(1,0)} \hspace{-2mm} x_i \right) 
   \left(\prod_{i:\eta_i=(0,1)} \hspace{-2mm} y_i  \right) 
   \left(\prod_{i:\eta_i=(0,0)} \hspace{-2mm} 1\!-\!x_i\!-\!y_i \right)    
   \right] \!\cdot {f}(\etab)\,,
 \end{equation}
 }with equality if $(\xb,\yb) \in  H^n =  \{ (1,0), (0,1), (0,0)\}^n$, because the elements of the sum are $0\!\cdot\!{f} (\etab)$ for all $\etab\in H^n \setminus \{(\xb,\yb)\}$ and $1\!\cdot\!{f} (\etab)$ only for $\etab=(\xb,\yb)$.\\[2mm] 
 Given that realizations of random variable $(X, Y)$ are independent and that for a given $\eta_i \in H$, only one of the three products is computed\footnote{The equality between the second and third lines follows from the fact that each expectation inside the sum of Line 2 can be rewritten as the following product of independent random variables:%
 $$\Eb \Big[ \dprod_{\eta_i} g_{\eta_i} (X_i, Y_i) \Big]
 \mbox{ \ \ with \ }
  g_{\eta_i} (X_i, Y_i) \ \eqdef\  
 \mbox{\tiny$\begin{cases} 
 X_i & \mbox{if } \eta_i=(1,0) \\
 Y_i & \mbox{if } \eta_i=(0,1) \\
 1\!-\!X_i\!-\!Y_i & \mbox{otherwise.}
 \end{cases}$}$$}, we get
 {\small
   \begin{eqnarray*}
    \Eb\left[{f} (\Xb,\Yb)\right] &\leq&    
    \Eb\left[
    \sum_{\etab \, \in H^n} \left[ 
   \left(\prod_{i:\eta_i=(1,0)} \hspace{-2mm} X_i \right) 
   \left(\prod_{i:\eta_i=(0,1)} \hspace{-2mm} Y_i  \right) 
   \left(\prod_{i:\eta_i=(0,0)} \hspace{-2mm} 1\!-\!X_i\!-\!Y_i\right)    
   \right] \!\cdot {f}(\etab) \right]\\
   &=&
    \sum_{\etab \, \in H^n} \Eb \left[ 
   \left(\prod_{i:\eta_i=(1,0)} \hspace{-2mm} X_i \right) 
   \left(\prod_{i:\eta_i=(0,1)} \hspace{-2mm} Y_i  \right) 
   \left(\prod_{i:\eta_i=(0,0)} \hspace{-2mm} 1\!-\!X_i\!-\!Y_i\right)    
   \right] \!\cdot {f}(\etab) \\
   &=&
    \sum_{\etab \, \in H^n} \left[ 
   \left(\prod_{i:\eta_i=(1,0)} \hspace{-2mm} \Eb(X_i) \right) 
   \left(\prod_{i:\eta_i=(0,1)} \hspace{-2mm} \Eb(Y_i)  \right) 
   \left(\prod_{i:\eta_i=(0,0)} \hspace{-2mm} 1\!-\!\Eb(X_i)\!-\!\Eb(Y_i)\right)    
   \right] \!\cdot {f}(\etab) \\
   &=&
   \sum_{\etab \, \in H^n} \left[ 
   \left(\prod_{i:\eta_i=(1,0)} \hspace{-2mm} \mu_X \right) 
   \left(\prod_{i:\eta_i=(0,1)} \hspace{-2mm} \mu_Y  \right) 
   \left(\prod_{i:\eta_i=(0,0)} \hspace{-2mm} 1\!-\!\mu_X\!-\!\mu_Y \right)    
   \right] \!\cdot {f}(\etab)\,.
 \end{eqnarray*}
 }This becomes an equality when $(\Xb, \Yb)$  takes values in $H^n$ (as we explain after equation~\ref{eq:maurer-convexity-implies}). We therefore conclude that 
$\Eb \big[ f(\Xb, \Yb) \big] \ \leq \ \Eb \big[ f(\Xb', \Yb') \big]\,.$
\end{proof}

\begin{lemma}[Proof of Equation~\ref{eq:maurer-any-point}]\label{lem:maurer-induction}
\renewcommand{\xb}{{\mathbf x}}
Consider $H=\{(1,0), (0,1), (0,0)\}$ and an integer \mbox{$n > 0$}. Any point $(\xb,\yb) \in \big([0,1]\!\times\![0,1]\big)^n$ can be written as a convex combination of the extreme points $\etab = (\eta_1, \eta_2, \ldots, \eta_n) \in H^n$:
 \begin{equation*}
  (\xb, \yb) \ = \  \sum_{\etab \in H^n} \rho_{\etab}(\xb,\yb)\cdot \etab\,, 
 \end{equation*}
where \\[-5mm]
 \begin{equation*}
  \rho_\etab(\xb, \yb) \ \eqdef \ \left(\prod_{i:\eta_i=(1,0)} \hspace{-2mm} x_i \right) 
   \left(\prod_{i:\eta_i=(0,1)} \hspace{-2mm} y_i  \right) 
   \left(\prod_{i:\eta_i=(0,0)} \hspace{-2mm} 1\!-\!x_i\!-\!y_i \right)\,.
 \end{equation*}
 \end{lemma}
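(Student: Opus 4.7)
The plan is to verify directly that the coefficients $\rho_\etab(\xb,\yb)$ form a valid convex combination with the claimed barycenter, exploiting the fact that each $\rho_\etab$ is a product of $n$ one-coordinate factors and so both the sum over $\etab \in H^n$ and its component-wise projections factor nicely. Non-negativity of each $\rho_\etab(\xb,\yb)$ will follow from the hypotheses inherited from Lemma~\ref{lem:general-maurer}, namely $x_i,y_i\in[0,1]$ with $x_i+y_i\leq 1$, which make every factor in the defining product non-negative.

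The second step is to check $\sum_{\etab \in H^n}\rho_\etab(\xb,\yb)=1$ by recognizing the sum as an $n$-fold product over the three choices of $\eta_i\in H$ for each coordinate $i$:
$$\sum_{\etab\in H^n}\rho_\etab(\xb,\yb) \;=\; \prod_{i=1}^{n}\bigl(x_i + y_i + (1-x_i-y_i)\bigr) \;=\; \prod_{i=1}^n 1 \;=\; 1,$$
since the three candidate factors associated to $\eta_i$ depend only on the $i$-th coordinate of $(\xb,\yb)$.

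The third step is to verify the barycenter equation component by component. Fix an index $j\in\{1,\ldots,n\}$. The $x$-part of $\eta_j$ equals $1$ exactly when $\eta_j=(1,0)$ and $0$ otherwise, so the $x_j$-coordinate of $\sum_{\etab}\rho_\etab(\xb,\yb)\cdot\etab$ equals
$$\sum_{\etab:\,\eta_j=(1,0)}\rho_\etab(\xb,\yb) \;=\; x_j\cdot\prod_{i\neq j}\bigl(x_i+y_i+(1-x_i-y_i)\bigr) \;=\; x_j,$$
by the same factorization trick applied to the remaining $n-1$ coordinates. The analogous calculation in which one isolates $\eta_j=(0,1)$ yields $y_j$, completing the identity $\sum_\etab\rho_\etab(\xb,\yb)\cdot\etab=(\xb,\yb)$.

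There is no real obstacle here; the lemma is essentially the multi-coordinate lift of the trivial identity $x+y+(1-x-y)=1$ on the triangle $\{x,y\geq 0,\,x+y\leq 1\}$, and the only care required is bookkeeping the coordinate-wise factorization. A one-line alternative would be to induct on $n$, pulling off the last coordinate and writing $(x_n,y_n)$ as the convex combination $x_n(1,0)+y_n(0,1)+(1-x_n-y_n)(0,0)$ before applying the inductive hypothesis to $(x_1,y_1,\ldots,x_{n-1},y_{n-1})$, but the direct product calculation above is both shorter and makes the formula for $\rho_\etab$ self-evident.
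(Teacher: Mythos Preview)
Your proof is correct. You verify the three conditions for a convex combination (non-negativity, unit sum, barycenter) directly via the product factorization $\sum_{\etab\in H^n}\rho_\etab=\prod_{i=1}^n(x_i+y_i+(1-x_i-y_i))=1$, and the analogous partial factorization for each coordinate. The paper instead proceeds by the induction on $n$ that you mention at the end as an alternative: it checks the case $n=1$ explicitly and then peels off the $(n{+}1)$-st coordinate, writing the sum over $H^{n+1}$ as three sums over $H^n$ (one for each choice of $\eta_{n+1}$) and invoking the inductive hypothesis together with the same unit-sum identity you use. Your direct argument is shorter and makes the product structure of $\rho_\etab$ do all the work; the paper's induction is more pedestrian but perhaps slightly more transparent for a reader unfamiliar with the ``expand the product'' trick. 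Both are standard, and you correctly flag that non-negativity of the factor $1-x_i-y_i$ relies on the constraint $x_i+y_i\leq 1$ inherited from Lemma~\ref{lem:general-maurer} rather than from the lemma's stated hypothesis $(\xb,\yb)\in([0,1]\times[0,1])^n$ alone.
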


\begin{proof}  \renewcommand{\xb}{{\mathbf x}}
 We prove the result by induction over vector size $n$.

 \noindent\emph{Proof for $n=1$:} 
 \begin{eqnarray*}
   \sum_{\etab \in H} \rho_{\etab}((x_1,y_1))\cdot \etab &=&
   x_1 \cdot ((1,0)) + y_1 \cdot ((0,1)) + (1\!-\!x_1\!-\!y_1)\cdot((0,0)) \\
   &=& ((x_1,y_1))\,.
 \end{eqnarray*}

 \noindent\emph{Proof for $n>1$:}
 We suppose that the result is true for any vector $(\xb,\yb)$ of a particular size $n$ (this is our induction hypothesis) and we prove that it implies
 \begin{samepage}
 \begin{eqnarray*}
  \sum_{(\etab, \eta_{n+1}) \, \in H^{n+1}} \Big[ 
     \rho_{(\etab, \eta_{n+1})}\big((\xb,\yb), (x_{n+1},y_{n+1})\big)
   \Big] \!\cdot (\etab, \eta_{n+1})
   &=&
\big( (\xb,\yb),\, (x_{n+1}, y_{n+1}) \big)\,,
 \end{eqnarray*}
 where $( \mathbf{a},\, b )$ denotes a vector $\mathbf{a}$,  augmented by one element $b$.\\[3mm]
 \end{samepage}
 We have
 \begin{eqnarray*}
   &\qquad& \hspace{-1.5cm}\sum_{(\etab, \eta_{n+1}) \, \in H^{n+1}} \Big[ 
     \rho_{(\etab, \eta_{n+1})}\big((\xb,\yb), (x_{n+1},y_{n+1})\big)
   \Big] \!\cdot (\etab, \eta_{n+1})\\
   &=&
   \sum_{\etab \in H^n} \rho_{\etab}(\xb,\yb) \cdot x_{n+1} \cdot \big(\etab, (1,0)\big)
  +  \sum_{\etab \in H^n} \rho_{\etab}(\xb,\yb) \cdot y_{n+1} \cdot \big(\etab, (0,1)\big) \\
  & & \hspace{3cm}+  \sum_{\etab \in H^n} \rho_{\etab}(\xb,\yb) \cdot (1\!-\!x_{n+1}\!-\!y_{n+1}) \cdot \big(\etab, (0,0)\big) \\
  &=& 
  \Bigg( \sum_{\etab \in H^n} \rho_{\etab}(\xb,\yb) \cdot (x_{n+1}\!+\!y_{n+1}\!+\!1\!-\!x_{n+1}\!-\!y_{n+1})\cdot \etab,\,
  \sum_{\etab \in H^n} \rho_{\etab}(\xb,\yb) \cdot \big(x_{n+1}, y_{n+1}\big) \Bigg) \\
   &=& 
  \Bigg( \sum_{\etab \in H^n} \rho_{\etab}(\xb,\yb)\cdot \etab,\,
  \sum_{\etab \in H^n} \rho_{\etab}(\xb,\yb) \cdot \big(x_{n+1}, y_{n+1}\big) \Bigg) \\
     &=& 
  \Big( (\xb,\yb),\, \big(x_{n+1}, y_{n+1}\big) \Big) \,.
  \end{eqnarray*}

 \noindent
 For the last equality, the $(\xb,\yb)$ term of the vector above is obtained from the induction hypothesis and the last couple is a direct consequence of the following equality:
 \begin{equation*}
   \sum_{\etab \in H^n} \rho_{\etab}(\xb,\yb) \ = \ \prod_{i=1}^{n} \Big(x_i\!+\!y_i\!+\!1\!-\!x_i\!-\!y_i\Big) \ = \ 1\,.
 \end{equation*}
\end{proof}

\begin{proposition}[Concavity of Equation~\ref{eq:fC(d,e)}]\label{prop:CQConvexe}
The function $\fC(d,e)$ is concave.
\end{proposition}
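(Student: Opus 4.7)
My plan is to rewrite $\fC$ as one minus a ratio of a squared affine function over a positive affine function, and then invoke the well-known fact that the quadratic-over-linear map is convex. Set
\[
u(d,e) \ \eqdef\ 1-d-2e\,, \qquad v(d,e) \ \eqdef\ 1-2d\,,
\]
both of which are affine (hence simultaneously convex and concave) in $(d,e)$. With this notation,
\[
\fC(d,e) \ = \ 1 \,-\, \frac{u(d,e)^2}{v(d,e)}\,,
\]
so concavity of $\fC$ on the open half-plane $\{(d,e)\colon v(d,e)>0\}=\{d<\tfrac12\}$ is equivalent to convexity of the composed function $g(d,e)\eqdef u(d,e)^2/v(d,e)$ on the same region.

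The core step is to establish convexity of the quadratic-over-linear function $h(u,v)\eqdef u^2/v$ on $\mathbb{R}\times(0,\infty)$. I would prove this via the Engel/Titu form of the Cauchy-Schwarz inequality: for $b,q>0$ one has $a^2/b + c^2/q \ge (a+c)^2/(b+q)$. Applying this with $a=\lambda u_1$, $c=(1-\lambda)u_2$, $b=\lambda v_1$, $q=(1-\lambda)v_2$ yields, for any $\lambda\in[0,1]$ and any pair $(u_1,v_1),(u_2,v_2)$ with $v_1,v_2>0$,
\[
\lambda\,\frac{u_1^2}{v_1} + (1-\lambda)\,\frac{u_2^2}{v_2} \ \ge\ \frac{\bigl(\lambda u_1 + (1-\lambda) u_2\bigr)^2}{\lambda v_1 + (1-\lambda) v_2}\,,
\]
which is exactly the definition of convexity of $h$. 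As a sanity check, a direct Hessian computation gives $h_{uu}=2/v$, $h_{uv}=-2u/v^2$, $h_{vv}=2u^2/v^3$, so $\det\nabla^2 h = 4u^2/v^4 - 4u^2/v^4 = 0$ and $h_{uu}>0$ whenever $v>0$, confirming $\nabla^2 h \succeq 0$.

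I then conclude by the standard fact that convexity is preserved under precomposition with an affine map: since $(d,e)\mapsto(u(d,e),v(d,e))$ is affine and $h$ is convex, the composition $g=h\circ(u,v)$ is convex on $\{d<\tfrac12\}$, and hence $\fC=1-g$ is concave there. I do not expect any real obstacle in this argument; the one subtle point worth stating explicitly is the boundary behaviour at $d=\tfrac12$, where $v=0$ and $\fC$ is finite only when $u=0$. Since the optimization in PAC-Bound~\ref{bound:trinouille} uses the constraint $2e+d<1$ (equivalently $u>0$ together with $d<\tfrac12$ via the self-complement constraint), restricting concavity to the open region $d<\tfrac12$ is exactly what is needed for the MinCq derivation.
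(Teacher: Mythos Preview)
Your argument is correct and takes a genuinely different route from the paper. The paper computes the Hessian of $\fC$ directly in the variables $(d,e)$, obtaining
\[
\frac{\partial^2 \fC}{\partial d^2}=\frac{2(1-4e)^2}{(2d-1)^3},\qquad
\frac{\partial^2 \fC}{\partial e^2}=\frac{8}{2d-1},\qquad
\det\nabla^2 \fC=0,
\]
and checks each entry has the right sign on $d<\tfrac12$. You instead factor $\fC=1-u^2/v$ with affine $u,v$, invoke the standard convexity of the quadratic-over-linear map (either via Titu's inequality or its Hessian in the $(u,v)$ coordinates), and pull it back through the affine change of variables. Your route is more conceptual and reusable: it makes the rank-one (zero-determinant) nature of the Hessian transparent---since $h(u,v)=u^2/v$ is the perspective of $t\mapsto t^2$, its Hessian is automatically singular, and this survives affine composition. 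The paper's approach, by contrast, is purely computational but has the virtue of being entirely self-contained without appealing to any external convex-analysis fact. One small wording quibble: the phrase ``via the self-complement constraint'' in your last sentence is not what guarantees $d<\tfrac12$ on $\Adelrestreint$; rather, that follows from combining $d\le 2(\sqrt{e}-e)$ with $2e+d<1$, as the paper notes in the proof of PAC-Bound~\ref{bound:trinouille}.
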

\begin{proof}
We show that the Hessian matrix of $\fC(d,e)$ is a negative semi-definite matrix. In other words, we need to prove that
\begin{equation*}
 \frac{\partial^2 \fC(d,e)}{\partial d^2} \leq 0 \, ; \quad \frac{\partial^2 \fC(d,e)}{\partial e^2} \leq 0 
 \, ; \quad \frac{\partial^2 \fC(d,e)}{\partial d^2}\frac{\partial^2 \fC(d,e)}{\partial e^2} - \left(\frac{\partial^2 \fC(d,e)}{\partial d\partial e}\right)^2 \geq 0\,.
\end{equation*}
Indeed, we have
{\small
\begin{align*}
 \frac{\partial^2 \fC(d,e)}{\partial d^2} &= \frac{2(1-4e)^2}{(2d-1)^3}
  \leq 0 \quad \forall e\in [0,1], d\in\Big[0,\frac{1}{2}\Big]\,,\\
 \frac{\partial^2 \fC(d,e)}{\partial e^2} &= \frac{8}{2d-1}
 \leq 0 \qquad\ \ \forall e\in [0,1], d\in\Big[0,\frac{1}{2}\Big]\,,\\
 \frac{\partial^2 \fC(d,e)}{\partial d^2}\frac{\partial^2 \fC(d,e)}{\partial e^2} - 
 \left(\frac{\partial^2 \fC(d,e)}{\partial d\partial e}\right)^2 &= \frac{2(1-4e)^2}{(2d-1)^3}\cdot\frac{8}{2d-1} - \left(\frac{4 - 16 e}{(1 - 2 d)^2}\right)^2 =  0\,.\\[-13mm]
\end{align*}
}
\end{proof}

\section{A General PAC-Bayesian Theorem for Tuples of Voters and Aligned Posteriors}
\label{section:gen-pac-Bayes-prod-qu}

This section presents a change of measure inequality that generalizes both Lemmas~\ref{lem:change-measure-aligned} and~\ref{lem:change-measure-paired-aligned}, and a PAC-Bayesian theorem that generalizes both Theorems~\ref{thm:gen-pac-Bayes-qu} and~\ref{thm:pac-Bayes-q2-qu}. As these generalizations require more complex notation and ideas, it is provided as an appendix and the simpler versions of the main paper have separate proofs.

\medskip
Let $\Hcal$ be a countable self-complemented set real-valued functions. In the general setting, we recall that $\Hcal$ is self-complemented if there exists a bijection $c:\Hcal\rightarrow \Hcal$ such that $c(f) =-f$ for any $f\in\Hcal$. Moreover, for a distribution $Q$ aligned on a prior distribution $P$ and for any $f\in\Hcal$, we have
\begin{eqnarray*}
Q(f)+Q(c(f)) &=& P(f)+P(c(f))\,.
\end{eqnarray*}
First, we need to define the following notation. Let $\kb$ be a sequence of length $k$, containing numbers representing indices of voters. Let $f_\kb : \Xcal \rightarrow \Yover^k$ be a function that outputs a tuple of votes, such that $f_\kb(\xb) \eqdef \langle f_{\kb_1}(\xb), \ldots, f_{\kb_k}(\xb) \rangle\,.$

Let us recall that $P^k$ and $Q^k$ are Cartesian products of probability distributions $P$ and~$Q$. Thus, the probability of drawing $f_\kb \sim Q^k$ is given by
\begin{equation*}
Q^k(f_\kb) \eqdef Q(f_{\kb_1}) \cdot Q(f_{\kb_2}) \cdot \ldots \cdot Q(f_{\kb_k}) = \prod_{i=1}^{k} Q(f_{\kb_i})\,.
\end{equation*}
Finally, for each $f_\kb$ and each $j \in \{0, \ldots, 2^k\!-\!1\}$, let 
\begin{equation*}
f_\kb^{[j]}(\xb) \eqdef \langle f_{\kb_1}^{(s_1^j)}(\xb), \ldots, f_{\kb_k}^{(s_k^j)}(\xb) \rangle\ ,
\end{equation*}
where $s_1^js_2^j...s_k^j$ is the binary representation of the number $j$, and where $f^{(0)} = f$ and $f^{(1)} = c(f)$. Note that $f_\kb^{[0]} = f_\kb$.

To prove the next PAC-Bayesian theorem, we make use of the following change of measure inequality.
\begin{theorem}\label{thm:change-measure-aligned}
\textbf{\emph{(Change of measure inequality for tuples of voters and aligned posteriors)}} 
For any self-complemented set $\Hcal$, for any distribution $P$ on $\Hcal$, for any distribution $Q$ aligned on $P$, and for any measurable function $\phi:\Hcal^k­\to \Reals$ for which $\phi(f_\kb^{[j]}) = \phi(f_\kb^{[j']})$ for any $j,j' \in \{0, \ldots, 2^k\!-\!1\}$,  we have
\begin{equation*}
\esp{f_\kb \sim Q^k} \phi(f_\kb) \ \leq \ \ln \left( \esp{f_\kb \sim P^k} e^{\phi(f_\kb)} \right) .
\end{equation*}
\end{theorem}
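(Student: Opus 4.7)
The plan is to mimic the two-step structure of Lemmas~\ref{lem:change-measure-aligned} and~\ref{lem:change-measure-paired-aligned}: first show that the $Q^k$-expectation of $\phi$ equals its $P^k$-expectation, and then apply Jensen's inequality. The new ingredient compared to Lemma~\ref{lem:change-measure-paired-aligned} is that the orbit of a tuple $f_\kb$ under coordinate-wise complementation has $2^k$ elements (the $f_\kb^{[j]}$ for $j=0,\ldots,2^k-1$) instead of $4$, so the symmetrization argument must be carried out $2^k$ times rather than $4$ times.

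Concretely, first I would write
\[
2^k \cdot \esp{f_\kb\sim Q^k} \phi(f_\kb) \ =\ \sum_{j=0}^{2^k-1} \int df_\kb\, Q^k(f_\kb^{[j]})\,\phi(f_\kb^{[j]}),
\]
which holds because each term in the sum equals $\esp{f_\kb\sim Q^k}\phi(f_\kb)$ after the change of variable $f_\kb\mapsto f_\kb^{[j]}$ (this is a bijection on $\Hcal^k$ whose Jacobian is trivial, since it only relabels arguments via the involution $c$). Then the symmetry hypothesis $\phi(f_\kb^{[j]})=\phi(f_\kb)$ lets me pull $\phi(f_\kb)$ out of the sum, yielding
\[
2^k \cdot \esp{f_\kb\sim Q^k} \phi(f_\kb) \ =\ \int df_\kb\, \Bigg(\sum_{j=0}^{2^k-1} Q^k(f_\kb^{[j]})\Bigg)\,\phi(f_\kb).
\]

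The main computation will be the factorization
\[
\sum_{j=0}^{2^k-1} Q^k(f_\kb^{[j]}) \ =\ \prod_{i=1}^{k}\Big(Q(f_{\kb_i}) + Q(c(f_{\kb_i}))\Big),
\]
which follows by expanding the product and indexing the $2^k$ monomials by the binary strings $s_1^j\cdots s_k^j$. Because $Q$ is aligned on $P$, each factor satisfies $Q(f_{\kb_i})+Q(c(f_{\kb_i}))=P(f_{\kb_i})+P(c(f_{\kb_i}))$, and consequently $\sum_{j}Q^k(f_\kb^{[j]})=\sum_{j}P^k(f_\kb^{[j]})$. Reversing the symmetrization steps with $P^k$ in place of $Q^k$ then gives $\esp{f_\kb\sim Q^k}\phi(f_\kb)=\esp{f_\kb\sim P^k}\phi(f_\kb)$.

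The conclusion follows by writing $\phi(f_\kb)=\ln e^{\phi(f_\kb)}$ and applying Jensen's inequality (Lemma~\ref{lem:jensen}) under $P^k$ to the concave function $\ln(\cdot)$, exactly as in the proofs of Lemmas~\ref{lem:change-measure-aligned} and~\ref{lem:change-measure-paired-aligned}. The only subtlety to check carefully is the combinatorial identity for the $2^k$-fold sum and the legitimacy of the change of variables $f_\kb\mapsto f_\kb^{[j]}$; once those are in place, the argument is the same as for $k=1$ and $k=2$.
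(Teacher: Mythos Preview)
Your proposal is correct and follows essentially the same approach as the paper's proof: symmetrize over the $2^k$ complementation patterns, use the product factorization $\sum_{j=0}^{2^k-1}Q^k(f_\kb^{[j]})=\prod_{i=1}^{k}\big(Q(f_{\kb_i})+Q(c(f_{\kb_i}))\big)$, replace each factor by its $P$-counterpart via alignment, reverse the symmetrization, and finish with Jensen's inequality. The paper highlights the passage from the sum of products to the product of sums (your factorization identity) as the key combinatorial step, exactly as you anticipated.
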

\pagebreak
\begin{proof}
First, note that one can change the expectation over $Q^k$ to an expectation over $P^k$, using the fact that $\phi(f_\kb^{[j]}) = \phi(f_\kb^{[j']})$ for any $j,j' \in \{0, \ldots, 2^k\!-\!1\}$ and that $Q$ is aligned on~$P$.
{ %
\begin{align}
 \nonumber &  2^k\cdot\esp{f_\kb\sim Q^k} \phi(f_\kb) \\[2mm]
 \nonumber   &= \int_{\Hcal^k} df_\kb \ Q^k(f_\kb^{[0]})\, \phi(f_\kb^{[0]})
  + \int_{\Hcal^k} df_\kb \ Q^k(f_\kb^{[1]})\, \phi(f_\kb^{[1]}) + \ldots
   + \int_{\Hcal^k}  df_\kb \ Q^k(f_\kb^{[2^k-1]})\, \phi(f_\kb^{[2^k-1]})\\[2mm]
 \nonumber   &= \int_{\Hcal^k}  df_\kb \ Q^k(f_\kb^{[0]})\, \phi(f_\kb)
 \ +\ \int_{\Hcal^k} df_\kb \ Q^k(f_\kb^{[1]})\, \phi(f_\kb)\ +\ \ldots
   \ +\ \int_{\Hcal^k}  df_\kb \ Q^k(f_\kb^{[2^k-1]})\, \phi(f_\kb)\\[2mm]
 \nonumber   &= \int_{\Hcal^k}df_\kb \ \sum_{j=0}^{2^k-1}\left(Q^k(f_\kb^{[j]})\right)\, \phi(f_\kb)\\[2mm]
 \label{magienoire1} &= \int_{\Hcal^k} df_\kb \ \sum_{j=0}^{2^k-1}\left(\prod_{i=1}^{k}\LB Q(f_{\kb_i}^{(s_i^j)})\RB\right)\, \phi(f_\kb)\\[2mm]
 \label{magienoire2} &= \int_{\Hcal^k}  df_\kb \ \prod_{i=1}^{k}\LB Q(f_{\kb_i}^{(0)}) + Q(f_{\kb_i}^{(1)})\RB\, \phi(f_\kb)\\[2mm]
 \nonumber   &= \int_{\Hcal^k} \hspace{-4mm} df_\kb \ \prod_{i=1}^{k}\LB Q(f_{\kb_i}) + Q(c(f_{\kb_i}))\RB\, \phi(f_\kb)\\[2mm]
 \nonumber   &= \int_{\Hcal^k} df_\kb \ \prod_{i=1}^{k}\LB P(f_{\kb_i}) + P(c(f_{\kb_i}))\RB\, \phi(f_\kb)\\[2mm]
 \nonumber&\ \, \vdots\\
 \nonumber   &=\ 2^k\cdot\esp{f_\kb\sim P^k}\phi(f_\kb)\,,
\end{align}
}where we obtain Line~\eqref{magienoire2} from Line~\eqref{magienoire1} by developing the terms of the product of Line~\eqref{magienoire2}.

The result is obtained by changing the expectation over $Q^k$ to an expectation over $P^k$, and then by applying Jensen's inequality (Lemma~\ref{lem:jensen}, in Appendix~\ref{section:appendix_auxmath}).
\begin{eqnarray*}
\esp{f_\kb \sim Q^k} \phi(f_\kb) 
\ = \ \esp{f_\kb \sim P^k} \phi(f_\kb) 
\ = \ \esp{f_\kb \sim P^k} \ln e^{\phi(f_\kb)} %
&\leq & \ln \left(\esp{f_\kb \sim P^k} e^{\phi(f_\kb)} \right) \,. \\[-8mm]
\end{eqnarray*}
\end{proof}
\pagebreak
\begin{theorem}\label{thm:GPB_Aligned}\textbf{\emph{(General PAC-Bayesian theorem for tuples of voters and aligned posteriors)}}
For any distribution $D$ on $\Xcal\!\times\!\Ycal$, any self-complemented set $\Hcal$ of voters \mbox{$\Xcal \rightarrow \Yover$}, any prior distribution $P$ on $\Hcal$, any integer $k\geq 1$, any convex function 
$\Dcal : [0,1]\times [0,1] \rightarrow \Reals$ and loss function $\loss : \Yover^k \times \Ycal^k \rightarrow [0,1]$ for which  $\Dcal\left(\ElossS(f_\kb^{[j]}),\, \ElossD(f_\kb^{[j]})\right) = \Dcal\left(\ElossS(f_\kb^{[j']}),\, \ElossD(f_\kb^{[j']})\right)\,,$ for any $j, j' \in \{0,\ldots,2^k\!-\!1\}$, for any $m'>0$ and any $\dt\in (0,1]$, we have
\begin{equation*}
\prob{S\sim D^m}\!\!\LP \!\!
\begin{array}{l}
  \mbox{For all posteriors $Q$ aligned on $P$}: \\[1mm]
  \scriptstyle \Dcal\left(\esp{f_\kb \sim Q^k} \!\!\!\ElossS(f_\kb), \esp{f_\kb \sim Q^k}\!\!\! \ElossD(f_\kb)\right)\ \le \
  \dfrac{1}{m'}\!\LB 
  \ln\LP\dfrac{1}{\dt}\esp{S\sim D^m}\esp{f_\kb \sim P^k}\!\!\!e^{\,m'\cdot\Dcal\left(\ElossS(f_\kb),\,\ElossD(f_\kb)\right)}\RP\RB 
\end{array}
\!\!\! \RP \! \ge 1 -\dt\,.
\end{equation*}
\end{theorem}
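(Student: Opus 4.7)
The plan is to mimic the proof of Theorem~\ref{thm:gen-pac-Bayes-qu}, replacing Lemma~\ref{lem:change-measure-aligned} (the change of measure for single voters and aligned posteriors) with the new change of measure inequality for tuples, namely Theorem~\ref{thm:change-measure-aligned}. The structure is the standard PAC-Bayesian recipe: Markov's inequality, take a logarithm, apply the change of measure inequality, then use Jensen's inequality to push the expectation inside the convex $\Dcal$. Here the crucial point is simply that the hypothesis of Theorem~\ref{thm:GPB_Aligned} is precisely the one needed to apply Theorem~\ref{thm:change-measure-aligned}, so no new technical work beyond the bookkeeping is required.

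More concretely, I would first observe that the random variable $\esp{f_\kb \sim P^k}e^{m'\cdot\Dcal(\ElossS(f_\kb), \ElossD(f_\kb))}$ is non-negative, so Markov's inequality (Lemma~\ref{lem:markov}) gives, with probability at least $1-\delta$ over $S\sim D^m$,
\begin{equation*}
 \esp{f_\kb \sim P^k}e^{m'\cdot\Dcal(\ElossS(f_\kb), \ElossD(f_\kb))}
 \ \le\
 \frac{1}{\delta}\esp{S\sim D^m}\esp{f_\kb \sim P^k}e^{m'\cdot\Dcal(\ElossS(f_\kb), \ElossD(f_\kb))}\,.
\end{equation*}
Taking the logarithm of both sides preserves the inequality since $\ln$ is monotone.

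Next, I would set $\phi(f_\kb) \eqdef m'\cdot\Dcal\!\left(\ElossS(f_\kb),\,\ElossD(f_\kb)\right)$. The hypothesis on $\loss$ and $\Dcal$ states exactly that $\phi(f_\kb^{[j]}) = \phi(f_\kb^{[j']})$ for all $j, j' \in \{0,\ldots,2^k\!-\!1\}$, so Theorem~\ref{thm:change-measure-aligned} applies and yields, for every posterior $Q$ aligned on $P$,
\begin{equation*}
 \esp{f_\kb \sim Q^k} \phi(f_\kb) \ \le\ \ln\!\left(\esp{f_\kb \sim P^k} e^{\phi(f_\kb)}\right)\,.
\end{equation*}
Combining this with the logged Markov inequality above gives (with probability $\ge 1-\delta$, uniformly over aligned~$Q$)
\begin{equation*}
 m'\cdot\esp{f_\kb \sim Q^k}\Dcal\!\left(\ElossS(f_\kb), \ElossD(f_\kb)\right)
 \ \le\
 \ln\!\left(\tfrac{1}{\delta}\esp{S\sim D^m}\esp{f_\kb \sim P^k}e^{m'\cdot\Dcal(\ElossS(f_\kb), \ElossD(f_\kb))}\right)\,.
\end{equation*}

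Finally, applying Jensen's inequality (Lemma~\ref{lem:jensen}) to the convex function $\Dcal$ on the left-hand side moves the expectation under $Q^k$ inside both arguments of $\Dcal$, producing the statement of the theorem after dividing by $m'$. The only step requiring any care is checking that the hypothesis on $\loss$ and $\Dcal$ is precisely the invariance condition demanded by Theorem~\ref{thm:change-measure-aligned}; since this is explicitly assumed, there is no real obstacle. The main ``work'' has already been done in proving Theorem~\ref{thm:change-measure-aligned} itself, whose proof uses the key telescoping expansion of $\prod_i\bigl[Q(f_{\kb_i})+Q(c(f_{\kb_i}))\bigr]$ and the alignment condition to rewrite the expectation under $Q^k$ as an expectation under $P^k$.
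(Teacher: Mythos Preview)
Your proposal is correct and follows essentially the same approach as the paper: Markov's inequality on the non-negative random variable, take logarithms, apply the change of measure inequality for tuples (Theorem~\ref{thm:change-measure-aligned}) with $\phi(f_\kb)=m'\cdot\Dcal(\ElossS(f_\kb),\ElossD(f_\kb))$, then Jensen's inequality on the convex $\Dcal$. The paper's proof is identical in structure and in the key observation that the stated hypothesis on $\loss$ and $\Dcal$ is exactly the invariance condition required by Theorem~\ref{thm:change-measure-aligned}.
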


\bigskip
\begin{proof}
This proof follows most of the steps of Theorem~\ref{thm:gen-pac-Bayes}.~\\
We have that $\esp{f_\kb \sim P^k} e^{m'\cdot\Dcal\left(\ElossS(f_\kb),\,\ElossD(f_\kb)\right)}$ is a non-negative random variable. By Markov's inequality, we have
\begin{equation*}
 \prob{S\sim D^m}\LP \esp{f_\kb\sim P^k}e^{m'\cdot\Dcal\left(\ElossS(f_\kb),\,\ElossD(f_\kb)\right)}\,\le\, 
 \frac{1}{\dt}\esp{S\sim D^m}\esp{f_\kb\sim P^k}e^{m'\cdot\Dcal\left(\ElossS(f_\kb),\,\ElossD(f_\kb)\right)} \RP 
 \ge  1-\dt \ .
\end{equation*}

\medskip
\noindent Hence, by taking the logarithm on each side of the innermost inequality, we obtain
\begin{equation*}
 \prob{S\sim D^m}\LP \ln\LB\esp{f_\kb\sim P^k}e^{m'\cdot\Dcal\left(\ElossS(f_\kb),\,\ElossD(f_\kb)\right)}\RB\,\le\, 
 \ln\LB\frac{1}{\dt}\esp{S\sim D^m}\esp{f_\kb\sim P^k}e^{m'\cdot\Dcal\left(\ElossS(f_\kb),\,\ElossD(f_\kb)\right)}\RB \RP 
 \ge  1-\dt \ .
\end{equation*}

\medskip
\noindent Now, instead of using the change of measure inequality of Lemma~\ref{lem:change-measure}, we use the change of measure inequality of Theorem~\ref{thm:change-measure-aligned} on the left side of innermost inequality, with $\phi(f_\kb) = m'\cdot\Dcal\left(\ElossS(f_\kb),\,\ElossD(f_\kb)\right)$. We then use Jensen's inequality (Lemma~\ref{lem:jensen}, in Appendix~\ref{section:appendix_auxmath}), exploiting the convexity of $\Dcal$.

\begin{eqnarray*}
\forall \,Q \mbox{ aligned on } P :\  \ln\LB\esp{f_\kb \sim P^k}e^{m'\cdot\Dcal(\ElossS(f_\kb),\, \ElossD(f_\kb))} \RB 
 &\ge& \!\! m'\cdot\esp{f_\kb \sim Q^k}\Dcal(\ElossS(f_\kb),\ElossD(f_\kb)) \\
 &\ge& \!\! m'\!\cdot\Dcal(\esp{f_\kb \sim Q^k} \!\!\ElossS(f_\kb), \esp{f_\kb \sim Q^k}\!\! \ElossD(f_\kb))\,.
\end{eqnarray*}
 We therefore have
 {\small
\begin{equation*}
 \prob{S\sim D^m} \! \!\LP \! 
\begin{array}{l}
\mbox{For all posteriors} \ Q\ \mbox{aligned on}\ P: \\
m'\!\cdot\Dcal(\esp{f_\kb \sim Q^k}\!\!\ElossS(f_\kb),\, \esp{f_\kb\sim Q^k}\!\! \ElossD(f_\kb))
  \le
 \ln \! \LB\frac{1}{\dt}\esp{S\sim D^m}\esp{f_\kb \sim P^k}e^{m'\cdot\Dcal(\ElossS(f_\kb),\,\ElossD(f_\kb))}\RB
\end{array}
 \RP 
 \! \ge \!  1-\dt  \, .
\end{equation*}
}The result then follows from easy calculations.
\end{proof}
{ %
\bibliography{mario-bib}

\begin{thebibliography}{58}
\providecommand{\natexlab}[1]{#1}
\providecommand{\url}[1]{\texttt{#1}}
\expandafter\ifx\csname urlstyle\endcsname\relax
  \providecommand{\doi}[1]{doi: #1}\else
  \providecommand{\doi}{doi: \begingroup \urlstyle{rm}\Url}\fi

\bibitem[Banerjee(2006)]{banerjee-06}
Arindam Banerjee.
\newblock On {Bayesian} bounds.
\newblock In \emph{ICML}, pages 81--88, 2006.

\bibitem[B\'egin et~al.(2014)B\'egin, Germain, Laviolette, and Roy]{begin-14}
Luc B\'egin, Pascal Germain, Fran\c{c}ois Laviolette, and Jean-Francis Roy.
\newblock {PAC-Bayesian} theory for transductive learning.
\newblock In \emph{AISTATS}, pages 105--113, 2014.

\bibitem[Blake and Merz(1998)]{uci-98}
C.L. Blake and C.J. Merz.
\newblock \emph{UCI Repository of Machine Learning Databases}.
\newblock Department of Information and Computer Science, Irvine, CA:
  University of California,
  http://www.ics.uci.edu/$\sim$mlearn/MLRepository.html, 1998.

\bibitem[Blitzer et~al.(2007)Blitzer, Dredze, and
  Pereira]{blitzer2007biographies}
John Blitzer, Mark Dredze, and Fernando Pereira.
\newblock Biographies, bollywood, boom-boxes and blenders: Domain adaptation
  for sentiment classification.
\newblock In \emph{Annual Meeting-Association for Computational Linguistics},
  volume~45, page 440, 2007.

\bibitem[Breiman(1996)]{b-96}
Leo Breiman.
\newblock Bagging predictors.
\newblock \emph{Machine Learning}, 24\penalty0 (2):\penalty0 123--140, 1996.

\bibitem[Breiman(2001)]{b-01}
Leo Breiman.
\newblock Random forests.
\newblock \emph{Machine Learning}, 45\penalty0 (1):\penalty0 5--32, 2001.

\bibitem[Catoni(2007)]{c-07}
Olivier Catoni.
\newblock \emph{{PAC-B}ayesian Supervised Classification: The Thermodynamics of
  Statistical Learning}.
\newblock Monograph series of the Institute of Mathematical Statistics,
  http://arxiv.org/abs/0712.0248, 2007.

\bibitem[Chen et~al.(2011)Chen, Weinberger, and Blitzer]{chen2011cotraining}
Minmin Chen, Kilian~Q. Weinberger, and John Blitzer.
\newblock Co-training for domain adaptation.
\newblock In \emph{NIPS}, pages 2456--2464, 2011.

\bibitem[Cortes and Vapnik(1995)]{DBLP:journals/ml/CortesV95}
Corinna Cortes and Vladimir Vapnik.
\newblock Support-vector networks.
\newblock \emph{Machine Learning}, 20\penalty0 (3):\penalty0 273--297, 1995.

\bibitem[Cover and Thomas(1991)]{ct-91}
Thomas~M. Cover and Joy~A. Thomas.
\newblock \emph{Elements of Information Theory}, chapter~12.
\newblock Wiley, 1991.

\bibitem[Cristianini and Shawe-Taylor(2000)]{cs-00}
Nello Cristianini and John Shawe-Taylor.
\newblock \emph{An Introduction to Support Vector Machines and Other
  Kernel-Based Learning Methods}.
\newblock Cambridge University Press, Cambridge, U.K., 2000.

\bibitem[Dahl and Vandenberghe(2007)]{dahvan07}
Joachim Dahl and Lieven Vandenberghe.
\newblock {CVXOPT}, 2007.
\newblock \url{http://mloss.org/software/view/34/}.

\bibitem[Daniely et~al.(2013)Daniely, Sabato, Ben-David, and
  Shalev-Shwartz]{DBLP:journals/corr/DanielySBS13}
Amit Daniely, Sivan Sabato, Shai Ben-David, and Shai Shalev-Shwartz.
\newblock Multiclass learnability and the {ERM} principle.
\newblock \emph{CoRR}, abs/1308.2893, 2013.

\bibitem[Derbeko et~al.(2004)Derbeko, El-Yaniv, and Meir]{Derbeko04}
Philip Derbeko, Ran El-Yaniv, and Ron Meir.
\newblock Explicit learning curves for transduction and application to
  clustering and compression algorithms.
\newblock \emph{J. Artif. Intell. Res. (JAIR)}, 22:\penalty0 117--142, 2004.

\bibitem[Dredze et~al.(2010)Dredze, Kulesza, and Crammer]{citeulike:5895440}
Mark Dredze, Alex Kulesza, and Koby Crammer.
\newblock {Multi-domain learning by confidence-weighted parameter combination}.
\newblock \emph{Machine Learning}, 79\penalty0 (1-2):\penalty0 123--149, 2010.

\bibitem[Fard and Pineau(2010)]{fard-10}
Mahdi~Milani Fard and Joelle Pineau.
\newblock {PAC-B}ayesian model selection for reinforcement learning.
\newblock In \emph{NIPS}, pages 1624--1632, 2010.

\bibitem[Fard et~al.(2011)Fard, Pineau, and Szepesv{\'a}ri]{fard-11}
Mahdi~Milani Fard, Joelle Pineau, and Csaba Szepesv{\'a}ri.
\newblock {PAC-B}ayesian policy evaluation for reinforcement learning.
\newblock In \emph{UAI}, pages 195--202, 2011.

\bibitem[Floyd and Warmuth(1995)]{fw-95}
Sally Floyd and Manfred Warmuth.
\newblock Sample compression, learnability, and the {V}apnik-{C}hervonenkis
  dimension.
\newblock \emph{Machine Learning}, 21\penalty0 (3):\penalty0 269--304, 1995.

\bibitem[Gelman et~al.(2004)Gelman, Carlin, Stern, and
  Rubin]{gelman2004bayesian}
A.~Gelman, J.B. Carlin, H.S. Stern, and D.B. Rubin.
\newblock \emph{Bayesian Data Analysis}.
\newblock Chapman \& Hall/CRC, 2004.
\newblock ISBN 9781584883883.

\bibitem[Germain et~al.(2009)Germain, Lacasse, Laviolette, and
  Marchand]{gllm-09}
Pascal Germain, Alexandre Lacasse, Fran\c{c}ois Laviolette, and Mario Marchand.
\newblock {PAC-Bayesian learning} of linear classifiers.
\newblock In \emph{ICML}, page~45, 2009.

\bibitem[Germain et~al.(2011)Germain, Lacoste, Laviolette, Marchand, and
  Shanian]{gllms-11}
Pascal Germain, Alexandre Lacoste, Fran\c{c}ois Laviolette, Mario Marchand, and
  Sara Shanian.
\newblock A {PAC-B}ayes sample-compression approach to kernel methods.
\newblock In \emph{ICML}, pages 297--304, 2011.

\bibitem[Germain et~al.(2013)Germain, Habrard, Laviolette, and
  Morvant]{germain-da-13}
Pascal Germain, Amaury Habrard, Fran\c{c}ois Laviolette, and Emilie Morvant.
\newblock A {PAC-Bayesian} approach for domain adaptation with specialization
  to linear classifiers.
\newblock In \emph{ICML (3)}, pages 738--746, 2013.

\bibitem[Gigu{\`e}re et~al.(2013)Gigu{\`e}re, Laviolette, Marchand, and
  Sylla]{giguere-13}
S{\'e}bastien Gigu{\`e}re, Fran\c{c}ois Laviolette, Mario Marchand, and
  Khadidja Sylla.
\newblock Risk bounds and learning algorithms for the regression approach to
  structured output prediction.
\newblock In \emph{ICML (1)}, pages 107--114, 2013.

\bibitem[Higgs and Shawe-Taylor(2010)]{higgs-10}
Matthew Higgs and John Shawe-Taylor.
\newblock A {PAC-Bayes} bound for tailored density estimation.
\newblock In \emph{ALT}, pages 148--162, 2010.

\bibitem[Lacasse et~al.(2006)Lacasse, Laviolette, Marchand, Germain, and
  Usunier]{nips06-mv}
Alexandre Lacasse, Fran\c{c}ois Laviolette, Mario Marchand, Pascal Germain, and
  Nicolas Usunier.
\newblock {PAC-B}ayes bounds for the risk of the majority vote and the variance
  of the {G}ibbs classifier.
\newblock In \emph{NIPS}, pages 769--776, 2006.

\bibitem[Lacoste et~al.(2012)Lacoste, Laviolette, and Marchand]{lacoste-2012}
Alexandre Lacoste, Fran\c{c}ois Laviolette, and Mario Marchand.
\newblock Bayesian comparison of machine learning algorithms on single and
  multiple datasets.
\newblock In \emph{AISTATS}, pages 665--675, 2012.

\bibitem[Langford(2005)]{l-05}
John Langford.
\newblock Tutorial on practical prediction theory for classification.
\newblock \emph{Journal of Machine Learning Research}, 6:\penalty0 273--306,
  2005.

\bibitem[Langford and Seeger(2001)]{ls-01-techreport}
John Langford and Matthias Seeger.
\newblock Bounds for averaging classifiers.
\newblock Technical report, Carnegie Mellon, Departement of Computer Science,
  2001.

\bibitem[Langford and Shawe-Taylor(2002)]{ls-03}
John Langford and John Shawe-Taylor.
\newblock {PAC-B}ayes {\&} margins.
\newblock In \emph{NIPS}, pages 423--430, 2002.

\bibitem[Laviolette and Marchand(2005)]{lm-05}
Fran\c{c}ois Laviolette and Mario Marchand.
\newblock {PAC-B}ayes risk bounds for sample-compressed {G}ibbs classifiers.
\newblock In \emph{ICML}, pages 481--488, 2005.

\bibitem[Laviolette and Marchand(2007)]{lm-07}
Fran\c{c}ois Laviolette and Mario Marchand.
\newblock {PAC-B}ayes risk bounds for stochastic averages and majority votes of
  sample-compressed classifiers.
\newblock \emph{Journal of Machine Learning Research}, 8:\penalty0 1461--1487,
  2007.

\bibitem[Laviolette et~al.(2011)Laviolette, Marchand, and Roy]{lmr-11}
Fran\c{c}ois Laviolette, Mario Marchand, and Jean-Francis Roy.
\newblock From {PAC-B}ayes bounds to quadratic programs for majority votes.
\newblock In \emph{ICML}, pages 649--656, 2011.

\bibitem[Lecun and Cortes()]{mnistlecun}
Yann Lecun and Corinna Cortes.
\newblock {The MNIST database of handwritten digits}.
\newblock URL \url{http://yann.lecun.com/exdb/mnist/}.

\bibitem[Lever et~al.(2010)Lever, Laviolette, and Shawe-Taylor]{lls-10}
Guy Lever, Fran\c{c}ois Laviolette, and John Shawe-Taylor.
\newblock Distribution-dependent {PAC}-{B}ayes priors.
\newblock In \emph{ALT}, pages 119--133, 2010.

\bibitem[Lever et~al.(2013)Lever, Laviolette, and Shawe-Taylor]{lls-13}
Guy Lever, Fran\c{c}ois Laviolette, and John Shawe-Taylor.
\newblock Tighter {PAC-B}ayes bounds through distribution-dependent priors.
\newblock \emph{Theoretical Computer Science}, 473:\penalty0 4--28, 2013.

\bibitem[London et~al.(2014)London, Huang, Taskar, and Getoor]{london-14}
Ben London, Bert Huang, Benjamin Taskar, and Lise Getoor.
\newblock {PAC-B}ayesian collective stability.
\newblock In \emph{AISTATS}, pages 585--594, 2014.

\bibitem[Maurer(2004)]{m-04}
Andreas Maurer.
\newblock A note on the {PAC}-{B}ayesian theorem.
\newblock \emph{CoRR}, cs.LG/0411099, 2004.

\bibitem[McAllester(1999)]{m-99}
David McAllester.
\newblock Some {PAC}-{B}ayesian theorems.
\newblock \emph{Machine Learning}, 37\penalty0 (3):\penalty0 355--363, 1999.

\bibitem[McAllester(2003{\natexlab{a}})]{m-03}
David McAllester.
\newblock {PAC}-{B}ayesian stochastic model selection.
\newblock \emph{Machine Learning}, 51\penalty0 (1):\penalty0 5--21,
  2003{\natexlab{a}}.

\bibitem[McAllester(2003{\natexlab{b}})]{m-03b}
David McAllester.
\newblock Simplified {PAC-B}ayesian margin bounds.
\newblock In \emph{COLT}, pages 203--215, 2003{\natexlab{b}}.

\bibitem[McAllester(2007)]{mcallester-07}
David McAllester.
\newblock Generalization bounds and consistency for structured labeling.
\newblock In G\"{o}khan Bak{\i}r, Thomas Hofmann, Bernhard Sch\"{o}lkopf,
  Alexander~J. Smola, Ben Taskar, and S.~V.~N. Vishwanathan, editors,
  \emph{Predicting Structured Data}, chapter~11, pages 247--261. MIT Press,
  Cambridge, MA, 2007.

\bibitem[McAllester(2013)]{mcallester-13}
David McAllester.
\newblock A {PAC-Bayesian} tutorial with a dropout bound.
\newblock \emph{CoRR}, abs/1307.2118, 2013.

\bibitem[Mendenhall(1983)]{mendenhall1983nonparametric}
W.~Mendenhall.
\newblock {Nonparametric statistics}.
\newblock \emph{Introduction to Probability and Statistics}, 604, 1983.

\bibitem[Pedregosa et~al.(2011)Pedregosa, Varoquaux, Gramfort, Michel, Thirion,
  Grisel, Blondel, Prettenhofer, Weiss, Dubourg, Vanderplas, Passos,
  Cournapeau, Brucher, Perrot, and Duchesnay]{scikit-learn}
F.~Pedregosa, G.~Varoquaux, A.~Gramfort, V.~Michel, B.~Thirion, O.~Grisel,
  M.~Blondel, P.~Prettenhofer, R.~Weiss, V.~Dubourg, J.~Vanderplas, A.~Passos,
  D.~Cournapeau, M.~Brucher, M.~Perrot, and E.~Duchesnay.
\newblock {Scikit-learn: Machine Learning in Python }.
\newblock \emph{Journal of Machine Learning Research}, 12:\penalty0 2825--2830,
  2011.

\bibitem[Ralaivola et~al.(2010)Ralaivola, Szafranski, and
  Stempfel]{ralaivola-10}
Liva Ralaivola, Marie Szafranski, and Guillaume Stempfel.
\newblock Chromatic {PAC-Bayes} bounds for non-iid data: Applications to
  ranking and stationary $\beta$-mixing processes.
\newblock \emph{Journal of Machine Learning Research}, 11:\penalty0 1927--1956,
  2010.

\bibitem[Salton and Buckley(1988)]{salton1988term}
Gerard Salton and Christopher Buckley.
\newblock Term-weighting approaches in automatic text retrieval.
\newblock \emph{Information Processing \& Management}, 24\penalty0
  (5):\penalty0 513--523, 1988.

\bibitem[Schapire and Singer(1999)]{schapire99}
Robert~E. Schapire and Yoram Singer.
\newblock Improved boosting using confidence-rated predictions.
\newblock \emph{Machine Learning}, 37\penalty0 (3):\penalty0 297--336, 1999.

\bibitem[Schapire et~al.(1998)Schapire, Freund, Bartlett, and Lee]{sfbl-98}
Robert~E. Schapire, Yoav Freund, Peter Bartlett, and Wee~Sun Lee.
\newblock Boosting the margin: A new explanation for the effectiveness of
  voting methods.
\newblock \emph{The Annals of Statistics}, 26:\penalty0 1651--1686, 1998.

\bibitem[Sch{\"o}lkopf et~al.(2001)Sch{\"o}lkopf, Herbrich, and Smola]{shs-01}
Bernhard Sch{\"o}lkopf, Ralf Herbrich, and Alex~J. Smola.
\newblock A generalized representer theorem.
\newblock In \emph{COLT/EuroCOLT}, pages 416--426, 2001.

\bibitem[Seeger(2002)]{s-02}
Matthias Seeger.
\newblock {PAC}-{B}ayesian generalization bounds for {G}aussian processes.
\newblock \emph{Journal of Machine Learning Research}, 3:\penalty0 233--269,
  2002.

\bibitem[Seeger(2003)]{seeger-thesis}
Matthias Seeger.
\newblock \emph{Bayesian {G}aussian {P}rocess {M}odels: {PAC}-{B}ayesian
  Generalisation Error Bounds and Sparse Approximations}.
\newblock PhD thesis, University of Edinburgh, 2003.

\bibitem[Seldin and Tishby(2009)]{seldin-tishby-09}
Yevgeny Seldin and Naftali Tishby.
\newblock {PAC-Bayesian} generalization bound for density estimation with
  application to co-clustering.
\newblock In \emph{AISTATS}, pages 472--479, 2009.

\bibitem[Seldin and Tishby(2010)]{seldin-tishby-10}
Yevgeny Seldin and Naftali Tishby.
\newblock {PAC-Bayesian} analysis of co-clustering and beyond.
\newblock \emph{Journal of Machine Learning Research}, 11:\penalty0 3595--3646,
  2010.

\bibitem[Seldin et~al.(2011)Seldin, Auer, Laviolette, Shawe-Taylor, and
  Ortner]{seldin-11}
Yevgeny Seldin, Peter Auer, Fran\c{c}ois Laviolette, John Shawe-Taylor, and
  Ronald Ortner.
\newblock {PAC-Bayesian} analysis of contextual bandits.
\newblock In \emph{NIPS}, pages 1683--1691, 2011.

\bibitem[Seldin et~al.(2012)Seldin, Laviolette, Cesa-Bianchi, Shawe-Taylor, and
  Auer]{seldin-12}
Yevgeny Seldin, Fran\c{c}ois Laviolette, Nicol{\`o} Cesa-Bianchi, John
  Shawe-Taylor, and Peter Auer.
\newblock {PAC-Bayesian} inequalities for martingales.
\newblock \emph{IEEE Transactions on Information Theory}, 58\penalty0
  (12):\penalty0 7086--7093, 2012.

\bibitem[Shen and Li(2010)]{MDBoost2010Shen}
Chunhua Shen and Hanxi Li.
\newblock Boosting through optimization of margin distributions.
\newblock \emph{IEEE Transactions on Neural Networks}, 21\penalty0
  (4):\penalty0 659--666, 2010.

\bibitem[Tolstikhin and Seldin(2013)]{ilya-13}
Ilya~O. Tolstikhin and Yevgeny Seldin.
\newblock {PAC-B}ayes-empirical-bernstein inequality.
\newblock In \emph{NIPS}, pages 109--117, 2013.

\bibitem[Younsi(2012)]{Younsi-12}
Malik Younsi.
\newblock Proof of a combinatorial conjecture coming from the {PAC}-{B}ayesian
  machine learning theory.
\newblock \emph{ArXiv E-Prints}, 2012.
\newblock URL \url{http://arxiv.org/abs/1209.0824v1}.

\end{thebibliography}
}

\end{document}